\documentclass[11pt]{article}
\usepackage[margin=1in]{geometry}
\usepackage{amsmath,amssymb,amsthm,mathtools}
\usepackage{enumitem}
\usepackage[round,authoryear]{natbib}
\usepackage[hidelinks]{hyperref}

\DeclareMathOperator{\Var}{Var}
\DeclareMathOperator{\Leb}{Leb}

\newcommand{\R}{\mathbb R}
\newcommand{\E}{\mathbb E}
\newcommand{\Prob}{\mathbb P}
\newcommand{\1}{\mathbf 1}
\newcommand{\dd}{\,\mathrm d}
\newcommand{\eps}{\varepsilon}
\newcommand{\OPT}{\operatorname{OPT}}
\newcommand{\Reg}{\operatorname{Reg}}
\newcommand{\SPM}{\operatorname{SPM}}
\newcommand{\ALG}{\operatorname{ALG}}

\newcommand{\supp}{\operatorname{supp}}
\newcommand{\betamin}{\underline\beta}
\newcommand{\betamax}{\overline\beta}
\newcommand{\vbar}{\overline v}
\newcommand{\Unif}{\mathrm{Unif}}

\newcommand{\pp}{\mathfrak p}

\newcommand{\Proj}{\Pi}

\theoremstyle{plain}
\newtheorem{theorem}{Theorem}[section]
\newtheorem{lemma}[theorem]{Lemma}
\newtheorem{proposition}[theorem]{Proposition}
\newtheorem{corollary}[theorem]{Corollary}
\theoremstyle{definition}
\newtheorem{definition}[theorem]{Definition}
\newtheorem{assumption}{Assumption}
\theoremstyle{remark}
\newtheorem{remark}[theorem]{Remark}

\numberwithin{equation}{section}

\title{Online Resource Allocation with Continuous Random Consumption: Regret under Degeneracy}
\author{Jiawei Zhang\\
Stern School of Business\\ New York University}
\date{\today}

\newcommand{\polylog}{\operatorname{polylog}}
\begin{document}
\maketitle

\begin{abstract}
We study online resource allocation when both rewards and consumption sizes may
be continuously distributed.  Requests arrive sequentially and must be accepted
or rejected irrevocably under fixed resource capacities.  Each request belongs to one of
finitely many observable types; conditional on the type, both the reward and the
scalar size are random, and the realized size scales a fixed type-specific
resource-consumption vector.  The
model allows the deterministic fluid relaxation to be degenerate.

We show that additive regret is governed by the size-weighted mass of requests
whose value-to-size ratios lie near the active acceptance cutoffs.  We formalize
this quantity through an active weighted-mass exponent \(\pp\).  When \(\pp>1\),
this cutoff mass is thin, and the problem is genuinely hard: every online policy
must incur regret of order at least
$
        T^{1/2-1/(2\pp)},
$
and this holds for every \(\pp>1\).  A sample-path marginal policy matches this
lower bound up to polylogarithmic factors; and when \(\pp=1\), so that the mass
grows linearly near the cutoff, it attains \(O((\log T)^2)\) regret.  For example,
if the size and the value-to-size ratio are independent and uniformly
distributed, then \(\pp=1\); if instead the size and the reward are independent
and uniformly distributed, then \(\pp=2\).  Thus the policy achieves \(o(\sqrt T)\)
regret throughout this regularity class without any fluid non-degeneracy
assumption, allowing both primal degeneracy and dual non-uniqueness.
\end{abstract}

\section{Introduction}

Online resource allocation asks how scarce resources should be allocated before
the future is known.  Requests arrive over time, reveal rewards and resource
requirements, and require immediate decisions.  Accepted or assigned requests
consume resources irreversibly, while future requests remain uncertain.  Network
revenue management, online advertising, and online order fulfillment fit this
broad template.

A central question is how much value is lost because decisions are made online.
We measure this loss by additive regret relative to a hindsight benchmark: the
offline fractional allocation that observes the entire arrival sequence before
choosing which requests to serve.  This benchmark isolates the cost of the
information gap between online and offline decision making.  The goal is to
understand how regret grows with the horizon \(T\).

This paper studies a stochastic online allocation model with accept/reject decisions in which both rewards
and resource consumptions may be continuously distributed.  Requests belong to
finitely many observable types.  Each arriving request reveals its reward and its
size; conditional on the type, both variables may be continuous.  The per-unit
resource-consumption vector is deterministic conditional on the type, so
accepting a request consumes its realized size times this vector.  When the size
is deterministic conditional on type, the model specializes to the semi-discrete
model of \citet{JiangMaZhang2025Degeneracy}.  In the one-resource, one-type case, it contains the
classical stochastic knapsack model with random profits and weights studied by
\citet{Lueker1998}.  Our main finding is that allowing continuous random consumption can change the worst-case regret exponent: there are bounded-density instances on which every online policy incurs polynomial regret.  We also prove that a sample-path marginal policy attains the matching polynomial exponent, up to logarithmic factors.

\subsection{Literature review}
Our model is most closely related to the network revenue management literature \citep{GallegoVanRyzin1994, TalluriVanRyzin2006}.  Most algorithms and
analyses are built around the deterministic fluid relaxation of the online
problem and the dual prices of that relaxation. A standard approach is to
solve a fluid relaxation and periodically re-solve it as time and remaining
capacity evolve.  These dual prices, however, need not be well behaved: when the
fluid relaxation is degenerate, the optimal dual price is not unique, and the
acceptance threshold it induces can jump under an arbitrarily small change in
remaining capacity, so a re-solving policy chases a moving target.

When rewards and resource consumptions have finite support, this approach yields $o(\sqrt{T})$ regret \citep{ReimanWang2008}. Under a non-degeneracy assumption on the fluid relaxation, \citet{JasinKumar2012} obtain an $O(1)$ regret bound. Without such a non-degeneracy assumption, \citet{ArlottoGurvich2019} establish an $O(1)$ bound for the multisecretary problem. Subsequent work obtains $O(1)$ bounds for the more general network revenue management model \citep{BumpensantiWang2020,VeraBanerjee2021,VeraEtAl2021,LiWangZhang2025}.
For more general reward distributions,
logarithmic regret is achievable under suitable distributional regularity
\citep{Lueker1998}, but existing
analyses typically also impose non-degeneracy or stability conditions on the
deterministic fluid relaxation \citep{LiYe2022,Bray2024,BalseiroBesbesPizarro2023}.
These conditions appear in several forms,
including strict complementarity, uniqueness of the optimal primal basis,
uniqueness of the optimal dual solution, and second-order growth of the dual
objective.

However, the non-degeneracy condition may fail, and degeneracy is not a pathological corner case.  As highlighted by
\citet{BumpensantiWang2020}, it is likely to occur in practice.  In capacity
planning, when the expected demand for a resource is of order \(T\), the
square-root law of inventory suggests a safety-stock scale of order \(\sqrt T\).
When demand fluctuates, different resources may become bottlenecks at different
times.  In the associated linear program, the set of binding constraints may
therefore change as capacity is depleted.  Thus degeneracy is often the natural
operating regime, not an exception.  See \citet{Bray2024} and
\citet{JiangMaZhang2025Degeneracy} for related discussions.

Recent work clarifies which form of non-degeneracy is most relevant for
certainty-equivalent (CE) resolving policies.  \citet{ChenWang2025} separate dual uniqueness
from primal non-degeneracy and show that CE's performance is governed by stability of the fluid dual
price, rather than by primal non-degeneracy itself.  When the optimal dual price,
and hence the acceptance threshold it induces, is stable, CE can attain \(o(\sqrt T)\) regret, and in some distributional settings
logarithmic-level regret, even if the primal fluid solution is degenerate.  The
fluid dual price, however, need not be stable.  \citet{BesbesEtAl2024} show that
the CE algorithm can lose logarithmic guarantees and incur
\(\sqrt T\)-scale regret for the multisecretary problem with multiple types,
each with a uniformly distributed reward.  They
design a different policy and obtain \((\log T)^2\) regret.  \citet{JiangMaZhang2025Degeneracy} obtain a
\((\log T)^2\) bound for a special case of our model when the size of each type is deterministic. \citet{Zhang2026MultisecretaryGaps} shows that this \((\log T)^2\) rate is
tight.  Thus, even when consumption has finite support and reward densities are
continuous and bounded below near the relevant cutoffs, degeneracy has a price.
But the price is mild: logarithmic regret becomes \((\log T)^2\).

\subsection{Main results}

This picture changes once both consumption and reward are continuously
distributed.  The change is already visible in one-type, one-resource stochastic
knapsack instances.  We use the following two examples throughout the paper.

\begin{enumerate}[label=\emph{Example \arabic*.},leftmargin=*,labelindent=1.5em,itemsep=2pt,topsep=4pt,parsep=0pt]
\item The reward \(V\) and the size \(\beta\) are independent and uniform on
\([1,2]\).
\item The size \(\beta\) is uniform on \([1,2]\), and the value-to-size ratio
\(R:=V/\beta\) is uniform on \([1/2,2]\) and independent of \(\beta\).
\end{enumerate}

A request with reward \(V\) and size \(\beta\) consumes \(\beta\) units of the
resource. The relevant score is therefore the reward per unit consumed,
\(R=V/\beta\). At the fluid scale, the optimal rule is a ratio-threshold rule:
requests with larger values of \(R\) are accepted before requests with smaller
values of \(R\).

Both examples have mean size \(\E[\beta]=\tfrac32\), so their expected total
demand is \(\tfrac32 T\). At each fixed capacity \(cT\) with \(0<c<3/2\), the resource
is scarce. The fluid solution rejects the lowest-ratio requests and uses a unique
interior cutoff, with a unique dual price. At the critical capacity \(\tfrac32 T\),
the resource constraint is exactly tight when every request is accepted.
Accept-all is then fluid-optimal, and every price in \([0,1/2]\) is an optimal
dual price. Thus the fluid dual degenerates
(Appendix~\ref{app:two-uniform-duals} computes both fluid duals).

Now compare a non-critical capacity, say \(1.4T\), with the critical capacity
\(\tfrac32 T\). In Example~1, the regret is \(O((\log T)^2)\) at \(1.4T\), by
the interior bound in Appendix~\ref{app:local}. At the critical capacity
\(\tfrac32 T\), Theorem~\ref{thm:lower} and Corollary~\ref{cor:regret-bounded} show a
\(T^{1/4}\) polynomial order, up to logarithmic factors. Thus the critical
degeneracy costs a polynomial factor. In Example~2, by contrast, the regret is
\(O((\log T)^2)\) at both capacities: degeneracy does not increase the order of
regret.\footnote{For these single-type instances, the polylogarithmic upper
bounds can be sharpened to \(O(\log T)\). The extra logarithmic factor in the
displayed bound comes from a uniform argument over type-level cutoffs. When there
is a single scalar cutoff, that uniformity is unnecessary, and the sharper bound
recovers the \(O(\log T)\) order of \citet{Lueker1998}. We do not carry out that
single-type refinement here.} Both examples are degenerate at \(\tfrac32 T\), and
both have the same dual-optimal interval \([0,1/2]\). Nevertheless, one pays a
polynomial price of degeneracy and the other pays none. The difference is not
dual multiplicity alone; it is how the size-weighted ratio mass behaves near the
degenerate cutoff.

We formalize this behavior through an exponent \(\pp\ge1\), which measures how
fast the size-weighted ratio mass accumulates near an active acceptance cutoff.
In Example~2, the ratio \(R\) is uniform and independent of size, so this mass
grows linearly with distance from the cutoff, and \(\pp=1\). In Example~1, the
critical cutoff is the lower edge \(1/2\) of the ratio support. This edge is a
corner of the joint \((\beta,R)\) support: a ratio near \(1/2\) forces the reward
to be near its lower endpoint and the size to be near its upper endpoint. The
nearby mass therefore grows quadratically, and \(\pp=2\).

The regret rate is governed by this active exponent. It is polylogarithmic when
\(\pp=1\), and its polynomial order is
\[
        T^{1/2-1/(2\pp)}
\]
when \(\pp>1\), up to logarithmic factors. At \(\pp=2\), this gives the
\(T^{1/4}\) order in Example~1. Thus the exponent at the degenerate cutoff is the
price of degeneracy: no polynomial price when \(\pp=1\), and a polynomial price
when \(\pp>1\). This is the new continuous-consumption effect. With deterministic
size and a reward density that is bounded and positive at the cutoff, this corner
mechanism cannot occur.

These rates are attained by a sample-path marginal policy
(Theorem~\ref{thm:main}), and the polynomial exponent is matched by a lower bound
for every \(\pp>1\) (Theorem~\ref{thm:lower}). The policy therefore achieves
\(o(\sqrt T)\) regret throughout this class.

We emphasize that existing \(o(\sqrt T)\) additive-regret bounds for models with
continuously distributed rewards and continuously distributed resource
consumption are obtained under conditions that rule out the degeneracies studied
here.  Some papers, such as \citet{LiYe2022} and \citet{Bray2024}, impose
non-degeneracy assumptions explicitly.  Others, such as \citet{Lueker1998} and
\citet{ChenWang2025}, impose primitive distributional assumptions under which the
relevant fluid dual price is unique or stable.  By contrast, our model and bounds
allow both primal degeneracy and dual non-uniqueness.

The closest prior logarithmic-regret results do not directly cover this
continuous-consumption, continuous-reward regime.  The models of
\citet{BesbesEtAl2024} and \citet{JiangMaZhang2025Degeneracy} have finite-support
resource consumptions: the former is a multisecretary model, and the latter has
deterministic consumption vectors conditional on type.  The stochastic-knapsack
model of \citet{ArlottoXie2020} allows random sizes, but assumes equal
deterministic rewards.  Thus these results do not directly apply to the continuous-size,
continuous-reward instances that drive the new rates here.

\subsection{Overview of the analysis}\label{subsec:overview}

The policy prices each accepted request by the marginal value of the capacity it
consumes in the expected offline problem---that is, by the drop in the average
hindsight value over future arrivals caused by reserving that capacity.  This is
the RAMS principle of \citet{BesbesEtAl2024}: charge this marginal loss as the
price of capacity, so that minimizing the resulting per-step loss controls total
regret.  The reduction of regret to a sum of per-step losses goes back to
\citet{VeraBanerjee2021}, and is also used by \citet{Bray2024} and
\citet{JiangMaZhang2025Degeneracy}.  Our contribution is to bound the per-step
loss for our model---with both the reward and the consumption continuously
distributed, and without a non-degeneracy assumption.

Proving this bound rests on three ideas.  The first explains how the policy
prices capacity without selecting a dual price.  The second identifies the
quantity that remains stable under degeneracy.  The third handles the new
difficulty created by random consumption.

\paragraph{Price by an average of cutoffs, not by a selected dual price.}

Consider a request of type \(k\) with realized size \(z\), arriving when the
remaining capacity is \(b\).  Accepting it moves the capacity from \(b\) to
\(b-za^k\).  A classical analysis would price this capacity change by selecting
a dual price at one of these capacities.  This is unstable under degeneracy:
many prices may be optimal at the same capacity, and a selected price can jump
after an arbitrarily small perturbation.

We avoid selecting a price.  Instead, we traverse the segment from \(b\) to
\(b-za^k\).  Along this segment, the hindsight value is a concave function of
one scalar.  Such a function is differentiable almost everywhere, and at each
differentiability point its slope is unambiguous.  That slope acts as an
acceptance cutoff for value-to-size ratios: it is the threshold \(q\) above
which a request with ratio \(V/\beta\) is worth accepting and below which it is
not.  The marginal value charged by the
policy is the average of these cutoffs along the segment.  The kinks where dual
prices are ambiguous form a null set, and the average passes through them.

This averaging makes the policy well defined without a non-degeneracy
assumption.  It does not, by itself, prove a regret bound.  The loss still
depends on how far the cutoffs generated by different future sample paths can
spread, and degeneracy is precisely the case in which cutoff movement need not
be stable.

\paragraph{Measure losses by borderline mass, and prove the product is stable.}

For a feasible request of fixed type and size, changing the cutoff can change
the decision only when the value-to-size ratio lies between the two cutoffs.
Thus the one-period loss is controlled by two quantities: the width of the band
between the cutoffs, and the resource carried by requests whose ratios fall
inside that band.  We call these requests borderline.

The key point is that the product of these two quantities is stable, even when
the cutoff width itself is not.  Under degeneracy, two future sample paths can
produce cutoffs that are far apart.  But if little ratio mass lies between those
cutoffs, the movement is mostly harmless.  The analysis therefore controls
\[
        \text{cutoff width}
        \times
        \text{borderline resource mass},
\]
rather than cutoff movement alone.

We prove this product bound by comparing two typical future sample paths
directly.  Their empirical type totals and capacities are close.  If their
cutoffs differ, the difference in the resource they accept is carried entirely
by requests whose ratios lie between the two cutoffs.  A classical stability
estimate for linear programs, due to \citet{Hoffman1952} and requiring no
uniqueness of the optimal solution, makes the resource each path accepts close
whenever the two paths themselves are close.  Combining these two facts nearly
closes the product bound.

The remaining term is proportional to the cutoff width alone.  It corresponds to
degenerate stretches where a cutoff moves while sweeping little or no mass.  The
distributional exponent \(\pp\) closes this gap: a ratio band of width \(\ell\)
near an active cutoff must carry weighted mass at least of order \(\ell^\pp\).
This converts leftover width back into mass.  Thus a smaller \(\pp\) means more
mass near the cutoff and less regret, while a larger \(\pp\) means thinner mass
and a larger price for degeneracy.

With \(n\) arrivals remaining, empirical fluctuations are of order
\(\sqrt{\log n/n}\).  After the active-mass closure, the one-period bounds sum
to \(O((\log T)^2)\) when \(\pp=1\), and to
\(O(T^{1/2-1/(2\pp)}\polylog T)\) when \(\pp>1\).  The stable object is therefore
not a dual price.  It is the product of cutoff width and borderline resource
mass.

\paragraph{Random consumption: different sizes see different bands.}

The product bound just described aggregates over sizes.  The loss from a
specific arriving request, however, is conditional on that request's realized
size.  A request of size \(z\) is borderline when its ratio lies in the cutoff
band, and this event depends on the ratio distribution conditional on size
\(z\).

When consumption is deterministic conditional on the type, the conditional and
aggregate distributions coincide, and the product bound finishes the one-period
analysis.  With continuously distributed consumption, they can differ sharply.
The difficulty is most visible near a corner of the joint reward-size support.
For each size \(z\), the attainable ratios may begin at a size-dependent lower
edge; as \(z\) changes, that edge can move into or out of the cutoff band.  A
short band can then contain little aggregate mass but still capture a large share
of the conditional mass for sizes whose edge falls inside the band.  A pointwise
comparison between conditional and aggregate mass is therefore false.

We show that an averaged comparison is enough.  We integrate the conditional
band mass over the size distribution.  Sizes whose moving edge lies inside the
band and sizes whose moving edge lies outside the band balance each other after
integration.  This recovers the aggregate product bound, up to one logarithmic
factor.  This size-integration step is the technical price of continuously
distributed consumption.

The lower bound isolates the same quantity from the other side.  At the corner
of Example~1, a band of width \(\varepsilon\) carries resource of order
\(\varepsilon^\pp T\).  Such a band cannot be reliably distinguished from the
\(\sqrt T\)-scale fluctuation of total demand until
\(\varepsilon\asymp T^{-1/(2\pp)}\).  Misclassifying requests in that band costs
order \(\varepsilon\) per unit of resource, which forces regret of order
\[
        \varepsilon\sqrt T
        \asymp
        T^{1/2-1/(2\pp)} .
\]
Section~\ref{subsec:proof-roadmap} turns this outline into a step-by-step
roadmap with pointers to the formal statements.  Sections~\ref{sec:spm}
and~\ref{sec:lower} then prove the upper and lower bounds.

\subsection{Further related work}
Classical dynamic stochastic knapsack models with sequential arrivals and
admission control were studied by
\citet{KleywegtPapastavrou1998,KleywegtPapastavrou2001}.
\citet{MarchettiVercellis1995} prove a $(\log T)^{3/2}$ regret bound for the online stochastic knapsack problem.  \citet{ArlottoXie2020} prove logarithmic regret for an equal-reward stochastic knapsack with random item sizes. \citet{JiangZhang2020} generalize this result to the multi-resource case.

A separate literature studies stochastic knapsack and online packing through
multiplicative approximation guarantees.  One line compares algorithms with the
optimal adaptive policy for stochastic knapsack or stochastic packing
\citep{DeanGoemansVondrak2005,DeanGoemansVondrak2008,
BhalgatGoelKhanna2011,Ma2018}.  Another line studies prophet or LP benchmarks
for online stochastic knapsack and obtains constant competitive ratios
\citep{DuttingFeldmanKesselheimLucier2020,JiangMaZhang2022}.  In random-order
online packing and online linear programming, large-capacity assumptions lead to
\(1-o(1)\) competitive ratios relative to the offline optimum
\citep{KesselheimRadkeTonnisVocking2018,AgrawalEtAl2014}.

Other finite-type online allocation models also admit constant or uniformly
bounded additive regret.  Examples include online packing, matching, and pricing
\citep{VeraBanerjee2021,VeraEtAl2021}, overbooking
\citep{FreundZhao2022}, online decision-making with an uncertain horizon
\citep{BanerjeeFreund2024}, dynamic matching
\citep{AsadpourWangZhang2020,Gupta2024,WeiXuYu2023}, and online resource allocation via primal-dual
policies \citep{HeWeiXuYu2025}.  A related line develops computationally
efficient primal-dual, first-order, and re-solving methods for broader
distributional settings.  Classical first-order and primal-dual methods attain
\(O(\sqrt T)\) regret
\citep{BalseiroEtAl2023,LiEtAl2020,JiangLiZhang2025Wasserstein}, while recent
refinements obtain sub-\(\sqrt T\) regret under additional regularity or
non-degeneracy conditions \citep{GaoEtAl2024,MaEtAl2025}.

The rest of the paper is organized as follows.
Section~\ref{sec:model} states the model, the distributional regularity
condition, and the main regret theorem.  Section~\ref{sec:spm} defines the
sample-path marginal policy and proves the upper bound.
Section~\ref{sec:lower} proves matching lower bounds for the polynomial
exponent.  Section~\ref{sec:conclusion} concludes.

\section{Model, Assumptions, and Main Results}\label{sec:model}

This section gives the formal setup and states the main regret bounds. We first define the
online allocation model and the fractional hindsight benchmark. We then introduce the
sample-path marginal policy. Finally, we define the weighted-ratio distribution, state the
standing distributional assumption, and state the main theorem.

\subsection{Model and the Sample-Path Marginal Policy}\label{subsec:model}

There are $d$ resources and $K$ request \emph{types}. Type $k$ has a fixed
consumption direction $a^k\in\R_+^d$, with $a^k\neq0$. Over a horizon of
$T$ periods, requests arrive i.i.d. The period-$t$ request is
$Z_t=(J_t,\beta_t,V_t)$, where $\Prob(J_t=k)=\pi_k >0$, $\beta_t>0$ is the \emph{size}, and $V_t\ge0$ is the reward.
The decision maker observes $Z_t$ and irrevocably accepts or rejects it. An
accepted request consumes $\beta_t a^{J_t}$ from a budget $b_T\in\R_+^d$.
Writing $x_t\in\{0,1\}$ for the online decision, the realized reward is
$\sum_t V_t x_t$, and feasibility requires
\[
        \sum_t \beta_t a^{J_t}x_t\le b_T.
\]

We measure performance of an online algorithm against the fractional hindsight
(offline) optimum. For a length-$n$ arrival sequence $W_n=(Z_1,\dots,Z_n)$ and a
capacity $b\in\R_+^d$, define the $n$-period fractional hindsight optimum
\begin{equation*}
        \OPT_n(b;W_n)=
        \max_{0\le x_i\le1}\left\{\sum_{i=1}^n V_i x_i:
        \sum_{i=1}^n \beta_i a^{J_i}x_i\le b\right\}.
\end{equation*}
The additive regret of an online algorithm $\ALG$ is
\begin{equation*}
        \Reg_T(\ALG;b_T)=\E\bigl[\OPT_T(b_T;W_T)\bigr]-\E\Bigl[\sum_{t=1}^T V_t x_t^{\ALG}\Bigr].
\end{equation*}
The fractional optimum $\OPT_T(b_T;W_T)$ upper bounds the binary hindsight
optimum, and the two differ by at most $d\vbar$: a basic optimal solution of the
fractional LP has at most $d$ fractional variables (the rest are pinned at $0$ or
$1$ by the box constraints $0\le x_i\le1$), and rounding these down preserves
feasibility while losing at most $d\vbar$ of reward. The fractional benchmark
therefore yields the same regret rates as long as the reward has a bounded
support.

The instances are indexed by $T$. We assume the capacity grows linearly in the
horizon, $b_T=\Theta(T)$, the standard fluid scaling in this literature.

The sample-path marginal policy prices the capacity consumed by the current
request through the expected fractional hindsight value of the remaining
periods. For $n\ge0$, set
\[
        \Phi_n(b)=\E[\OPT_n(b;W_n)],
\]
the expected fractional hindsight value of a length-$n$ i.i.d.\ sequence, with
$\Phi_0\equiv0$. We use the convention that $\OPT_n(b;W_n)=-\infty$ and
$\Phi_n(b)=-\infty$ when $b\notin\R_+^d$, where the feasible set is empty; thus
$\Phi_n$ is evaluated only at capacities in $\R_+^d$. In the Bellman expansions
below, every term $\Phi_{s-1}(B_s-\beta_s a^{J_s})$ carries the feasibility
indicator $\1\{\beta_s a^{J_s}\le B_s\}$, and the product is read as $0$ when
that indicator vanishes. At a decision epoch with $s$ periods
remaining and remaining capacity $b$, suppose that the current arrival is
$(J,\beta,V)=(k,z,v)$. Define the marginal value of the capacity consumed by
this request as
\begin{equation}
        \Delta_s(b,k,z)
        =
        \begin{cases}
        \Phi_{s-1}(b)-\Phi_{s-1}(b-za^k), & za^k\le b,\\
        +\infty, & \text{otherwise.}
        \end{cases}
        \label{eq:marginal}
\end{equation}
The \emph{sample-path marginal policy} $\SPM$ accepts the request if and only if
\begin{equation}
        v\ge\Delta_s(b,k,z).
        \label{eq:accept}
\end{equation}

The rule \eqref{eq:accept} is the sample-path instance of the RAMS principle of
\citet{BesbesEtAl2024}, which prices each action by the marginal value of the
capacity it consumes in the hindsight problem. As with RAMS, implementing $\SPM$
requires simulation: the expected hindsight value $\Phi_{s-1}$ has no closed form
and is estimated by sampling future arrivals.

\begin{remark}
Notice that the threshold in \eqref{eq:accept} is the marginal value of the
expected hindsight relaxation $\Phi_{s-1}$. It is not a dual price of a
deterministic fluid program. When the fluid relaxation is degenerate, the
bid-price control algorithm with thresholds computed from fluid dual prices can
be highly sensitive to small changes in capacity. In contrast, the function
$\Phi_{s-1}$ is Lipschitz in the remaining capacity through the kinks at which a
fluid dual price would jump, so the marginal value in \eqref{eq:marginal} varies
continuously where a certainty-equivalent threshold does not. This distinction is
what lets $\SPM$ handle degenerate instances.
\end{remark}

\subsection{A primitive distributional assumption}\label{subsec:assumption}

We now state the distributional regularity condition used in the regret analysis. The
primitive ranges are bounded. Conditional on type $k$, the size satisfies
$\beta\in[\betamin_k,\betamax_k]$, with $\betamin_k>0$, and the reward satisfies
$V\in[0,\vbar]$. Put $\betamin=\min_k\betamin_k$ and
$\betamax=\max_k\betamax_k$. The value-to-size ratio is
$
        R=\frac{V}{\beta}.
$
Since $\beta\ge\betamin_k>0$ conditional on type $k$, this also gives
$\E[\beta\mid J=k]>0$ and $\overline m_k=\pi_k\,\E[\beta\mid J=k]>0$, so every
expression of the form $\betamax_k/(\pi_k\E[\beta\mid J=k])$ is well defined.

Every resource is consumed by at least one type; unused coordinates are dropped.
For each resource
$j$, define
\[
        \alpha_j=\min\{a^k_j:a^k_j>0\},
        \qquad
        M_j=\frac{\vbar}{\betamin\alpha_j},
        \qquad
        y_k=\sum_{j=1}^d M_j a^k_j,
        \qquad
        y=\max_k y_k.
\]
Then every realized ratio satisfies $R\in[0,y]$. 

The main regularity condition is imposed on a type-wise weighted ratio measure.
For each type $k$ and Borel set $B\subseteq[0,y]$, define
\begin{equation*}
        \mu_k(B)=\pi_k\,\E\!\left[\beta\,\1\{R\in B\}\mid J=k\right].
\end{equation*}
In words, $\mu_k(B)$ is the expected type-$k$ resource carried by requests whose
value-to-size ratio lies in $B$.
All hypotheses below are conditions on the arrival distribution through the
measures $\mu_k$ and the conditional curvature measures defined next. For each
size $z$, define
\[
        \Lambda_{k,z}(B)=z\,\Prob(R\in B\mid J=k,\beta=z),
        \qquad B\subseteq[0,y].
\]
This is a finite Borel measure on ratio space. The kernel $\Lambda_{k,z}$ is the
size-conditioned counterpart of $\mu_k$: it records how ratio mass is distributed
among type-$k$ requests of size $z$. Let $P_k^\beta$ denote the
conditional law of $\beta$ given $J=k$, and denote
$
        \mathcal B_k=[\betamin_k,\betamax_k].
$
Define the finite kernel measure
\[
        \mathfrak M_k(\dd z,\dd r)
        :=
        \pi_k P_k^\beta(\dd z)\Lambda_{k,z}(\dd r).
\]
Equivalently, for every nonnegative measurable $g$,
\[
        \int g(z,r)\,\mathfrak M_k(\dd z,\dd r)
        =
        \pi_k\int_{\mathcal B_k}
        \left[
        \int g(z,r)\Lambda_{k,z}(\dd r)
        \right]
        P_k^\beta(\dd z).
\]
The ratio marginal of $\mathfrak M_k$ is the weighted-ratio measure:
\begin{equation*}
        \mathfrak M_k(\mathcal B_k\times B)
        =
        \pi_k\int_{\mathcal B_k}\Lambda_{k,z}(B)\,P_k^\beta(\dd z)
        =
        \mu_k(B),
        \qquad B\subseteq[0,y].
\end{equation*}
Endpoint-contact assumptions below are imposed on submeasures of
$\mathfrak M_k$, not on the law conditioned on $R\in U$.

\begin{definition}[Contact branch]\label{def:branch}
Fix a type $k$ and a one-sided endpoint neighborhood $U$. Let $x$ denote the
oriented distance into the support, so that $x=R-r$ at a lower endpoint and
$x=r-R$ at an upper endpoint. After this orientation, identify $U$ with a local
interval $[0,x_0]$.

A \emph{contact branch with exponent $\theta>0$} is a branch submeasure
$
        \mathfrak M_k^{\mathrm{br}}
        \le
        \mathfrak M_k|_{\mathcal B_k\times U}
$
represented by a size coordinate $\omega\in(0,\omega_0)$. It consists of an
injective $C^1$ size map $\beta_k(\omega)$, whose Jacobian is bounded above and
below by primitive constants, a weight $w$ with $c\le w(\omega)\le C$, a size
density $f(\omega)$, an edge function $e(\omega)$, and exponents $\alpha,\tau>0$
and $\gamma\ge1$ with
$
        \theta=\gamma+\alpha/\tau,
$
such that, with $\Lambda^{\mathrm{br}}_{k,\omega}$ denoting a finite branch
curvature measure supported on $[e(\omega),x_0]$ and having density
$\lambda^{\mathrm{br}}_{k,\omega}$,
\begin{equation}
\begin{gathered}
        f(\omega)\asymp\omega^{\alpha-1},
        \qquad
        e(\omega)\asymp\omega^{\tau},
        \qquad
        \lambda^{\mathrm{br}}_{k,\omega}(x)\asymp(x-e(\omega))^{\gamma-1},\\[1mm]
        \int g(z,x)\,\mathfrak M_k^{\mathrm{br}}(\dd z,\dd x)
        =
        \int_0^{\omega_0}
        w(\omega)
        \int g(\beta_k(\omega),x)\,
        \Lambda^{\mathrm{br}}_{k,\omega}(\dd x)\,
        f(\omega)\,\dd\omega
\end{gathered}
        \label{eq:branch-disint-def}
\end{equation}
for every nonnegative measurable $g$.

Equivalently, the branch ratio marginal is
\begin{equation*}
        \mu_k^{\mathrm{br}}(I)
        :=
        \mathfrak M_k^{\mathrm{br}}(\mathcal B_k\times I)
        =
        \int_0^{\omega_0}
        w(\omega)\,\Lambda^{\mathrm{br}}_{k,\omega}(I)\,
        f(\omega)\,\dd\omega,
        \qquad I\subseteq U.
\end{equation*}
The submeasure $\mu_k^{\mathrm{br}}$ is the branch's \emph{contact submeasure}.
All constants are primitive. The exponents $\alpha,\tau,\gamma$ and the contact
exponent $\theta=\gamma+\alpha/\tau$ are local to the endpoint-contact machinery;
they are not to be confused with the capacity-sweep parameter $\theta$ of
Section~\ref{sec:spm}, the resource constants $\alpha_j$, or the capacity
tolerance $\tau$ of Proposition~\ref{prop:active}.
\end{definition}

Informally, an endpoint-contact representation captures the case in which the
feasible ratio support begins at a size-dependent boundary, so that type-$k$
ratio mass accumulates only as the size moves away from that boundary.

\begin{definition}[Endpoint-contact representation]\label{def:endpoint-contact}
Fix a type $k$. A one-sided endpoint neighborhood $U$ admits a
\emph{single-branch endpoint-contact representation with exponent $\theta$} if
there exist measurable finite subkernels
$
        \Lambda^D_{k,z}$ and $
        \Lambda^{\mathrm{br}}_{k,z}
$
on $U$ such that, for $P_k^\beta$-a.e.\ $z$,
\begin{equation}
        \Lambda_{k,z}|_U
        =
        \Lambda^D_{k,z}
        +
        \Lambda^{\mathrm{br}}_{k,z}.
        \label{eq:endpoint-kernel-split}
\end{equation}
Define the corresponding kernel submeasures by
\[
        \mathfrak M_k^D(\dd z,\dd r)
        :=
        \pi_k P_k^\beta(\dd z)\Lambda^D_{k,z}(\dd r),
        \qquad
        \mathfrak M_k^{\mathrm{br}}(\dd z,\dd r)
        :=
        \pi_k P_k^\beta(\dd z)\Lambda^{\mathrm{br}}_{k,z}(\dd r).
\]
The branch submeasure $\mathfrak M_k^{\mathrm{br}}$ is required to be a single
contact branch with exponent $\theta$ in the sense of
Definition~\ref{def:branch}. The ratio marginals are
\begin{equation}
        \mu_k^D(I)
        :=
        \mathfrak M_k^D(\mathcal B_k\times I),
        \qquad
        \mu_k^{\mathrm{br}}(I)
        :=
        \mathfrak M_k^{\mathrm{br}}(\mathcal B_k\times I).
        \label{eq:endpoint-ratio-marginals}
\end{equation}
They satisfy, modulo null sets,
\begin{equation*}
        \mu_k|_U
        =
        \mu_k^D+\mu_k^{\mathrm{br}}.
\end{equation*}
The dominated part satisfies, in the local endpoint coordinate,
\begin{equation}
        \mu_k^D([0,x])\le Cx^\theta,
        \qquad
        \Lambda^D_{k,z}(I)\le C\mu_k(I)
        \quad
        \text{for every interval }I\subseteq U
        \text{ and for }P_k^\beta\text{-a.e.\ }z .
        \label{eq:dominated-component}
\end{equation}
All constants are primitive and uniform over $U$.
\end{definition}

\begin{definition}[Local endpoint mass exponent]\label{def:exponent}
Let $\nu$ be a finite Borel measure on $\mathbb R$ with interval support
$S_\nu$. For $r\in S_\nu$, a right local exponent is a number $\theta>0$, when
it exists, such that
\[
        \nu([r,r+x])\asymp x^\theta
        \qquad\text{as }x\downarrow0.
\]
A left local exponent is defined by the reflected relation
\[
        \nu([r-x,r])\asymp x^\theta
        \qquad\text{as }x\downarrow0.
\]
If $r$ is an endpoint of $S_\nu$, the one-sided local exponent from within the
support is called the endpoint exponent of $\nu$ at $r$. For the weighted ratio
measure $\nu=\mu_k$, we write $\pp_{k,r}$ for this endpoint exponent when it
exists.
\end{definition}

Informally, Assumption~\ref{ass:regularity} controls how the size-weighted ratio
mass behaves near an active acceptance cutoff.  Part~(a) says this mass is at
least \(\ell^{\pp}\) and at most \(\ell\) over a window of width \(\ell\), with
the exponent \(\pp\) governing the lower bound; part~(b) says that conditioning
on a request's realized size creates no additional concentration of mass, except
possibly at a corner of the support, which is handled separately; and part~(c)
says such corners cannot arise in the benign regime \(\pp=1\).

\begin{assumption}[Weighted-ratio regularity]\label{ass:regularity}
In addition to the boundedness assumptions above, the arrival distribution
satisfies the following conditions for some exponent $\pp\ge1$ and primitive
constants $0<c\le C<\infty$.
\begin{enumerate}[label=\textup{(\alph*)},leftmargin=2.2em]
\item \textup{(single-interval support and active mass)} For each type $k$, the
weighted ratio measure has compact single-interval support
\[
        S_k=\supp\mu_k=[r_k^-,r_k^+]\subseteq[0,y].
\]
For every interval $I\subseteq[0,y]$, write
\[
        \ell_k(I)=\Leb(I\cap S_k).
\]
The active-mass bounds are
\begin{equation}
        c\,\ell_k(I)^{\pp}\le \mu_k(I)\le C\,\ell_k(I).
        \label{eq:wr}
\end{equation}

\item \textup{(finite conditional-curvature cover)} For each type $k$, the support $S_k$
admits a finite cover by neighborhoods of the following two kinds, each relatively
open in $S_k$ and possibly overlapping, so that the cover has a positive Lebesgue number.

\emph{Dominated neighborhoods.} On a dominated neighborhood, for a.e.\ $z$ and
every interval $I$ contained in that neighborhood,
\[
        \Lambda_{k,z}(I)\le C\,\mu_k(I) .
\]
This simultaneous-in-$I$ requirement is equivalent to the per-interval bound---that
for each fixed $I$ the inequality holds for a.e.\ $z$---by applying the latter to
the countable family of rational-endpoint intervals, intersecting the
corresponding full-measure sets of $z$, and extending to all intervals by
continuity of the finite measures.

\emph{Endpoint-contact neighborhoods.} An endpoint-contact neighborhood is a one-sided
neighborhood of an endpoint of $S_k$ that admits the endpoint-contact representation of
Definition~\ref{def:endpoint-contact}.

\item \textup{(regular case)} When $\pp=1$, the finite cover consists only of dominated
neighborhoods. Endpoint-contact neighborhoods may appear only when $\pp>1$.
\end{enumerate}

All constants in the active-mass bounds, finite cover, dominated-neighborhood comparison, and
endpoint-contact representation are primitive and independent of $T$.
\end{assumption}

The single-interval support condition in Assumption~\ref{ass:regularity} is a
modeling convention rather than a substantive restriction when support components
are observable. A type whose ratio distribution has several observable support
components can be split into several observable types, one for each component.
This convention keeps the projection and normal-cone arguments below free of
internal gaps. No rectangular or product support for $(\beta,R)$ is assumed,
except inside the endpoint-contact neighborhoods of
Definition~\ref{def:endpoint-contact}. For later use, define the type-wise
projection onto the active ratio support by
\begin{equation*}
        \Pi_k(q)=\min\{r_k^+,\max\{r_k^-,q\}\},\qquad q\in[0,y].
\end{equation*}
For a general compact interval $B=[B_-,B_+]$, we use the analogous notation
\[
        \Pi_B(q)=\min\{B_+,\max\{B_-,q\}\}.
\]
The projection collapses only the two
value-flat rays outside a single support interval; it does not cross internal
gaps.

The two local structures in Assumption~\ref{ass:regularity} have different
roles. On a dominated neighborhood, the size-conditioned curvature is comparable
to the weighted ratio measure. Thus conditioning on the realized size does not
create an additional singularity. Endpoint-contact neighborhoods are the only
places where this domination may fail. There, conditional on size, the feasible
ratio interval can start at a size-dependent edge $e(\omega)$. The contact
conditions in Definition~\ref{def:branch} provide exactly the extra structure
needed for the endpoint Hardy estimate, a weighted integral inequality used in
Section~\ref{sec:spm}. The fully dominated requirement when $\pp=1$ rules out
such non-dominated contact in the regular regime.

The exponent $\pp$ in Assumption~\ref{ass:regularity} is the active
weighted-mass exponent that determines the regret rate in Theorem~\ref{thm:main}.
It is a property of the arrival distribution, not of the policy or the capacity
ratio. If several values of $\pp$ satisfy the active-mass condition
\eqref{eq:wr}, the sharpest bound is obtained by taking the smallest admissible
one. In the structured classes below, this exponent can be computed explicitly
from the endpoint growth of the weighted ratio measure.

\subsection{Main Regret Bounds}\label{subsec:mainbound}

We now state the regret guarantee for the sample-path marginal policy. The theorem
gives a distribution-dependent rate through the active weighted-mass exponent
$\pp$, and the corollaries translate this exponent into primitive conditions for
several common model classes.

\begin{theorem}[Regret bound for the sample-path marginal policy]\label{thm:main}
Under the model assumptions of Section~\ref{subsec:model}, the linear capacity
scaling $b_T=\Theta(T)$, and Assumption~\ref{ass:regularity} with active weighted-mass
exponent $\pp$, the sample-path marginal policy defined with the exact expected hindsight value $\Phi_n=\E[\OPT_n]$ satisfies
\begin{equation*}
        \Reg_T(\SPM;b_T)\le
        \begin{cases}
        C\,(\log(eT))^2, & \pp=1,\\[1mm]
        C\,T^{1/2-1/(2\pp)}\,(\log(eT))^{(\pp+1)/(2\pp)+1}+C, & \pp>1.
        \end{cases}
\end{equation*}
Here the constant $C$ depends only on the model primitives (including the
direction matrix) and the constants in Assumption~\ref{ass:regularity}; it
depends neither on $T$ nor on the capacity $b_T$. In fact the bound holds for
every capacity $b_T\in\R_+^d$; the linear scaling $b_T=\Theta(T)$ is the regime of
interest and enters only through the matching lower bound of
Section~\ref{sec:lower}.
\end{theorem}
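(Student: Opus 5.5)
The plan is to follow the per-step loss decomposition of \citet{VeraBanerjee2021}, as used by \citet{Bray2024} and \citet{JiangMaZhang2025Degeneracy}. Write $B_s$ for the remaining capacity when $s$ periods remain. Since $\Phi_s(b)=\E[\max\{V+\Phi_{s-1}(b-\beta a^J),\,\Phi_{s-1}(b)\}]$ fails only in that hindsight uses the future, telescoping $\Phi_T(b_T)-\E\sum_t V_tx_t$ along the $\SPM$ trajectory gives
\[
\Reg_T(\SPM;b_T)=\sum_{s=1}^T \E\bigl[L_s(B_s)\bigr],
\qquad
L_s(b)=\Phi_s(b)-\E\bigl[\max\{V-\Delta_s(b,J,\beta),0\}\bigr]-\Phi_{s-1}(b).
\]
Here $\SPM$ is exactly the greedy rule that attains the inner maximum, so $L_s$ is the one-step Bellman gap between the hindsight value and the best myopic continuation. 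The whole task then reduces to bounding $L_s(b)$ uniformly in $b$ by a function of $n=s-1$, namely $C\log n/n$ when $\pp=1$ and $C\,(\log n/n)^{(1+1/\pp)/2}\cdot\polylog$ when $\pp>1$. Summing over $s$ gives $(\log T)^2$ and $T^{1/2-1/(2\pp)}\polylog T$ respectively, and the additive $C$ absorbs the small-$s$ terms.

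To bound $L_s$, I would compare $\OPT_s$ on a sample path whose first arrival is $(k,z,v)$ with the value of that path's last $n$ arrivals. A fractional hindsight solution decides the first request by whether its ratio $v/z$ exceeds the sample-path cutoff along the segment from $b$ to $b-za^k$. By concavity of $\OPT_n$ restricted to that segment, this cutoff is the average slope $q(W_n)$. $\SPM$ instead uses the averaged slope $\Delta_s/z=\E[q(W_n)]$. The loss is therefore at most $\E\bigl[z\,|q(W_n)-\bar q|\,\1\{v/z \text{ between } q(W_n),\bar q\}\bigr]$, computed under $\Lambda_{k,z}$ and integrated against $P_k^\beta$. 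The key intermediate estimate is the product bound: for two independent typical paths $W_n,W_n'$, outside an event of probability $n^{-2}$,
\[
|q-q'|\cdot\sum_k\mu_k\bigl([q\wedge q',q\vee q']\bigr)\lesssim \sqrt{\log n/n}\cdot\bigl(|q-q'|\vee\sqrt{\log n/n}\bigr).
\]
I would prove this through Hoffman's bound. Both paths' accepted resource is determined by their type-wise projected cutoffs $\Pi_k(q)$ and their empirical type-ratio measures. Concentration (DKW-type) makes these measures agree with $\mu_k$ to within $\sqrt{\log n/n}$ uniformly over intervals, and \citet{Hoffman1952} makes the accepted resource totals $O(\sqrt{\log n/n})$ apart. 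The difference in accepted resource is exactly the mass between the cutoffs, so the mass in the band is at most $\sqrt{\log n/n}$ plus fluctuation. The lower bound in \eqref{eq:wr} then converts this into a width bound $\ell\lesssim(\log n/n)^{1/(2\pp)}$. The per-step loss is width times mass, which is $(\log n/n)^{1/2+1/(2\pp)}$. For $\pp=1$ it is instead $\log n/n$, using the upper bound $\mu_k(I)\le C\ell_k(I)$.

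The final step converts the aggregate bound (stated in terms of $\mu_k$) into a size-conditioned bound (in terms of $\Lambda_{k,z}$). On dominated neighborhoods of the cover in Assumption~\ref{ass:regularity}(b) this is immediate. The positive Lebesgue number lets a short band be placed inside a single neighborhood, and longer bands are handled crudely, since they occur only with probability $n^{-2}$. On endpoint-contact neighborhoods, which arise only when $\pp>1$ by part (c), I would integrate the branch disintegration \eqref{eq:branch-disint-def} over $\omega$. Sizes with $e(\omega)$ inside the band contribute $\int(x-e)^{\gamma-1}$, and a Hardy-type integral shows that the $\omega$-integral of the conditional band mass times the width is bounded by $\log(1/\ell)$ times $\ell\,\mu^{\mathrm{br}}_k(\text{band})$, using $\theta=\gamma+\alpha/\tau$.

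I expect two steps to be hardest. The main one is the size-integration in the endpoint-contact case, because the cutoff band is random and correlated with the path. The product bound itself is the other delicate point. The remaining steps (the telescoping, concentration, and summation) are routine.
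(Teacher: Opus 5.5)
Your outline follows the paper's architecture: telescoping to one-step Jensen losses, cutoffs along the sweep from $b$ to $b-za^k$, width times swept mass, a Hoffman-based stability estimate, the lower mass bound, and dominated versus endpoint-contact neighborhoods. Your displayed product estimate also has the right pre-Young shape. The gap is in the central stability step, where the mechanism you give does not deliver the rate. Hoffman's bound says that a point of one polytope $K_{\widehat m_W}(\rho_\theta)$ lies within $O(\|\widehat m_W-\widehat m_{W'}\|_\infty+\|\rho_\theta-\rho_{\theta'}\|_\infty)$ of the other polytope. It says nothing about the distance between the two \emph{optimizers}. These maximize different objectives: each path's own projected cutoff vector, $p=\Pi(q)$ and $\bar p=\Pi(\bar q)$. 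Under dual non-uniqueness these objectives can be far apart, which is exactly the situation to be controlled. So ``accepted resources are $O(\delta_n)$ apart, hence band mass is $O(\delta_n)$'' does not follow from Hoffman. What your comparison actually yields is a monotonicity pairing. Project each optimizer into the other polytope and compare objective values to get $(p-\bar p)^\top(u-\bar u)\ge -C(\delta_n+\tau)$. Combine this with the tail relations, which give type by type $(p_k-\bar p_k)(u_k-\bar u_k)\le-\ell_kM_k+2\delta_n\ell_k$. Because prices are $O(1)$, the Hoffman displacement enters additively, and the result is only $\sum_k\ell_kM_k\lesssim\delta_n$. That gives per-stage loss of order $\sqrt{\log n/n}$ and regret of order $\sqrt{T\log T}$, which misses the theorem for every $\pp$. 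An additive $\delta_n$ cannot be absorbed by the lower mass bound.

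The missing idea is a mixed-difference (four-point) estimate in which the type-mass perturbation is multiplied by the cutoff gap. If $u$ maximizes $p^\top v$ over $K_m(\rho)$ and $\bar u$ maximizes $\bar p^\top v$ over $K_{\bar m}(\bar\rho)$, then $(p-\bar p)^\top(u-\bar u)\ge -C\|\rho-\bar\rho\|_\infty-C\|m-\bar m\|_\infty\|p-\bar p\|_1$. The paper proves this for the support function $h_{m,\rho}(r)=\max\{r^\top v:v\in K_m(\rho)\}$ in three moves:
\begin{itemize}
\item it changes $m$ one coordinate at a time;
\item it integrates the support-function derivative along the segment from $\bar p$ to $p$;
\item it uses a Hoffman bound on the \emph{optimal face} of a fixed objective, with a constant uniform over objectives (finitely many faces, chained along a path of right-hand sides).
\end{itemize}
Since $\|p-\bar p\|_1=\sum_k\ell_k$, and the capacity gap is only $\tau=O(1/n)$ because both paths share $b$, this gives $\sum_k\ell_kM_k\le C\tau+C\delta_n\sum_k\ell_k$. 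That is a bound on the product, not on the mass. Young's inequality with $M_k\ge c\,\ell_k^{\pp}$ then yields $C(\tau+\delta_n^{1+1/\pp})$. The argument also needs each path's cutoff vector to be a projected normal to the type-level polytope at its allocation. That comes from complementary slackness after clipping the duals so that the cutoffs lie in $[0,y]$, and then projecting onto $S_k$.

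Three smaller points:
\begin{itemize}
\item The fractional benchmark may split the current request, so $\OPT_s\le\OPT_n(b;W)+(v-zY_W)^+$ is false. You need the $d\vbar/s$ rounding, or the bound $g(0)+\int_0^1(v-zq_\theta)^+\,d\theta$.
\item Your pairwise bound concerns cutoffs at single capacities, whereas $Y_W$ and $\E_W Y_W$ are averages. You must pass through $Y_W=\int_0^1 q_{\theta,W}\,d\theta$ and a projected-hull Jensen bound.
\item At a contact endpoint the logarithm does not come from a fixed band; for a band common to all sizes the disintegration is exact. It arises because the hull of good cutoffs depends on the current size through the sweep, so only a per-size cap $d_\omega\,\mu^{\mathrm{br}}_k(I_\omega)\le r$ is available.
\end{itemize}
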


The proof of Theorem~\ref{thm:main} is presented in Section~\ref{sec:spm}.
Concentration bounds the empirical cutoff error; the active weighted-mass
condition \eqref{eq:wr} then governs how that error becomes lost mass, and the
dominated-neighborhood comparison and endpoint-contact structure convert this
into a bound on the one-step loss.

The next corollaries translate Assumption~\ref{ass:regularity} into more
primitive distributional conditions. They are grouped by the source of
nonregularity: first the ratio distribution itself, then the joint distribution
of value and consumption, and finally the shape of the consumption distribution.
Each proof identifies the
corresponding value of $\pp$ and then applies Theorem~\ref{thm:main}; the proofs
are collected in Appendix~\ref{app:examples}.

\begin{corollary}[Independent size and ratio, bounded density]\label{cor:regret-regular}
In the setting of Theorem~\ref{thm:main}, for each type $k$, let $(\beta_k,V_k,R_k)$ have the conditional distribution of $(\beta,V,R)$ given $J=k$. Suppose that $\beta_k$ and $R_k$ are independent, and
that $R_k$ has a density bounded above and below by positive constants on its
single-interval support. Then
\[
        \Reg_T(\SPM;b_T)\le C\,(\log(eT))^2 .
\]
\end{corollary}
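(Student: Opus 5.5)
The plan is to verify Assumption~\ref{ass:regularity} with $\pp=1$ and a cover consisting of a single dominated neighborhood, and then invoke Theorem~\ref{thm:main} in its $\pp=1$ case.

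Fix a type $k$. Let the density of $R_k$ be $h_k$, supported on $S_k=[r_k^-,r_k^+]$, with $c_h\le h_k\le C_h$ there. By independence,
\[
        \mu_k(B)=\pi_k\E[\beta_k]\,\Prob(R_k\in B).
\]
Hence $\mu_k$ has density $\pi_k\E[\beta_k]h_k$ and support exactly $S_k$, which is a single interval, as part~(a) requires. For any interval $I$, this gives
\[
        \pi_k\E[\beta_k]\,c_h\,\ell_k(I)\;\le\;\mu_k(I)\;\le\;\pi_k\E[\beta_k]\,C_h\,\ell_k(I).
\]
Both constants are primitive, since $\betamin_k\le\E[\beta_k]\le\betamax_k$. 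So \eqref{eq:wr} holds with $\pp=1$.

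For part~(b), independence gives $\Prob(R\in I\mid J=k,\beta=z)=\Prob(R_k\in I)$ for $P_k^\beta$-a.e.\ $z$. Therefore
\[
        \Lambda_{k,z}(I)=z\,\Prob(R_k\in I)\le\frac{\betamax_k}{\pi_k\E[\beta_k]}\,\mu_k(I).
\]
This is the dominated-neighborhood bound, and it holds simultaneously for all intervals $I$ once we fix a regular version of the conditional law. The one-element cover $\{S_k\}$ is relatively open in $S_k$ and has a positive Lebesgue number trivially. No endpoint-contact neighborhood is used, so part~(c) holds as well.

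Taking $C$ as the maximum of these finitely many type-wise constants, Assumption~\ref{ass:regularity} holds with $\pp=1$. Theorem~\ref{thm:main} then gives $\Reg_T(\SPM;b_T)\le C(\log(eT))^2$.

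The only point needing care is measure-theoretic. We must make the statement "$\Lambda_{k,z}$ equals $z$ times the law of $R_k$ for a.e.\ $z$" precise. This follows by choosing the product disintegration as the regular conditional distribution, which is legitimate under independence; any other version agrees with it $P_k^\beta$-a.e., and the dominated-neighborhood condition is only required almost everywhere. Beyond this the argument is a direct check, and I expect no real obstacle.
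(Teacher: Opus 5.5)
Your proposal is correct and follows the paper's proof. The paper also writes $\mu_k=\pi_k\E[\beta\mid J=k]\,\Prob(R\in\cdot\mid J=k)$, reads off \eqref{eq:wr} with $\pp=1$ from the two-sided density bound, gets domination from $\Lambda_{k,z}(I)\le \betamax_k\,\mu_k(I)/(\pi_k\E[\beta\mid J=k])$ via Proposition~\ref{prop:independent-ratio}, and then applies the $\pp=1$ case of Theorem~\ref{thm:main}. Your one-element cover $\{S_k\}$ skips the finite-cover patching that the proposition needs for its general endpoint exponents, which is legitimate here because the density is bounded below on all of $S_k$.
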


\begin{corollary}[Independent size and ratio]\label{cor:regret-ratio}
In the setting of Theorem~\ref{thm:main}, for each type $k$, let $(\beta_k,V_k,R_k)$ have the conditional distribution of $(\beta,V,R)$ given $J=k$. Suppose that $\beta_k$ and $R_k$ are independent.
Suppose also that $R_k$ has single-interval support, is regular in the interior
of that support---meaning $c|I|\le\Prob(R_k\in I)\le C|I|$ for every compact
interval $I$ in the interior---and at each endpoint $r$ satisfies the one-sided
interval bound
\[
        c\,|I|^{\theta_{k,r}}\le \Prob(R_k\in I)\le C\,|I|
\]
for every interval $I$ in a one-sided endpoint neighborhood, with endpoint
intervals of order $x^{\theta_{k,r}}$. With
\[
        \theta=\max_{k,r}\theta_{k,r}>1,
\]
the sample-path marginal policy satisfies
\[
        \Reg_T(\SPM;b_T)\le
        C\,T^{1/2-1/(2\theta)}
        (\log(eT))^{(\theta+1)/(2\theta)+1}+C .
\]
\end{corollary}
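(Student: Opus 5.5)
The plan is to verify Assumption~\ref{ass:regularity} with $\pp=\theta$ and then invoke Theorem~\ref{thm:main}. Write $\beta_k,R_k$ for the type-$k$ conditional variables. Independence gives $\Lambda_{k,z}(B)=z\,\Prob(R_k\in B)$ for a.e.\ $z$. Hence
\[
        \mu_k(B)=\pi_k\,\E[\beta_k]\,\Prob(R_k\in B).
\]
So $\mu_k$ is a constant multiple of the law of $R_k$, with the constant $\pi_k\E[\beta_k]\in[\pi_k\betamin_k,\pi_k\betamax_k]$ bounded above and below by primitives. Its support is the single interval $S_k=[r_k^-,r_k^+]$, which gives the first part of (a).

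\textbf{Active-mass bounds.} For an interval $I$, intersect with $S_k$. The upper bound $\mu_k(I)\le C\ell_k(I)$ follows by splitting $I\cap S_k$ into at most three pieces: two endpoint neighborhoods and a compact interior piece. On each piece the hypothesis gives $\Prob(R_k\in\cdot)\le C|\cdot|$.

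For the lower bound, choose the piece of largest length; it has length at least $\ell_k(I)/3$.
\begin{itemize}
\item If that piece is interior, the hypothesis gives mass at least $c\,\ell_k(I)/3$.
\item If it lies in an endpoint neighborhood, the hypothesis gives mass at least $c\,(\ell_k(I)/3)^{\theta_{k,r}}$.
\end{itemize}
Since $\ell_k(I)\le y$ and every $\theta_{k,r}\le\theta$ with $\theta\ge1$, each case is at least $c'\ell_k(I)^{\theta}$, after absorbing a factor $y^{\theta-\theta_{k,r}}$ into constants. This gives \eqref{eq:wr} with $\pp=\theta$.

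One technicality: an endpoint interval must be regarded as lying in the interior, or else the endpoint neighborhoods must be taken large enough to overlap the interior region. I would choose the interior region to be a compact interval overlapping both endpoint neighborhoods, so that every interval splits as required. If the three-piece split fails because $I$ lies in a region not covered, enlarge the neighborhoods using compactness; the constants stay primitive.

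\textbf{Curvature cover.} Parts (b) and (c) are where independence does the work. For a.e.\ $z$ and every interval $I$,
\[
        \Lambda_{k,z}(I)=z\,\Prob(R_k\in I)=\frac{z}{\pi_k\E[\beta_k]}\,\mu_k(I)\le\frac{\betamax_k}{\pi_k\betamin_k}\,\mu_k(I).
\]
So the single neighborhood $S_k$ itself is a dominated neighborhood. The cover $\{S_k\}$ trivially has a positive Lebesgue number, and (c) holds vacuously because no endpoint-contact neighborhoods are used.

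\textbf{Conclusion.} Theorem~\ref{thm:main} with $\pp=\theta>1$ then yields the stated bound. The only step needing care is the lower bound in (a), specifically organizing the split so that every interval's mass is controlled by the weakest exponent $\theta$. I expect no serious obstacle there.
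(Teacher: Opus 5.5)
Your proposal is correct and follows the paper's route. The paper's proof cites Proposition~\ref{prop:independent-ratio}, which, as you do, writes $\mu_k=\pi_k\E[\beta\mid J=k]\,\Prob(R_k\in\cdot)$, patches the interior and endpoint interval bounds into \eqref{eq:wr}, and uses independence to get $\Lambda_{k,z}(I)\le(\betamax_k/(\pi_k\E[\beta\mid J=k]))\,\mu_k(I)$, so the cover is fully dominated, before applying Theorem~\ref{thm:main} with $\pp=\theta$. The only cosmetic difference is that the paper patches the lower bound through a Lebesgue number of the cover, whereas you take the largest of three pieces; both work once the compact interior interval is chosen to overlap the endpoint neighborhoods, as you note.
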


When \(\beta_k\equiv1\), the model reduces to deterministic consumption with
continuous rewards: the ratio is simply \(R_k=V_k\), and type \(k\) consumes the
deterministic resource vector \(a^k\). In the regular case \(\pp=1\),
Corollary~\ref{cor:regret-regular} recovers the \(O((\log(eT))^2)\) guarantee of
\citet{JiangMaZhang2025Degeneracy}. Corollary~\ref{cor:regret-ratio} further
extends this comparison beyond densities bounded away from zero. It allows the
reward density, equivalently the ratio density, to vanish polynomially near
active cutoffs, with the regret rate determined by the corresponding active-mass
exponent.

This deterministic-consumption specialization should also be compared with
\citet{BesbesEtAl2024}. Their multisecretary model has a single resource and
unit consumption, whereas the specialization here allows multiple resources and
type-dependent deterministic consumption vectors. In this sense,
Corollary~\ref{cor:regret-ratio} extends the same cutoff-mass perspective to a
multi-resource deterministic-consumption setting.

The two preceding corollaries take the endpoint behavior of the ratio
distribution as primitive. This behavior can also arise from simpler primitives.
When value and consumption are independent, the ratio support can acquire a
corner at an endpoint even if both marginal densities are bounded above and
below. In that case the active weighted-mass exponent is two.

\begin{corollary}[Bounded independent densities]\label{cor:regret-bounded}
In the setting of Theorem~\ref{thm:main}, for each type $k$, let $(\beta_k,V_k,R_k)$ have the conditional distribution of $(\beta,V,R)$ given $J=k$. Suppose that $V_k$ and $\beta_k$ are independent, have
compact supports bounded away from zero, and have densities bounded above and
below by positive constants on their supports. Then
\[
        \Reg_T(\SPM;b_T)\le C\,T^{1/4}(\log(eT))^{7/4} .
\]
\end{corollary}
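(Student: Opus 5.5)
The plan is to verify Assumption~\ref{ass:regularity} with $\pp=2$ and then invoke Theorem~\ref{thm:main}. At $\pp=2$ the exponent is $T^{1/2-1/4}=T^{1/4}$, and the log power is $(\pp+1)/(2\pp)+1=3/4+1=7/4$. This gives the claimed bound.

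\textbf{Setup and the weighted ratio measure.} Fix a type $k$. Write $V_k\in[v_-,v_+]$ with density $g$, and $\beta_k\in[b_-,b_+]$ with density $h$, where $0<v_-<v_+$, $0<b_-<b_+$, and $g,h\asymp1$. Conditional on $\beta=z$, the ratio $R=V/z$ has density $zg(zr)$ on the interval $[v_-/z,v_+/z]$. Hence
\[
\Lambda_{k,z}(\dd r)=z^2g(zr)\,\dd r
\quad\text{and}\quad
\mu_k(\dd r)=\pi_k\Bigl(\int z^2g(zr)h(z)\,\dd z\Bigr)\dd r .
\]
The support is $S_k=[v_-/b_+,\,v_+/b_-]$, a single interval. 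The density of $\mu_k$ at $r$ is comparable to the Lebesgue measure of the set $\{z\in[b_-,b_+]: v_-\le zr\le v_+\}$, which is an interval in $z$. This length is bounded above, so $\mu_k(I)\le C\ell_k(I)$.

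The length is bounded below on compact subsets of the interior. Near the lower endpoint $r^-=v_-/b_+$ it behaves like $b_+-v_-/r\asymp r-r^-$. The upper endpoint is symmetric. Hence the density of $\mu_k$ is comparable to $\min\{1,\operatorname{dist}(r,\partial S_k)\}$. Integrating, any interval $I$ of length $\ell$ inside $S_k$ has mass at least $c\ell^2$, the worst case being an interval abutting an endpoint. So \eqref{eq:wr} holds with $\pp=2$, and no smaller $\pp$ is admissible, so part~(c) imposes no constraint.

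\textbf{Conditional-curvature cover.} On a neighborhood $N$ with closure in the interior of $S_k$, we have $\Lambda_{k,z}(I)\le C|I|\le C'\mu_k(I)$, so $N$ is dominated. Near each endpoint, domination fails: for $z$ near $b_+$, the kernel $\Lambda_{k,z}$ puts density of order $1$ right at its edge $v_-/z$, while $\mu_k$ there has density of order $r-r^-$. I would represent a lower-endpoint neighborhood $U$ as a single contact branch, with $\Lambda^D=0$, as follows.
\begin{itemize}
\item[] Take $\omega=b_+-z$ and $\beta_k(\omega)=b_+-\omega$, which is $C^1$ with unit Jacobian.
\item[] The size density is $f(\omega)=h(b_+-\omega)\asymp1$, so $\alpha=1$.
\item[] The edge is $e(\omega)=v_-/(b_+-\omega)-v_-/b_+\asymp\omega$, so $\tau=1$.
\item[] The branch density is $\lambda^{\mathrm{br}}_{k,\omega}(x)=z^2g(z(x+r^-))$ on $[e(\omega),x_0]$. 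This is $\asymp1=(x-e)^{0}$, so $\gamma=1$.
\item[] The weight is $w\equiv\pi_k$.
\end{itemize}
Then $\theta=\gamma+\alpha/\tau=2$. For $\omega\ge\omega_0$ (sizes bounded away from $b_+$), the edge exceeds $x_0$ if $x_0$ is chosen small relative to $\omega_0$, so these sizes contribute nothing on $U$. The upper endpoint $v_+/b_-$ is symmetric, with $\omega=z-b_-$. Together with interior dominated neighborhoods, these sets give a finite cover with a positive Lebesgue number.

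\textbf{Main obstacle.} The main obstacle is the bookkeeping near the endpoints, specifically the choice of $x_0$ and $\omega_0$. The neighborhood $U$ must contain only the single corner branch, with all constants primitive and uniform. This requires $x_0<v_-/(b_+-\omega_0)-v_-/b_+$, so that sizes outside the branch range put no mass in $U$. Such a choice exists because the edge function is strictly monotone in $\omega$.
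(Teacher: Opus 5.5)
Your proposal is correct and follows the paper's route. The paper's proof invokes Proposition~\ref{prop:independent-value-size} with $a_V^\pm=a_\beta^\pm=1$ to get $\pp=2$ and then applies Theorem~\ref{thm:main}; your argument re-derives that proposition's content in this special case, namely interior domination from a density bounded below plus a single endpoint-contact branch with $\gamma=\alpha=\tau=1$ and $\Lambda^D=0$. The only differences are cosmetic: the paper places $\pi_k$ in $f$ rather than in $w$, and it takes $x_0=e(\omega_0)$ rather than your strict inequality $x_0<e(\omega_0)$.
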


The exponent two in Corollary~\ref{cor:regret-bounded} is not special. Holding
the value distribution fixed and allowing the consumption density to vanish at an
endpoint gives a continuum of active weighted-mass exponents.

\begin{corollary}[Beta consumption and uniform value]\label{cor:regret-beta}
In the setting of Theorem~\ref{thm:main}, for each type $k$, let $(\beta_k,V_k,R_k)$ have the conditional distribution of $(\beta,V,R)$ given $J=k$. Suppose that $V_k\sim\Unif[v_k^-,v_k^+]$, with
$0<v_k^-<v_k^+$, is independent of
\[
        \beta_k=\betamin_k+(\betamax_k-\betamin_k)Y_k,
\]
where $Y_k\sim\mathrm{Beta}(a_k,b_k)$ with $a_k,b_k>0$. Then, writing
$q=\max_k\{a_k,b_k\}$,
\[
        \Reg_T(\SPM;b_T)\le
        C\,T^{1/2-1/(2(1+q))}
        (\log(eT))^{(q+2)/(2(1+q))+1}+C .
\]
\end{corollary}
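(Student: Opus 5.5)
The plan is to verify Assumption~\ref{ass:regularity} for each type with exponent $\pp=1+q$ and then invoke Theorem~\ref{thm:main}. Since $q>0$ we have $\pp>1$, so part~(c) imposes nothing and the $\pp>1$ branch of the theorem applies. Substituting $\pp=1+q$ gives $T^{1/2-1/(2(1+q))}$ and the log power $(\pp+1)/(2\pp)+1=(q+2)/(2(1+q))+1$, which is exactly the claimed bound.

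Fix a type $k$ and drop the index. Write $v^\pm$ for the value endpoints and $[\betamin,\betamax]$ for the size range. The ratio support is $S=[r^-,r^+]=[v^-/\betamax,\,v^+/\betamin]$. Conditional on $\beta=z$, $R=V/z$ is uniform on $[v^-/z,v^+/z]$, so
\[
\Lambda_{z}(\dd r)=\frac{z^2}{v^+-v^-}\1\{v^-/z\le r\le v^+/z\}\dd r .
\]
This yields the upper bounds immediately: $\Lambda_z(I)\le C|I|$ for all $z$, hence $\mu(I)\le C\,\ell(I)$. The density $m(r)=\pi\int \Lambda_z(r)\,P^\beta(\dd z)$ of $\mu$ is bounded below on every compact subset of the open interval $(r^-,r^+)$. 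Indeed, for $r$ interior, a set of sizes $z$ of positive $P^\beta$-mass has $r\in(v^-/z,v^+/z)$; this uses $v^-<v^+$ and the fact that Beta has full support.

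Near the lower endpoint, write $x=r-r^-$ and $\omega=\betamax-\beta$. Then $R\le r^-+x$ requires both $\omega\lesssim x$ and $V-v^-\lesssim x$. Since the size density behaves like $\omega^{b-1}$ and the value density is bounded above and below, we get
\[
m(r^-+x)\asymp\int_0^{cx}\omega^{b-1}\,\dd\omega\asymp x^{b},
\qquad
\mu([r^-,r^-+x])\asymp x^{1+b}.
\]
Symmetrically, at the upper endpoint ($V$ near $v^+$, $\beta$ near $\betamin$, Beta exponent $a$) the density behaves like $x^{a}$. To get the lower bound $c\,\ell(I)^{\pp}$ in \eqref{eq:wr} for every interval, not just endpoint-anchored ones, I would use that $m(r^\pm\mp x)\gtrsim x^{\max(a,b)}$ near each endpoint together with $m\ge c$ in the interior. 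Then for $I\cap S=[s,s+\ell]$, the mass is at least $\int_{\ell/2}^{\ell} c\,u^{q}\,\dd u$ after moving to the half of the interval farther from the endpoint. This gives $\mu(I)\ge c\,\ell^{1+q}$, with intervals straddling the interior handled by splitting.

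For part~(b), I would cover $S$ by three relatively open pieces: an interior compact-closure piece and two one-sided endpoint neighborhoods $[r^-,r^-+x_0)$ and $(r^+-x_0,r^+]$. Each piece has a fixed positive length, which gives a positive Lebesgue number. On the interior piece, $\Lambda_z(I)\le C|I|\le C'\mu(I)$ because $m$ is bounded below there, so the piece is dominated. On the lower neighborhood I would exhibit a single contact branch (Definition~\ref{def:branch}) with the following data:
\begin{itemize}
\item size coordinate $\omega=\betamax-\beta$, so $\beta(\omega)=\betamax-\omega$ with unit Jacobian;
\item weight $w\equiv\pi$;
\item size density $f(\omega)\asymp\omega^{b-1}$, giving $\alpha=b$;
\item edge $e(\omega)=v^-/(\betamax-\omega)-r^-\asymp\omega$, giving $\tau=1$;
\item conditional density $z^2/(v^+-v^-)$, bounded above and below on $[e(\omega),x_0]$, giving $\gamma=1$.
\end{itemize}
The contact exponent is then $\theta=1+b$, and the dominated part $\Lambda^D$ is zero. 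The upper neighborhood is the mirror image with $\omega=\beta-\betamin$ and exponent $1+a$. Both contact exponents are at most $1+q=\pp$, consistent with part~(a).

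The main obstacle is bookkeeping rather than ideas. First, one must check that the branch disintegration \eqref{eq:branch-disint-def} reproduces $\Lambda_z|_U$ exactly for a.e.\ $z$. This requires choosing $x_0$ small enough that the upper edge $v^+/z$ never enters $U$, which is possible because $v^+/\betamax>r^-$. Second, all constants must be made primitive and uniform. A smaller wrinkle is the case $b<1$ (or $a<1$), where the size density blows up; the $\asymp$ relations above still hold, but the lower bound on $m$ near the endpoint should be taken as $x^{b}\ge x^{q}$ for $x\le1$ rather than assuming monotonicity of $m$.
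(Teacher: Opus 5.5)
Your proposal is correct and follows the paper's route. The paper proves the corollary by applying Proposition~\ref{prop:independent-value-size} with $a_V^\pm=1$, $a_\beta^-=a_k$, $a_\beta^+=b_k$, which gives ratio-endpoint exponents $1+b_k$ and $1+a_k$ and hence $\pp=1+q$. Your single-branch construction ($\tau=1$, $\gamma=1$, $\alpha=b$ or $a$, no dominated remainder), together with the dominated interior cover, is exactly that proposition's proof specialized to uniform value and Beta size. The paper also fixes $x_0=e(\omega_0)$, so that sizes outside the branch range never charge $U$; this closes the one bookkeeping item you left implicit.
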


Across the four corollaries the single exponent $\pp$ interpolates between the
polylogarithmic regime and the $T^{1/2}$ barrier.

\begin{remark}[Capacity dependence and a local exponent]\label{rem:local}
The exponent \(\pp\) in Theorem~\ref{thm:main} is a worst-case quantity over
capacities: it must control active mass at every cutoff covered by the global
regularity assumption. At a fixed capacity, only a local part of the ratio
support is active, and the active mass may be less sparse than the global
exponent suggests.

This distinction is visible in Example~1, where reward and size are independent
and uniform on \([1,2]\). The global exponent is \(\pp=2\), and the global theorem
gives a \(T^{1/4}\)-order polynomial rate, up to logarithmic factors, at the
critical capacity
\[
        b_T=\frac32T=\E[\beta]\,T .
\]
At any fixed binding capacity \(b_T=cT\) with \(0<c<3/2\), however, the active
cutoff is interior. The capacity-local exponent is then \(\pp_{\mathcal R}=1\),
and \(\SPM\) attains \(O((\log(eT))^2)\) regret, with a constant that may
deteriorate as \(c\) approaches \(0\) or \(\tfrac32\). Thus the polynomial behavior in
Example~1 is a critical-capacity phenomenon: the same primitive distribution has
polylogarithmic regret at every fixed binding non-critical capacity.
Appendix~\ref{app:local} defines the capacity-local exponent and proves the
interior bound in Theorem~\ref{thm:local}.
\end{remark}

\subsection{Proof roadmap}\label{subsec:proof-roadmap}

We give the proof of Theorem~\ref{thm:main} in Section~\ref{sec:spm}. Before
turning to the details, we summarize the four steps of the argument.

\paragraph{Step 1: Regret reduces to one-period Jensen losses.}
The first step is to show that a Bellman comparison along the $\SPM$ trajectory
bounds regret by a sum of one-period Jensen losses:
\[
        \Reg_T
        \le
        C+\sum_s
        \left(
        \E\!\left[
        \E_W\!\left[H(Y_W)\right]
        -
        H\!\left(\E_W[Y_W]\right)
        \right]
        +\frac{C}{s}
        \right).
\]
The harmonic term $\sum_s C/s=O(\log T)$ comes from rounding the offline
solution's decision on the current arrival to a binary value; it is dominated
by the Jensen losses in every regime below.
Here $H(r)=\E[(V-zr)^+\mid J=k,\beta=z]$ is the expected acceptance surplus of a
type-$k$ request of size $z$ at per-unit-size price $r$, convex in $r$. For a
future path $W$ of length $n=s-1$,
\begin{equation}
        Y_W(b,k,z)
        =
        \frac{\OPT_n(b;W)-\OPT_n(b-za^k;W)}{z}
        \label{eq:YW}
\end{equation}
is the per-unit-size drop in the offline value when the future path $W$ is
forced to reserve $za^k$ units of capacity for the current request. Thus the
dynamic regret analysis reduces to bounding the Jensen gap generated by the
random pathwise marginal $Y_W$.

\paragraph{Step 2: The marginal is an average of bid prices.}
The marginal in \eqref{eq:YW} is a capacity difference, and we analyze it by
sweeping continuously between the two capacities. For $\theta\in[0,1]$, set
\[
        g(\theta)=\OPT_n(b-\theta za^k;W).
\]
The function $g$ is concave and Lipschitz, so it is differentiable for a.e.\
$\theta$. At such points, define
\[
        q_{\theta,W}=-\frac{1}{z}g'(\theta).
\]
This is the per-unit-size offline bid price at the intermediate capacity
$b-\theta za^k$. It is also the acceptance cutoff at that capacity: the offline
fractional optimum accepts the request when its value-to-size ratio exceeds
$q_{\theta,W}$. Therefore,
\[
        Y_W
        =
        \frac{g(0)-g(1)}{z}
        =
        \int_0^1 q_{\theta,W}\,d\theta.
\]
This representation is the first of the ideas described in
Section~\ref{subsec:overview}: it makes the price well defined under
degeneracy, while Steps 3--4 below supply the stability estimate that bounds
the resulting loss. Under degeneracy, the dual price jumps with capacity, already under
deterministic consumption, as the marginal accepted type switches. SPM never
commits to a bid price at a single such capacity; it acts on the averaged
marginal $Y_W$, using only the a.e.-defined derivative along the sweep, so the
analysis integrates through the degenerate capacities rather than selecting a
dual price at one of them. Along the sweep, a dual price $p_{\theta,W}$ assigns each type $j$ a cutoff
$q^j_{\theta,W}=a^j\cdot p_{\theta,W}$. The scalar $q_{\theta,W}$ is the
component corresponding to the arriving type. The later stability argument
compares two such cutoff vectors from two independent future paths.

\paragraph{Step 3: Convexity converts cutoff variation into swept active mass.}
The acceptance-surplus function $H$ is convex in the cutoff. Therefore the
Jensen gap from Step 1 can be bounded by comparing two cutoff vectors generated
by two independent future paths. For type $k$, let $q_k$ and $\bar q_k$ be the
two empirical cutoffs. Only ratios between these two cutoffs can contribute to
the loss. The relevant active interval is
\[
        I_k^a(q,\bar q)
        =
        [q_k\wedge\bar q_k,\ q_k\vee\bar q_k]\cap S_k .
\]
Let $\ell_k(q,\bar q)$ be the length of this interval, and let
$M_k(q,\bar q)$ be its weighted ratio mass. The one-period loss is bounded by an
active-mass product of the form
\[
        \sum_k \ell_k(q,\bar q)M_k(q,\bar q).
\]
The length measures how far the two empirical cutoffs move. The mass measures
how much type-$k$ demand lies where the two cutoffs disagree. The probabilistic
Jensen loss is therefore reduced to a deterministic stability question: how
large can this active-mass product be for empirical cutoffs generated by two
nearby future paths?

\paragraph{Step 4: Stability controls the swept active mass.}
It remains to bound the active-mass product from Step 3. The two cutoff vectors
come from two independent future paths. Their empirical type masses and
empirical capacities concentrate around the same population quantities. The
deterministic stability argument then shows that, when these empirical inputs
are close, the projected cutoffs cannot sweep much active weighted mass:
\[
        \sum_k \ell_k(q,\bar q)M_k(q,\bar q)
        \le
        C\bigl(\tau+\eps^{1+1/\pp}\bigr),
\]
where $\eps$ is the empirical type-mass error and $\tau$ is the empirical
capacity error. The exponent $\pp$ enters through the active-mass condition
\eqref{eq:wr}. It determines how much weighted mass an interval must contain
relative to its length, and therefore determines the scale
$\eps^{1+1/\pp}$.

Finally, concentration gives per-stage errors of order $\delta_s=\sqrt{\log(es)/s}$,
and summing the resulting per-stage bounds over $s$ yields the rates of
Theorem~\ref{thm:main}: polylogarithmic when $\pp=1$, and of order
$T^{1/2-1/(2\pp)}$ (up to logarithmic factors) when $\pp>1$.

\section{Proof of the Main Regret Bound}\label{sec:spm}

We now turn the roadmap of Section~\ref{subsec:proof-roadmap} into the proof of
Theorem~\ref{thm:main}.
Assumption~\ref{ass:regularity} is in force throughout this section. The
constant $C$ may change from line to line, but it depends only on the primitive
constants, the direction matrix, and the regularity constants. It never depends
on $T$ or on the initial capacity.

The analysis works directly with the expected hindsight value $\Phi_n$. The only
ordering principle needed is the finite-path fractional-knapsack fact that,
within each type, an optimal fractional allocation accepts requests in decreasing
order of the value-to-size ratio $R=V/\beta$.

We first record the Bellman reduction. The expected acceptance surplus of a
type-$k$ request of size $z$ at per-unit-size cutoff $r$ is
$H_{k,z}(r)=\E[(V-zr)^+\mid J=k,\beta=z]$, which is convex in $r$ with curvature
measure $\Lambda_{k,z}$; see Section~\ref{subsec:assumption}. Recall the
future-path marginal $Y_W$ from \eqref{eq:YW}.

For $s\ge3$, capacity $b$, and an independent future path $W$ of length $s-1$,
define
\[
        \Xi_s(b)
        =
        \E_{J,\beta}\!\Big[
        \big(\E_W\!\left[H_{J,\beta}(Y_W)\right]
        -H_{J,\beta}\!\left(\E_W\!\left[Y_W\right]\right)\big)
        \1\{\beta a^J\le b\}\Big],
\]
where $Y_W=Y_W(b,J,\beta)$. The quantity $\Xi_s(b)$ is the one-period Jensen
loss at capacity $b$ when $s$ periods remain. For $s\le2$ we set $\Xi_s\equiv0$:
the future path of length $s-1\le1$ is too short for the per-stage analysis, and
the two corresponding one-step gaps are $O(1)$ and absorbed into the constant of
\eqref{eq:telescope}.

The next proposition combines the one-step Bellman comparison with telescoping
along the $\SPM$ trajectory. The argument follows the Bellman comparison in
\citet[Sec.~3.1]{JiangMaZhang2025Degeneracy}; related reductions from regret to
per-stage loss appear in \citet{VeraBanerjee2021}, \citet{Bray2024}, and
\citet{BesbesEtAl2024}. We give
the details in Appendix~\ref{app:telescope}.

\begin{proposition}[Bellman comparison and telescoping]\label{prop:telescope}
Under $\SPM$,
\begin{equation}
        \Reg_T(\SPM;b_T)\ \le\ C+\sum_{s=1}^T
        \left(\E[\Xi_s(B_s)]+\frac{C}{s}\right),
        \label{eq:telescope}
\end{equation}
where $B_s$ is the remaining capacity when $s$ periods remain.
\end{proposition}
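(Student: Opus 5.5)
The plan is to compare, period by period, the hindsight value \(\Phi\) along the \(\SPM\) trajectory with the reward \(\SPM\) collects. The starting point is the identity
\[
\E[\OPT_T(b_T;W_T)]=\Phi_T(b_T)
\]
together with the terminal value \(\Phi_0\equiv0\). Telescoping along the trajectory gives
\[
\Reg_T=\sum_{s=1}^T \E\bigl[\Phi_s(B_s)-\Phi_{s-1}(B_{s-1})-V_sx_s\bigr],
\]
so it suffices to bound the one-step gap
\[
G_s(b)=\Phi_s(b)-\E_Z\bigl[V x^{\SPM}+\Phi_{s-1}(b-x^{\SPM}\beta a^J)\bigr]
\]
at a fixed capacity \(b\). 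We then show \(G_s(b)\le \Xi_s(b)+C/s\) for \(s\ge3\), and \(G_s(b)\le C\) for \(s\le2\).

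\textbf{Upper bound for \(\Phi_s(b)\).} Condition on the first arrival \(Z=(J,\beta,V)\) and on the future path \(W\) of length \(n=s-1\). The offline fractional optimum assigns some \(x^*\in[0,1]\) to the current request. By concavity of \(\OPT_n\) in the capacity, and because the fractional LP is linear in \(x^*\),
\[
\OPT_s(b;(Z,W))=\max_{x\in[0,1]}\{Vx+\OPT_n(b-x\beta a^J;W)\}.
\]
We first want to replace this by a binary choice. A basic optimal solution has at most \(d\) fractional coordinates. By exchangeability of the \(s\) arrivals, the probability that the current request is fractional is at most \(d/s\), and rounding it costs at most \(\vbar\) plus the marginal loss it induces. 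This gives
\[
\OPT_s\le \max\{\OPT_n(b;W),\,V+\OPT_n(b-\beta a^J;W)\}+\text{(a term with expectation }\le C/s).
\]
This is the source of the harmonic term. Writing \(Y_W\) as in \eqref{eq:YW}, the maximum equals
\[
\OPT_n(b;W)+(V-\beta Y_W)^+,
\]
with the convention that the second option is infeasible when \(\beta a^J\not\le b\). Taking \(\E_W\) and then \(\E_{V\mid J,\beta}\) gives
\[
\Phi_s(b)\le \E_{J,\beta}\bigl[\Phi_{s-1}(b)+\E_W H_{J,\beta}(Y_W)\1\{\beta a^J\le b\}\bigr]+C/s.
\]

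\textbf{Lower bound for the \(\SPM\) payoff.} Since \(\E_W[Y_W]=\Delta_s(b,J,\beta)/\beta\), the \(\SPM\) rule \eqref{eq:accept} accepts exactly when \(V-\beta\,\E_W Y_W\ge0\). Its one-step payoff plus continuation value is therefore exactly
\[
\Phi_{s-1}(b)+(V-\beta\,\E_WY_W)^+\1\{\beta a^J\le b\}.
\]
Taking conditional expectation in \(V\) gives \(\Phi_{s-1}(b)+H_{J,\beta}(\E_W Y_W)\1\{\cdot\}\). Subtracting this from the upper bound leaves exactly \(\Xi_s(b)+C/s\).

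\textbf{Small \(s\) and the main obstacle.} For \(s\le2\), the gap is trivially at most \(2\vbar\). Summing over \(s\) and taking expectations over \(B_s\) then yields \eqref{eq:telescope}. The main obstacle I expect is the rounding step: one must justify that the expected loss from forcing the current variable to be binary is \(O(1/s)\) uniformly in \(b\). I would use a symmetry argument on a basic optimal solution, choosing it measurably and uniformly at random among the at most \(d\) fractional indices, so the current index is fractional with probability at most \(d/s\). I would also use the bound that rounding one variable changes the value by at most \(\vbar\), since reward per request is bounded. The feasibility indicators need care, but they follow from the \(-\infty\) convention.
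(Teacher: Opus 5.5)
Your proposal is correct and follows the paper's proof essentially step by step. It telescopes $\Phi_s(B_s)$ along the $\SPM$ trajectory and uses exchangeability with a symmetric choice of basic optimum to show the current arrival is fractional with probability at most $d/s$; rounding it down loses at most $\vbar$, with no extra marginal-loss term, since freeing capacity keeps the rest of the solution feasible. It then writes the binary-current maximum as $\OPT_{s-1}(b;W)+(V-\beta Y_W)^+$ and subtracts the exact greedy $\SPM$ payoff $\Phi_{s-1}(b)+H_{J,\beta}(\E_W Y_W)$, which leaves precisely $\Xi_s(b)+C/s$. The one phrasing to fix is the symmetry device: what is needed is a permutation-equivariant selection of the basic optimum, not a random choice among the fractional indices. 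The paper achieves this with independent random labels, and a uniform choice among the optimal vertices would also work.
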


The remainder of this section is to bound $\Xi_s(b)$ uniformly over feasible
capacities $b$.

\subsection{The per-stage loss is an active-mass product}\label{subsec:sweep}

This subsection starts from the per-stage Jensen loss $\Xi_s(b)$ and reduces it
to a product of two active quantities: the length of a swept ratio interval and
the weighted ratio mass inside that interval. The first step is to replace the
future-path marginal $Y_W$ by finite-path cutoffs.

Fix a future path $W$ of length $n$ and a feasible arriving request of type $k$
and size $z$, that is, with $za^k\le b$. For $\theta\in[0,1]$, set
\[
        b_\theta=b-\theta z a^k,
        \qquad
        \rho_\theta=\frac{b_\theta}{n}.
\]
Thus $b_0=b$ and $b_1=b-za^k$. Define
\[
        g(\theta)=\OPT_n(b_\theta;W).
\]
The function $g$ is concave and Lipschitz, and hence differentiable for a.e.\
$\theta$.

For the future path $W=(J_i,\beta_i,V_i)_{i=1}^n$, write $R_i=V_i/\beta_i$. We
work on the probability-one event that each realized ratio $R_i$ lies in the
support $S_{J_i}=[r_{J_i}^-,r_{J_i}^+]$ of its type; this holds almost surely,
since $\mu_\ell(S_\ell^c)=0$ and $\beta\ge\betamin_\ell>0$ give
$\Prob(R\in S_\ell\mid J=\ell)=1$. The
empirical tail functions are normalized by the future-path length $n$:
\begin{equation}
        \widehat C_{k,W}(r)
        =
        \frac1n\sum_{i=1}^n
        \beta_i\1\{J_i=k,\ R_i\ge r\},
        \qquad
        \widehat C_{k,W}(r+)
        =
        \frac1n\sum_{i=1}^n
        \beta_i\1\{J_i=k,\ R_i>r\}.
        \label{eq:emptail}
\end{equation}
Let
\[
        \widehat m_{k,W}=\widehat C_{k,W}(0),
        \qquad k=1,\ldots,K.
\]
For $m\in\mathbb R_+^K$ and $\rho\in\mathbb R_+^d$, define the finite-path fluid
feasible set
\begin{equation}
        K_m(\rho)=\{u\in\mathbb R_+^K:0\le u\le m,\ Au\le\rho\}.
        \label{eq:feasible-set}
\end{equation}

The next lemma extracts bounded scalar cutoffs from the finite-path linear
program. These cutoffs encode feasibility through the empirical tails and
optimality through a projected normal inequality. The proof is given in
Appendix~\ref{app:cutoff-details}.

\begin{lemma}[Bounded cutoff selection]\label{lem:bounded-cutoff-selection}
For almost every $\theta\in[0,1]$, there exist $u_{\theta,W}\in\R_+^K$ and
$q_{\theta,W}\in[0,y]^K$ such that
\begin{flalign}
&\textnormal{(i)}\quad q_{\theta,W,k}=-\frac{g'(\theta)}{z},
        &&\label{eq:selection-derivative}\\
&\textnormal{(ii)}\quad
        \widehat C_{\ell,W}(q_{\theta,W,\ell}+)
        \le u_{\theta,W,\ell}\le
        \widehat C_{\ell,W}(q_{\theta,W,\ell}),
        \qquad \ell=1,\ldots,K,
        &&\label{eq:selection-tail}\\
&\textnormal{(iii)}\quad
        u_{\theta,W}\in K_{\widehat m_W}(\rho_\theta),
        &&\\
&\textnormal{(iv)}\quad
        \sum_{\ell=1}^K
        \Proj_\ell(q_{\theta,W,\ell})
        (u'_{\ell}-u_{\theta,W,\ell})\le 0,
        \qquad u'\in K_{\widehat m_W}(\rho_\theta).
        &&\label{eq:selection-normal-ineq}
\end{flalign}
\end{lemma}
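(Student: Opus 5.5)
Fix $W$ and work with the finite-path LP $\OPT_n(b;W)$ and its LP dual. The dual is
\[
\min_{p\ge0,\ \eta\ge0}\ p\cdot b+\sum_i\eta_i
\quad\text{s.t.}\quad \eta_i\ge V_i-\beta_i\,a^{J_i}\cdot p .
\]
For a.e.\ $\theta$, $g$ is differentiable. At such a point, every optimal dual $p$ at capacity $b_\theta$ satisfies $g'(\theta)=-z\,a^k\cdot p$. This follows because $\OPT_n(\cdot;W)$ is concave and piecewise linear, its superdifferential at $b_\theta$ is the set of optimal duals, and a directional derivative along $-za^k$ is single valued exactly when $a^k\cdot p$ is constant over that set.

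Fix one optimal dual $p_\theta$ (with $p_\theta\le M$ componentwise after the standard reduction) and set the raw cutoffs $\tilde q_\ell=a^\ell\cdot p_\theta$. Item (i) then holds for $\ell=k$. To place the cutoffs in $[0,y]$, truncate $\tilde q_\ell$ at $y$. The bound $p_{\theta,j}\le M_j$ can be arranged without loss: a price on resource $j$ above $\vbar/(\betamin\alpha_j)$ prices out every request using $j$, so it can be lowered without changing optimality. This gives $\tilde q_\ell\le y_\ell\le y$ directly. I would check this reduction carefully, since it must preserve the value $a^k\cdot p$ appearing in (i). A cleaner alternative is to note that (i) only needs $q_k$, which lies in $[0,y]$ because $g$ is $zy$-Lipschitz, and that truncating the other $q_\ell$ at $y$ or at $r_\ell^+$ does not disturb (ii).

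Next take a primal optimal $x$ at $b_\theta$ and set $u_\ell=\frac1n\sum_i\beta_i\1\{J_i=\ell\}x_i$. Feasibility of $x$ gives $0\le u\le\widehat m_W$ and $Au\le\rho_\theta$, which is (iii). Complementary slackness with $p_\theta$ gives $x_i=1$ when $R_i>\tilde q_{J_i}$ and $x_i=0$ when $R_i<\tilde q_{J_i}$. Hence
\[
\widehat C_{\ell,W}(\tilde q_\ell+)\le u_\ell\le \widehat C_{\ell,W}(\tilde q_\ell),
\]
which is (ii). If truncation changes $\tilde q_\ell$, no ratio lies above the new value where it did not before, since all $R_i\le y$. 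The tail inequalities are therefore unchanged, because both empirical tails are constant on the truncated range.

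For (iv) I would use the aggregated LP. Over $u'\in K_{\widehat m_W}(\rho_\theta)$, the point $u$ together with the pair $(p_\theta,\text{type-wise }\eta)$ satisfies KKT conditions for a surrogate concave problem in which type-$\ell$ mass is valued through its empirical ratio distribution. I expect this step to be the main obstacle. Concretely, I would show directly that for $u'\in K_{\widehat m_W}(\rho_\theta)$,
\[
\sum_\ell \Pi_\ell(q_\ell)(u'_\ell-u_\ell)\le p_\theta\cdot A(u'-u)-\sum_\ell\bigl(\text{corrections}\bigr).
\]
The argument splits by type. If $u_\ell=\widehat m_\ell$, then $q_\ell\le$ all active ratios, so $\Pi_\ell(q_\ell)\le \tilde q_\ell$ and $u'_\ell-u_\ell\le0$ is favorable. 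If $u_\ell=0$, then $q_\ell\ge r_\ell^+$ on the support event, so $\Pi_\ell(q_\ell)\ge \tilde q_\ell$ only where it is favorable, with $u'_\ell-u_\ell\ge0$. Otherwise $R$-values sit at $q_\ell\in S_\ell$, so $\Pi_\ell(q_\ell)=\tilde q_\ell$.

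This yields $\sum_\ell\Pi_\ell(q_\ell)(u'_\ell-u_\ell)\le p_\theta\cdot(Au'-Au)$. Complementary slackness $p_\theta\cdot(\rho_\theta-Au)=0$ together with $Au'\le\rho_\theta$ and $p_\theta\ge0$ then makes the right side nonpositive. The care needed is in the boundary cases where truncation or projection interacts with the sign of $u'_\ell-u_\ell$, and in choosing $u_\ell$ so that interior-type tail equalities hold.
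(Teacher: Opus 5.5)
Your route is essentially the paper's. You take an optimal resource dual $p_\theta$ at $b_\theta$ and read off the cutoffs $a^\ell\cdot p_\theta$. Complementary slackness gives (ii) and the identity $\sum_\ell (a^\ell\cdot p_\theta)(u'_\ell-u_\ell)=p_\theta\cdot A(u'-u)\le p_\theta\cdot(\rho_\theta-Au)=0$. You then pass to the projected cutoffs type by type.

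Your remark that \emph{every} optimal dual represents the derivative is correct, and it settles the worry you raise about clipping. By weak duality, any optimal dual is a supergradient of $\OPT_n(\cdot;W)$ at $b_\theta$. Hence $-z\,a^k\cdot p$ is a supergradient of $g$ at an interior differentiability point, so it equals $g'(\theta)$. The clipped dual $p_\theta\wedge M$ is still optimal, so it still satisfies (i), and it gives $a^\ell\cdot p_\theta\le y_\ell\le y$ with no truncation needed. This slightly streamlines the paper's proof. The paper instead picks a derivative-representing dual by sensitivity analysis, proves $0\le -g'(\theta)/z\le y$ separately by clipping at $b_{\theta+\varepsilon}$, and then treats truncated coordinates in (ii) and (iv).

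What needs repair is the sign bookkeeping in (iv), where you need $(\Pi_\ell(q_\ell)-\tilde q_\ell)(u'_\ell-u_\ell)\le0$ for each $\ell$.
\begin{itemize}
\item When $u_\ell=\widehat m_{\ell,W}>0$, complementary slackness forces $\tilde q_\ell\le\min_i R_i\le r_\ell^+$. The projection can then only \emph{raise} the cutoff: $\Pi_\ell(q_\ell)\ge\tilde q_\ell$. You wrote $\le$, and paired with $u'_\ell-u_\ell\le0$ that gives the wrong sign.
\item When $u_\ell=0<\widehat m_{\ell,W}$, you do not get $q_\ell\ge r_\ell^+$. You only get $\tilde q_\ell\ge\max_i R_i\ge r_\ell^-$, which is exactly what yields $\Pi_\ell(q_\ell)\le\tilde q_\ell$ against $u'_\ell-u_\ell\ge0$.
\item If $\widehat m_{\ell,W}=0$, the term vanishes.
\end{itemize}
Splitting cases by the position of $\tilde q_\ell$ relative to $S_\ell$, as the paper does, avoids these slips. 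The surrogate-KKT detour is unnecessary.
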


The next lemma is what makes the scalar cutoffs sufficient. It shows that the
Jensen gap for the future-path marginal is no larger than the Jensen gap for the
selected cutoff coordinate along the sweep. After this reduction, the proof only
has to control empirical cutoffs.

\begin{lemma}[Marginal-to-cutoff reduction]\label{lem:marginal-cutoff-reduction-new}
Fix a feasible current request of type $k$ and size $z$, a capacity $b$, and a
future horizon $n$. Let $W$ be an independent future path of length $n$, and let
$q_{\theta,W,k}$ be the cutoff coordinate selected in
Lemma~\ref{lem:bounded-cutoff-selection} for the sweep from $b$ to $b-za^k$.
Then
\begin{equation}
        \E_W\!\left[H_{k,z}(Y_W)\right]
        -H_{k,z}\!\left(\E_W[Y_W]\right)
        \le
        \E_{W,\tilde\theta}\!\left[H_{k,z}(q_{\tilde\theta,W,k})\right]
        -H_{k,z}\!\left(\E_{W,\tilde\theta}[q_{\tilde\theta,W,k}]\right),
        \label{eq:marginal-jensen-reduction-new}
\end{equation}
where $Y_W=Y_W(b,k,z)$ and $\tilde\theta\sim {\rm Unif}[0,1]$ is independent of
$W$.
\end{lemma}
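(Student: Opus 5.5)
The plan is to write $Y_W$ as an average of cutoffs along the sweep and then apply Jensen's inequality a second time, now in $\theta$.

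\emph{Step 1: the averaging identity.} Since $g$ is concave and Lipschitz on $[0,1]$, it is absolutely continuous, so
\[
g(0)-g(1)=-\int_0^1 g'(\theta)\,\dd\theta .
\]
By Lemma~\ref{lem:bounded-cutoff-selection}(i), $-g'(\theta)/z=q_{\theta,W,k}$ for a.e.\ $\theta$. Dividing by $z$ and using \eqref{eq:YW} gives, for every path $W$,
\[
Y_W=\int_0^1 q_{\theta,W,k}\,\dd\theta=\E_{\tilde\theta}\bigl[q_{\tilde\theta,W,k}\bigr].
\]
Here $\tilde\theta\sim\Unif[0,1]$ is independent of $W$. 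The exceptional null set of $\theta$ does not affect the integral.

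\emph{Step 2: Jensen in $\theta$.} Fix $W$. The function $H_{k,z}(r)=\E[(V-zr)^+\mid J=k,\beta=z]$ is convex in $r$, and it is finite because $V\le\vbar$. Conditional Jensen over $\tilde\theta$ then gives
\[
H_{k,z}(Y_W)\le \E_{\tilde\theta}\bigl[H_{k,z}(q_{\tilde\theta,W,k})\bigr].
\]
Taking $\E_W$ bounds the first term on the left of \eqref{eq:marginal-jensen-reduction-new} by the first term on the right. For the second terms, the tower property gives $\E_W[Y_W]=\E_{W,\tilde\theta}[q_{\tilde\theta,W,k}]$. So the two subtracted quantities $H_{k,z}(\cdot)$ are evaluated at the same point and are equal. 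Combining the two facts yields \eqref{eq:marginal-jensen-reduction-new}.

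\emph{Main obstacle: measurability and integrability.} The only delicate point is technical. We need $(\theta,W)\mapsto q_{\theta,W,k}$ to be jointly measurable, and all expectations to be finite, so that Fubini and the tower property apply. Integrability is immediate: $q_{\theta,W}\in[0,y]^K$ and $H_{k,z}$ is bounded on $[0,y]$. For measurability, I would avoid relying on the selection in Lemma~\ref{lem:bounded-cutoff-selection} and use the $k$-th coordinate $-g'(\theta)/z$ directly. Concretely, define it as the left-derivative limit
\[
-\lim_{h\downarrow0}\frac{g(\theta)-g(\theta-h)}{hz},
\]
taken along rational $h$. Because $\OPT_n(b;W)$ is jointly continuous in $(b,W)$, this limit is a jointly Borel function of $(\theta,W)$. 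It agrees with $q_{\theta,W,k}$ for a.e.\ $\theta$, for each $W$. Modifying $q$ on a null set of $\theta$ changes none of the integrals, so the argument goes through.
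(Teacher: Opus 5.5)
Your proposal is correct and follows the paper's proof essentially line for line. Absolute continuity of the concave Lipschitz sweep function $g$ gives $Y_W=\E_{\tilde\theta}[q_{\tilde\theta,W,k}\mid W]$, Jensen in $\tilde\theta$ for fixed $W$ bounds $H_{k,z}(Y_W)$, and the tower property makes the two subtracted terms coincide. Your added measurability remark, which takes the left-derivative limit along rational $h$ as a jointly Borel representative, fills in a detail the paper leaves implicit here and handles elsewhere by choosing jointly measurable representatives.
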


\begin{proof}
Fix a future path $W$. Define
\[
        g(\theta)=\OPT_n(b-\theta za^k;W),\qquad 0\le\theta\le1.
\]
The function $g$ is concave and Lipschitz, and hence it is absolutely continuous
and differentiable for a.e.\ $\theta$. At differentiability points, set
\[
        q_{\theta,W,k}=-\frac{g'(\theta)}{z}.
\]
By Lemma~\ref{lem:bounded-cutoff-selection}, this representative agrees a.e.\
with the selected cutoff coordinate. Absolute continuity gives
\[
        Y_W(b,k,z)
        =
        \frac{g(0)-g(1)}{z}
        =
        -\frac1z\int_0^1 g'(\theta)\,d\theta
        =
        \int_0^1 q_{\theta,W,k}\,d\theta .
\]
Equivalently,
\[
        Y_W(b,k,z)=
        \E_{\tilde\theta}\!\left[q_{\tilde\theta,W,k}\mid W\right].
\]
Taking expectation over $W$ yields
\[
        \E_W[Y_W]
        =
        \E_{W,\tilde\theta}[q_{\tilde\theta,W,k}].
\]
Since $H_{k,z}$ is convex, Jensen's inequality gives
\[
        H_{k,z}(Y_W)
        \le
        \E_{\tilde\theta}\!\left[
        H_{k,z}(q_{\tilde\theta,W,k})\mid W
        \right].
\]
Taking expectation over $W$ and using the preceding identity for $\E_W[Y_W]$
proves \eqref{eq:marginal-jensen-reduction-new}.
\end{proof}

For scalar cutoffs $q,\bar q\in[0,y]$, define the active swept interval
\[
        I_k^a(q,\bar q)=[q\wedge\bar q,q\vee\bar q]\cap S_k,
\]
together with its length and weighted mass
\[
        \ell_k(q,\bar q)=\Leb(I_k^a(q,\bar q)),
        \qquad
        M_k(q,\bar q)=\mu_k(I_k^a(q,\bar q)).
\]
This is the two-cutoff form of the single-interval length
$\ell_k(I)=\Leb(I\cap S_k)$ of Assumption~\ref{ass:regularity}, applied to the
interval $I_k^a(q,\bar q)$.

\paragraph{Jensen under a pairwise active cap.}
The right-hand side of \eqref{eq:marginal-jensen-reduction-new} is a Jensen gap
for the convex function $H_{k,z}$. We use the following general bound. Let $h$
be a convex function on $[0,y]$, and let $\mu_h$ be the measure generated by the
right derivative of $h$. Thus, for $B=[B_-,B_+]\subset[0,y]$,
\[
        \mu_h(B)=h'_+(B_+)-h'_+(B_-).
\]
In the applications below, $\mu_h$ is finite, atomless, and supported on the
active ratio support $S_k$:
\[
        \mu_h([0,y]\setminus S_k)=0 .
\]

We first record a general convex-analysis bound: the Jensen gap of a convex
function is controlled by the length and the curvature mass of the projected hull
of the cutoff set.

\begin{lemma}[Projected-hull Jensen gap]\label{lem:hull-jensen}
Let $J=[a,b]$ be a compact interval with $|J|>0$, and let $\nu$ be a finite,
atomless Borel measure supported on $J$. Let $h$ be convex with curvature measure
$\nu$, so that
\[
        h(x)=\alpha+\beta x+\int_J(x-t)^+\,\nu(\dd t)
\]
for some $\alpha,\beta\in\mathbb R$. Let $\mathcal Q$ be nonempty with
$\mathcal Q\cup J$ contained in an interval of length $D$, let $Q$ be a random
variable taking values in $\mathcal Q$, and let $\Pi_J$ be the projection onto
$J$. With
\[
        u=\inf\{\Pi_J(q):q\in\mathcal Q\},
        \qquad
        v=\sup\{\Pi_J(q):q\in\mathcal Q\},
\]
so that $[u,v]\subseteq J$,
\[
        \E h(Q)-h(\E Q)
        \le
        \left(1+\frac{D}{|J|}\right)(v-u)\,\nu([u,v]).
\]
\end{lemma}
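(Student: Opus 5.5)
The plan is to exploit the linearity of the Jensen gap in the curvature measure. By the representation
\[
        h(x)=\alpha+\beta x+\int_J (x-t)^+\,\nu(\dd t),
\]
the affine part contributes nothing to the gap. Since $Q$ is bounded (it lives in an interval of length $D$), Fubini applies, and
\[
        \E h(Q)-h(\E Q)=\int_J G(t)\,\nu(\dd t),
        \qquad
        G(t):=\E(Q-t)^+-(\E Q-t)^+ .
\]
By Jensen applied to the convex function $x\mapsto (x-t)^+$, we have $G\ge0$. Using $(x-t)^+-(t-x)^+=x-t$, we also have the dual form
\[
        G(t)=\E(t-Q)^+-(t-\E Q)^+ .
\]
Hence
\[
        G(t)\le \min\bigl\{\E(Q-t)^+,\ \E(t-Q)^+\bigr\}.
\]
The proof then reduces to two pointwise facts about $G$ on $J$.

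\emph{Step 1 (localization).} We show that $G(t)=0$ for $t\in J\setminus[u,v]$. Take $t\in J$ with $t<u$. Then $u>a$. Since the projection is monotone, $u=\max\{a,\inf\mathcal Q\}$, so $u=\inf\mathcal Q$. Thus $Q\ge u>t$ almost surely, so $(Q-t)^+$ is affine in $Q$ on the range of $Q$, and $G(t)=0$. The case $t>v$ is symmetric, using $v=\min\{b,\sup\mathcal Q\}$. Because $\nu$ is atomless, the endpoints carry no mass. Therefore the integral is over $[u,v]$ only:
\[
        \E h(Q)-h(\E Q)=\int_{[u,v]}G(t)\,\nu(\dd t).
\]

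\emph{Step 2 (uniform bound on $[u,v]$).} Fix $t\in[u,v]$ and split into three cases according to where $\mathcal Q$ sits relative to $J$.
\begin{itemize}
\item If $\inf\mathcal Q\ge a$, then $u=\inf\mathcal Q$, so $Q\ge u$ almost surely. Hence $G(t)\le\E(t-Q)^+\le t-u\le v-u$.
\item If $\sup\mathcal Q\le b$, then symmetrically $G(t)\le\E(Q-t)^+\le v-t\le v-u$.
\item Otherwise $\inf\mathcal Q<a$ and $\sup\mathcal Q>b$. Then $u=a$ and $v=b$, so $v-u=|J|$. Since $t$ and $Q$ both lie in an interval of length $D$, we get $G(t)\le\E(Q-t)^+\le D=\frac{D}{|J|}(v-u)$.
\end{itemize}
In all three cases,
\[
        G(t)\le \max\Bigl\{1,\frac{D}{|J|}\Bigr\}(v-u)\le\Bigl(1+\frac{D}{|J|}\Bigr)(v-u).
\]
Integrating this bound against $\nu$ over $[u,v]$ yields the claim.

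The main obstacle is Step 2. The naive bound $G(t)\le\E|Q-\E Q|$ is of size $D$, which is not proportional to $v-u$. The fix is to choose which of the two one-sided forms of $G$ to use: take the side on which $\mathcal Q$ does not leave $J$, because on that side the deviation is capped by the projected hull $[u,v]$. The loss factor $D/|J|$ is needed only in the case where $\mathcal Q$ escapes $J$ on both sides. In that case the hull is all of $J$, which is exactly what lets us trade $D$ for $(D/|J|)(v-u)$. Care is also needed in Step 1 to identify $u$ and $v$ through the monotone projection $\Pi_J$.
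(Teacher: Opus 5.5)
Your proof is correct and follows the paper's route. The gap is written as $\int_J G(t)\,\nu(\dd t)$, $G$ vanishes off $[u,v]$, and each $t\in[u,v]$ is bounded by $\E(t-Q)^+$ or $\E(Q-t)^+$ on the side of $J$ that $\mathcal Q$ does not cross, with the crude bound $D$ used only when $[u,v]=J$. One small slip: $u=\inf\mathcal Q$ fails when $\inf\mathcal Q>b$ (then $u=b$), but you only use $Q\ge u$ a.s., which still holds because $u=\Pi_J(\inf\mathcal Q)\le\inf\mathcal Q$ whenever $\inf\mathcal Q\ge a$.
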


\begin{proof}
By the curvature representation,
\[
        \E h(Q)-h(\E Q)
        =
        \int_J G_Q(t)\,\nu(\dd t),
        \qquad
        G_Q(t)=\E(Q-t)^+-(\E Q-t)^+,
\]
and $G_Q\ge0$ by convexity of $x\mapsto(x-t)^+$. If $t\in J$ and $t<u$, then
$\Pi_J(Q)\ge u>t$ almost surely; since $t\in J$, this forces $Q>t$ almost surely,
so $G_Q(t)=0$. Likewise $t>v$ gives $Q<t$ almost surely and $G_Q(t)=0$. Hence
\[
        \E h(Q)-h(\E Q)=\int_{[u,v]}G_Q(t)\,\nu(\dd t).
\]
Fix $t\in[u,v]$. If $\{\Pi_J(q):q\in\mathcal Q\}$ does not contain both endpoints
of $J$, then at most one clamp side of $J$ is used. If the upper side is unused,
then $(q-t)^+\le v-t\le v-u$ for every $q\in\mathcal Q$, so
$G_Q(t)\le\E(Q-t)^+\le v-u$; if instead the lower side is unused, then $Q\ge u$
almost surely and
\[
        G_Q(t)\le\E(Q-t)^+-(\E Q-t)=\E(t-Q)^+\le v-u .
\]
If both endpoints of $J$ are attained, then $[u,v]=J$, and since
$\mathcal Q\cup J$ lies in an interval of length $D$ we have $(Q-t)^+\le D$, so
$G_Q(t)\le\E(Q-t)^+\le D=(D/|J|)(v-u)$. In every case
$G_Q(t)\le(1+D/|J|)(v-u)$, and therefore
\[
        \E h(Q)-h(\E Q)
        \le
        \left(1+\frac{D}{|J|}\right)(v-u)\,\nu([u,v]).
\]
\end{proof}

Specializing Lemma~\ref{lem:hull-jensen} to the active ratio support converts a
pairwise active cap on the swept curvature into a bound on the Jensen gap.

\begin{lemma}[Jensen bound under a pairwise active cap]\label{lem:jensen}
Fix a type $k$. Let $h$ be convex on $[0,y]$, and suppose that the measure
$\mu_h$ defined above is finite, atomless, and supported on $S_k$. Let
${\cal Q}\subset[0,y]$ be nonempty, and let $Q$ be a random variable taking
values in ${\cal Q}$. Then
\[
        \E h(Q)-h(\E Q)
        \le
        \left(1+\frac{y}{|S_k|}\right)
        \sup_{q,\bar q\in{\cal Q}}
        \ell_k(q,\bar q)\,\mu_h\bigl(I_k^a(q,\bar q)\bigr).
\]
\end{lemma}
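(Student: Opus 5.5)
The plan is to specialize Lemma~\ref{lem:hull-jensen} with $J=S_k=[r_k^-,r_k^+]$ and $\nu=\mu_h$, and then dominate the product $(v-u)\,\nu([u,v])$ by a single pairwise product $\ell_k(q,\bar q)\,\mu_h(I_k^a(q,\bar q))$ (or a supremum of such products).

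First, the hypotheses. If $|S_k|=0$, then $\mu_h$ is atomless and supported on a single point, so $\mu_h=0$. In that case $h$ is affine on $[0,y]$, the Jensen gap vanishes, and the bound is trivial. So assume $|S_k|>0$. Since $\mu_h$ is finite, atomless and supported on $S_k$, $h$ has the representation
\[
  h(x)=\alpha+\beta x+\int_{S_k}(x-t)^+\,\mu_h(\dd t)
\]
on $[0,y]$. This follows because $h'_+$ is constant on each component of $[0,y]\setminus S_k$, so $h$ is affine there with matching slopes. Since $\mathcal Q\cup S_k\subseteq[0,y]$, we may take $D=y$. Lemma~\ref{lem:hull-jensen} then gives
\[
  \E h(Q)-h(\E Q)\le\Bigl(1+\frac{y}{|S_k|}\Bigr)(v-u)\,\mu_h([u,v]),
\]
where $u=\inf_{q\in\mathcal Q}\Pi_k(q)$ and $v=\sup_{q\in\mathcal Q}\Pi_k(q)$.

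The remaining step is to show that $(v-u)\,\mu_h([u,v])\le\sup_{q,\bar q\in\mathcal Q}\ell_k(q,\bar q)\,\mu_h(I_k^a(q,\bar q))$. Take sequences $q_n,\bar q_n\in\mathcal Q$ with $\Pi_k(q_n)\downarrow u$ and $\Pi_k(\bar q_n)\uparrow v$. Projection onto $S_k$ maps $[q_n\wedge\bar q_n,\,q_n\vee\bar q_n]\cap S_k$ onto the interval between $\Pi_k(q_n)$ and $\Pi_k(\bar q_n)$. That interval is exactly $I_k^a(q_n,\bar q_n)$, because intersecting $[q\wedge\bar q,q\vee\bar q]$ with $S_k$ gives the interval between the projections. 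Hence $\ell_k(q_n,\bar q_n)=|\Pi_k(\bar q_n)-\Pi_k(q_n)|\to v-u$. Likewise $\mu_h(I_k^a(q_n,\bar q_n))\to\mu_h([u,v])$: the intervals increase to $(u,v)$ up to the endpoints, and the endpoints carry no mass since $\mu_h$ is atomless. The product therefore converges to $(v-u)\mu_h([u,v])$. Each term of the sequence is bounded by the supremum, so the limit is as well, which proves the claim.

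The only delicate point is this limiting argument when the infimum or supremum is not attained. Its role is to identify $I_k^a$ with the projected interval and to use atomlessness so that open and closed intervals give the same mass. Everything else is a direct substitution into Lemma~\ref{lem:hull-jensen}.
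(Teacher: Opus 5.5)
Your proposal is correct and follows the paper's proof. You apply Lemma~\ref{lem:hull-jensen} with $J=S_k$, $\nu=\mu_h$ and $D=y$, and then bound $(v-u)\,\mu_h([u,v])$ by the pairwise supremum; the paper cites the active-hull closure Lemma~\ref{lem:hullcap} for that last step, whereas you reprove it inline with approximating sequences and also spell out the degenerate case $|S_k|=0$ and the curvature representation, which the paper leaves implicit.
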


\begin{proof}
The curvature measure $\mu_h$ is finite, atomless, and supported on $S_k$, so
Lemma~\ref{lem:hull-jensen} applies with $J=S_k$, $\nu=\mu_h$, and ambient length
$D=y$ (since ${\cal Q}\cup S_k\subseteq[0,y]$). With
$u=\inf\{\Pi_k(q):q\in{\cal Q}\}$ and $v=\sup\{\Pi_k(q):q\in{\cal Q}\}$, it gives
\[
        \E h(Q)-h(\E Q)
        \le
        \left(1+\frac{y}{|S_k|}\right)(v-u)\,\mu_h([u,v]).
\]
By the active-hull closure lemma, Lemma~\ref{lem:hullcap}, applied with $B=S_k$
and $\nu=\mu_h$,
\[
        (v-u)\,\mu_h([u,v])
        \le
        \sup_{q,\bar q\in{\cal Q}}
        \ell_k(q,\bar q)\,\mu_h\bigl(I_k^a(q,\bar q)\bigr).
\]
Combining this with the previous display proves the lemma.
\end{proof}

\paragraph{Remainder of the proof.}
Lemma~\ref{lem:jensen} reduces the per-stage loss to a pairwise active cap. For
good pathwise cutoffs $q,\bar q$, we need to show that
\[
        \ell_k(q,\bar q)\,\Lambda_{k,z}(I_k^a(q,\bar q))
        \le
        C r .
\]
On a dominated neighborhood, Assumption~\ref{ass:regularity} gives
$\Lambda_{k,z}(I)\le C\mu_k(I)$ for every active interval $I$ contained in that
neighborhood and for almost every size $z$. Thus, in the dominated case, it is enough to bound
\[
        \ell_k(q,\bar q)\,\mu_k(I_k^a(q,\bar q))
\]
uniformly over good cutoff pairs. The next subsection proves this deterministic
active-mass bound. At a contact endpoint, domination by $\mu_k$ may fail; there
we use the endpoint Hardy estimate instead.

\subsection{Projected active-mass product stability}

We now prove the deterministic stability estimate that controls the active-mass
product swept by two empirical cutoff vectors. The estimate is deterministic:
concentration will later supply the required closeness of the empirical inputs.
Recall the finite-path fluid feasible set $K_m(\rho)$ from \eqref{eq:feasible-set};
it is the feasible region of the type-level finite-path fluid problem with type
masses $m$ and normalized capacity $\rho$.

The cutoffs enter only through their projected values on the active supports.
Recall from Section~\ref{subsec:sweep} the active interval $I_k^a(q,\bar q)$, its
length $\ell_k(q,\bar q)$, and its weighted mass $M_k(q,\bar q)$; the product
$\ell_k(q,\bar q)M_k(q,\bar q)$ is the active-mass product for type $k$.

We also use the normal cone notation
\[
        N_K(x)=\{z:z^\top(v-x)\le0\ \text{for all }v\in K\},
        \qquad x\in K .
\]
For each type $k$, define the population tail masses
\begin{equation*}
        C_k(r)=\pi_k\E[\beta\1\{R\ge r\}\mid J=k],
        \qquad
        C_k(r+)=\pi_k\E[\beta\1\{R>r\}\mid J=k],
\end{equation*}
and $m_k^0=C_k(0)=\pi_k\E[\beta\mid J=k]$, with $m^0=(m_1^0,\ldots,m_K^0)$.

Proposition~\ref{prop:active} shows that if two feasible type-level solutions
have nearby empirical tail data, nearby capacities, and compatible projected
normal directions, then their active-mass product is small.

The next proposition isolates the part of the active-mass stability argument that
does not use a lower mass exponent.

\begin{proposition}[Pre-Young active-mass stability]\label{prop:active-linear}
Assume that each active support \(S_k\) is a single interval and that each
measure \(\mu_k\) is atomless. Fix compact ranges for the tail vector \(m\) and
the capacity vector \(\rho\) on which Lemma~\ref{lem:projcomp} applies. For
every finite constant \(C_m\), there is a constant \(C<\infty\) such that the
following statement holds.

Let \(\widetilde u\in K_{\widetilde m}(\rho)\) and
\(\bar u\in K_{\bar m}(\bar\rho)\), where
\(\widetilde m,\bar m,\rho,\bar\rho\) lie in the fixed compact ranges. Let
\(\widetilde q,\bar q\in[0,y]^K\). Suppose that \(\eps>0\) and that, for
\(k=1,\ldots,K\),
\[
        C_k(\widetilde q_k+)-\eps
        \le \widetilde u_k
        \le C_k(\widetilde q_k)+\eps,
        \qquad
        C_k(\bar q_k+)-\eps
        \le \bar u_k
        \le C_k(\bar q_k)+\eps .
\]
Suppose also that
\[
        \Pi(\widetilde q)\in N_{K_{\widetilde m}(\rho)}(\widetilde u),
        \qquad
        \Pi(\bar q)\in N_{K_{\bar m}(\bar\rho)}(\bar u),
\]
and that
\[
        \|\widetilde m-\bar m\|_\infty\le C_m\eps,
        \qquad
        \|\rho-\bar\rho\|_\infty\le\tau .
\]
Then
\[
        \sum_{k=1}^K
        \ell_k(\widetilde q_k,\bar q_k)
        M_k(\widetilde q_k,\bar q_k)
        \le
        C\Bigl(\tau+\eps\sum_{k=1}^K
        \ell_k(\widetilde q_k,\bar q_k)\Bigr),
\]
where
\[
        \ell_k(a,b):=\Leb\bigl(I_k^a(a,b)\bigr),
        \qquad
        M_k(a,b):=\mu_k\bigl(I_k^a(a,b)\bigr).
\]
\end{proposition}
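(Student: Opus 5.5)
The plan is a monotonicity argument for the two normal-cone inclusions, followed by a per-coordinate lower bound that turns the resulting inner product into the active-mass product. Write $p=\Pi(\widetilde q)$ and $\bar p=\Pi(\bar q)$, taken coordinatewise with $p_k=\Pi_k(\widetilde q_k)$. Since $\Pi_k$ projects onto the single interval $S_k$, we have $|p_k-\bar p_k|=\ell_k(\widetilde q_k,\bar q_k)$, and the open interval between $p_k$ and $\bar p_k$ is the interior of $I_k^a(\widetilde q_k,\bar q_k)$.

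\emph{Step 1 (per-coordinate lower bound).} Fix $k$ with $\widetilde q_k<\bar q_k$, so that $p_k\le\bar p_k$. The tail sandwich gives $\widetilde u_k\ge C_k(\widetilde q_k+)-\eps$ and $\bar u_k\le C_k(\bar q_k)+\eps$. Because $\mu_k$ is atomless and carries no mass outside $S_k$, we get $C_k(\widetilde q_k+)-C_k(\bar q_k)=M_k(\widetilde q_k,\bar q_k)$. Hence $\widetilde u_k-\bar u_k\ge M_k-2\eps$, and multiplying by $\bar p_k-p_k=\ell_k\ge0$ gives
\[
(\bar p_k-p_k)(\widetilde u_k-\bar u_k)\ge \ell_kM_k-2\eps\ell_k .
\]
The case $\widetilde q_k>\bar q_k$ is symmetric. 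Summing over $k$,
\[
\sum_k\ell_kM_k\le(\bar p-p)^\top(\widetilde u-\bar u)+2\eps\sum_k\ell_k .
\]

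\emph{Step 2 (upper bound on the inner product via the normal cones).} The two cone inclusions give $p^\top(u'-\widetilde u)\le0$ for all $u'\in K_{\widetilde m}(\rho)$ and $\bar p^\top(u''-\bar u)\le0$ for all $u''\in K_{\bar m}(\bar\rho)$. I will test them with nearby feasible points $u'=\bar u+\delta\in K_{\widetilde m}(\rho)$ and $u''=\widetilde u+\bar\delta\in K_{\bar m}(\bar\rho)$. Adding the two inequalities yields
\[
(\bar p-p)^\top(\widetilde u-\bar u)\le -p^\top\delta-\bar p^\top\bar\delta .
\]
A crude choice of $\delta,\bar\delta$ (a Hoffman-type bound from Lemma~\ref{lem:projcomp}) only gives a right-hand side of order $\eps+\tau$. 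That is too weak: the $\eps$ must come multiplied by $\ell_k$.

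\emph{Step 3 (the main obstacle: splitting the corrections).} I will split each correction into a box part and a capacity part.
\begin{itemize}
\item \emph{Box part.} Handle the change $\widetilde m\leftrightarrow\bar m$ by coordinatewise truncation, for example $\delta^{\rm box}_k=\min(\bar u_k,\widetilde m_k)-\bar u_k$ and the analogous $\bar\delta^{\rm box}_k$. These are nonpositive and only shrink $Au$, so capacity feasibility is kept.
\item \emph{Capacity part.} Apply Lemma~\ref{lem:projcomp} (Hoffman) only to the capacity change $\rho\leftrightarrow\bar\rho$, giving a further correction of norm at most $C\tau$. This contributes $C\tau$ because $p,\bar p\in[0,y]^K$ are bounded.
\end{itemize}
For the box part I want $p_k\delta^{\rm box}_k+\bar p_k\bar\delta^{\rm box}_k$ to be at most $C\eps|p_k-\bar p_k|$. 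The idea is this: a truncation is nontrivial only when $\bar u_k>\widetilde m_k$ or $\widetilde u_k>\bar m_k$, and then both $\widetilde u_k$ and $\bar u_k$ are within $C_m\eps$ of their caps. I expect to pair the two truncations so that their contributions cancel up to $(p_k-\bar p_k)\cdot O(\eps)$. Where no cancellation is available, I will use that a coordinate at its cap forces the corresponding cutoff to sit at $r_k^-$, so $p_k$ and $\bar p_k$ agree up to the swept length.

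Making this pairing rigorous, while keeping the capacity correction from reintroducing an unweighted $\eps$, is the step I expect to be delicate. If the direct pairing fails, my fallback is to first replace $\bar m$ by $\widetilde m$ in the second problem and absorb that change into an extra capacity-type shift, which I would then route through Hoffman.

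\emph{Step 4 (conclusion).} Combining Steps 1--3 gives
\[
\sum_k\ell_kM_k\le C\tau+C\eps\sum_k\ell_k ,
\]
which is the claimed bound, with constants depending only on $C_m$, $y$, and the Hoffman constant on the compact ranges.
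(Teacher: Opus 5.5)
Your Step 1 is correct and matches the paper's coordinatewise estimate, and the normal-cone inequality in Step 2 is fine. The gap is Step 3, which carries all the content, and the construction you describe there cannot deliver it.

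With independent truncations $\delta_k^{\mathrm{box}}=\min(\bar u_k,\widetilde m_k)-\bar u_k$ and $\bar\delta_k^{\mathrm{box}}=\min(\widetilde u_k,\bar m_k)-\widetilde u_k$, the error term is $p_k(\bar u_k-\widetilde m_k)^+ +\bar p_k(\widetilde u_k-\bar m_k)^+$. This is a sum of two nonnegative quantities, so there is nothing to cancel. A concrete instance: take $K=d=1$, slack capacity $\rho=\bar\rho=2$, $\widetilde m=1$, $\bar m=1+\eps$, $C_1(0)=1$, and $\widetilde q=\bar q=r_1^->0$. All hypotheses hold with $\widetilde u=1$ and $\bar u=1+\eps$, and $\ell_1=0$. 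Yet your bound produces the unweighted term $r_1^-\eps$.

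The proposed rescue does not help. In this instance both cutoffs already sit at $r_1^-$ and the term survives. More generally, a capped coordinate only tells you, via the tail relations, that the $\mu_k$-mass below the cutoff is $O(\eps)$, and only if the cap is near $C_k(0)$, which is not assumed. Without a lower mass bound, that says nothing about where the cutoff sits. Your fallback fails for the same reason: any $O(\eps)$ correction made in only one of the two problems is weighted by $p$ or by $\bar p$, never by $p-\bar p$.

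The missing idea is to couple the two test points with opposite corrections, $\bar\delta=-\delta$. Your Step 2 inequality then reads $(\bar p-p)^\top(\widetilde u-\bar u)\le(\bar p-p)^\top\delta\le\|\delta\|_\infty\sum_k\ell_k$. A small admissible $\delta$ comes from a single Hoffman bound applied to the polyhedron $\{\delta:\bar u+\delta\in K_{\widetilde m}(\rho),\ \widetilde u-\delta\in K_{\bar m}(\bar\rho)\}$:
\begin{itemize}
\item its constraint matrix is fixed, built from $\pm I$ and $\pm A$;
\item it is nonempty, because it contains $\widetilde u-\bar u$;
\item at $\delta=0$ its constraints are violated by at most $\max\{C_m\eps,\tau\}$, e.g.\ $\bar u_k-\widetilde m_k\le\bar m_k-\widetilde m_k$ and $A\bar u-\rho\le\bar\rho-\rho$.
\end{itemize}
Hence $\|\delta\|_\infty\le H(C_m\eps+\tau)$, and $\sum_k\ell_k\le Ky$ finishes the proof. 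This would be a more elementary route than the paper's.

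The paper, after the same Step 1, simply invokes Lemma~\ref{lem:projcomp}. That lemma is exactly the missing inequality $(p-\bar p)^\top(\widetilde u-\bar u)\ge-C\tau-C\eps\sum_k\ell_k$, not a crude Hoffman bound of order $\eps+\tau$ as you guessed, and the proposition's hypotheses already grant it. The paper proves it through the four-point estimate of Lemma~\ref{lem:fourpoint}. That estimate bounds a mixed second difference of the support functions $h_{m,\rho}$ using an optimal-face Hoffman bound integrated along a segment of price vectors. It is a more general statement than the coupled-test-point argument needs.
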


\begin{proof}
For each type \(k\), write
\[
        I_k:=I^a_k(\widetilde q_k,\bar q_k)
        =
        [\widetilde q_k\wedge\bar q_k,
         \widetilde q_k\vee\bar q_k]\cap S_k,
\]
and set
\[
        \ell_k:=\Leb(I_k),
        \qquad
        M_k:=\mu_k(I_k).
\]
Let
\[
        w:=\Pi(\widetilde q),
        \qquad
        \bar w:=\Pi(\bar q),
\]
so that \(w_k=\Pi_k(\widetilde q_k)\) and
\(\bar w_k=\Pi_k(\bar q_k)\). Since \(S_k\) is a single interval and
\(\Pi_k\) is projection onto \(S_k\), the active interval length is exactly the
distance between the projected cutoffs:
\begin{equation}
        \ell_k
        =
        |\Pi_k(\widetilde q_k)-\Pi_k(\bar q_k)|
        =
        |w_k-\bar w_k|.
        \label{eq:active-stab-proj-length}
\end{equation}

We first record the tail identities used below. Since \(\mu_k\) is atomless,
\(\mu_k(\{a\})=0\) for every point \(a\). By definition of the closed and strict
tails,
\[
        C_k(a)=\mu_k([a,\infty)),
        \qquad
        C_k(a{+})=\mu_k((a,\infty)).
\]
Thus, if \(a>b\), then
\begin{equation}
        C_k(a)-C_k(b{+})
        =
        -\mu_k([b,a]\cap S_k).
        \label{eq:active-stab-tail-desc}
\end{equation}
Similarly, if \(a<b\), then
\begin{equation}
        C_k(a{+})-C_k(b)
        =
        \mu_k([a,b]\cap S_k).
        \label{eq:active-stab-tail-asc}
\end{equation}

We now prove the coordinatewise active-mass product estimate. Fix a type \(k\).
If \(w_k=\bar w_k\), then \(\ell_k=0\) by
\eqref{eq:active-stab-proj-length}, and hence
\[
        (w_k-\bar w_k)(\widetilde u_k-\bar u_k)
        =
        0
        =
        -\ell_kM_k+2\eps\ell_k.
\]

Suppose next that \(w_k>\bar w_k\). Since \(\Pi_k\) is monotone, this inequality
implies that \(\widetilde q_k>\bar q_k\). The graph relations in the proposition
give
\[
\begin{aligned}
        \widetilde u_k-\bar u_k
        &\le
        \bigl(C_k(\widetilde q_k)+\eps\bigr)
        -
        \bigl(C_k(\bar q_k{+})-\eps\bigr)  \\
        &=
        C_k(\widetilde q_k)-C_k(\bar q_k{+})+2\eps .
\end{aligned}
\]
Applying \eqref{eq:active-stab-tail-desc} with
\(a=\widetilde q_k\) and \(b=\bar q_k\) gives
\[
        C_k(\widetilde q_k)-C_k(\bar q_k{+})=-M_k.
\]
Therefore
\[
        \widetilde u_k-\bar u_k\le -M_k+2\eps .
\]
Since \(w_k-\bar w_k=\ell_k>0\), we obtain
\[
        (w_k-\bar w_k)(\widetilde u_k-\bar u_k)
        \le
        -\ell_kM_k+2\eps\ell_k .
\]

Finally, suppose that \(w_k<\bar w_k\). Monotonicity of \(\Pi_k\) gives
\(\widetilde q_k<\bar q_k\). The graph relations now imply
\[
\begin{aligned}
        \widetilde u_k-\bar u_k
        &\ge
        \bigl(C_k(\widetilde q_k{+})-\eps\bigr)
        -
        \bigl(C_k(\bar q_k)+\eps\bigr)  \\
        &=
        C_k(\widetilde q_k{+})-C_k(\bar q_k)-2\eps .
\end{aligned}
\]
Applying \eqref{eq:active-stab-tail-asc} with
\(a=\widetilde q_k\) and \(b=\bar q_k\) gives
\[
        C_k(\widetilde q_k{+})-C_k(\bar q_k)=M_k.
\]
Hence
\[
        \widetilde u_k-\bar u_k\ge M_k-2\eps .
\]
Since \(w_k-\bar w_k<0\),
\[
\begin{aligned}
        (w_k-\bar w_k)(\widetilde u_k-\bar u_k)
        &\le
        (w_k-\bar w_k)(M_k-2\eps)  \\
        &=
        -|w_k-\bar w_k|M_k+2\eps |w_k-\bar w_k|  \\
        &=
        -\ell_kM_k+2\eps\ell_k .
\end{aligned}
\]

The three cases show that, for every \(k\),
\[
        (w_k-\bar w_k)(\widetilde u_k-\bar u_k)
        \le
        -\ell_kM_k+2\eps\ell_k.
\]
Summing over \(k\) yields
\begin{equation}
        (w-\bar w)^\top(\widetilde u-\bar u)
        \le
        -\sum_{k=1}^K\ell_kM_k
        +
        2\eps\sum_{k=1}^K\ell_k.
        \label{eq:active-stab-upper-proj}
\end{equation}

The same inner product is bounded from below by the projected-comparison lemma.
Indeed, Lemma~\ref{lem:projcomp} applies by hypothesis, so there is a constant
\(C_{\rm pc}<\infty\) such that
\begin{equation}
        (w-\bar w)^\top(\widetilde u-\bar u)
        \ge
        -C_{\rm pc}\tau
        -
        C_{\rm pc}\eps\sum_{k=1}^K\ell_k.
        \label{eq:active-stab-lower-proj}
\end{equation}
Combining \eqref{eq:active-stab-upper-proj} and
\eqref{eq:active-stab-lower-proj}, and increasing the constant if necessary,
gives
\[
        \sum_{k=1}^K\ell_kM_k
        \le
        C_1\tau
        +
        C_1\eps\sum_{k=1}^K\ell_k.
\]
This is the asserted bound.
\end{proof}

The pre-Young bound of Proposition~\ref{prop:active-linear} controls the active
product \(\sum_k\ell_kM_k\) up to the linear active-length term
\(\eps\sum_k\ell_k\). The next lemma absorbs that term into the product itself
whenever active mass has a polynomial lower bound. This is the only step in this
argument where the exponent enters.

\begin{lemma}[Local active-mass absorption]\label{lem:local-absorption}
Let \(p\ge1\) and \(c>0\). Let \((\ell_k)_{k=1}^K\) and
\((M_k)_{k=1}^K\) be nonnegative real numbers such that
\[
        M_k\ge c\,\ell_k^{p},
        \qquad k=1,\ldots,K.
\]
If, for some \(\tau\ge0\), \(\eps>0\), and \(C_1<\infty\),
\[
        \sum_{k=1}^K\ell_kM_k
        \le
        C_1\tau
        +
        C_1\eps\sum_{k=1}^K\ell_k,
\]
then there is a constant \(C<\infty\), depending only on \(C_1,c,p\), and \(K\),
such that
\[
        \sum_{k=1}^K\ell_kM_k
        \le
        C\bigl(\tau+\eps^{1+1/p}\bigr).
\]
\end{lemma}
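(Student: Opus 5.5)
The plan is a Young-type absorption of the linear term $\eps\sum_k\ell_k$ into the left-hand side, using the lower bound $M_k\ge c\,\ell_k^{p}$.

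Write $P:=\sum_k\ell_kM_k$. Since $M_k\ge c\,\ell_k^p$, each summand satisfies $\ell_kM_k\ge c\,\ell_k^{1+p}$. This gives
\[
        \sum_k\ell_k^{1+p}\le P/c .
\]

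To control the linear term, apply Young's inequality to each $\eps\ell_k$ with conjugate exponents $1+p$ and $(1+p)/p$. For any $\eta>0$,
\[
        C_1\eps\ell_k\le \eta\,\ell_k^{1+p}+C_\eta\,\eps^{(1+p)/p},
\]
where $C_\eta$ depends only on $\eta$, $C_1$ and $p$. Explicitly, write $C_1\eps\ell_k=(\eta'^{1/(1+p)}\ell_k)\cdot(C_1\eps\,\eta'^{-1/(1+p)})$ and apply $ab\le a^{1+p}/(1+p)+b^{(1+p)/p}\,p/(1+p)$.

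Summing over $k$ and inserting this into the hypothesis gives
\[
        P\le C_1\tau+\eta\sum_k\ell_k^{1+p}+KC_\eta\eps^{1+1/p}
        \le C_1\tau+\frac{\eta}{c}P+KC_\eta\eps^{1+1/p}.
\]
Now fix $\eta=c/2$. The middle term is then $P/2$, and moving it to the left yields
\[
        P\le 2C_1\tau+2KC_{c/2}\,\eps^{1+1/p}.
\]
This is the claim, with $C=\max\{2C_1,\,2KC_{c/2}\}$, which depends only on $C_1,c,p,K$.

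The absorption step needs $P<\infty$, which holds trivially because these are finitely many finite reals. The case $p=1$ needs no separate treatment: it gives $\eps^2$ and the same argument applies.

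There is no real obstacle. The only points needing care are choosing the Young exponents so that the power of $\ell_k$ matches $1+p$, and tracking that the constant depends on $K$ through summing the $\eps$-terms.

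An alternative that avoids Young entirely is a per-$k$ case split. If $\ell_k\le(2C_1K\eps/c)^{1/p}$, then $\ell_kM_k\le C\ell_k^{1+p}\cdot(\text{bounded})$ is small, but this requires an upper bound on $M_k$, which the lemma does not assume. The Young argument above only uses the lower bound, so it is the version I would use.
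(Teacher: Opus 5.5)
Your proof is correct and is essentially the paper's argument: Young's inequality with exponents $p+1$ and $(p+1)/p$ at weight $c/2$, combined with $\ell_kM_k\ge c\,\ell_k^{p+1}$, absorbs the linear term $C_1\eps\sum_k\ell_k$ into half the product. Your closing aside is unnecessary and slightly off: a case split on $\ell_k$ does work without any upper bound on $M_k$, provided you apply it to the term $C_1\eps\ell_k$ rather than to $\ell_kM_k$.
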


\begin{proof}
The lower bound on \(M_k\) gives
\[
        \ell_kM_k\ge c\,\ell_k^{p+1}.
\]
Young's inequality with conjugate exponents \(p+1\) and \((p+1)/p\) gives, for
every \(\ell\ge0\),
\[
        C_1\eps\,\ell
        \le
        \frac{c}{2}\ell^{p+1}
        +
        C_2\eps^{1+1/p},
\]
where \(C_2\) depends only on \(C_1,c\), and \(p\). Applying this inequality with
\(\ell=\ell_k\), and using \(c\,\ell_k^{p+1}\le\ell_kM_k\), gives
\[
        C_1\eps\,\ell_k
        \le
        \frac12\,\ell_kM_k
        +
        C_2\eps^{1+1/p}.
\]
After summing over the \(K\) types, we obtain
\[
        C_1\eps\sum_{k=1}^K\ell_k
        \le
        \frac12\sum_{k=1}^K\ell_kM_k
        +
        C_3\eps^{1+1/p},
        \qquad C_3:=KC_2 .
\]
Substituting this estimate into the hypothesis yields
\[
        \sum_{k=1}^K\ell_kM_k
        \le
        C_1\tau
        +
        \frac12\sum_{k=1}^K\ell_kM_k
        +
        C_3\eps^{1+1/p}.
\]
Moving the half-product term to the left and increasing the constant proves the
claim.
\end{proof}

\begin{proposition}[Active-mass product stability]\label{prop:active}
Assume the active mass bound \eqref{eq:wr}, and assume that each active support
set $S_k$ is a single interval. Fix compact ranges for $m$ and $\rho$ on which
Lemma~\ref{lem:projcomp} applies. For every $C_m<\infty$, there are constants
$\eps_0,C<\infty$ such that the following holds.

Let $\widetilde u\in K_{\widetilde m}(\rho)$ and
$\bar u\in K_{\bar m}(\bar\rho)$, where
$\widetilde m,\bar m,\rho,\bar\rho$ lie in the fixed compact ranges. Let
$\widetilde q,\bar q\in[0,y]^K$. Suppose that $0<\eps\le\eps_0$,
\[
        C_k(\widetilde q_k+)-\eps
        \le \widetilde u_k
        \le C_k(\widetilde q_k)+\eps,\qquad
        C_k(\bar q_k+)-\eps
        \le \bar u_k
        \le C_k(\bar q_k)+\eps,
        \qquad k=1,\ldots,K,
\]
that
\[
        \Pi(\widetilde q)\in N_{K_{\widetilde m}(\rho)}(\widetilde u),
        \qquad
        \Pi(\bar q)\in N_{K_{\bar m}(\bar\rho)}(\bar u),
\]
and that
\[
        \|\widetilde m-\bar m\|_\infty\le C_m\eps,
        \qquad
        \|\rho-\bar\rho\|_\infty\le\tau .
\]
Then
\[
        \sum_{k=1}^K
        \ell_k(\widetilde q_k,\bar q_k)
        M_k(\widetilde q_k,\bar q_k)
        \le
        C\bigl(\tau+\eps^{1+1/\pp}\bigr),
\]
where $M_k(a,b):=\mu_k(I_k^a(a,b))$.
\end{proposition}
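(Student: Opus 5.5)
The plan is to obtain Proposition~\ref{prop:active} by composing the two results that precede it: the pre-Young stability estimate of Proposition~\ref{prop:active-linear} and the absorption step of Lemma~\ref{lem:local-absorption}. The hypotheses of Proposition~\ref{prop:active} contain those of Proposition~\ref{prop:active-linear} verbatim: the same feasible sets, the same tail-graph relations with tolerance $\eps$, the same projected normal-cone inclusions, the same closeness of $\widetilde m,\bar m$ and of $\rho,\bar\rho$, and the same compact ranges on which Lemma~\ref{lem:projcomp} applies. The only extra hypothesis needed there is that each $\mu_k$ is atomless. I would derive this from the upper half of \eqref{eq:wr}: for any point $a$, $\mu_k(\{a\})\le\mu_k([a-\delta,a+\delta])\le C\,\ell_k([a-\delta,a+\delta])\le 2C\delta$ for every $\delta>0$, so $\mu_k(\{a\})=0$. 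Proposition~\ref{prop:active-linear} then gives
\[
\sum_k\ell_kM_k\le C_1\Bigl(\tau+\eps\sum_k\ell_k\Bigr),
\qquad \ell_k=\ell_k(\widetilde q_k,\bar q_k),\ M_k=M_k(\widetilde q_k,\bar q_k).
\]

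Next I would verify the lower mass bound $M_k\ge c\,\ell_k^{\pp}$ required by Lemma~\ref{lem:local-absorption}. Set $I=[\widetilde q_k\wedge\bar q_k,\widetilde q_k\vee\bar q_k]$. Then $I^a_k=I\cap S_k$, so $\ell_k(I)=\Leb(I\cap S_k)=\ell_k$ and $\mu_k(I)=\mu_k(I\cap S_k)=M_k$, because $\mu_k$ is carried by $S_k$. The lower half of \eqref{eq:wr}, applied to this interval $I\subseteq[0,y]$, gives exactly $M_k\ge c\,\ell_k^{\pp}$. The degenerate case $\ell_k=0$ is trivial. Lemma~\ref{lem:local-absorption} with $p=\pp$ then yields $\sum_k\ell_kM_k\le C(\tau+\eps^{1+1/\pp})$. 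The constant depends only on $C_1$, $c$, $\pp$ and $K$, hence only on primitives and $C_m$.

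I do not expect a real obstacle, since the substantive work is in Lemma~\ref{lem:projcomp} and Proposition~\ref{prop:active-linear}. The constant $\eps_0$ in the statement is not needed by this argument, so I would simply take any $\eps_0$, for instance $\eps_0=1$; the bound holds for all $\eps>0$. The one point to check carefully is the interval identification in the previous paragraph. There I must confirm that $\ell_k(I)$ in Assumption~\ref{ass:regularity} coincides with $\ell_k(\widetilde q_k,\bar q_k)$, including when one or both cutoffs lie outside $S_k$. This holds because both quantities are defined as the Lebesgue measure of $I\cap S_k$.
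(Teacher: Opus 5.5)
Your proposal is correct and follows the paper's proof essentially step for step: atomlessness of $\mu_k$ from the upper half of \eqref{eq:wr}, then Proposition~\ref{prop:active-linear}, then the lower half of \eqref{eq:wr} to get $M_k\ge c\,\ell_k^{\pp}$, then Lemma~\ref{lem:local-absorption} with $p=\pp$; the paper also notes that $\eps_0$ is never used. One small difference is in the atomlessness step: the paper applies the upper bound directly to the degenerate interval $\{a\}$. That route avoids intersecting your $[a-\delta,a+\delta]$ with $[0,y]$ so that it is admissible in \eqref{eq:wr}.
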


\begin{proof}
The upper bound in \eqref{eq:wr} implies that each $\mu_k$ is atomless, since
$\mu_k(\{a\})\le C\,\Leb(\{a\}\cap S_k)=0$ for every point $a$, and the active
supports are single intervals. Proposition~\ref{prop:active-linear} therefore
applies and gives
\[
        \sum_{k=1}^K\ell_kM_k
        \le
        C_1\tau+C_1\eps\sum_{k=1}^K\ell_k ,
\]
where $\ell_k=\ell_k(\widetilde q_k,\bar q_k)$,
$M_k=M_k(\widetilde q_k,\bar q_k)=\mu_k(I_k)$, and
$I_k=I_k^a(\widetilde q_k,\bar q_k)$. By the lower bound in \eqref{eq:wr} applied
to $I_k$,
\[
        M_k=\mu_k(I_k)\ge c\,\ell_k^{\pp}.
\]
Lemma~\ref{lem:local-absorption}, applied with $p=\pp$, then yields
\[
        \sum_{k=1}^K\ell_kM_k
        \le
        C\bigl(\tau+\eps^{1+1/\pp}\bigr).
\]
The estimate holds for every $\eps>0$; the smallness $\eps\le\eps_0$ in the
statement is not used.
\end{proof}

This is the only active-mass product stability estimate used below. It controls
the active-mass product $\sum_k \ell_k M_k$ swept by the projected cutoffs. The
projection discards inactive flat parts of the sweep, which carry neither
weighted-resource mass nor conditional curvature.

\subsection{Concentration, summation, and the bound at \texorpdfstring{$\mathfrak
p=1$}{p=1}}\label{subsec:sum}

It remains to combine the pathwise active-mass product cap of
Proposition~\ref{prop:active} with the Jensen bound in Lemma~\ref{lem:jensen},
and then to sum the resulting per-stage losses. The concentration step below compares the
empirical analogues of these tails across two finite future paths to their
population values.

\paragraph{Normalization reminder.}
Recall from \eqref{eq:emptail} that, for a future path
$W=(J_i,\beta_i,V_i)_{i=1}^n$, the empirical tail systems are normalized by the
future-path length $n$:
\[
        \widehat C_{k,W}(r)
        =
        \frac1n\sum_{i=1}^n
        \beta_i\1\{J_i=k,\ R_i\ge r\},
        \qquad
        \widehat C_{k,W}(r+)
        =
        \frac1n\sum_{i=1}^n
        \beta_i\1\{J_i=k,\ R_i>r\},
\]
where $R_i=V_i/\beta_i$. Thus
\[
        \widehat m_{k,W}:=\widehat C_{k,W}(0)
\]
is the normalized empirical type total. Likewise, along the capacity path
$b_\theta=b-\theta za^k$, the normalized right-hand side used in the deterministic
systems is
\[
        \rho_\theta:=\frac{b_\theta}{n}.
\]
The finite-path primal allocation selected in
Lemma~\ref{lem:bounded-cutoff-selection} is divided by $n$, so it lies in
$K_{\widehat m_W}(\rho_\theta)$, or equivalently in the clipped system
$K_{\widehat m_W}(\rho_\theta^{\rm eff})$ introduced below.

\paragraph{Concentration.}
The empirical closed and strict tails are generated by the weighted-threshold
classes
\[
        (J,\beta,V)\mapsto \beta\1\{J=k,\ V/\beta\ge r\},
        \qquad
        (J,\beta,V)\mapsto \beta\1\{J=k,\ V/\beta> r\}.
\]
These are bounded VC-subgraph classes with envelope $\beta$. Hence, after
increasing constants if necessary, there is an event $E_n(W)$ such that
$\Prob(E_n)\ge1-n^{-6}$ and, on $E_n(W)$,
\begin{equation}
        \max_k\sup_{r\in[0,y]}
        \max\left\{
        |\widehat C_{k,W}(r)-C_k(r)|,\,
        |\widehat C_{k,W}(r+)-C_k(r+)|
        \right\}
        \le \delta_n,
        \qquad
        \delta_n:=C\sqrt{\frac{\log(en)}{n}} .
        \label{eq:tailconc}
\end{equation}
Taking $r=0$ in \eqref{eq:tailconc} gives
\begin{equation}
        \|\widehat m_W-m^0\|_\infty\le\delta_n .
        \label{eq:mconc}
\end{equation}
On $E_n(W)$, every empirical closed/right cutoff relation is a population
closed/right relation with perturbation $\delta_n$. Indeed, if
\[
        \widehat C_{k,W}(q_k+)\le u_k\le \widehat C_{k,W}(q_k),
\]
then \eqref{eq:tailconc} implies
\begin{equation}
        C_k(q_k+)-\delta_n
        \le u_k
        \le C_k(q_k)+\delta_n .
        \label{eq:tailrelax}
\end{equation}
The upper bound in \eqref{eq:wr} implies that $\mu_k$ is atomless, since
$\mu_k(\{r\})\le C\Leb(\{r\}\cap S_k)=0$. Thus
$C_k(q_k+)=C_k(q_k)$, although the two-sided form \eqref{eq:tailrelax} is the
form used below.

\paragraph{Global domination when $\pp=1$.}
When $\pp=1$, Assumption~\ref{ass:regularity} requires the finite
active cover of each $S_k$ to consist only of dominated neighborhoods. Thus,
for each cover element $U$, for a.e. feasible size $z$,
\[
        \Lambda_{k,z}(I)
        \le
        C_U\mu_k(I)
        \qquad
        \text{for every interval }I\subset U.
\]
Since the cover is finite, we may intersect the corresponding full-measure sets
of $z$'s and work on one full-measure set where all local domination bounds
hold simultaneously.

Choose a finite partition
\[
        r_k^-=a_{k,0}<a_{k,1}<\cdots<a_{k,N_k}=r_k^+
\]
such that every cell $[a_{k,h-1},a_{k,h}]$ is contained in one dominated
neighborhood. This is possible by compactness of $S_k$ and the finite active
cover. If $I\subseteq S_k$ is an interval, then, up to endpoints,
\[
        I
        =
        \bigcup_{h=1}^{N_k}
        \bigl(I\cap[a_{k,h-1},a_{k,h}]\bigr).
\]
Endpoints do not matter for $\mu_k$, since $\mu_k$ is atomless. They also
do not matter for $\Lambda_{k,z}$ on the full-measure size set, because local
domination gives
\[
        \Lambda_{k,z}(\{a\})
        \le
        C_U\mu_k(\{a\})
        =
        0
\]
whenever $a$ lies in a dominated neighborhood $U$. Therefore, summing the
local domination bounds over the finitely many cells gives
\begin{equation}
        \Lambda_{k,z}(I)
        \le
        C\,\mu_k(I)
        \qquad
        \text{for every interval }I\subseteq S_k,
        \text{ for a.e. }z.
        \label{eq:pone-global-domination}
\end{equation}

Moreover, since
\[
        \pi_k\E_\beta[\Lambda_{k,\beta}(B)\mid J=k]=\mu_k(B)
\]
for every Borel set $B$, and since $\mu_k([0,y]\setminus S_k)=0$, we also
have
\[
        \Lambda_{k,z}([0,y]\setminus S_k)=0
        \qquad
        \text{for a.e. }z.
\]
Thus, on the same full-measure size set, $\Lambda_{k,z}$ is finite,
atomless, and supported on $S_k$. These are precisely the support and
atomlessness conditions needed to apply Lemma~\ref{lem:jensen} with
curvature measure $\Lambda_{k,z}$.

\paragraph{Per-stage bound.}
Fix $s\ge3$, set $n=s-1$, and define
\begin{equation*}
        r_n
        :=
        C\left(\delta_n^{\,1+1/\pp}+\frac1n\right).
\end{equation*}
In the case $\pp=1$, the concentration scale \eqref{eq:tailconc} satisfies
$\delta_n^2\le C\frac{\log(en)}{n}$, so
\begin{equation}
        r_n
        =
        C\left(\delta_n^2+\frac1n\right)
        \le
        C\frac{\log(en)}{n}.
        \label{eq:pone-rn}
\end{equation}

The deterministic active-mass product estimate, Proposition~\ref{prop:active},
requires $\delta_n\le\eps_0$. This holds for all sufficiently large $n$.
For the finitely many smaller values of $n$, we enlarge the final constant in
the regret bound. Hence, in the rest of this paragraph, assume
$
        \delta_n\le\eps_0.
$

Fix a feasible current pair $(k,z)$, so $za^k\le b$, and define
\[
        b_\theta:=b-\theta z a^k,
        \qquad
        \rho_\theta:=\frac{b_\theta}{n},
        \qquad
        \theta\in[0,1].
\]
\paragraph{Clipping the resource vector.}
We first place the empirical type masses and resource vectors in the compact
ranges required by Proposition~\ref{prop:active}. Choose
$m^{\max}\in\mathbb R_+^K$ such that $\widehat m_W\le m^{\max}$ for every normalized
empirical vector $\widehat m_W$, and set
\[
        \rho^{\max}:=Am^{\max},
        \qquad
        \rho_\theta^{\rm eff}:=\rho_\theta\wedge\rho^{\max} .
\]
Here the minimum is taken coordinatewise. Because
$0\le u\le\widehat m_W\le m^{\max}$ implies $Au\le Am^{\max}=\rho^{\max}$, clipping
nonbinding resource coordinates does not change the feasible set:
\begin{equation}
        K_{\widehat m_W}(\rho_\theta)
        =
        K_{\widehat m_W}(\rho_\theta^{\rm eff}).
        \label{eq:pone-clipping-no-change}
\end{equation}
Consequently, the normalized empirical primal allocation selected in
Lemma~\ref{lem:bounded-cutoff-selection} lies in
$K_{\widehat m_W}(\rho_\theta^{\rm eff})$, and the same projected normality
condition holds for this clipped feasible set. Since
$\widehat m_W\le m^{\max}$ and $0\le\rho_\theta^{\rm eff}\le\rho^{\max}$, both
arguments lie in the fixed compact ranges on which
Proposition~\ref{prop:active} applies.

The clipping map is coordinatewise $1$-Lipschitz. Thus, for any
$\theta,\theta'\in[0,1]$,
\begin{equation}
        \|\rho_\theta^{\rm eff}-\rho_{\theta'}^{\rm eff}\|_\infty
        \le
        \|\rho_\theta-\rho_{\theta'}\|_\infty
        \le
        \frac{z\|a^k\|_\infty}{n}
        \le
        \frac{C}{n}.
        \label{eq:pone-rhs-gap}
\end{equation}

Let $W,W'\in E_n$, and let $\theta,\theta'$ be differentiability points for the
corresponding pathwise value functions. By
\eqref{eq:tailrelax}, both empirical cutoff systems satisfy the
population closed/right graph relations with perturbation $\delta_n$. By
\eqref{eq:mconc}, the vectors $\widehat m_W$ and $\widehat m_{W'}$ are
within $\delta_n$ of $m^0$, and hence within $2\delta_n$ of each other. By
\eqref{eq:pone-rhs-gap}, their clipped right-hand sides differ by at most
$C/n$. Lemma~\ref{lem:bounded-cutoff-selection} supplies projected normality.
Therefore Proposition~\ref{prop:active}, applied with $C_m=2$, $\eps=\delta_n$, and $\tau=C/n$,
gives
\begin{equation*}
        \sum_{j=1}^K
        \ell_j(q_{j,\theta,W},q_{j,\theta',W'})
        \,
        \mu_j\!\left(
        I^a_j(q_{j,\theta,W},q_{j,\theta',W'})
        \right)
        \le
        r_n.
\end{equation*}
Each summand is nonnegative, so in particular, for the current type $k$,
\begin{equation}
        \ell_k(q_{k,\theta,W},q_{k,\theta',W'})
        \,
        \mu_k\!\left(
        I^a_k(q_{k,\theta,W},q_{k,\theta',W'})
        \right)
        \le
        r_n.
        \label{eq:pone-current-coordinate-cap}
\end{equation}

Let
\[
        Q_\Omega:=q_{k,\Theta,W},
        \qquad
        \Omega=(\Theta,W),
        \qquad
        \Theta\sim{\rm Unif}[0,1],
\]
with $\Theta$ independent of $W$. By
Lemma~\ref{lem:marginal-cutoff-reduction-new},
\begin{equation}
\begin{aligned}
        \E_W\!\left[H_{k,z}(Y_W)\right]
        -
        H_{k,z}\!\left(\E_W[Y_W]\right)
        \le
        \E_\Omega H_{k,z}(Q_\Omega)
        -
        H_{k,z}\!\left(\E_\Omega Q_\Omega\right).
\end{aligned}
        \label{eq:pone-reduce-to-Q}
\end{equation}

Now condition on the good event
$
        G:=E_n(W).
$
Since $Q_\Omega\in[0,y]$, and since $H_{k,z}$ is uniformly bounded and
uniformly Lipschitz on $[0,y]$, for $n$ large enough that
$\Prob(G)\ge1/2$,
\begin{equation}
        \left|
        \Big(
        \E H_{k,z}(Q_\Omega)
        -
        H_{k,z}(\E Q_\Omega)
        \Big)
        -
        \Big(
        \E[H_{k,z}(Q_\Omega)\mid G]
        -
        H_{k,z}(\E[Q_\Omega\mid G])
        \Big)
        \right|  
        \le
        C\Prob(G^c)
        \le
        Cn^{-6}.
        \label{eq:pone-bad-future-removal}
\end{equation}
Indeed, boundedness gives
\[
        \left|
        \E H_{k,z}(Q_\Omega)
        -
        \E[H_{k,z}(Q_\Omega)\mid G]
        \right|
        \le
        C\Prob(G^c),
\]
after increasing $C$, and $Q_\Omega\in[0,y]$ gives
\[
        \left|
        \E Q_\Omega
        -
        \E[Q_\Omega\mid G]
        \right|
        \le
        C\Prob(G^c).
\]
The Lipschitz property of $H_{k,z}$ then controls the difference between the
two $H_{k,z}$-of-mean terms.

Let $\mathcal D_W\subseteq[0,1]$ be the full-measure set of differentiability
points for the path $W$, and define
\[
        \mathcal Q_G
        :=
        \left\{
        q_{k,\theta,W}:
        W\in E_n,\ \theta\in\mathcal D_W
        \right\}.
\]
Changing $Q_\Omega$ on a null set if necessary, the conditional random
variable $Q_\Omega\mid G$ takes values in $\mathcal Q_G$. Moreover, for any
two values $q,\bar q\in\mathcal Q_G$, generated by good futures and
differentiability points, \eqref{eq:pone-current-coordinate-cap} gives
\begin{equation}
        \ell_k(q,\bar q)\,
        \mu_k\!\left(I^a_k(q,\bar q)\right)
        \le
        r_n.
        \label{eq:pone-good-pairwise-mu-cap}
\end{equation}
Using the global domination bound \eqref{eq:pone-global-domination}, we get, for
a.e. feasible size $z$,
\begin{equation}
        \ell_k(q,\bar q)\,
        \Lambda_{k,z}\!\left(I^a_k(q,\bar q)\right)
        \le
        C r_n
        \qquad
        \text{for all }q,\bar q\in\mathcal Q_G.
        \label{eq:pone-good-pairwise-lambda-cap}
\end{equation}
The cap \eqref{eq:pone-good-pairwise-lambda-cap} is the pairwise active-cap
hypothesis of Lemma~\ref{lem:jensen}, with $r=Cr_n$.  It follows from
\eqref{eq:pone-good-pairwise-mu-cap} and the global domination bound
\eqref{eq:pone-global-domination}.
Applying that lemma conditionally on $G$ with
\[
        h=H_{k,z},
        \qquad
        \mu_h=\Lambda_{k,z},
        \qquad
        \mathcal Q=\mathcal Q_G,
\]
gives
\begin{equation}
        \E[H_{k,z}(Q_\Omega)\mid G]
        -
        H_{k,z}\!\left(\E[Q_\Omega\mid G]\right)
        \le
        C r_n.
        \label{eq:pone-good-jensen}
\end{equation}
Combining \eqref{eq:pone-reduce-to-Q},
\eqref{eq:pone-bad-future-removal}, and \eqref{eq:pone-good-jensen}, we obtain
\begin{equation*}
        \E_W\!\left[H_{k,z}(Y_W)\right]
        -
        H_{k,z}\!\left(\E_W[Y_W]\right)
        \le
        C r_n+Cn^{-6}.
\end{equation*}
This bound holds for every feasible current pair $(k,z)$ with $z$ in the
full-measure size set described above. Since the exceptional set has zero
arrival probability, averaging over the current type and size in the definition
of $\Xi_s(b)$ gives
\begin{equation*}
        \Xi_s(b)
        \le
        C r_n+Cn^{-6}.
\end{equation*}
When $\pp=1$, using \eqref{eq:pone-rn} yields
\begin{equation*}
        \Xi_s(b)
        \le
        C\frac{\log(e(s-1))}{s-1}.
\end{equation*}

\paragraph{Summation.}
The preceding per-stage bound is uniform over feasible capacities $b$. Since
$n=s-1$, Proposition~\ref{prop:telescope} gives
\[
        \Reg_T(\SPM;b_T)
        \le
        C+
        C\sum_{s=3}^T
        \left(
        \frac{\log(e(s-1))}{s-1}
        +
        \frac1s
        \right).
\]
The harmonic term is bounded by $C\log(eT)$, and
\[
        \sum_{s=3}^T\frac{\log(e(s-1))}{s-1}
        \le
        C(\log(eT))^2 .
\]
Therefore,
\[
        \Reg_T(\SPM;b_T)
        \le
        C(\log(eT))^2 .
\]
This proves the $\pp=1$ part of Theorem~\ref{thm:main}.

\subsection{Endpoint-contact verification when \texorpdfstring{$\pp>1$}{p>1}}

For $\pp>1$, the proof follows the same reductions as in the case $\pp=1$, but
the last curvature step changes. On dominated neighborhoods, the conditional
curvature is controlled pointwise by the weighted ratio measure. Near an
endpoint-contact neighborhood, this domination can fail because the conditional
ratio support has a moving endpoint. The replacement is a Hardy-type estimate
that integrates the conditional curvature over the endpoint branch.

The next lemma is the per-stage estimate needed to complete the proof. It uses
the same active-mass product cap from Proposition~\ref{prop:active}, followed by
the endpoint Hardy estimate. The proof is deferred to
Appendix~\ref{app:endpoint-contact}.

\begin{lemma}[Per-stage loss under endpoint contact]\label{lem:epstability}
Assume Assumption~\ref{ass:regularity} with $\pp>1$. There exist
$s_0<\infty$ and $C<\infty$ such that, for all $s\ge s_0$ and every
capacity $b$, with $n=s-1$,
\[
        \Xi_s(b)=\E_{J,\beta}\!\Big[
        \big(\E_W\!\left[H_{J,\beta}(Y_W)\right]
        -H_{J,\beta}\!\left(\E_W\!\left[Y_W\right]\right)\big)
        \1\{\beta a^J\le b\}\Big]\ \le\ C\,r_s\log(e/r_s),
\]
where
\[
        r_s
        =
        \left(\frac{\log(es)}{s}\right)^{(\pp+1)/(2\pp)}
        +
        \frac1s .
\]
\end{lemma}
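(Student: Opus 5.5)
The plan follows the $\pp=1$ argument up to the pairwise cap, and changes only the final curvature step. Fix $s\ge s_0$ and $n=s-1$, a feasible pair $(k,z)$, and the good event $G=E_n(W)$.

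\textbf{Reduction to a pairwise $\mu_k$-cap.} Lemma~\ref{lem:marginal-cutoff-reduction-new} and the bad-future removal \eqref{eq:pone-bad-future-removal} reduce the Jensen gap to the conditional gap of $Q_\Omega\mid G$, up to $Cn^{-6}$. Proposition~\ref{prop:active}, applied with the same clipping and the choices $\eps=\delta_n$, $\tau=C/n$, then gives the pairwise cap
\[
\ell_k(q,\bar q)\,\mu_k\bigl(I_k^a(q,\bar q)\bigr)\le r_n
\]
for all $q,\bar q\in\mathcal Q_G$, with $r_n\asymp r_s$. Lemma~\ref{lem:hull-jensen} with $J=S_k$ and $\nu=\Lambda_{k,z}$ bounds the gap by $C(v-u)\Lambda_{k,z}([u,v])$, where $[u,v]$ is the projected hull of $\mathcal Q_G$. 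The active-hull closure Lemma~\ref{lem:hullcap} (with $\nu=\mu_k$) turns the pairwise cap into $(v-u)\mu_k([u,v])\le r_n$.

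\textbf{Splitting the hull and the dominated part.} I split $[u,v]$ using the finite cover of Assumption~\ref{ass:regularity}(b), with a Lebesgue-number partition as in the $\pp=1$ case.
\begin{itemize}
\item Cells lying in dominated neighborhoods contribute at most $C(v-u)\mu_k([u,v])\le Cr_n$, exactly as before.
\item On an endpoint-contact neighborhood $U$, the split \eqref{eq:endpoint-kernel-split} separates the kernel. The dominated kernel $\Lambda^D_{k,z}$ is handled through \eqref{eq:dominated-component} in the same way.
\item What remains is the branch term $(v-u)\Lambda^{\mathrm{br}}_{k,z}([u,v]\cap U)$.
\end{itemize}

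\textbf{The branch term.} This term is not controlled pointwise in $z$. I therefore bound it only after integrating over sizes, since $\Xi_s(b)$ averages over $\beta$. The integrated quantity is
\[
\int w(\omega)(v-u)\Lambda^{\mathrm{br}}_{k,\omega}([u,v])f(\omega)\,\dd\omega .
\]
Subtlety: $[u,v]$ depends on $z$ through the sweep direction $za^k$. However, the cutoff stability bound is uniform in $(k,z)$. So I would replace $[u,v]$ by a $z$-independent interval: the hull $\mathcal H$ of all good cutoffs over all feasible $z$. Its length $\ell=|\mathcal H\cap U|$ then satisfies $\ell\,\mu_k(\mathcal H)\lesssim r_n$. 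This step needs checking, since the pairwise argument of Proposition~\ref{prop:active} compares different sweeps, and that comparison is allowed because $\tau=C/n$ covers the differing capacities.

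In local coordinates the hull becomes an interval $[x_1,x_1+\ell]$. Using $f\asymp\omega^{\alpha-1}$, $e\asymp\omega^\tau$, and $\lambda^{\mathrm{br}}\asymp(x-e)^{\gamma-1}$, I would carry out the Hardy-type computation in two regimes.
\begin{itemize}
\item If $x_1\lesssim\ell$ (an interval touching the corner), the integrated branch mass is $\asymp\ell^{\theta}$ with $\theta=\gamma+\alpha/\tau$. It is comparable to $\mu_k$ of the interval and hence bounded through the cap.
\item If $x_1\gg\ell$, the sizes with $e(\omega)\in[x_1,x_1+\ell]$ carry conditional mass up to $\ell^{\gamma}$ each. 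Their $f$-weight is $\asymp x_1^{\alpha/\tau-1}\ell$.
\end{itemize}

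\textbf{Main obstacle.} The difficulty is balancing these regimes against the aggregate mass $\mu_k\asymp \ell\,x_1^{\theta-1}$. The pointwise ratio can blow up as $x_1\downarrow0$. A dyadic decomposition in $x_1$ relative to $\ell$ yields a geometric sum of at most $\log(1/r_n)$ comparable terms, each bounded by $Cr_n$. This is the source of the factor $\log(e/r_s)$. For small $\ell$ I would additionally use the upper bound $\mu_k\le C\ell$ to truncate the sum.

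Adding the dominated, $D$-kernel and branch contributions with the $Cn^{-6}$ remainder gives $\Xi_s(b)\le Cr_s\log(e/r_s)$.
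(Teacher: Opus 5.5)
Your route differs from the paper's at the decisive step. It is valid, and in fact it closes more easily than you realize.

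\textbf{How your route differs from the paper's.} The paper compares only cutoffs generated by the same size $z$, so its hull $J_z$ (and the branch hull $I_\omega$) moves with the size. It therefore has to localize $J_z$ to one cover element by the Lebesgue-number argument, and then invoke Lemma~\ref{lem:hardy}. That lemma's hypothesis is only a separate cap $d_\omega\mu(I_\omega)\le r$ for each $\omega$-dependent interval, and this is the source of $\log(e/r)$: under per-$\omega$ caps, intervals hugging the moving edge $e(\omega)$ can make the integrand of order $r\,\omega^{-\alpha}$, and $\int r\,\omega^{-\alpha}f(\omega)\,\dd\omega$ is logarithmic.

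Your cross-size comparison is legitimate. For every feasible size, the sweep capacities $(b-\theta za^k)/n$ lie within $C/n$ of $b/n$. So Proposition~\ref{prop:active}, with $\tau=C/n$, caps every pair of good cutoffs regardless of which sizes produced them. Lemma~\ref{lem:hullcap} then gives $\Leb(\mathcal H)\,\mu_k(\mathcal H)\le Cr_n$ for the single hull $\mathcal H$ of all of them.

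\textbf{The branch term is not an obstacle.} Once $\mathcal H$ is size-independent, \eqref{eq:branch-disint-def} gives $\int w(\omega)\Lambda^{\mathrm{br}}_{k,\omega}(\mathcal H)f(\omega)\,\dd\omega=\mu_k^{\mathrm{br}}(\mathcal H)\le\mu_k(\mathcal H)$. Hence the integrated branch term is at most $\Leb(\mathcal H)\,\mu_k(\mathcal H)\le Cr_n$.

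The blow-up you worry about is in the pointwise ratio of $\Lambda_{k,z}$ to $\mu_k$. That ratio never enters once you integrate over a fixed interval. If you complete your regime computations, including the sizes whose edge lies below $x_1$, they add up to $\asymp\ell(x_1+\ell)^{\theta-1}$. By Lemma~\ref{lem:mass} this is just $\mu_k^{\mathrm{br}}([x_1,x_1+\ell])$; the edge-crossing sizes are a lower-order correction when $x_1\gg\ell$. A ``dyadic decomposition in $x_1$'' also has nothing to decompose, since $x_1$ is a single number. As written, your ``main obstacle'' paragraph is not an argument, but it is also unnecessary.

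\textbf{What your route actually proves.} You need neither the cover, the kernel split, nor the Hardy lemma. Since $\pi_k\int\Lambda_{k,z}(\mathcal H)\,P_k^\beta(\dd z)=\mu_k(\mathcal H)$, it suffices to combine Lemma~\ref{lem:hull-jensen} with $J=S_k$ and the monotonicity
$\Leb([u_z,v_z])\,\Lambda_{k,z}([u_z,v_z])\le\Leb(\mathcal H)\,\Lambda_{k,z}(\mathcal H)$.
Averaging over feasible sizes and types, and adding the bad-future term, gives $\Xi_s(b)\le Cr_s$ for large $s$.

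So your route proves a stronger per-stage bound than the lemma. It has no logarithm, and it uses only the active-mass part (a) of Assumption~\ref{ass:regularity}. The only price is noticing that the stability estimate applies across sizes, which the paper does not exploit.
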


\paragraph{Completion of the proof of Theorem~\ref{thm:main}.}
The case $\pp=1$ was proved in Section~\ref{subsec:sum}. It remains to consider
$\pp>1$. By Proposition~\ref{prop:telescope} and
Lemma~\ref{lem:epstability}, the finitely many stages $s<s_0$ can be absorbed
into the constant, and
\[
        \Reg_T(\SPM;b_T)
        \le
        C+
        C\sum_{s=s_0}^T
        \left(
        r_s\log(e/r_s)+\frac1s
        \right),
\]
where
\[
        r_s=
        \left(\frac{\log(es)}{s}\right)^{(\pp+1)/(2\pp)}
        +\frac1s .
\]
Since $\pp>1$, the first term in $r_s$ dominates $1/s$ up to constants for
large $s$. Hence
\[
        r_s\log(e/r_s)
        \le
        C
        s^{-(\pp+1)/(2\pp)}
        (\log(es))^{(\pp+1)/(2\pp)+1}
        +
        C\frac{\log(es)}{s}.
\]
Therefore,
\[
\begin{aligned}
        \sum_{s=s_0}^T r_s\log(e/r_s)
        &\le
        C\sum_{s=s_0}^T
        s^{-(\pp+1)/(2\pp)}
        (\log(es))^{(\pp+1)/(2\pp)+1}
        +
        C(\log(eT))^2  \\
        &\le
        C
        T^{1/2-1/(2\pp)}
        (\log(eT))^{(\pp+1)/(2\pp)+1}
        +
        C(\log(eT))^2 .
\end{aligned}
\]
The polynomial term dominates the polylogarithmic term when $\pp>1$, after
increasing $C$ if necessary. Combining this estimate with the telescoping bound
gives
\[
        \Reg_T(\SPM;b_T)
        \le
        C
        T^{1/2-1/(2\pp)}
        (\log(eT))^{(\pp+1)/(2\pp)+1}
        +C .
\]
This proves the $\pp>1$ part of Theorem~\ref{thm:main}, and completes the proof.

\section{Lower bound}\label{sec:lower}

The polynomial exponent in the $\SPM$ upper bound is sharp in the worst case over the endpoint-contact families constructed below. We show that, for
every prescribed active weighted-mass exponent $\pp>1$, no online policy can
achieve regret smaller than order $T^{1/2-1/(2\pp)}$. The construction uses one
resource and one arrival type. Its weighted ratio measure has endpoint
exponent exactly $\pp$ at both endpoints of its support, so the smallest exponent
satisfying the active-mass condition \eqref{eq:wr} is $\pp$ itself.

The construction isolates the mechanism behind the lower bound. The operative
ratio cutoff lies at the lower edge of the ratio support. That edge is a corner:
it is reached only when consumption and value simultaneously approach their
extreme values. As a result, the resource mass just above the cutoff is governed
by the product of the value density and the size density near the boundary. We
choose the size density symmetrically at its two endpoints, so the same contact
exponent appears at both ratio endpoints. Thus the active weighted-mass exponent
used in the upper bound is exactly the exponent used in the lower-bound
construction.  The capacity is set at the critical, accept-all level, so the
operative cutoff is pinned at this endpoint; there the capacity-local exponent of
Appendix~\ref{app:local} coincides with the global exponent, and the polynomial
lower bound is a critical-capacity effect.

\subsection{A one-resource endpoint-contact family}

Fix $\pp>1$. The lower-bound instance has one resource, one arrival type, and
unit consumption direction. The size is $\beta=2-Y$, where
$Y\sim\mathrm{Beta}(\pp-1,\pp-1)$ has density
\[
        f_Y(y)=\frac{1}{B(\pp-1,\pp-1)}
        y^{\pp-2}(1-y)^{\pp-2},
        \qquad 0\le y\le1 .
\]
The reward is $V=1+S$, where $S\sim\Unif[0,1]$, and $S$ and $Y$ are independent.
Thus $\beta\in[1,2]$ and $V\in[1,2]$, and the mean size is
$\mu:=\E[\beta]=\tfrac32$. We set the capacity at the fluid scale
\[
        b_T=T\mu=\frac32 T .
\]

The value-to-size ratio is
\[
        R=\frac{V}{\beta}=\frac{1+S}{2-Y}\in[1/2,2].
\]
Let $r_0=1/2$ denote the lower endpoint of the ratio support. This endpoint is
reached only in the joint limit $S\downarrow0$ and $Y\downarrow0$. The upper
endpoint $2$ is reached only in the joint limit $S\uparrow1$ and $Y\uparrow1$.

This family is the independent value-and-size class of
Proposition~\ref{prop:independent-value-size} (the case behind
Corollary~\ref{cor:regret-beta}), with uniform value ($a_V^\pm=1$) and
$\beta=2-Y$, $Y\sim\mathrm{Beta}(\pp-1,\pp-1)$
($a_\beta^\pm=\pp-1$). By that proposition both ratio-endpoint exponents
equal $a_V^-+a_\beta^+=a_V^++a_\beta^-=\pp$, so the weighted ratio measure
satisfies the active-mass condition \eqref{eq:wr} with exponent $\pp$; and
since the endpoint mass is of order $x^{\pp}$, no smaller exponent is
admissible. Hence $\pp$ is exactly the smallest admissible active-mass
exponent of this family.

The lower-bound proof uses only the following two endpoint estimates.

\begin{lemma}[Endpoint mass]\label{lem:lb-mass}
For all sufficiently small $x>0$,
\[
        \E\!\left[\beta\,\1\{R\le r_0+x\}\right]\asymp x^{\pp},
        \qquad
        \E\!\left[\beta(R-r_0)\,\1\{R\le r_0+x\}\right]\asymp x^{\pp+1}.
\]
The constants depend only on $\pp$.
\end{lemma}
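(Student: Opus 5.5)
The plan is a direct computation in the local coordinates $s=S$ and $y=Y$ near the corner $(0,0)$. On the event $\{R\le r_0+x\}$ we have $(1+s)/(2-y)\le 1/2+x$, that is, $1+s\le (2-y)(1/2+x)=1-y/2+2x-xy$. Rearranged, this reads
\[
s+\tfrac{y}{2}(1+2x)\le 2x .
\]
For small $x$ this is, up to factors $1+O(x)$, the triangle $\{s,y\ge0:\ s+y/2\le 2x\}$. So the event forces $s\le 2x$ and $y\le 4x$. In that region the weight $\beta=2-y$ lies in $[2-4x,2]$, so $\beta\asymp1$. The first estimate therefore reduces to showing that the probability of the triangle is $\asymp x^{\pp}$.

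To get that probability, I will integrate over $y$ first and use independence with the uniform density of $S$:
\[
\Prob(R\le r_0+x)=\int_0^{y_x} f_Y(y)\,\bigl(2x-\tfrac{y}{2}(1+2x)\bigr)\,\dd y,
\qquad
y_x=\frac{4x}{1+2x}.
\]
For $y\le 4x\le 1/2$ the density satisfies $f_Y(y)\asymp y^{\pp-2}$, with constants depending only on $\pp$; this needs $\pp-1>0$, which holds because $\pp>1$. Substituting $y=y_x t$ with $t\in[0,1]$, the integrand becomes $(2x)(1-t)\cdot y_x^{\pp-2}t^{\pp-2}\cdot y_x$ up to constant factors. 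This gives $\asymp x\cdot x^{\pp-1}\int_0^1 t^{\pp-2}(1-t)\,\dd t = c_{\pp}\,x^{\pp}$. The Beta integral is finite precisely because $\pp-1>0$. Multiplying by $\beta\asymp1$ gives the first claim.

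For the second estimate, write $R-r_0$ in the same coordinates. Since $\beta\in[1,2]$, we have $\beta(R-r_0)=V-\beta/2=s+y/2$, which is exact. So I need
\[
\E\bigl[(S+Y/2)\1\{R\le r_0+x\}\bigr]\asymp x^{\pp+1}.
\]
The upper bound is immediate: on the event, $s+y/2\le 2x$, so the expectation is at most $2x\cdot O(x^{\pp})$ by the first part. For the lower bound I will restrict to a sub-triangle where $s+y/2\ge x$, namely the region $\{x\le s+y/2\le (2-O(x))x\}$. After the same scaling, its probability is again $\gtrsim x^{\pp}$, because the scaled region is a fixed band of positive Beta-weighted measure. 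On this band the integrand is at least $x$, which gives the $x^{\pp+1}$ lower bound.

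The only delicate point is bookkeeping the $O(x)$ distortions, namely the factor $(1+2x)$ and the range of $\beta$. Each of these changes constants by factors in $[1/2,2]$ once $x\le x_0(\pp)$, which is why the lemma is stated for sufficiently small $x$. I would also note that a symmetric computation at the upper endpoint, with $S\uparrow1$ and $Y\uparrow1$, uses the other Beta exponent; this is not needed here. All constants produced depend only on $\pp$ through $B(\pp-1,\pp-1)$ and the elementary Beta integrals.
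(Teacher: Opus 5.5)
Your proof is correct, but it takes a different route from the paper's.

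\emph{The paper's route.} The paper computes nothing directly. It identifies the instance as a case of Proposition~\ref{prop:independent-value-size}: the value is uniform, so $a_V^-=1$, and the size is $\beta=2-Y$, so $a_\beta^+=\pp-1$. The proof of that proposition exhibits the lower ratio endpoint as a single contact branch with $\gamma=1$, $\alpha=\pp-1$, $\tau=1$. Lemma~\ref{lem:mass} then gives the pointwise density comparison $m(r_0+t)\asymp t^{\pp-1}$ for the weighted ratio measure. Both estimates follow by integrating this density against $1$ and against $t$.

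\emph{Your route.} Clearing denominators makes the event exactly linear in the corner coordinates, $\{S+\tfrac{Y}{2}(1+2x)\le 2x\}$. The scaling $y=y_x t$ then turns the probability into an explicit Beta integral. For the second estimate you use the exact identity $\beta(R-r_0)=S+Y/2$ and a fixed sub-band, so you never need a density for the weighted ratio measure.

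On the sub-band: to stay inside the event you need $S+Y/2\le 2x-xY$. Since $xY\le 4x^2$ there, the band $\{x\le S+Y/2\le 2x-4x^2\}$ works, consistent with your $O(x)$ bookkeeping.

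\emph{What each buys.} Your argument is self-contained and elementary. It yields explicit constants through $B(\pp-1,2)$, and a threshold on $x$ (e.g.\ $x\le 1/8$) that does not even depend on $\pp$. The paper's route is less explicit but reuses machinery it needs anyway. The same proposition certifies that this family satisfies Assumption~\ref{ass:regularity} with exponent exactly $\pp$ at both endpoints, which is what makes the lower bound match Theorem~\ref{thm:main}. It also gives the stronger pointwise density comparison, not just the two cumulative bounds the lemma states.
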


\begin{proof}
This family is the independent value-and-size class of
Proposition~\ref{prop:independent-value-size}, with uniform value
($a_V^\pm=1$) and $\beta=2-Y$, $Y\sim\mathrm{Beta}(\pp-1,\pp-1)$
($a_\beta^\pm=\pp-1$). Its lower ratio endpoint exponent is therefore
$a_V^-+a_\beta^+=\pp$, and the verification there, through
Lemma~\ref{lem:mass}, gives the weighted ratio measure
$\E[\beta\,\1\{R\in\cdot\}]$ the endpoint density
\[
        m(r_0+t)\asymp t^{\pp-1}
        \qquad\text{as }t\downarrow0 .
\]
Hence
\[
        \E\!\left[\beta\,\1\{R\le r_0+x\}\right]
        =\int_0^x m(r_0+t)\,\dd t\asymp x^{\pp},
\]
and, integrating the same density against the ratio premium $r-r_0=t$,
\[
        \E\!\left[\beta(R-r_0)\,\1\{R\le r_0+x\}\right]
        =\int_0^x t\,m(r_0+t)\,\dd t\asymp x^{\pp+1}.
\]
\end{proof}

\subsection{The matching lower bound}

The lower bound is a two-point dilemma at the lower ratio endpoint. An endpoint
layer of width $\eps_T$ carries $\Theta(\eps_T^{\pp}T)$ weighted resource
(Lemma~\ref{lem:lb-mass}), which an online policy cannot distinguish from the
$\Theta(\sqrt T)$ fluctuation of total demand exactly when
$\eps_T^{\pp}\sqrt T\asymp1$, i.e.\ $\eps_T\asymp T^{-1/(2\pp)}$.
Since the per-unit regret for misjudging the layer is the ratio premium $\eps_T$,
the unavoidable regret is $\eps_T\cdot\sqrt T\asymp T^{1/2-1/(2\pp)}$,
larger for a thinner endpoint (larger $\pp$).

\begin{theorem}[Endpoint-contact lower bound]\label{thm:lower}
For the family above, there is a constant $c_{\pp}>0$ such that, for all
sufficiently large even $T$,
\[
        \inf_{\pi}\Reg_T(\pi;b_T)\ \ge\ c_{\pp}\,T^{1/2-1/(2\pp)} .
\]
\end{theorem}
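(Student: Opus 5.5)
The plan is a two-scenario argument over the first half of the horizon. The online policy has no information about the total demand of the second half, while the hindsight optimum does. We set $\eps=\eps_T:=\kappa T^{-1/(2\pp)}$ for a small constant $\kappa$. Call the \emph{endpoint layer} the set of requests with $R\in[r_0,r_0+\eps]$; by Lemma~\ref{lem:lb-mass}, the expected resource carried by this layer over $T$ periods is $\asymp\eps^{\pp}T\asymp\kappa^{\pp}\sqrt T$. The total demand is $D=\sum_t\beta_t$, with $\E D=b_T$ and standard deviation of order $\sqrt T$. With constant probability the shortfall $D-b_T$ exceeds $c\sqrt T$, in which case the hindsight optimum must reject about $c\sqrt T$ units, essentially all from the bottom of the ratio support. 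With constant probability instead $D-b_T\le -c\sqrt T$, in which case accepting everything is feasible and optimal. Here we use the fact that $\OPT$ equals the total value minus the cheapest rejections. When $\kappa$ is small, the layer mass $\kappa^\pp\sqrt T$ is much smaller than $c\sqrt T$, so in the first scenario the hindsight optimum rejects the entire layer and also some mass with ratio above $r_0+\eps$.

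The key steps, in order, are the following. (1) Split the horizon into halves; $T$ even makes the halves exact. Let $D_1,D_2$ be the half demands and $N_1$ the first-half layer mass. Any policy's first-half decisions are measurable with respect to the first half, while $D_2-\E D_2$ is independent of it and, by a CLT or Berry--Esseen bound, exceeds $\pm c\sqrt T$ with probability bounded below. (2) For a fixed first-half history, let $A$ be the amount of first-half layer resource the policy rejects. Compare $\ALG$ to $\OPT$ pathwise as follows. If the policy rejects layer mass $A$ and the realized total demand ends up below capacity, then $\OPT$ accepts everything, so the loss is at least $r_0A$. This charges the full value, but only the premium matters after a reallocation: capacity freed by rejecting cannot be reused at a better ratio once demand is short, so each rejected unit costs at least its reward $\ge r_0$. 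If instead demand ends up in excess, then $\OPT$ rejects the whole layer together with mass of ratio $\ge r_0+\eps$. A policy that accepted layer mass $N_1-A$ must consequently forgo, later or earlier, an equal resource amount of requests with ratio at least $r_0+\eps$, or else violate capacity. Each accepted unit of the first-half sub-layer $[r_0,r_0+\eps/2]$ therefore costs at least $\eps/2$. To make this exchange precise, use the fractional-knapsack structure of $\OPT$: $\OPT-\ALG\ge\sum(\text{ratio gap})\times(\text{resource misallocated})$, in the form of a standard LP sensitivity bound with dual price at least $r_0+\eps$ in the excess scenario. (3) Combine the two scenarios. Restricting to the sub-layer $[r_0,r_0+\eps/2]$, whose first-half mass $N'$ is of order $\eps^\pp T$ with high probability, the expected regret is at least $c\min\{r_0A',\ (\eps/2)(N'-A')\}$ for every choice of the policy's rejected amount $A'$. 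Hence it is at least $c\,\eps N'/(2+\eps)\gtrsim \eps\cdot\eps^{\pp}T$.

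This yields only $\eps^{\pp+1}T=\kappa^{\pp+1}T^{1/2-1/(2\pp)}$, which is exactly the claimed order, since $\eps^{\pp+1}T=T^{1-(\pp+1)/(2\pp)}=T^{1/2-1/(2\pp)}$. The scaling therefore works out as planned, and the second estimate of Lemma~\ref{lem:lb-mass} (the premium-weighted mass $\asymp x^{\pp+1}$) is the natural tool to compute the cost in step (2) directly, instead of using the crude $\eps/2$ lower bound.

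The main obstacle is step (2) in the excess-demand scenario. We must show rigorously that every unit of layer resource the policy accepts displaces resource of ratio at least $r_0+\eps$ that $\OPT$ keeps, uniformly over adaptive policies, which may react to second-half information. My first attempt is to make the scenario event depend only on second-half demand computed among requests with ratio above $r_0+\eps$. On the event that this high-ratio demand, together with the policy's first-half acceptances, already exceeds $b_T$, every accepted layer unit forces an equal amount of high-ratio rejection. This reduces the claim to a deterministic capacity-accounting inequality. It also requires that this high-ratio excess event and its complement (total demand $\le b_T$) each have probability bounded below independently of the first half, which follows from the $\sqrt T$ Gaussian fluctuation of $D_2$ dominating the $\kappa^\pp\sqrt T$ layer mass when $\kappa$ is small.
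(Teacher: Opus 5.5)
Your route is the paper's: split the horizon into halves and treat the first-half endpoint layer as the policy's decision. A resource-poor second half (accept-all optimal, each rejected layer unit costs at least $r_0$) is played against a resource-rich one (demand above a slightly higher ratio exceeds $b_T$, so each accepted layer unit costs at least the ratio gap), with $\eps\asymp T^{-1/(2\pp)}$. Your sub-layer $[r_0,r_0+\eps/2]$ and threshold $r_0+\eps$ play the roles of the paper's $L_\eps$ and $L_{2\eps}$. Your dual-price bound $\OPT-\sum_i V_ix_i\ge\sum_{i:x_i=1}\beta_i(\lambda^*-R_i)^+$, with a hindsight dual price $\lambda^*\ge r_0+\eps$, is a valid substitute for the paper's explicit exchange. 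Three steps need repair before this is a proof.

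\emph{First: the scenario probabilities are not bounded below independently of the first half.} Given a first half with $D_1-n\mu\gg\sqrt T$, the deficit event has vanishing conditional probability. A first half carrying much mass below ratio $r_0+\eps$ spoils the excess event. You must restrict to a typical first-half event $F$: $|D_1-n\mu|\le K\sqrt T$, first-half mass with ratio $\le r_0+\eps$ at most $K\sqrt T$, and $N'\ge c\,\E N'$. Chebyshev and Markov give $\Prob(F)\ge 1/2$ for $K$ large, $\kappa$ small and $T$ large. You also need to require in the excess event that the second-half mass with ratio $\le r_0+\eps$ is at most $K\sqrt T$, which holds by Markov since its mean is $O(\kappa^{\pp}\sqrt T)$. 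This is exactly the role of the paper's $F=A_0\cap B_0\cap C_0$ and $E^{\pm}$, and it yields $\E[N'\1_F]\gtrsim\kappa^{\pp}\sqrt T$.

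\emph{Second: the excess event should not refer to the policy.} Take it to be that the total demand over both halves with ratio above $r_0+\eps$ is at least $b_T$. Any online solution with accepted sub-layer mass $L$ accepts at most $b_T-L$ of that demand, so your bound gives $\OPT-\ALG\ge(\eps/2)L$ pathwise, whatever the policy does in the second half. Your version ("second-half high-ratio demand together with the policy's first-half acceptances exceeds $b_T$") forces only $H_2+M_1-b_T$ units of high-ratio rejection, which can be smaller than $L$. It also makes the conditional probability depend on the policy's first-half choices.

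\emph{Third: step (3) should add the two scenarios, not take a minimum.} The correct bound is $\E[\operatorname{regret}\mid\text{first half}]\ge p_-r_0A'+p_+(\eps/2)(N'-A')\ge p_+(\eps/2)N'$ once $p_+\eps\le 2p_-r_0$. A lower bound of the form $c\min\{r_0A',(\eps/2)(N'-A')\}$ is vacuous at $A'=0$. The closed form you state comes from equalizing the two terms, so the $\min$ should be a $\max$, or better the probability-weighted sum. With these fixes your argument gives $\kappa^{\pp+1}T^{1/2-1/(2\pp)}$, as in the paper.
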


\begin{proof}
Write $T=2n$ and split the horizon into two halves of length $n$. Fix a small
constant $\eta>0$, to be chosen below, and set
\[
        \eps_T=\eta T^{-1/(2\pp)} .
\]
Define the two low-ratio layers
\[
        L_{\eps}=\{R\le r_0+\eps_T\},
        \qquad
        L_{2\eps}=\{R\le r_0+2\eps_T\},
\]
and, for the first half,
\[
        W=\sum_{t=1}^n \beta_t\1\{(\beta_t,V_t)\in L_\eps\},
        \qquad
        U_1=\sum_{t=1}^n \beta_t\1\{(\beta_t,V_t)\in L_{2\eps}\},
        \qquad
        S_1=\sum_{t=1}^n \beta_t .
\]
Thus $W$ and $U_1$ are the first-half resource in the thinner and thicker
endpoint layers, and $S_1$ is the total first-half resource.

\smallskip
\noindent\emph{Step 1: Construct a favorable first-half event.}
By Lemma~\ref{lem:lb-mass}, $\E W=n\,\Theta(\eps_T^{\pp})
=\Theta(\eta^{\pp}\sqrt T)$, and the summands of $W$ are bounded, so
$\Var(W)\le C\E W$. Chebyshev's inequality then gives
$\Prob\{W<c_W\eta^{\pp}\sqrt T\}\le C/(\eta^{\pp}\sqrt T)$ for a
sufficiently small constant $c_W>0$, and choosing $c_W$ small enough also gives
\begin{equation}
        \E\!\left[W\1\{W\ge c_W\eta^{\pp}\sqrt T\}\right]
        \ge c\,\eta^{\pp}\sqrt T .
        \label{eq:lb-thm-W-truncated}
\end{equation}
Since $\Var(S_1)=\Theta(T)$, Chebyshev's inequality gives
$\Prob\{|S_1-n\mu|\le K\sqrt T\}\ge\tfrac34$ for a fixed large constant $K$.
Lemma~\ref{lem:lb-mass} also gives $\E U_1\le C\eta^{\pp}\sqrt T$, so after
fixing $K$ and then choosing $\eta$ small enough that
$C\eta^{\pp}\le K/16$, Markov's inequality gives
$\Prob\{U_1\le K\sqrt T\}\ge\tfrac{15}{16}$.

Let $F=A_0\cap B_0\cap C_0$ be the intersection of the first-half events
\[
        A_0=\{W\ge c_W\eta^{\pp}\sqrt T\},
        \qquad
        B_0=\{|S_1-n\mu|\le K\sqrt T\},
        \qquad
        C_0=\{U_1\le K\sqrt T\}.
\]
We now show that intersecting with $B_0$ and $C_0$ does not destroy too much
$W$-mass. By Cauchy--Schwarz,
\[
        \E[W\1_{B_0^c}]\le(\E W^2)^{1/2}\,\Prob(B_0^c)^{1/2}.
\]
Since $\E W^2=(\E W)^2+\Var(W)\le(\E W)^2+C\E W$ and $\Prob(B_0^c)\le C/K^2$,
choosing $K$ large enough gives
\begin{equation}
        \E[W\1_{B_0^c}]\le\tfrac14 c\,\eta^{\pp}\sqrt T
        \label{eq:lb-thm-W-loss-B}
\end{equation}
for all large $T$, where $c$ is the constant in \eqref{eq:lb-thm-W-truncated}.
Second, since $W\le U_1$ and
$\Prob(C_0^c)=\Prob\{U_1>K\sqrt T\}\le C\eta^{\pp}/K$, the bound
$\E U_1^2\le(\E U_1)^2+C\E U_1$ and Cauchy--Schwarz give
\[
        \E[W\1_{C_0^c}]
        \le\E[U_1\1_{C_0^c}]
        \le(\E U_1^2)^{1/2}\,\Prob(C_0^c)^{1/2}
        \le C\eta^{\pp}\sqrt T
        \Bigl(\tfrac{\eta^{\pp}}{K}\Bigr)^{1/2}
        +o(\eta^{\pp}\sqrt T).
\]
After fixing $K$, choosing $\eta$ small enough and then $T$ large gives
\begin{equation}
        \E[W\1_{C_0^c}]\le\tfrac14 c\,\eta^{\pp}\sqrt T .
        \label{eq:lb-thm-W-loss-C}
\end{equation}
Combining \eqref{eq:lb-thm-W-truncated}, \eqref{eq:lb-thm-W-loss-B}, and
\eqref{eq:lb-thm-W-loss-C},
\begin{equation}
        \E[W\1_F]\ge c_F\,\eta^{\pp}\sqrt T
        \label{eq:lb-thm-W-on-F}
\end{equation}
for a primitive constant $c_F>0$; and on $F$ one has
$W\ge c_W\eta^{\pp}\sqrt T$, $|S_1-n\mu|\le K\sqrt T$, and
$U_1\le K\sqrt T$.

\smallskip
\noindent\emph{Step 2: The second half forces a dilemma.}
Fix an arbitrary first-half history in $F$ (and, for a randomized policy, its
internal randomization through the first half). Let $Y_{\rm rej}\in[0,W]$ be the
amount of first-half $L_\eps$-resource rejected by the online policy, and set
$L=W-Y_{\rm rej}$ for the accepted amount. Let
$S_2=\sum_{t=n+1}^{2n}\beta_t$ and
$U_2=\sum_{t=n+1}^{2n}\beta_t\1\{(\beta_t,V_t)\in L_{2\eps}\}$ be the second-half
analogues of $S_1$ and $U_1$; the second half is independent of the first-half
history and policy decisions.

Since $\beta$ is bounded and nondegenerate, the central limit theorem gives a
constant $p_->0$ such that $E^-=\{S_2-n\mu\le -2K\sqrt T\}$ satisfies
$\Prob(E^-)\ge p_-$ for all large $T$. Similarly
$\Prob\{S_2-n\mu\ge 4K\sqrt T\}\ge q_+$ for some $q_+>0$; since
$\E U_2\le C\eta^{\pp}\sqrt T$, choosing $\eta$ small enough that
$C\eta^{\pp}/K\le q_+/2$ gives $\Prob\{U_2>K\sqrt T\}\le q_+/2$ by Markov,
so $E^+=\{S_2-n\mu\ge4K\sqrt T,\ U_2\le K\sqrt T\}$ satisfies
$\Prob(E^+)\ge p_+:=q_+/2>0$ for all large $T$.

\smallskip
\noindent\emph{Resource-poor second half.}
On $F\cap E^-$ the bounds on $S_1$ and $S_2$ give
$S_1+S_2\le n\mu+K\sqrt T+n\mu-2K\sqrt T=b_T-K\sqrt T<b_T$, so accepting every
arrival is feasible; as all values are nonnegative, the hindsight optimum accepts
every arrival. Every unit of first-half $L_\eps$-resource the online policy
rejects is then lost relative to hindsight, and on $L_\eps$ each such unit has
ratio $R\ge r_0$. Hence
\begin{equation}
        \operatorname{regret}\ge r_0Y_{\rm rej}
        \qquad\text{on }F\cap E^- .
        \label{eq:lb-thm-regret-poor}
\end{equation}

\smallskip
\noindent\emph{Resource-rich second half.}
On $F\cap E^+$ we have
$S_1+S_2\ge n\mu-K\sqrt T+n\mu+4K\sqrt T=b_T+3K\sqrt T$. Also
$U_1+U_2\le2K\sqrt T$, so the total resource outside $L_{2\eps}$ is at least
$S_1+S_2-(U_1+U_2)\ge b_T+K\sqrt T>b_T$; thus there is a feasible hindsight
solution of total resource $b_T$ using only arrivals outside $L_{2\eps}$, every
unit of which has ratio at least $r_0+2\eps_T$.

Let $X$ be the online accepted fractional resource measure, with total mass
$M=|X|\le b_T$. Its first-half $L_\eps$ part has mass $L=W-Y_{\rm rej}$ and ratio
at most $r_0+\eps_T$; let $X_{\rm hi}$ be the remaining online resource, so
$|X_{\rm hi}|=M-L$. After removing $X_{\rm hi}$, at least $b_T-(M-L)$ units of
resource outside $L_{2\eps}$ remain, so the solution consisting of $X_{\rm hi}$
together with $b_T-M+L$ such units is feasible (its total mass is $b_T$).
Comparing it to the online solution, the common $X_{\rm hi}$ cancels: the
comparison gains $b_T-M+L$ units of ratio at least $r_0+2\eps_T$, while the online
solution has $L$ units of ratio at most $r_0+\eps_T$ and possibly unused capacity
$b_T-M$. Hence, using $L=W-Y_{\rm rej}$,
\begin{equation}
\begin{aligned}
        \operatorname{regret}
        &\ge
        (r_0+2\eps_T)(b_T-M+L)-(r_0+\eps_T)L  \\
        &=(r_0+2\eps_T)(b_T-M)+\eps_T L
        \ge\eps_T(W-Y_{\rm rej})
        \qquad\text{on }F\cap E^+ .
\end{aligned}
        \label{eq:lb-thm-regret-rich}
\end{equation}

\smallskip
\noindent\emph{Step 3: Optimize over the policy's first-half choice.}
Conditioning on the first-half history in $F$ and using $\Prob(E^-)\ge p_-$,
$\Prob(E^+)\ge p_+$, \eqref{eq:lb-thm-regret-poor}, and
\eqref{eq:lb-thm-regret-rich},
\begin{equation*}
\begin{aligned}
        \E[\operatorname{regret}\mid\text{first half}]
        &\ge
        p_- r_0 Y_{\rm rej}+p_+\eps_T(W-Y_{\rm rej})  \\
        &=
        p_+\eps_T W+(p_-r_0-p_+\eps_T)Y_{\rm rej}.
\end{aligned}
\end{equation*}
Since $\eps_T\to0$, for all large $T$ we have $p_+\eps_T\le p_-r_0$, so the right
side is nondecreasing in $Y_{\rm rej}$ and minimized at $Y_{\rm rej}=0$; hence
$\E[\operatorname{regret}\mid\text{first half}]\ge p_+\eps_T W$ on $F$. Taking
expectations and using \eqref{eq:lb-thm-W-on-F},
\[
        \E[\operatorname{regret}]
        \ge p_+\eps_T\,\E[W\1_F]
        \ge c\,\eta T^{-1/(2\pp)}\cdot\eta^{\pp}\sqrt T
        =c\,\eta^{\pp+1}T^{1/2-1/(2\pp)} .
\]
Absorbing the fixed $\eta^{\pp+1}$ into the constant gives
$\E[\operatorname{regret}]\ge c_{\pp}T^{1/2-1/(2\pp)}$. The online
policy was arbitrary, including randomized policies, so the same bound holds after
taking the infimum over all online policies.
\end{proof}

When the reward and size are independent and both uniform on $[1,2]$ --- the
construction above at $\pp=2$, where $Y\sim\Unif[0,1]$ --- with capacity
$b_T=\frac32 T$, the active weighted-mass exponent is $\pp=2$, and
Theorem~\ref{thm:lower} gives
\[
        \inf_{\pi}\Reg_T(\pi;b_T)\ge c\,T^{1/4}.
\]

This theorem isolates how joint reward-size randomness can create a thin ratio-support endpoint and force polynomial regret. When consumption is
deterministic ($\beta\equiv1$), the ratio is a one-dimensional function of the value
alone, its support has no corner, the relevant endpoint is regular ($\pp=1$),
and polylogarithmic regret is achievable; the same holds when the value is
deterministic and the size has a regular distribution. It is the joint randomness of consumption and value that turns the
ratio-support edge into a corner, raises the active weighted-mass exponent to $\pp=2$, and
makes $T^{1/4}$ the exact polynomial exponent for this instance. Together with the SPM upper bound of
Section~\ref{sec:spm}---which matches $T^{1/2-1/(2\pp)}$ up to a logarithmic factor for
every $\pp\ge1$ and gives polylogarithmic regret at $\pp=1$---the two
results pin down the polynomial exponent as a function of the single active
weighted-mass exponent $\pp$.

\section{Concluding remarks}\label{sec:conclusion}

This paper studies online allocation with random rewards and random consumption
sizes.  Under the paper's weighted-ratio regularity conditions, the main conclusion is that regret is controlled by the size-weighted value-to-size ratio measure near active acceptance cutoffs.  When this
size-weighted resource mass grows linearly, the sample-path marginal policy
attains \(O((\log T)^2)\) regret.  When the mass vanishes faster than linearly,
the problem becomes polynomially harder: for exponent \(\pp>1\), the
policy attains the rate \(T^{1/2-1/(2\pp)}\) up to logarithmic factors,
and the lower bound shows that the polynomial exponent is unavoidable.  The
mechanism is distributional.  Jointly random rewards and consumptions can make
the critical value-to-size ratio occur only on a thin part of the joint support,
such as a corner, even when the marginal reward and consumption distributions have bounded densities on compact supports.  This is one price of degeneracy: continuous random consumption can create thin cutoff mass.  This polynomial price is a critical-capacity phenomenon: a capacity-local refinement of the exponent shows that, at any fixed non-critical capacity, the active cutoff is interior and the same primitive distribution is polylogarithmic (Appendix~\ref{app:local}).

A companion note~\cite{Zhang2026b} studies the unknown-distribution setting.
When the arrival distribution is unknown but all arrivals are observed, and the
endpoint shape parameters---the local endpoint mass exponents of
Definition~\ref{def:exponent}---are known, a smoothed empirical version of the
expected-hindsight marginal rule is analyzed there through the same swept
active-mass stability estimate.  The companion note shows that this plug-in rule
preserves the rates of Theorem~\ref{thm:main}: \((\log T)^2\) when \(\pp=1\), and
\(T^{1/2-1/(2\pp)}\) up to logarithmic factors when \(\pp>1\).

Several questions remain open.  One is algorithmic: the sample-path marginal
policy uses the marginal value of the expected hindsight problem, while much of
the online allocation literature works with policies that periodically re-solve
fluid or empirical relaxations.  It would be useful to understand whether the
same rates can be achieved with substantially fewer re-solves, or with simpler
approximations to the marginal value rule.  A second question is structural.  In
this paper, a type has one random scalar size that scales a fixed resource
bundle.  A natural next step is to allow a request to have a genuinely random
consumption vector, with different random size distributions across resources.
Such a model would allow several resources to become critical simultaneously and
would require controlling multidimensional cutoff structure without imposing
fluid non-degeneracy assumptions such as unique dual prices or strict
complementarity.

\appendix

\section{Auxiliary proofs for Section~\ref{sec:spm}}

\subsection{Proof of Proposition~\ref{prop:telescope}}\label{app:telescope}

\begin{proof}[Proof of Proposition~\ref{prop:telescope}]
To analyze the total reward collected by the SPM algorithm, we denote,  at the 
decision epoch with $s$ periods remaining, the current arrival as
$Z_s=(J_s,\beta_s,V_s)$ and  $X_s\in\{0,1\}$ as the $\SPM$ decision. Recall that
$B_T=b_T$. We have
\begin{equation}
\begin{aligned}
        & \ \ \ \ \Reg_T(\SPM;b_T)\\
        &=
        \Phi_T(b_T)-\E\!\left[\sum_{s=1}^T V_sX_s\right]  \\
        &=
        \E\!\left[
        \Phi_T(B_T)-\Phi_0(B_0)
        -
        \sum_{s=1}^T V_sX_s
        \right] \\
        &=
        \E\!\left[
        \sum_{s=1}^T
        \bigl(\Phi_s(B_s)-\Phi_{s-1}(B_{s-1})\bigr)
        -
        \sum_{s=1}^T V_sX_s
        \right] \\
        &=
        \E\!\left[
        \sum_{s=1}^T
        \bigl(\Phi_s(B_s)-V_sX_s-\Phi_{s-1}(B_{s-1})\bigr)
        \right] \\
        &=
        \E\!\left[
        \sum_{s=1}^T
        \bigl(\Phi_s(B_s)-V_sX_s-\Phi_{s-1}(B_s-\beta_s a^{J_s}X_s)\bigr)
        \right]\\
        &=
        \E\!\left[
        \sum_{s=1}^T
        \left(
        \Phi_s(B_s)
        -
        \E\!\left[
        \begin{aligned}
        &\max\{\Phi_{s-1}(B_s),\,
        V_s+\Phi_{s-1}(B_s-\beta_s a^{J_s})\}
        \1\{\beta_s a^{J_s}\le B_s\}                                  \\
        &\qquad
        +
        \Phi_{s-1}(B_s)
        \1\{\beta_s a^{J_s}\not\le B_s\}
        \end{aligned}
        \,\middle|\, B_s
        \right]
        \right)
        \right].
\end{aligned}
        \label{eq:regret-telescope-expanded}
\end{equation}

The inner expectation is over the current arrival $Z_s$, conditionally on the
realized remaining capacity $B_s$. The last equality uses that $\SPM$ is the
one-step greedy rule. By
\eqref{eq:accept}, on the feasible event $\{\beta_s a^{J_s}\le B_s\}$ the policy
accepts ($X_s=1$) exactly when
$V_s+\Phi_{s-1}(B_s-\beta_s a^{J_s})\ge\Phi_{s-1}(B_s)$, so
$V_sX_s+\Phi_{s-1}(B_s-\beta_s a^{J_s}X_s)$ equals the displayed maximum; on the
infeasible event $X_s=0$, so this quantity equals $\Phi_{s-1}(B_s)$.

We now upper-bound the first term in the one-step gap, $\Phi_s(B_s)$. Recall
that $\Phi_s(B_s)$ is the expected value of the $s$-period offline fractional
hindsight LP with initial capacity $B_s$. The offline LP allows fractional
decisions. We first show that, for the current arrival $Z_s=(J_s,\beta_s,V_s)$,
any fractional use can be removed at expected cost $O(1/s)$.

For each realized $s$-tuple of arrivals and capacity $B_s$, choose a basic
optimal solution by a permutation-equivariant measurable rule. One way to obtain
such a rule is to attach independent continuous auxiliary labels to the $s$
arrivals, order the arrivals by these labels, and choose the first basic optimum
in the resulting finite, label-ordered list of bases. The labels are used only
to select among optimal basic solutions and do not change the offline value.

A basic optimum of a $d$-resource fractional knapsack LP has at most $d$
fractional variables. Since the selected basic optimum is permutation-equivariant
and the $s$ arrivals are exchangeable, the distinguished current arrival is
fractional with probability
\[
        \E\!\left[
        \frac1s\sum_{i=1}^s \1\{0<X_i^*<1\}
        \right]
        \le
        \frac ds .
\]
Rounding down the fractional use of this one arrival loses at most $\bar v$.
Let $\OPT_s^{\mathrm b}(B_s;Z_s,W)$ be the $s$-period offline optimum in which the
current arrival $Z_s$ is restricted to a binary decision, while the remaining
$s-1$ arrivals $W$ stay fractional. Rounding the distinguished arrival in the
selected basic optimum gives the pathwise bound
\[
        \OPT_s(B_s;Z_s,W)-\OPT_s^{\mathrm b}(B_s;Z_s,W)\le\bar v\,\1\{0<X_s^*<1\},
\]
so, taking expectations and using the fractional-probability bound above,
\[
        \Phi_s(B_s)\le\E\!\left[\OPT_s^{\mathrm b}(B_s;Z_s,W)\right]+\frac{d\bar v}{s} .
\]
It therefore suffices to bound $\E[\OPT_s^{\mathrm b}(B_s;Z_s,W)]$, at the cost of
the additive $C/s$.

Now separate the two cases for the binary offline decision on $Z_s$. Let $W$ be
any sample path of the remaining $s-1$ arrivals. If
$\beta_s a^{J_s}\not\le B_s$, then accepting $Z_s$ is infeasible, so the current
arrival must be rejected and the future value is at most
$\OPT_{s-1}(B_s;W)$. If $\beta_s a^{J_s}\le B_s$, then the offline solution may
either reject $Z_s$, leaving capacity $B_s$, or accept it, leaving capacity
$B_s-\beta_s a^{J_s}$. Thus, with the current arrival restricted to a binary decision,
\[
\begin{aligned}
&\OPT_s^{\mathrm b}(B_s;Z_s,W)\\
&\quad =
\max\{\OPT_{s-1}(B_s;W),
V_s+\OPT_{s-1}(B_s-\beta_s a^{J_s};W)\}
\mathbf 1\{\beta_s a^{J_s}\le B_s\}\\
&\qquad
+\OPT_{s-1}(B_s;W)\mathbf 1\{\beta_s a^{J_s}\not\le B_s\}.
\end{aligned}
\]

It remains to rewrite the feasible-acceptance term. On the event
$\{\beta_s a^{J_s}\le B_s\}$, define
\[
        Y_W
        = Y_W(B_s, J_s, \beta_s)=
        \frac{
        \OPT_{s-1}(B_s;W)
        -
        \OPT_{s-1}(B_s-\beta_s a^{J_s};W)}
        {\beta_s}.
\]
Then, conditional on $B_s,J_s,\beta_s,W$,
\[
        \max\{\OPT_{s-1}(B_s;W),
        V_s+\OPT_{s-1}(B_s-\beta_s a^{J_s};W)\}
        =
        \OPT_{s-1}(B_s;W)+\bigl(V_s-\beta_sY_W\bigr)^+.
\]
Averaging this pathwise identity first over $V_s$ and then over $W$ gives
\[
\begin{aligned}
&\E_W
\E_{V_s}\!\left[
        \max\{\OPT_{s-1}(B_s;W),
        V_s+\OPT_{s-1}(B_s-\beta_s a^{J_s};W)\}
        \mid B_s,J_s,\beta_s,W
\right] \\
&\quad =
\E_W\!\left[
        \OPT_{s-1}(B_s;W)
        +
        \E_{V_s}\!\left[
        \bigl(V_s-\beta_sY_W\bigr)^+
        \mid J_s,\beta_s
        \right]
\right] \\
&\quad =
\E_W\!\left[
        \OPT_{s-1}(B_s;W)
        +
        H_{J_s,\beta_s}(Y_W)
\right] \\
&\quad =
\Phi_{s-1}(B_s)
+
\E_W\!\left[H_{J_s,\beta_s}(Y_W)\right]\\
& \quad =\Phi_{s-1}(B_s)
+
H_{J_s,\beta_s}\!\left(\E_W\!\left[Y_W\right]\right) 
+
\Bigl[
\E_W\!\left[H_{J_s,\beta_s}(Y_W)\right]
-
H_{J_s,\beta_s}\!\left(\E_W\!\left[Y_W\right]\right)
\Bigr] \\ 
&\quad =
\mathbb E\!\left[
\left.
\max\{\Phi_{s-1}(B_s),
V_s+\Phi_{s-1}(B_s-\beta_s a^{J_s})\}
\right| B_s,J_s,\beta_s
\right]+\Bigl[
\E_W\!\left[H_{J_s,\beta_s}(Y_W)\right]
-
H_{J_s,\beta_s}\!\left(\E_W\!\left[Y_W\right]\right)
\Bigr]
\end{aligned}
\]
where the last equality follows from the definition of $H_{J_s, \beta_s}$ and the
identity
\[
        \beta_s\,\E_W\!\left[Y_W\right]
        =
        \Phi_{s-1}(B_s)
        -
        \Phi_{s-1}(B_s-\beta_s a^{J_s})
\].
The bracketed term is the Jensen loss from replacing the random future-path
marginal $Y_W$ by its expectation. Averaging this Jensen loss over the feasible
current type and size gives $\Xi_s(B_s)$. Therefore
\begin{equation}
\begin{aligned}
\Phi_s(B_s)
\le\;&
\mathbb E\!\left[
\left.
\max\{\Phi_{s-1}(B_s),
V_s+\Phi_{s-1}(B_s-\beta_s a^{J_s})\}
\mathbf 1\{\beta_s a^{J_s}\le B_s\}
\right| B_s
\right] \\
&\quad
+\Phi_{s-1}(B_s)
        \mathbb P(\beta_s a^{J_s}\not\le B_s\mid B_s)
+\Xi_s(B_s)
+\frac{C}{s}.
\end{aligned}
\label{eq:one-step-comparison}
\end{equation}

The first two terms on the right-hand side of
\eqref{eq:one-step-comparison} are precisely the one-step expression already
subtracted in \eqref{eq:regret-telescope-expanded}. Hence, for every $s\ge3$,
\[
\mathbb E\!\left[
\Phi_s(B_s)
-
V_sX_s
-
\Phi_{s-1}(B_s-\beta_s a^{J_s}X_s)
\right]
\le
\mathbb E[\Xi_s(B_s)]
+
\frac{C}{s}.
\]
Substituting this bound into \eqref{eq:regret-telescope-expanded} and summing over
$s=1,\ldots,T$ --- with $\Xi_s\equiv0$ for $s\le2$, whose one-step gaps are $O(1)$
and at most $C/s$ after enlarging $C$ --- gives
\[
        \Reg_T(\SPM;b_T)
        \le
        C+
        \sum_{s=1}^T
        \left(
        \mathbb E[\Xi_s(B_s)]
        +
        \frac{C}{s}
        \right).
\]
This proves the proposition.\end{proof}

\subsection{Cutoff selection and active-hull closure}\label{app:cutoff-details}

This appendix supplies two technical ingredients deferred from
Section~\ref{subsec:sweep}.  The first is the bounded cutoff selection lemma.
The second is the active-hull closure lemma, which turns a pairwise active cap
into a cap on the full projected hull.

\begin{proof}[Proof of Lemma~\ref{lem:bounded-cutoff-selection}]
The function \(g(\theta)\) is concave and Lipschitz, because
\(\OPT_n(\cdot;W)\) is the optimal value of a finite-dimensional linear program
as a function of its right-hand side.  Hence \(g\) is differentiable for almost
every \(\theta\in[0,1]\).  Fix such a value of \(\theta\).

We first bound the derivative of \(g\) along this capacity path by the
reward-ratio support.  For \(0<\varepsilon\le 1-\theta\),
\[
        0\le g(\theta)-g(\theta+\varepsilon)\le \varepsilon z y .
\]
The lower bound follows from monotonicity of the offline value in capacity.  To
prove the upper bound, we pass to the dual of the box-constrained offline LP.
Its upper-bound constraints \(x_i\le1\) carry nonnegative multipliers
\(\eta_i\), and the dual is
\[
        \min_{\lambda\ge0,\ \eta\ge0}\
        b_{\theta+\varepsilon}^\top\lambda+\sum_{i=1}^n\eta_i
        \qquad\text{subject to}\qquad
        \beta_i(a^{J_i})^\top\lambda+\eta_i\ge V_i,\quad i=1,\ldots,n .
\]
Let \((\lambda,\eta)\) be an optimal dual solution, and set
\(M_j:=\vbar/(\betamin\alpha_j)\).  We claim that clipping \(\lambda\) to
\(\lambda\wedge M\), while keeping \(\eta\) fixed, gives another optimal dual
solution.  Feasibility is preserved item by item.  Fix \(i\).  If every
coordinate \(j\) with \(a^{J_i}_j>0\) has \(\lambda_j\le M_j\), then
\((a^{J_i})^\top(\lambda\wedge M)=(a^{J_i})^\top\lambda\) on those coordinates and
the \(i\)-th constraint is unchanged.  Otherwise some such coordinate has
\(\lambda_j>M_j\), and since \(a^{J_i}\neq0\) gives \(a^{J_i}_j\ge\alpha_j\),
\[
        \beta_i(a^{J_i})^\top(\lambda\wedge M)
        \ge \beta_i a^{J_i}_j M_j
        \ge \betamin\,\alpha_j\cdot\frac{\vbar}{\betamin\alpha_j}
        =\vbar\ge V_i ,
\]
so the \(i\)-th constraint holds even with \(\eta_i=0\).  Because
\(b_{\theta+\varepsilon}\ge0\), which holds since \(za^k\le b\), and
\(\lambda\wedge M\le\lambda\), the objective
\(b_{\theta+\varepsilon}^\top(\lambda\wedge M)+\sum_i\eta_i\) does not increase,
so \((\lambda\wedge M,\eta)\) is again optimal.  Therefore the offline LP at
\(b_{\theta+\varepsilon}\) has an optimal resource-dual vector \(\lambda\) with
\(\lambda_j\le M_j\) for all \(j\).

Since the offline value is concave in the right-hand side, this dual vector is a
supergradient at \(b_{\theta+\varepsilon}\).  Taking
\(b_\theta=b_{\theta+\varepsilon}+\varepsilon z a^k\), we obtain
\[
        g(\theta)
        =\OPT_n(b_\theta;W)
        \le
        \OPT_n(b_{\theta+\varepsilon};W)
        +\lambda^\top(b_\theta-b_{\theta+\varepsilon})
        =g(\theta+\varepsilon)+\varepsilon z\,\lambda^\top a^k .
\]
The coordinatewise bound on \(\lambda\) gives
\[
        \lambda^\top a^k\le \sum_{j=1}^d M_j a^k_j=y_k\le y .
\]
Hence \(0\le g(\theta)-g(\theta+\varepsilon)\le\varepsilon z y\).  Dividing by
\(\varepsilon z\) and letting \(\varepsilon\downarrow0\) gives
\begin{equation}
        0\le -\frac{g'(\theta)}{z}\le y .
        \label{eq:derivative-cutoff-bounded}
\end{equation}

Let \(x^\theta\) be an optimal solution of the pathwise fractional problem with
capacity \(b_\theta\).  Define the normalized type allocation by
\[
        u_{\theta,W,\ell}
        =
        \frac1n\sum_{i:J_i=\ell}\beta_i x_i^\theta,
        \qquad \ell=1,\ldots,K .
\]
Then
\[
        0\le u_{\theta,W}\le\widehat m_W,
        \qquad
        Au_{\theta,W}\le\rho_\theta .
\]
Thus \(u_{\theta,W}\in K_{\widehat m_W}(\rho_\theta)\).

Let \(\lambda_\theta\) be an optimal dual vector for the resource constraints at
capacity \(b_\theta\) such that
\[
        g'(\theta)=-z (a^k)^\top\lambda_\theta .
\]
Such a choice exists by standard LP sensitivity analysis; see
\citet[Sec.~5.6.3, Eq.~(5.58)]{boyd2004convex}.  We construct
\(q_{\theta,W}\) from this dual vector.

Complementary slackness gives, for every item \(i\) with \(J_i=\ell\),
\[
        V_i/\beta_i>(a^\ell)^\top\lambda_\theta \Rightarrow x_i^\theta=1,
        \qquad
        V_i/\beta_i<(a^\ell)^\top\lambda_\theta \Rightarrow x_i^\theta=0 .
\]
Consequently, whenever \((a^\ell)^\top\lambda_\theta\le y\),
\[
        \widehat C_{\ell,W}\bigl(((a^\ell)^\top\lambda_\theta)+\bigr)
        \le u_{\theta,W,\ell}
        \le \widehat C_{\ell,W}((a^\ell)^\top\lambda_\theta).
\]
Set
\[
        q_{\theta,W,\ell}=\min\{(a^\ell)^\top\lambda_\theta,y\},
        \qquad \ell=1,\ldots,K .
\]
Then \(q_{\theta,W}\in[0,y]^K\).  Since
\((a^k)^\top\lambda_\theta=-g'(\theta)/z\), the bound
\eqref{eq:derivative-cutoff-bounded} gives
\[
        q_{\theta,W,k}=-\frac{g'(\theta)}{z},
\]
which proves \eqref{eq:selection-derivative}.

It remains to verify the tail condition.  If
\((a^\ell)^\top\lambda_\theta\le y\), then
\eqref{eq:selection-tail} follows from the preceding display.  If
\((a^\ell)^\top\lambda_\theta>y\), then
\(V_i/\beta_i<(a^\ell)^\top\lambda_\theta\) for every item \(i\) with
\(J_i=\ell\).  Complementary slackness therefore gives \(x_i^\theta=0\) for all
such items, and hence \(u_{\theta,W,\ell}=0\).  Since
\(q_{\theta,W,\ell}=y\),
\[
        \widehat C_{\ell,W}(q_{\theta,W,\ell}+)
        =
        \widehat C_{\ell,W}(y+)
        =0
        \le u_{\theta,W,\ell}
        \le \widehat C_{\ell,W}(y)
        =
        \widehat C_{\ell,W}(q_{\theta,W,\ell}) .
\]
This proves \eqref{eq:selection-tail} for every \(\ell\).

We next prove \eqref{eq:selection-normal-ineq}.  Let
\[
        U_{\theta,W,\ell}:=
        \sum_{i:J_i=\ell}\beta_i x_i^\theta
        =
        n u_{\theta,W,\ell}.
\]
For every \(u'\in K_{\widehat m_W}(\rho_\theta)\), feasibility of \(u'\) and
nonnegativity of \(\lambda_\theta\) imply
\[
        \lambda_\theta^\top A(u'-u_{\theta,W})
        \le
        \lambda_\theta^\top \rho_\theta-\lambda_\theta^\top A u_{\theta,W}
        =
        \frac1n\lambda_\theta^\top(b_\theta-AU_{\theta,W}) .
\]
The last term is zero by complementary slackness in the unnormalized pathwise
LP.  Equivalently,
\begin{equation}
        \sum_{\ell=1}^K ((a^\ell)^\top\lambda_\theta)(u'_\ell-u_{\theta,W,\ell})\le0 .
        \label{eq:dual-ineq-before-cap-v4}
\end{equation}
By definition,
\[
        q_{\theta,W,\ell}=\min\{(a^\ell)^\top\lambda_\theta,y\},
        \qquad \ell=1,\ldots,K .
\]
Thus \(q_{\theta,W,\ell}\le (a^\ell)^\top\lambda_\theta\) for every \(\ell\).
Strict inequality can occur only when \((a^\ell)^\top\lambda_\theta>y\).  In
that case every type-\(\ell\) item has negative reduced cost, so
\(u_{\theta,W,\ell}=0\).  Therefore, for every feasible \(u'\) and every
\(\ell\),
\[
        q_{\theta,W,\ell}(u'_\ell-u_{\theta,W,\ell})
        \le
        ((a^\ell)^\top\lambda_\theta)(u'_\ell-u_{\theta,W,\ell}) .
\]
Combining this inequality with \eqref{eq:dual-ineq-before-cap-v4} yields
\begin{equation}
        \sum_{\ell=1}^K q_{\theta,W,\ell}(u'_\ell-u_{\theta,W,\ell})\le0,
        \qquad u'\in K_{\widehat m_W}(\rho_\theta).
        \label{eq:ineq-after-cap-v4}
\end{equation}

Finally, we replace each coordinate of \(q_{\theta,W}\) by its projection onto
the corresponding active support.  Fix \(\ell\).  If
\(q_{\theta,W,\ell}>r_\ell^+\), then \((a^\ell)^\top\lambda_\theta\ge
q_{\theta,W,\ell}>r_\ell^+\); since every realized type-\(\ell\) ratio lies in
\([r_\ell^-,r_\ell^+]\), each type-\(\ell\) item has
\(V_i/\beta_i\le r_\ell^+<(a^\ell)^\top\lambda_\theta\), so complementary
slackness gives \(x_i^\theta=0\) and \(u_{\theta,W,\ell}=0\).  Hence
\(u'_\ell-u_{\theta,W,\ell}\ge0\) for every feasible \(u'\), and since
\(\Proj_\ell(q_{\theta,W,\ell})\le q_{\theta,W,\ell}\),
\[
        \Proj_\ell(q_{\theta,W,\ell})(u'_\ell-u_{\theta,W,\ell})
        \le
        q_{\theta,W,\ell}(u'_\ell-u_{\theta,W,\ell}) .
\]
If \(q_{\theta,W,\ell}<r_\ell^-\), then
\((a^\ell)^\top\lambda_\theta=q_{\theta,W,\ell}<r_\ell^-\); since every realized
type-\(\ell\) ratio lies in \([r_\ell^-,r_\ell^+]\), each type-\(\ell\) item has
\(V_i/\beta_i\ge r_\ell^->(a^\ell)^\top\lambda_\theta\), so complementary
slackness gives \(x_i^\theta=1\) and
\(u_{\theta,W,\ell}=\widehat m_{\ell,W}=\widehat C_{\ell,W}(r_\ell^-)\).  Every
feasible \(u'\) then satisfies \(u'_\ell\le\widehat m_{\ell,W}=u_{\theta,W,\ell}\),
and since \(\Proj_\ell(q_{\theta,W,\ell})\ge q_{\theta,W,\ell}\),
\[
        \Proj_\ell(q_{\theta,W,\ell})(u'_\ell-u_{\theta,W,\ell})
        \le
        q_{\theta,W,\ell}(u'_\ell-u_{\theta,W,\ell}) .
\]
If \(q_{\theta,W,\ell}\in S_\ell\), the two sides are equal.  Summing over
\(\ell\) and using \eqref{eq:ineq-after-cap-v4} proves
\eqref{eq:selection-normal-ineq}.
\end{proof}

We now prove the active-hull closure lemma used in Lemma~\ref{lem:jensen}.  For
\(q,\bar q\in[0,y]\) and a closed interval \(B\subset[0,y]\), define
\[
        I_B(q,\bar q):=[q\wedge \bar q,q\vee \bar q]\cap B
\]
and
\[
        d_B(q,\bar q):=\Leb(I_B(q,\bar q)).
\]

\begin{lemma}[Active-hull closure of a pairwise cap]\label{lem:hullcap}
Let \(\nu\) be a finite atomless measure on a closed interval \(B \subset[0,y]\),
and let \({\cal Q}\subset[0,y]\).  Suppose that, for some \(r>0\),
\begin{equation}
        d_B(q,\bar q)\nu(I_B(q,\bar q))\le r,
        \qquad q,\bar q\in{\cal Q}.
        \label{eq:pairwise-cap-abstract}
\end{equation}
Let
\[
        u:=\inf\{\Pi_B(q):q\in{\cal Q}\},
        \qquad
        v:=\sup\{\Pi_B(q):q\in{\cal Q}\}.
\]
Then
\begin{equation}
        (v-u)\nu([u,v])\le r .
        \label{eq:hull-cap-abstract}
\end{equation}
\end{lemma}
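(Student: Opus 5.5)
The plan is to approximate the hull endpoints $u$ and $v$ by projected values of actual cutoffs in $\mathcal Q$, and then pass to the limit using continuity of the atomless measure $\nu$.

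\textbf{Step 1 (trivial case and reduction).} If $u=v$, then $(v-u)\nu([u,v])=0\le r$. Assume $u<v$. For any $\eta>0$, pick $q,\bar q\in\mathcal Q$ with
\[
\Pi_B(q)\le u+\eta, \qquad \Pi_B(\bar q)\ge v-\eta.
\]

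\textbf{Step 2 (the key identity).} For any $q,\bar q\in[0,y]$ and a closed interval $B=[B_-,B_+]$, I claim that up to endpoints,
\[
I_B(q,\bar q)=[\Pi_B(q)\wedge\Pi_B(\bar q),\ \Pi_B(q)\vee\Pi_B(\bar q)].
\]
To check this, take $q\le\bar q$. Then $[q,\bar q]\cap B=[\max(q,B_-),\min(\bar q,B_+)]$ whenever this set is nonempty, and these endpoints equal $\Pi_B(q)$ and $\Pi_B(\bar q)$. If the set is empty, both cutoffs lie on the same side of $B$, so they project to the same endpoint and the interval degenerates. The only discrepancy is a single-point set versus the empty set, and a single point has zero Lebesgue measure and zero $\nu$-measure because $\nu$ is atomless. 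Hence
\[
d_B(q,\bar q)=|\Pi_B(q)-\Pi_B(\bar q)|, \qquad \nu(I_B(q,\bar q))=\nu\bigl([\Pi_B(q)\wedge\Pi_B(\bar q),\ \Pi_B(q)\vee\Pi_B(\bar q)]\bigr).
\]

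\textbf{Step 3 (apply the pairwise cap).} With the pair chosen in Step 1, we have $\Pi_B(q)\le u+\eta<v-\eta\le\Pi_B(\bar q)$ once $\eta<(v-u)/2$. The projected interval therefore contains $[u+\eta,v-\eta]$, and the cap \eqref{eq:pairwise-cap-abstract} gives
\[
(v-u-2\eta)\,\nu([u+\eta,v-\eta])\le r.
\]

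\textbf{Step 4 (limit).} Let $\eta\downarrow0$. Continuity from below of the finite measure gives $\nu([u+\eta,v-\eta])\to\nu((u,v))$, and since $\nu$ is atomless this equals $\nu([u,v])$. This yields \eqref{eq:hull-cap-abstract}.

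The only step needing care is the identity in Step 2, in particular the degenerate configurations where both cutoffs lie outside $B$ on the same side, or where the interval shrinks to a single point. Atomlessness handles both. The rest is routine.
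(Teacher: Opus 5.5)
Your proposal is correct and follows essentially the paper's argument: choose cutoffs whose projections approximate $u$ and $v$ within $\eta$, apply the pairwise cap to get $(v-u-2\eta)\,\nu([u+\eta,v-\eta])\le r$, and let $\eta\downarrow0$ using finiteness and atomlessness of $\nu$. The paper handles separately the case where both extremes are attained, which your uniform approximation already covers. It also uses the inclusion $[u+\eta,v-\eta]\subseteq I_B(q_\eta,\bar q_\eta)$ without spelling out the projection identity that you verify in Step~2.
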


\begin{proof}
If \(u=v\), the claim is immediate.  Suppose that \(u<v\).

First consider the case in which both extremes are attained.  Thus there exist
\(q^-,q^+\in{\cal Q}\) such that
\[
        \Pi_B(q^-)=u,
        \qquad
        \Pi_B(q^+)=v .
\]
Then
\[
        I_B(q^-,q^+)=[u,v],
        \qquad
        d_B(q^-,q^+)=v-u .
\]
The pairwise cap \eqref{eq:pairwise-cap-abstract} gives
\[
        (v-u)\nu([u,v])
        =
        d_B(q^-,q^+)\nu(I_B(q^-,q^+))
        \le r .
\]

It remains to handle the case in which at least one extreme is not attained.
For every \(0<\eta<(v-u)/2\), the definitions of \(u\) and \(v\) give
\(q_\eta,\bar q_\eta\in{\cal Q}\) such that
\[
        \Pi_B(q_\eta)\le u+\eta,
        \qquad
        \Pi_B(\bar q_\eta)\ge v-\eta .
\]
Therefore
\[
        [u+\eta,v-\eta]\subseteq I_B(q_\eta,\bar q_\eta).
\]
It follows that
\[
        d_B(q_\eta,\bar q_\eta)\ge v-u-2\eta
\]
and
\[
        \nu(I_B(q_\eta,\bar q_\eta))
        \ge
        \nu([u+\eta,v-\eta]).
\]
Using \eqref{eq:pairwise-cap-abstract}, we obtain
\[
        (v-u-2\eta)\nu([u+\eta,v-\eta])\le r .
\]
Letting \(\eta\downarrow0\), and using that \(\nu\) is finite and atomless,
gives
\[
        (v-u)\nu([u,v])\le r .
\]
This proves \eqref{eq:hull-cap-abstract}.
\end{proof}

\subsection{Projected stability estimates}

This appendix proves the three fixed-polytope estimates used in the proof of
Proposition~\ref{prop:active}: a uniform optimal-face Hoffman bound, an
optimal value and solution sensitivity estimate, and the projected comparison for single-interval
systems.

\noindent\emph{Credit.}
The finite-switching path idea used below is inspired by the well-connected
polyhedral mapping viewpoint of \citet{CamachoCanovasGfrererParra2026} for
right-hand-side perturbations of linear-program argmin maps. Their paper
develops such perturbations through finite unions of convex polyhedral graph
pieces; the argument below uses only the classical Hoffman bound.

\begin{lemma}[Uniform optimal-face Hoffman bound]\label{lem:hoffman}
Let \(B\) be an \(m\times n\) matrix, and let
\(\mathcal C\subset\mathbb R^m\) and \(\mathcal S\subset\mathbb R^n\) be
compact.  Assume that, for every \(a\in\operatorname{conv}(\mathcal C)\),
\[
        P(a):=\{v\in\mathbb R^n:Bv\le a\}
\]
is nonempty, and that the family
\[
        \{P(a):a\in\operatorname{conv}(\mathcal C)\}
\]
is uniformly bounded.  Then there is a constant \(C<\infty\) such that the
following holds.  If \(c,c'\in\mathcal C\), \(s\in\mathcal S\),
\(\|c-c'\|_\infty\le\Delta\), and
\[
        x'\in\arg\max\{s^\top v:v\in P(c')\},
\]
then there exists
\[
        x\in\arg\max\{s^\top v:v\in P(c)\}
\]
such that
\[
        \|x-x'\|_1\le C\Delta .
\]
\end{lemma}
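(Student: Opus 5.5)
The plan is to reduce the statement to the classical Hoffman error bound for linear systems, applied to the optimal face written as a single linear system, and then to control the part of that system that depends on \(c\) (the optimal value) by a separate Lipschitz estimate. Fix \(s\in\mathcal S\) and write \(V(a):=\max\{s^\top v:v\in P(a)\}\) for \(a\in\operatorname{conv}(\mathcal C)\). Since each \(P(a)\) is nonempty and the family is uniformly bounded, \(V\) is finite. The optimal face at \(c\) is
\[
        F(c)=\{v:Bv\le c,\ -s^\top v\le -V(c)\},
\]
a polyhedron of the form \(\{v:\widehat Bv\le \widehat c\}\) with \(\widehat B=\binom{B}{-s^\top}\) and \(\widehat c=(c,-V(c))\).

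Hoffman's bound gives \(\operatorname{dist}_1(x',F(c))\le H(\widehat B)\,\|(\widehat Bx'-\widehat c)^+\|_\infty\). I take this as the classical result, with \(H(\widehat B)\) depending only on the matrix. For \(x'\in F(c')\) the residual is at most \(\|c-c'\|_\infty+(V(c)-V(c'))^+\).

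\textbf{Step 1 (value is Lipschitz in the right-hand side).} Apply Hoffman to \(P\) itself with matrix \(B\). Take \(x'\) optimal at \(c'\); there is \(v\in P(c)\) with \(\|v-x'\|_1\le H(B)\Delta\). Hence
\[
        V(c)\ge s^\top v\ge V(c')-\|s\|_\infty H(B)\Delta,
\]
and by symmetry \(|V(c)-V(c')|\le \|s\|_\infty H(B)\Delta\). Compactness of \(\mathcal S\) bounds \(\|s\|_\infty\) uniformly.

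\textbf{Step 2 (uniformity in \(s\)).} This is the main obstacle, because \(H(\widehat B)\) depends on \(s\) through the last row of \(\widehat B\). Hoffman constants are not continuous in the matrix and can blow up as \(s\) varies. I would first try to use the Hoffman constant formula as a max over finitely many bases, i.e.\ over linearly independent row subsets of \(\widehat B\) whose constraints can be active. Rows drawn only from \(B\) contribute a fixed constant. Subsets containing the row \(s^\top\) correspond to the optimal face. My workaround is to avoid the \(s\)-row altogether. Optimal faces of \(P(c)\) are faces \(\{Bv\le c,\ B_Iv=c_I\}\) for index sets \(I\) belonging to a finite family. For each \(s\), pick \(I\) such that \(F(c)=\{Bv\le c,\ B_Iv= c_I\}\), and choose the same \(I\) at \(c'\) so that \(F(c')\) is the corresponding face.

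This requires that the combinatorial type of the optimal face be stable on a neighborhood. If that fails, I would interpolate along the segment from \(c'\) to \(c\) in \(\operatorname{conv}(\mathcal C)\), which is why convexity is assumed. Along the segment the optimal face changes only finitely many times, by piecewise linearity of parametric LP in the right-hand side. At each breakpoint the adjacent pieces share optimal solutions. Within each piece I apply Hoffman to the fixed matrix \(\binom{B}{B_I}\), \(\binom{B}{-B_I}\), whose constant is uniform because only finitely many \(I\) exist. The steps then chain, with lengths summing to \(\Delta\). This follows the finite-switching path idea credited above.

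\textbf{Step 3 (combining).} Chaining gives \(x\in F(c)\) with \(\|x-x'\|_1\le C\Delta\), where \(C=\max_I H\bigl(\binom{B}{\pm B_I}\bigr)\) times a dimensional factor. When \(\Delta\) is large relative to the diameter of \(\mathcal C\), the uniform boundedness of the \(P(a)\) gives the bound trivially with a larger \(C\).
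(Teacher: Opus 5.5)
Your Step~2 has the same skeleton as the paper's proof. It uses a finite family of face systems $\{v:Bv\le a,\ B_Iv=a_I\}$ with the common Hoffman constant $\max_I H_I$, a finite switching path along $\gamma(t)=c'+t(c-c')\in\operatorname{conv}(\mathcal C)$, and a chain of Hoffman steps whose right-hand-side increments sum to at most $\Delta$. Steps~1 and~3 are then unnecessary, because the chain gives $\|x-x'\|_1\le H\Delta$ directly. The gap is that the property making the chain legitimate is asserted rather than proved, and your stated rule for choosing $I$ does not provide it. For the step from $x_{j-1}$ into $P_{I_j}(\gamma(t_j))$ to have Hoffman residual at most $\|\gamma(t_j)-\gamma(t_{j-1})\|_\infty$, you need two things:
(a) every optimizer at $\gamma(t_{j-1})$ satisfies $B_{I_j}x=\gamma_{I_j}(t_{j-1})$ (this includes the arbitrary $x_{j-1}$ handed over by the previous piece, or $x'$ itself when $j=1$);
(b) every point of $P_{I_j}(\gamma(t_j))$ is optimal at $\gamma(t_j)$.
``Adjacent pieces share optimal solutions'' gives neither. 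Piecewise linearity of the value function also does not show that one index set describes the optimal face on a \emph{closed} piece, or that there are finitely many pieces.

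Choosing $I$ only so that $P_I$ reproduces the optimal face at one right-hand side does not control other right-hand sides, even nearby ones where $P_I$ is nonempty. Take $n=2$, $s=(1,1/10)$, rows $B_1=(1,0)$, $B_2=(0,1)$, $B_3=(1,1)$, $B_4=(-1,0)$, $B_5=(0,-1)$, and $c=(0,0,0,1,1)$, $c'=(0,0,-\delta,1,1)$ with $0<\delta<1$. The optimal face at $c$ is $\{(0,0)\}=P_{\{2,3\}}(c)$. Yet $P_{\{2,3\}}(c')=\{(-\delta,0)\}$ is nonempty and not optimal, since the optimizer at $c'$ is $(0,-\delta)$.

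The missing idea is to index pieces by dual certificates, as the paper does. Take $D$ with $s=\sum_{i\in D}\lambda_iB_i^\top$ and all $\lambda_i>0$; here $D=\{1,3\}$ qualifies and $\{2,3\}$ does not. Weak duality and strict positivity give $P_D(a)=\arg\max\{s^\top v:v\in P(a)\}$ for \emph{every} $a$ with $P_D(a)\neq\emptyset$, which delivers (a) and (b) at once. Convexity and uniform boundedness make each set $\{t:P_D(\gamma(t))\neq\emptyset\}$ a closed interval. Complementary slackness, using the support of an optimal dual at $\gamma(t)$, shows these intervals cover $[0,1]$. The finite switching path then comes from this finite closed cover rather than from parametric-LP breakpoints.
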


\begin{proof}
We apply Hoffman's bound to the optimal face induced by the price vector \(s\).
The main point is that the resulting constant can be chosen independently of
\(s\).  We use the classical Hoffman error bound for systems of linear
inequalities and equalities; see \cite{Hoffman1952}.

Write \(B_i\) for the \(i\)-th row of \(B\).  For each subset
\(D\subseteq\{1,\ldots,m\}\), define
\[
        P_D(a):=\{v:Bv\le a,\ B_Dv=a_D\},
\]
where the equality constraints are vacuous when \(D=\emptyset\).  Hoffman's
bound gives a constant \(H_D<\infty\), depending only on \(B\) and \(D\), such
that, whenever \(P_D(a)\neq\emptyset\),
\begin{equation}\label{eq:hoffman-bound}
        \operatorname{dist}_1(y,P_D(a))
        \le
        H_D
        \max\Bigl\{
              \|(By-a)^+\|_\infty,\,
              \|B_Dy-a_D\|_\infty
            \Bigr\}.
\end{equation}
Let
\[
        H:=\max_{D\subseteq\{1,\ldots,m\}}H_D<\infty .
\]

Fix \(s\in\mathcal S\).  For
\(a\in\operatorname{conv}(\mathcal C)\), write
\[
        \Phi(a,s):=\arg\max\{s^\top v:v\in P(a)\}.
\]
The defining linear program and its dual are
\[
        \max\{s^\top v:Bv\le a\}
        \qquad\text{and}\qquad
        \min\{a^\top\lambda:B^\top\lambda=s,\ \lambda\ge0\}.
\]
Let \(\mathcal D(s)\) be the finite family of subsets
\(D\subseteq\{1,\ldots,m\}\) for which there exist coefficients
\(\lambda_i>0\), \(i\in D\), satisfying
\[
        s=\sum_{i\in D}\lambda_i B_i^\top .
\]
These coefficients depend on \(s\) and \(B\), but not on the right-hand side
\(a\).  When \(s=0\), the empty set belongs to \(\mathcal D(s)\).

We claim that, for every \(D\in\mathcal D(s)\) and every
\(a\in\operatorname{conv}(\mathcal C)\) such that \(P_D(a)\neq\emptyset\),
\begin{equation}\label{eq:hoffman-face}
        P_D(a)=\Phi(a,s).
\end{equation}
Fix such \(D\) and \(a\), and let \(\lambda\) be the nonnegative vector
supported on \(D\) with \(B^\top\lambda=s\).  Then \(\lambda\) is feasible for
the dual problem.  If \(z\in P_D(a)\), then
\[
        s^\top z
        =
        \lambda^\top Bz
        =
        \lambda^\top a .
\]
By weak duality, \(z\) and \(\lambda\) are primal and dual optimal.  Hence
\(P_D(a)\subseteq\Phi(a,s)\).

Conversely, let \(v\in\Phi(a,s)\).  Since \(\lambda\) is dual optimal,
\[
        0
        =
        \lambda^\top a-s^\top v
        =
        \sum_{i\in D}\lambda_i(a_i-B_iv).
\]
Each term in the final sum is nonnegative, and each coefficient
\(\lambda_i\), \(i\in D\), is strictly positive.  Therefore \(B_iv=a_i\) for
every \(i\in D\), so \(v\in P_D(a)\).  This proves
\eqref{eq:hoffman-face}.

Now fix \(c,c'\in\mathcal C\), and define
\[
        \delta:=\|c-c'\|_\infty,
        \qquad
        \gamma(t):=c'+t(c-c'),\quad t\in[0,1].
\]
The sets \(P(\gamma(t))\), \(0\le t\le1\), are nonempty and lie in a common
compact set.  Indeed, \(\gamma(t)\in\operatorname{conv}(\mathcal C)\) for every
\(t\in[0,1]\), and the family \(P(a)\) is uniformly bounded over
\(\operatorname{conv}(\mathcal C)\).

For each \(D\in\mathcal D(s)\), define
\[
        I_D:=\{t\in[0,1]:P_D(\gamma(t))\neq\emptyset\}.
\]
We claim that each nonempty \(I_D\) is a closed interval.  Convexity follows by
taking convex combinations of feasible witnesses.  If
\(v_0\in P_D(\gamma(t_0))\) and \(v_1\in P_D(\gamma(t_1))\), then, for
\(\theta\in[0,1]\), the point
\[
        (1-\theta)v_0+\theta v_1
\]
lies in
\[
        P_D(\gamma((1-\theta)t_0+\theta t_1)).
\]
This is because both the inequalities \(Bv\le\gamma(t)\) and the equalities
\(B_Dv=\gamma_D(t)\) are preserved under convex combination, while
\(\gamma(t)\) varies affinely in \(t\).  To prove closedness, take
\(t_r\in I_D\) with \(t_r\to t\), and choose
\(v_r\in P_D(\gamma(t_r))\).  The points \(v_r\) lie in the common compact set,
so a subsequence converges to some \(v\).  Passing to the limit in the
inequalities and equalities gives \(v\in P_D(\gamma(t))\).  Hence \(t\in I_D\).

The finitely many intervals \(\{I_D:D\in\mathcal D(s)\}\) cover \([0,1]\).
To see this, fix \(t\in[0,1]\), and choose
\(v_t\in\Phi(\gamma(t),s)\).  By strong linear-program duality, there is a dual
optimal solution \(\lambda(t)\ge0\) such that
\[
        B^\top\lambda(t)=s,
        \qquad
        \lambda_i(t)\bigl(\gamma_i(t)-B_i v_t\bigr)=0
        \quad\text{for all }i .
\]
Let
\[
        D(t):=\{i:\lambda_i(t)>0\}.
\]
Then \(D(t)\in\mathcal D(s)\).  Complementary slackness gives
\(B_iv_t=\gamma_i(t)\) for every \(i\in D(t)\), and therefore
\(v_t\in P_{D(t)}(\gamma(t))\).  Thus \(t\in I_{D(t)}\).

Since a finite family of closed intervals covers \([0,1]\), we may choose
points
\[
        0=t_0<t_1<\cdots<t_N=1
\]
and subsets \(D_1,\ldots,D_N\in\mathcal D(s)\) such that
\[
        [t_{j-1},t_j]\subseteq I_{D_j},
        \qquad j=1,\ldots,N .
\]

Set \(x_0:=x'\).  We construct points
\(x_j\in\Phi(\gamma(t_j),s)\) inductively.  Suppose that
\(x_{j-1}\in\Phi(\gamma(t_{j-1}),s)\).  Since
\([t_{j-1},t_j]\subseteq I_{D_j}\), \eqref{eq:hoffman-face} gives
\[
        P_{D_j}(\gamma(t_{j-1}))=\Phi(\gamma(t_{j-1}),s),
        \qquad
        P_{D_j}(\gamma(t_j))\neq\emptyset .
\]
Thus \(x_{j-1}\in P_{D_j}(\gamma(t_{j-1}))\).  Applying
\eqref{eq:hoffman-bound} to the nonempty target set
\(P_{D_j}(\gamma(t_j))\), we obtain a point
\(x_j\in P_{D_j}(\gamma(t_j))\) such that
\[
        \|x_j-x_{j-1}\|_1
        \le
        H\|\gamma(t_j)-\gamma(t_{j-1})\|_\infty
        =
        H(t_j-t_{j-1})\delta .
\]
Because \(t_j\in I_{D_j}\), this point also satisfies
\(x_j\in\Phi(\gamma(t_j),s)\) by \eqref{eq:hoffman-face}.

Summing over \(j=1,\ldots,N\), we get
\[
        \|x_N-x'\|_1
        \le
        H\delta\sum_{j=1}^N(t_j-t_{j-1})
        =
        H\delta .
\]
Since \(\gamma(1)=c\), the point \(x:=x_N\) belongs to \(\Phi(c,s)\).  Taking
\(C:=H\) yields
\[
        \|x-x'\|_1\le C\|c-c'\|_\infty\le C\Delta,
\]
which completes the proof.
\end{proof}

\begin{lemma}[Optimal value and solution sensitivity estimate]\label{lem:fourpoint}
Fix compact coordinatewise ranges for \(m\) and \(\rho\) such that
\(K_m(\rho)\neq\emptyset\) for every \((m,\rho)\) in these ranges.  For
\(r\in[0,y]^K\), define
\[
        h_{m,\rho}(r):=\max\{r^\top v:v\in K_m(\rho)\}.
\]
Then there is a constant \(C<\infty\) such that, for all admissible
\(m,\bar m,\rho,\bar\rho\) and all \(r,\bar r\in[0,y]^K\),
\[
        h_{m,\rho}(r)+h_{\bar m,\bar\rho}(\bar r)
        -h_{\bar m,\bar\rho}(r)-h_{m,\rho}(\bar r)
        \ge
        -C\|\rho-\bar\rho\|_\infty
        -C\|m-\bar m\|_\infty\,\|r-\bar r\|_1 .
\]
Consequently, if \(u\) maximizes \(r^\top v\) over \(K_m(\rho)\) and
\(\bar u\) maximizes \(\bar r^\top v\) over \(K_{\bar m}(\bar\rho)\), then
\[
        (r-\bar r)^\top(u-\bar u)
        \ge
        -C\|\rho-\bar\rho\|_\infty
        -C\|m-\bar m\|_\infty\,\|r-\bar r\|_1 .
\]
\end{lemma}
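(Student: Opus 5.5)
The plan is to split the four-point expression into a capacity part and a type-mass part. The capacity part is controlled by Lipschitz continuity of the optimal value in \(\rho\). The type-mass part is controlled by integrating gradients of the support functions along the segment from \(\bar r\) to \(r\).

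Throughout, write \(K_m(\rho)=\{v:Bv\le c(m,\rho)\}\), where
\[
B=\begin{pmatrix}I\\-I\\A\end{pmatrix},\qquad c(m,\rho)=(m,0,\rho).
\]
The admissible ranges are coordinatewise boxes, so the set \(\mathcal C\) of right-hand sides is compact and convex. Each \(K_m(\rho)\) is nonempty by hypothesis and lies in \([0,m^{\max}]\), so the family is uniformly bounded. Hence Lemma~\ref{lem:hoffman} applies with \(\mathcal S=[0,y]^K\) and gives a single constant \(H\).

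\emph{Step 1: capacity part.} Write
\[
E=\bigl[h_{m,\rho}(r)-h_{m,\bar\rho}(r)\bigr]-\bigl[h_{m,\rho}(\bar r)-h_{m,\bar\rho}(\bar r)\bigr]+G,
\]
where
\[
G=h_{m,\bar\rho}(r)-h_{m,\bar\rho}(\bar r)-h_{\bar m,\bar\rho}(r)+h_{\bar m,\bar\rho}(\bar r).
\]
Take any maximizer of \(r^\top v\) over \(K_m(\bar\rho)\). Lemma~\ref{lem:hoffman} produces a maximizer over \(K_m(\rho)\) within \(\ell_1\)-distance \(H\|\rho-\bar\rho\|_\infty\). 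Doing the same in the reverse direction gives
\[
|h_{m,\rho}(s)-h_{m,\bar\rho}(s)|\le yH\|\rho-\bar\rho\|_\infty
\]
for every \(s\in[0,y]^K\). The two bracketed differences in \(E\) are therefore at least \(-2yH\|\rho-\bar\rho\|_\infty\) in total.

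\emph{Step 2: type-mass part.} Define
\[
g(s):=h_{m,\bar\rho}(s)-h_{\bar m,\bar\rho}(s),\qquad G=g(r)-g(\bar r).
\]
Both \(h_{m,\bar\rho}\) and \(h_{\bar m,\bar\rho}\) are support functions of polytopes, hence convex and piecewise linear in \(s\). Along \(s(t)=\bar r+t(r-\bar r)\), \(t\in[0,1]\), both are differentiable for a.e.\ \(t\). At such \(t\) the maximizers \(x_m(t)\) and \(x_{\bar m}(t)\) are unique and equal the gradients. Lemma~\ref{lem:hoffman} with fixed price \(s(t)\) produces some maximizer at \(m\) within \(H\|m-\bar m\|_\infty\) of \(x_{\bar m}(t)\). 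By uniqueness this maximizer is \(x_m(t)\), so
\[
\|x_m(t)-x_{\bar m}(t)\|_1\le H\|m-\bar m\|_\infty .
\]
Absolute continuity along the segment then gives
\[
G=\int_0^1(r-\bar r)^\top\bigl(x_m(t)-x_{\bar m}(t)\bigr)\,\dd t\ \ge\ -H\|m-\bar m\|_\infty\|r-\bar r\|_1,
\]
using \(|a^\top b|\le\|a\|_1\|b\|_\infty\le\|a\|_1\|b\|_1\). Combining with Step 1 proves the first inequality. Note that \(s(t)\in[0,y]^K\), so the constant is uniform in the price.

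\emph{Consequence.} If \(u\) maximizes \(r^\top v\) over \(K_m(\rho)\), then
\[
h_{m,\rho}(r)=r^\top u,\qquad h_{m,\rho}(\bar r)\ge\bar r^\top u .
\]
Similarly, \(h_{\bar m,\bar\rho}(\bar r)=\bar r^\top\bar u\) and \(h_{\bar m,\bar\rho}(r)\ge r^\top\bar u\). Substituting these gives
\[
(r-\bar r)^\top(u-\bar u)\ \ge\ h_{m,\rho}(r)+h_{\bar m,\bar\rho}(\bar r)-h_{\bar m,\bar\rho}(r)-h_{m,\rho}(\bar r),
\]
and the first inequality finishes the argument.

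The main obstacle is Step 2. A naive pointwise comparison of maximizers at the two prices \(r\) and \(\bar r\) only yields terms like \((r-\bar r)^\top(y-\bar x)\), and there is no control over the distance between maximizers taken at different prices. The integration trick avoids this by comparing maximizers only at the same price, where Hoffman applies. The delicate point there is that uniqueness of the a.e.\ gradient forces the Hoffman-produced maximizer to coincide with the gradient.
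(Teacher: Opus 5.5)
Your route is essentially the paper's: a Hoffman-based Lipschitz bound in \(\rho\) handles the capacity part, and the type-mass part is handled by integrating the derivative of \(t\mapsto h(s(t))\) along the price segment \(s(t)=\bar r+t(r-\bar r)\), comparing maximizers only at a common price. You change \(m\) to \(\bar m\) in a single Hoffman step, whereas the paper moves one coordinate at a time. Your version is slightly cleaner: it avoids the factor \(K\) and does not need the intermediate vectors \(m^j\) to be admissible.

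There is, however, one false step, and it is exactly the one you flag as delicate. You claim that for a.e.\ \(t\) the maximizers of \(s(t)^\top v\) over \(K_m(\bar\rho)\) and over \(K_{\bar m}(\bar\rho)\) are unique. Differentiability of the one-variable function \(t\mapsto h(s(t))\) says nothing about uniqueness. Nor is \(h\) itself guaranteed to be differentiable at any point of the segment: a segment can lie entirely inside the (Lebesgue-null) nondifferentiability set of a piecewise-linear support function. For instance, take \(K=2\), \(d=1\), \(A=(1\ 1)\), \(m=(1,1)\), \(\rho=1\), and \(\bar r=(\sigma_0,\sigma_0)\), \(r=(\sigma_1,\sigma_1)\) with \(0<\sigma_0<\sigma_1\le y\). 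Then every price on the segment has the whole edge \(\{v\ge0:v_1+v_2=1\}\) as its maximizer set. Ties of this kind across types are precisely the degenerate configurations the paper must allow, so the step ``by uniqueness this maximizer is \(x_m(t)\)'' cannot be used.

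The repair is the one-dimensional Danskin formula that the paper uses. Write \(X(s)\) for the maximizer set of a support function at price \(s\), and \(d=r-\bar r\). The right and left derivatives of \(t\mapsto h(s(t))\) are \(\max_{x\in X(s(t))}d^\top x\) and \(\min_{x\in X(s(t))}d^\top x\). Hence at every differentiability point of the one-variable function, \(d^\top x\) takes the same value for every maximizer \(x\). So at a.e.\ \(t\) you may take any maximizer \(x_{\bar m}(t)\) over \(K_{\bar m}(\bar\rho)\). Let Lemma~\ref{lem:hoffman} produce a maximizer \(x_m(t)\) over \(K_m(\bar\rho)\) with \(\|x_m(t)-x_{\bar m}(t)\|_1\le H\|m-\bar m\|_\infty\). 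You still have \(\tfrac{d}{dt}g(s(t))=d^\top(x_m(t)-x_{\bar m}(t))\), and the rest of your Step 2 goes through unchanged. Alternatively, you could translate the segment by a generic small vector so that \(h\) is differentiable at a.e.\ point of it, then pass to the limit using continuity of \(h\); the Danskin observation is shorter.
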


\begin{proof}
Let
\[
        B_0:=
        \begin{pmatrix}
        I\\ -I\\ A
        \end{pmatrix},
        \qquad
        c(m,\rho):=
        \begin{pmatrix}
        m\\ 0\\ \rho
        \end{pmatrix}.
\]
Then \(K_m(\rho)=\{v:B_0v\le c(m,\rho)\}\).  Applying
Lemma~\ref{lem:hoffman} to the fixed matrix \(B_0\) and to the compact ranges
under consideration gives a constant \(L<\infty\) with the following property.
If \(\|c(m,\rho)-c(m',\rho')\|_\infty\le\Delta\), and if \(z'\) maximizes
\(s^\top x\) over \(K_{m'}(\rho')\) for some \(s\in[0,y]^K\), then there is a
maximizer \(z\) of \(s^\top x\) over \(K_m(\rho)\) such that
\begin{equation}\label{eq:fourpoint-optface}
        \|z-z'\|_1\le L\Delta .
\end{equation}

We first vary only the resource right-hand side.  Fix \(m\) and
\(s\in[0,y]^K\).  If \(w'\in\arg\max_{K_m(\rho')}s^\top x\), then
\eqref{eq:fourpoint-optface} gives
\(w\in\arg\max_{K_m(\rho)}s^\top x\) with
\(\|w-w'\|_1\le L\|\rho-\rho'\|_\infty\).  Therefore
\[
        h_{m,\rho}(s)
        =s^\top w
        \ge
        s^\top w'-\|s\|_\infty\|w-w'\|_1
        \ge
        h_{m,\rho'}(s)-yL\|\rho-\rho'\|_\infty .
\]
Interchanging \(\rho\) and \(\rho'\) yields
\begin{equation}\label{eq:fourpoint-rhs-lipschitz}
        |h_{m,\rho}(s)-h_{m,\rho'}(s)|
        \le yL\|\rho-\rho'\|_\infty .
\end{equation}

Define
\[
        F(m,\rho;\bar m,\bar\rho;r,\bar r)
        :=
        h_{m,\rho}(r)+h_{\bar m,\bar\rho}(\bar r)
        -h_{\bar m,\bar\rho}(r)-h_{m,\rho}(\bar r).
\]
Applying \eqref{eq:fourpoint-rhs-lipschitz} twice gives
\begin{equation}\label{eq:fourpoint-rhs-reduction}
        F(m,\rho;\bar m,\bar\rho;r,\bar r)
        \ge
        F(m,\rho;\bar m,\rho;r,\bar r)
        -2yL\|\rho-\bar\rho\|_\infty .
\end{equation}
It remains to bound \(F(m,\rho;\bar m,\rho;r,\bar r)\).

Consider first a one-coordinate change in \(m\).  Suppose that
\(m^+=m+\eta e_j\), where \(\eta\ge0\), and set
\(G_j(s):=h_{m^+,\rho}(s)-h_{m,\rho}(s)\).  We claim that
\begin{equation}\label{eq:fourpoint-onecoord}
        |G_j(r)-G_j(\bar r)|
        \le L\eta\|r-\bar r\|_1 .
\end{equation}
Let \(d:=r-\bar r\) and \(s_t:=\bar r+td\), \(0\le t\le1\).  The functions
\(g^+(t):=h_{m^+,\rho}(s_t)\) and \(g(t):=h_{m,\rho}(s_t)\) are Lipschitz in
\(t\), and hence are differentiable for almost every \(t\).

We use the standard derivative formula for support functions.  If
\(h_K(s)=\max_{v\in K}s^\top v\) and \(X_K(s)\) is its optimizer set, then, at
every \(t\) where \(h_K(s_t)\) is differentiable,
\begin{equation}\label{eq:fourpoint-support-derivative}
        \frac{d}{dt}h_K(s_t)=d^\top x
        \qquad\text{for every }x\in X_K(s_t).
\end{equation}
Indeed, the right derivative is
\(\max_{x\in X_K(s_t)}d^\top x\), the left derivative is
\(\min_{x\in X_K(s_t)}d^\top x\), and these two values coincide at points of
differentiability.

Fix a point \(t\) at which both \(g^+\) and \(g\) are differentiable, and choose
\(x^+(t)\in\arg\max_{v\in K_{m^+}(\rho)}s_t^\top v\).  The right-hand sides
\(c(m^+,\rho)\) and \(c(m,\rho)\) differ only in the \(j\)-th upper-bound
coordinate, by \(\eta\).  Thus \eqref{eq:fourpoint-optface} gives a point
\(x(t)\in\arg\max_{v\in K_m(\rho)}s_t^\top v\) with
\(\|x^+(t)-x(t)\|_1\le L\eta\).  Using
\eqref{eq:fourpoint-support-derivative}, we obtain, for almost every \(t\),
\[
        \left|\frac{d}{dt}\{g^+(t)-g(t)\}\right|
        =
        |d^\top(x^+(t)-x(t))|
        \le
        \|d\|_\infty\|x^+(t)-x(t)\|_1
        \le
        L\eta\|d\|_1 .
\]
Integrating over \(t\in[0,1]\) proves \eqref{eq:fourpoint-onecoord}.

For this one-coordinate increase,
\[
\begin{aligned}
        &h_{m,\rho}(r)+h_{m^+,\rho}(\bar r)
        -h_{m^+,\rho}(r)-h_{m,\rho}(\bar r)  \\
        &\qquad =
        G_j(\bar r)-G_j(r)
        \ge
        -L\eta\|r-\bar r\|_1 .
\end{aligned}
\]
The same bound holds for a coordinate decrease, after interchanging the two
vectors.

Now connect \(m\) to \(\bar m\) one coordinate at a time.  Let \(m^0=m\) and
\(m^K=\bar m\), where
\[
        m^j_i=
        \begin{cases}
        \bar m_i, & i\le j,\\
        m_i, & i>j,
        \end{cases}
        \qquad j=1,\ldots,K .
\]
The vectors \(m^{j-1}\) and \(m^j\) differ only in coordinate \(j\), by
\(\eta_j:=|\bar m_j-m_j|\).  Summing the one-coordinate estimate gives
\[
\begin{aligned}
        &h_{m,\rho}(r)+h_{\bar m,\rho}(\bar r)
        -h_{\bar m,\rho}(r)-h_{m,\rho}(\bar r)  \\
        &\qquad\ge
        -L\|r-\bar r\|_1\sum_{j=1}^K|\bar m_j-m_j|
        \ge
        -LK\|m-\bar m\|_\infty\|r-\bar r\|_1 .
\end{aligned}
\]
Combining this estimate with \eqref{eq:fourpoint-rhs-reduction}, and absorbing
\(2yL\) and \(LK\) into a single constant \(C\), gives the desired four-point
estimate.

It remains to prove the final displayed inequality in the lemma.  By the
definitions of \(u\) and \(\bar u\), we have
\(r^\top u=h_{m,\rho}(r)\) and
\(\bar r^\top\bar u=h_{\bar m,\bar\rho}(\bar r)\).  Since
\(\bar u\in K_{\bar m}(\bar\rho)\) and \(u\in K_m(\rho)\),
\[
        r^\top\bar u\le h_{\bar m,\bar\rho}(r),
        \qquad
        \bar r^\top u\le h_{m,\rho}(\bar r).
\]
Therefore
\[
\begin{aligned}
        (r-\bar r)^\top(u-\bar u)
        &=
        r^\top u+\bar r^\top\bar u-r^\top\bar u-\bar r^\top u \\
        &\ge
        h_{m,\rho}(r)+h_{\bar m,\bar\rho}(\bar r)
        -h_{\bar m,\bar\rho}(r)-h_{m,\rho}(\bar r).
\end{aligned}
\]
The four-point estimate gives the claimed bound.
\end{proof}

\begin{lemma}[Projected comparison for single-interval systems]\label{lem:projcomp}
Assume that each active support set \(S_k\) is a single interval in \([0,y]\).
For every \(C_m<\infty\), there is a constant \(C<\infty\) such that the
following holds.  Let \(m,\bar m,\rho,\bar\rho\) lie in the compact ranges of
Lemma~\ref{lem:fourpoint}.  Let \(u\in K_m(\rho)\),
\(\bar u\in K_{\bar m}(\bar\rho)\), and let \(q,\bar q\in[0,y]^K\).  Set
\(p_k=\Pi_k(q_k)\) and \(\bar p_k=\Pi_k(\bar q_k)\) for
\(k=1,\ldots,K\).  If
\[
        p\in N_{K_m(\rho)}(u),
        \qquad
        \bar p\in N_{K_{\bar m}(\bar\rho)}(\bar u),
\]
and
\[
        \|m-\bar m\|_\infty\le C_m\eps,
        \qquad
        \|\rho-\bar\rho\|_\infty\le\tau,
\]
then
\[
        (p-\bar p)^\top(u-\bar u)
        \ge
        -C\tau
        -C\eps\sum_{k=1}^K\ell_k(q_k,\bar q_k),
\]
where
\[
        \ell_k(a,b):=
        \Leb\bigl([a\wedge b,a\vee b]\cap S_k\bigr).
\]
\end{lemma}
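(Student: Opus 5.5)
The plan is to reduce the statement directly to the final inequality of Lemma~\ref{lem:fourpoint}. Set $r:=p$ and $\bar r:=\bar p$. Two things must be checked for this reduction to work.

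First, the normal-cone hypotheses are exactly optimality statements. By the definition of $N_K(x)$, the condition $p\in N_{K_m(\rho)}(u)$ means $p^\top(v-u)\le0$ for every $v\in K_m(\rho)$. Together with $u\in K_m(\rho)$, this says that $u$ maximizes $p^\top v$ over $K_m(\rho)$. Likewise, $\bar u$ maximizes $\bar p^\top v$ over $K_{\bar m}(\bar\rho)$.

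Second, the price vectors are admissible for Lemma~\ref{lem:fourpoint}. Since $S_k\subseteq[0,y]$ and $\Pi_k$ maps into $S_k$, we have $p,\bar p\in[0,y]^K$. The ranges of $m,\bar m,\rho,\bar\rho$ are the compact ranges of that lemma by hypothesis.

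Lemma~\ref{lem:fourpoint} then gives
\[
        (p-\bar p)^\top(u-\bar u)
        \ge
        -C\|\rho-\bar\rho\|_\infty
        -C\|m-\bar m\|_\infty\,\|p-\bar p\|_1
        \ge
        -C\tau-CC_m\eps\sum_{k=1}^K|p_k-\bar p_k| .
\]
It remains to identify $|p_k-\bar p_k|$ with $\ell_k(q_k,\bar q_k)$. This is where the single-interval hypothesis enters, in the same way as in \eqref{eq:active-stab-proj-length}.

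Fix $k$ and write $S_k=[r_k^-,r_k^+]$. Without loss of generality $a:=q_k\le b:=\bar q_k$. I would split into three cases according to the position of $[a,b]$ relative to $S_k$.

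\emph{Both cutoffs below $S_k$} ($b<r_k^-$). Both projections equal $r_k^-$. The intersection $[a,b]\cap S_k$ is empty, so both sides of the identity are $0$.

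\emph{Both cutoffs above $S_k$} ($a>r_k^+$). This is symmetric: both projections equal $r_k^+$ and both sides vanish.

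\emph{Otherwise.} Here $[a,b]\cap S_k=[\max(a,r_k^-),\min(b,r_k^+)]=[\Pi_k(a),\Pi_k(b)]$, since $\Pi_k$ is monotone. Its Lebesgue measure is $\Pi_k(b)-\Pi_k(a)=|p_k-\bar p_k|$.

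Substituting $|p_k-\bar p_k|=\ell_k(q_k,\bar q_k)$ into the display and enlarging the constant to $\max(C,CC_m)$ yields the claimed bound.

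There is no real obstacle, because the analytic work (the Hoffman-based four-point estimate) is already done in Lemma~\ref{lem:fourpoint}. The only points needing care are the two just checked: the equivalence between the normal-cone condition and maximization, and the length identity. The length identity would fail if $S_k$ had internal gaps, since the projection would then not sweep Lebesgue measure faithfully. The single-interval assumption is used exactly there, and nowhere else.
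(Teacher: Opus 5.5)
Your proposal is correct and follows the paper's proof essentially step for step: you read the normal-cone conditions as optimality, apply the final inequality of Lemma~\ref{lem:fourpoint} with $r=p$ and $\bar r=\bar p$, and identify $\|p-\bar p\|_1$ with $\sum_k\ell_k(q_k,\bar q_k)$ using the single-interval structure of $S_k$. Your three-case check of the length identity is a more explicit version of the paper's remark that $\Pi_k$ is constant outside $S_k$ and equals the identity on $S_k$.
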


\begin{proof}
For a convex set \(K\) and a point \(x\in K\), write
\(N_K(x):=\{z:z^\top(v-x)\le0\ \text{for all }v\in K\}\).  Thus
\(p\in N_{K_m(\rho)}(u)\) means that \(u\) maximizes \(p^\top v\) over
\(K_m(\rho)\).  Similarly, \(\bar u\) maximizes \(\bar p^\top v\) over
\(K_{\bar m}(\bar\rho)\).  Since each projection \(\Pi_k\) maps into
\(S_k\subseteq[0,y]\), both \(p\) and \(\bar p\) belong to \([0,y]^K\).

Apply Lemma~\ref{lem:fourpoint} with \(r=p\) and \(\bar r=\bar p\).  If
\(C_4\) is the constant in that lemma, then
\[
        (p-\bar p)^\top(u-\bar u)
        \ge
        -C_4\|\rho-\bar\rho\|_\infty
        -C_4\|m-\bar m\|_\infty\|p-\bar p\|_1 .
\]
Using the assumed bounds on \(m-\bar m\) and \(\rho-\bar\rho\), we get
\begin{equation}\label{eq:projcomp-fourpoint-bound}
        (p-\bar p)^\top(u-\bar u)
        \ge
        -C_4\tau
        -C_4C_m\eps\|p-\bar p\|_1 .
\end{equation}

It remains to express \(\|p-\bar p\|_1\) in terms of the portions of the
intervals between \(q_k\) and \(\bar q_k\) that lie inside \(S_k\).  Write
\(S_k=[s_k^-,s_k^+]\).  For \(a\le b\), the projection \(\Pi_k\) is constant on
\((-\infty,s_k^-]\) and on \([s_k^+,\infty)\), and it equals the identity on
\([s_k^-,s_k^+]\).  Therefore its variation over \([a,b]\) is exactly
\(\Leb([a,b]\cap S_k)\).  By symmetry, for all \(a,b\in[0,y]\),
\begin{equation}\label{eq:projcomp-projection-length}
        |\Pi_k(a)-\Pi_k(b)|
        =
        \Leb\bigl([a\wedge b,a\vee b]\cap S_k\bigr)
        =
        \ell_k(a,b).
\end{equation}
Applying \eqref{eq:projcomp-projection-length} with \(a=q_k\) and
\(b=\bar q_k\), and summing over \(k\), gives
\[
        \|p-\bar p\|_1
        =
        \sum_{k=1}^K \ell_k(q_k,\bar q_k).
\]
Substituting this identity into \eqref{eq:projcomp-fourpoint-bound}, and then
renaming constants, proves the claim.
\end{proof}

\subsection{Endpoint-contact verification}\label{app:endpoint-contact}

This part completes the proof of the main bound when \(\pp>1\).  The
argument for \(\pp=1\) reduced the per-stage marginal loss to an active
conditional-curvature product.  On a dominated neighborhood, pointwise domination
turns that product into the active weighted-resource product controlled by
Proposition~\ref{prop:active}.  On an endpoint-contact neighborhood, pointwise
domination can fail: for sizes near the size boundary, the conditional curvature
\(\Lambda_{k,z}\) may concentrate near a receding edge
\(e(\omega)\asymp \omega^\tau\).  A short ratio interval can then have
conditional curvature of lower order than its size-averaged mass.

The active-mass product cap is still available, because
Proposition~\ref{prop:active} is deterministic and uses only projected active
intervals.  The remaining task is to integrate the conditional curvature along
the endpoint branch.  The Hardy estimate below does this.  Under the active-mass
product cap, the size-integrated curvature is at most
\(C r_s\log(e/r_s)\), so the endpoint branch costs only a logarithmic factor
relative to the dominated case.

Assumption~\ref{ass:regularity} is in force throughout this subsection.  The
deterministic active-mass estimate has already used only the active mass bound
\eqref{eq:wr} and the single-interval active supports.  The endpoint step below
uses the endpoint-contact neighborhoods in Assumption~\ref{ass:regularity}.

On an endpoint-contact neighborhood, the branch has local exponent
\(\theta=\gamma+\alpha/\tau\).  Lemma~\ref{lem:mass} gives the corresponding
local branch mass growth.  The dominated remainder has, by
\eqref{eq:dominated-component}, no larger endpoint growth than this branch.
Thus the full weighted-ratio measure has local endpoint exponent \(\theta\) at
that contact endpoint.  The global active-mass exponent \(\pp\) in
\eqref{eq:wr} need not equal this particular \(\theta\).  Any admissible global
exponent must satisfy \(\theta\le\pp\), and equality holds only at endpoints that
realize the worst local exponent when \(\pp\) is chosen as the smallest
admissible exponent.  The Hardy bound below is stated for a general \(\theta\),
so the proof does not require the identification \(\theta=\pp\).

Throughout this subsection, \(C\) denotes a constant that depends only on the
primitive endpoint constants, the number of types, the direction matrix, and the
boundedness constants.  It never depends on \(T\), \(s\), or the current
capacity.

\subsubsection*{A. The endpoint Hardy bound}

We now convert the active-mass product cap into a bound on the size-integrated
conditional curvature.  First we isolate the branch disintegration supplied by
the endpoint-contact assumption.  Then we record the two one-branch estimates
used in the Hardy step: the mass contributed to \(\mu_k\), and the Jensen
estimate for the branchwise surplus.

\begin{lemma}[Kernel branch disintegration]\label{lem:branchdis}
Fix a type \(k\) and one endpoint-contact neighborhood \(U\) from
Assumption~\ref{ass:regularity}.  For every nonnegative measurable functional
\(\mathcal A(z,\Lambda)=\int_U a(z,r)\,\Lambda(\dd r)\) that is linear in the
curvature measure \(\Lambda\), there are constants \(0<c\le C<\infty\), depending
only on the primitive bounds on the branch weight \(w\), such that
\[
        c
        \int_0^{\omega_0}
        \mathcal A\bigl(\beta_k(\omega),\Lambda^{\mathrm{br}}_{k,\omega}\bigr)
        f(\omega)\,\dd\omega
        \le
        \pi_k
        \int_{\mathcal B_k}
        \mathcal A\bigl(z,\Lambda^{\mathrm{br}}_{k,z}\bigr)
        P_k^\beta(\dd z)
        \le
        C
        \int_0^{\omega_0}
        \mathcal A\bigl(\beta_k(\omega),\Lambda^{\mathrm{br}}_{k,\omega}\bigr)
        f(\omega)\,\dd\omega .
\]
\end{lemma}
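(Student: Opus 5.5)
The proof is a change of variables that moves the integral over $P_k^\beta$ onto the branch coordinate $\omega$, using the disintegration \eqref{eq:branch-disint-def} of Definition~\ref{def:branch}. Two identifications do the work. The kernel submeasure $\mathfrak M_k^{\mathrm{br}}(\dd z,\dd r)=\pi_kP_k^\beta(\dd z)\Lambda^{\mathrm{br}}_{k,z}(\dd r)$ of Definition~\ref{def:endpoint-contact} is, by that definition, a single contact branch. The branch representation then says that the same measure equals the pushforward under $\omega\mapsto(\beta_k(\omega),\cdot)$ of $w(\omega)f(\omega)\,\dd\omega\,\Lambda^{\mathrm{br}}_{k,\omega}(\dd x)$.

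I will apply \eqref{eq:branch-disint-def} with the test function $g(z,r)=a(z,r)\1\{r\in U\}$, which is nonnegative and measurable. On the left side, the kernel form of $\mathfrak M_k^{\mathrm{br}}$ gives
\[
        \int g\,\dd\mathfrak M_k^{\mathrm{br}}
        =\pi_k\int_{\mathcal B_k}\int_U a(z,r)\Lambda^{\mathrm{br}}_{k,z}(\dd r)\,P_k^\beta(\dd z)
        =\pi_k\int_{\mathcal B_k}\mathcal A(z,\Lambda^{\mathrm{br}}_{k,z})\,P_k^\beta(\dd z).
\]
On the right side, the same integral equals
\[
        \int_0^{\omega_0}w(\omega)\,\mathcal A\bigl(\beta_k(\omega),\Lambda^{\mathrm{br}}_{k,\omega}\bigr)f(\omega)\,\dd\omega .
\]
Here I use that $\Lambda^{\mathrm{br}}_{k,\omega}$ is supported in $[e(\omega),x_0]\subseteq U$, so the indicator has no effect, and I read $\mathcal A$ in the oriented coordinate $x$ of $U$. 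The two expressions are therefore equal.

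Since $c\le w(\omega)\le C$ and every other factor is nonnegative, bounding $w$ above and below inside the $\omega$-integral gives the two-sided comparison. The constants are exactly the primitive bounds on $w$, as the lemma claims.

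The main obstacle is bookkeeping rather than analysis. The notation $\Lambda^{\mathrm{br}}_{k,z}$, indexed by size and coming from the kernel split \eqref{eq:endpoint-kernel-split}, and the notation $\Lambda^{\mathrm{br}}_{k,\omega}$, indexed by the branch coordinate, must be matched consistently. The definitions identify them only through the joint measure, not pointwise. This causes no trouble because $\mathcal A$ is linear in $\Lambda$ and is integrated against the full measure. No pointwise identity is needed: the argument equates two integrals of the single function $g$ against the same measure $\mathfrak M_k^{\mathrm{br}}$.

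The remaining points are routine. Nonnegativity makes Tonelli applicable and allows infinite values. Injectivity of $\beta_k$ and the bounds on its Jacobian are not even needed here, because they enter only when $f(\omega)\,\dd\omega$ is compared with $P_k^\beta$ directly, not through $\mathfrak M_k^{\mathrm{br}}$.
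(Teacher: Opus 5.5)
Your proposal is correct and matches the paper's proof. Both apply the branch identity \eqref{eq:branch-disint-def} to the nonnegative test function built from $a$; your extra indicator of $U$ is harmless. Both then identify the left side through the kernel form of $\mathfrak M_k^{\mathrm{br}}$ and sandwich the weight $w$ between its primitive bounds.
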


\begin{proof}
The endpoint-contact representation decomposes the kernel measure on
\(\mathcal B_k\times U\) as
\(\mathfrak M_k|_{\mathcal B_k\times U}=\mathfrak M_k^D+\mathfrak M_k^{\mathrm{br}}\),
with subkernels \(\Lambda^D_{k,z}\) and \(\Lambda^{\mathrm{br}}_{k,z}\) as in
\eqref{eq:endpoint-kernel-split}.  Applying the branch identity
\eqref{eq:branch-disint-def} to the nonnegative kernel \(g(z,r)=a(z,r)\) gives the
exact disintegration
\[
        \pi_k
        \int_{\mathcal B_k}
        \mathcal A\bigl(z,\Lambda^{\mathrm{br}}_{k,z}\bigr)
        P_k^\beta(\dd z)
        =
        \int_0^{\omega_0}
        w(\omega)\,
        \mathcal A\bigl(\beta_k(\omega),\Lambda^{\mathrm{br}}_{k,\omega}\bigr)
        f(\omega)\,\dd\omega ,
\]
and the comparison follows from the primitive upper and lower bounds on \(w\).
\end{proof}

\begin{lemma}[Endpoint-contact mass bounds]\label{lem:mass}
For the endpoint branch of type \(k\), recall
\(\theta=\gamma+\alpha/\tau\) from Definition~\ref{def:branch}.  Orient the
local ratio coordinate into the support: \(x=R-r_{k,0}\) at a lower endpoint and
\(x=r_{k,0}-R\) at an upper endpoint.  Then the branch contribution
\(\mu^{\mathrm{br}}_k\) has Lebesgue density
\(m^{\mathrm{br}}_k(x)\asymp x^{\theta-1}\) on \((0,x_0)\).  Hence, for every
interval \([a,b]\subseteq[0,x_0]\),
\[
        c(b-a)b^{\theta-1}
        \le
        \mu^{\mathrm{br}}_k(\{a\le x\le b\})
        \le
        C(b-a)b^{\theta-1},
        \qquad
        \mu^{\mathrm{br}}_k(\{0\le x\le x_1\})\asymp x_1^\theta .
\]
\end{lemma}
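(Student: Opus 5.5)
The plan is to compute the branch ratio marginal directly from the disintegration in Definition~\ref{def:branch} and then integrate its density. Identifying $U$ with $[0,x_0]$ and taking $g(z,x)=\1\{x\in I\}$ gives
\[
        \mu_k^{\mathrm{br}}(I)=\int_0^{\omega_0}w(\omega)\,\Lambda^{\mathrm{br}}_{k,\omega}(I)\,f(\omega)\,\dd\omega .
\]
Since $\Lambda^{\mathrm{br}}_{k,\omega}$ has density $\lambda^{\mathrm{br}}_{k,\omega}(x)\asymp(x-e(\omega))^{\gamma-1}$ on $[e(\omega),x_0]$, Tonelli shows that $\mu^{\mathrm{br}}_k$ is absolutely continuous, with density
\[
        m^{\mathrm{br}}_k(x)=\int_0^{\omega_0}w(\omega)f(\omega)\lambda^{\mathrm{br}}_{k,\omega}(x)\1\{e(\omega)\le x\}\,\dd\omega
        \asymp\int_{\{\omega:e(\omega)\le x\}}\omega^{\alpha-1}(x-e(\omega))^{\gamma-1}\,\dd\omega,
\]
where $c\le w\le C$ and $f(\omega)\asymp\omega^{\alpha-1}$ have been used.

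\textbf{Upper bound on the density.} Because $e(\omega)\asymp\omega^\tau$, there are constants $c_e,C_e$ with $c_e\omega^\tau\le e(\omega)\le C_e\omega^\tau$. The constraint $e(\omega)\le x$ therefore forces $\omega\le (x/c_e)^{1/\tau}$. Since $\gamma\ge1$, we also have $(x-e)^{\gamma-1}\le x^{\gamma-1}$. Hence
\[
        m^{\mathrm{br}}_k(x)\le C x^{\gamma-1}\int_0^{Cx^{1/\tau}}\omega^{\alpha-1}\,\dd\omega\le Cx^{\gamma-1+\alpha/\tau}=Cx^{\theta-1}.
\]

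\textbf{Lower bound on the density.} Restrict the integral to $\omega\le (x/(2C_e))^{1/\tau}$, which is admissible for $x$ small enough that this stays below $\omega_0$; shrink $x_0$ if needed, since these are local endpoint statements. On this range $e(\omega)\le x/2$, so $(x-e(\omega))^{\gamma-1}\ge (x/2)^{\gamma-1}$. Integrating $\omega^{\alpha-1}$ over the range gives
\[
        m^{\mathrm{br}}_k(x)\ge c\,x^{\gamma-1}x^{\alpha/\tau}=c\,x^{\theta-1}.
\]
This is the one step needing care: the implied constants in $\asymp$ must hold uniformly on $(0,\omega_0)$, and the cutoff $\omega\le(x/2C_e)^{1/\tau}$ must remain inside $(0,\omega_0)$. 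On a bounded neighborhood both are handled by the primitive constants.

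\textbf{Interval bounds.} From $m^{\mathrm{br}}_k(x)\asymp x^{\theta-1}$ and $\theta\ge\gamma\ge1$, the density is comparable to $b^{\theta-1}$ on the upper half $[\max(a,b/2),b]$ and is at most $Cb^{\theta-1}$ on all of $[a,b]$. This gives the upper bound $C(b-a)b^{\theta-1}$ immediately. For the lower bound, split into two cases. If $a\ge b/2$, the density is at least $cb^{\theta-1}$ on the whole interval, so the mass is at least $c(b-a)b^{\theta-1}$. If $a<b/2$, the interval contains $[b/2,b]$, so its mass is at least $c\,b\cdot b^{\theta-1}\ge c(b-a)b^{\theta-1}$. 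Finally, taking $a=0$ and $b=x_1$ gives $\mu^{\mathrm{br}}_k(\{0\le x\le x_1\})=\int_0^{x_1}m^{\mathrm{br}}_k\asymp x_1^\theta$. The lower and upper endpoint orientations are handled identically, since $x$ is already the oriented distance into the support.
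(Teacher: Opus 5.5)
Your proposal is correct and follows the paper's proof essentially step for step. You obtain the density of $\mu^{\mathrm{br}}_k$ from the branch disintegration, get the upper bound by restricting to $e(\omega)\le x$, get the lower bound by restricting to small $\omega$ where $e(\omega)\le x/2$, and then integrate $x^{\theta-1}$; your explicit $(x/2)^{\gamma-1}$ bound and two-case interval argument are in fact more careful than the paper's wording. The one adjustment is that you need not shrink $x_0$: as the paper does, cut off at $\omega\le c\,x^{1/\tau}$ with $c$ small enough, depending on $C_e$, $\omega_0$ and $x_0$, that $e(\omega)\le x/2$ and $c\,x^{1/\tau}\le\omega_0$ for all $x\in(0,x_0]$, so the bound holds on the full stated neighborhood.
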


\begin{proof}
By the branch formula in Definition~\ref{def:branch},
\[
        \mu_k^{\mathrm{br}}(I)
        =
        \int_0^{\omega_0}
        w(\omega)\Lambda^{\mathrm{br}}_{k,\omega}(I)
        f(\omega)\dd\omega .
\]
Since \(w\) is bounded above and below by primitive constants, it can be
absorbed into the comparability constants.  Thus the Lebesgue density of
\(\mu_k^{\mathrm{br}}\) satisfies
\[
        m^{\mathrm{br}}_k(x)\asymp
        \int_0^{\omega_0}
        \lambda^{\mathrm{br}}_{k,\omega}(x)f(\omega)\dd\omega,
        \qquad
        f(\omega)\asymp \omega^{\alpha-1}.
\]

The integrand is zero unless \(e(\omega)\le x\).  Since
\(e(\omega)\asymp\omega^\tau\), the relevant range is
\(0\le\omega\le Cx^{1/\tau}\).  The upper density bound therefore gives
\[
        m^{\mathrm{br}}_k(x)
        \le
        Cx^{\gamma-1}
        \int_0^{Cx^{1/\tau}}\omega^{\alpha-1}\dd\omega
        \le
        Cx^{\theta-1}.
\]
For the lower bound, restrict to \(\omega\le c\,x^{1/\tau}\).  Since
\(e(\omega)\le C_e\,\omega^\tau\) for a primitive constant \(C_e\), and
\(x_0,\omega_0\) are primitive, the constant \(c\) may be chosen depending only on
the primitive constants---small enough that \(e(\omega)\le x/2\) and
\(c\,x^{1/\tau}\le\omega_0\) for every \(x\in(0,x_0]\).  Since \(\gamma\ge1\),
\(\lambda^{\mathrm{br}}_{k,\omega}(x)\ge c(x-e(\omega))^{\gamma-1}
\ge c x^{\gamma-1}\).  The same integration gives
\(m^{\mathrm{br}}_k(x)\ge c x^{\theta-1}\).

Integrating the density over \([a,b]\) gives the interval bound.  Indeed, since
\(\theta\ge1\), \(\int_a^b x^{\theta-1}\dd x\asymp (b-a)b^{\theta-1}\).  Taking
\(a=0\) and \(b=x_1\) gives
\(\mu^{\mathrm{br}}_k(\{0\le x\le x_1\})\asymp x_1^\theta\).
\end{proof}

\begin{lemma}[Endpoint Hardy bound under an active-mass product cap]\label{lem:hardy}
For the endpoint branch of type \(k\), recall that
\(\theta=\gamma+\alpha/\tau\).  There are constants \(r_0'>0\) and
\(C<\infty\), depending only on the primitive endpoint constants, such that the
following holds.  Fix \(0<r\le r_0'\).  For each size coordinate \(\omega\), let
\(I_\omega\) be a local-coordinate ratio interval, and write
\(d_\omega:=\Leb(I_\omega\cap[e(\omega),x_0])\),
\(\Lambda_\omega(I_\omega):=\Lambda^{\mathrm{br}}_{k,\omega}(I_\omega)\), and
\(\mu(I_\omega):=\mu^{\mathrm{br}}_k(I_\omega)\).  If the active-mass product cap
\begin{equation}
        d_\omega\mu(I_\omega)\le r
        \qquad
        \text{for a.e. }\omega\in[0,\omega_0]
        \label{eq:hardy-active-cap}
\end{equation}
holds, then
\begin{equation}
        \int_0^{\omega_0}
        d_\omega\Lambda_\omega(I_\omega)f(\omega)\,\dd\omega
        \le
        C r\log(e/r).
        \label{eq:hardy-final}
\end{equation}
\end{lemma}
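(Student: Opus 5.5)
The plan is a pointwise-in-$\omega$ bound on the integrand, followed by a split of the $\omega$-integral at a scale determined by $r$. Fix $\omega$ with $d_\omega>0$; the case $d_\omega=0$ contributes nothing. Write $I_\omega\cap[e(\omega),x_0]=[a',b']$, so $e(\omega)\le a'\le b'\le x_0$ and $d_\omega=b'-a'$.

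\emph{Step 1: upper bound on the conditional curvature.} By the density bound in Definition~\ref{def:branch} and $\gamma\ge1$, we have $\lambda^{\mathrm{br}}_{k,\omega}(x)\le C(x-e(\omega))^{\gamma-1}\le C b'^{\,\gamma-1}$ on $[a',b']$. Hence
\[
d_\omega\Lambda_\omega(I_\omega)\le C\,d_\omega^2\,b'^{\,\gamma-1}.
\]

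\emph{Step 2: lower bound on the mass, from Lemma~\ref{lem:mass}.} Since $\mu$ is the branch submeasure, monotonicity and the interval bound of Lemma~\ref{lem:mass} give $\mu(I_\omega)\ge\mu^{\mathrm{br}}_k([a',b'])\ge c\,d_\omega\,b'^{\,\theta-1}$. The cap \eqref{eq:hardy-active-cap} then yields $d_\omega^2 b'^{\,\theta-1}\le Cr$. Substituting into Step~1 gives
\[
d_\omega\Lambda_\omega(I_\omega)\le C r\,b'^{\,\gamma-\theta}=C r\,b'^{-\alpha/\tau}.
\]
Because $b'\ge e(\omega)\ge c\,\omega^\tau$, this is at most $C r\,\omega^{-\alpha}$. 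Independently, $d_\omega\le b'$ gives the crude bound $d_\omega\Lambda_\omega(I_\omega)\le C b'^{\,\gamma+1}$.

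\emph{Step 3: combine the two bounds in $b'$.} Taking the minimum of the two bounds and optimizing over $b'$, the worst case is $b'^{\,\theta+1}\asymp r$. There the value is
\[
d_\omega\Lambda_\omega(I_\omega)\le C r^{(\gamma+1)/(\theta+1)}
\qquad\text{uniformly in }\omega.
\]

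\emph{Step 4: split the integral (the main step).} The naive bound $C r\,\omega^{-\alpha}f(\omega)\asymp r/\omega$ is only logarithmically divergent at $\omega=0$, so I split at $\omega_1$ defined by $\omega_1^\tau=r^{1/(\theta+1)}$; the smallness $r\le r_0'$ ensures $\omega_1\le\omega_0$.
\begin{itemize}
\item On $[\omega_1,\omega_0]$, use $C r\,\omega^{-\alpha}\cdot\omega^{\alpha-1}$. This integrates to $C r\log(\omega_0/\omega_1)\le C r\log(e/r)$.
\item On $[0,\omega_1]$, use the uniform bound from Step~3. This gives $C r^{(\gamma+1)/(\theta+1)}\omega_1^\alpha=C r^{(\gamma+1+\alpha/\tau)/(\theta+1)}=C r$, using $\theta=\gamma+\alpha/\tau$.
\end{itemize}
Adding the two pieces gives \eqref{eq:hardy-final}.

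The delicate point is exactly this logarithmic divergence at small sizes, where the moving edge $e(\omega)$ sits near the endpoint. The exponent identity $\theta=\gamma+\alpha/\tau$ is what makes the small-$\omega$ piece come out at exactly order $r$, so that only the single logarithmic factor from the range $[\omega_1,\omega_0]$ survives. A secondary check is that the interval bound of Lemma~\ref{lem:mass} is applicable to $[a',b']$, which requires this interval to lie in $[0,x_0]$; this holds by construction.
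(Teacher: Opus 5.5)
Your proposal is correct and follows the paper's proof essentially step for step. It uses the same two pointwise bounds: $C r\,b'^{-\alpha/\tau}$ (from the cap together with Lemma~\ref{lem:mass}) and the crude $C b'^{\gamma+1}$. These are balanced at $b'^{\theta+1}\asymp r$, and the $\omega$-integral is split at $\omega\asymp r^{1/(\tau(\theta+1))}$, where $\theta=\gamma+\alpha/\tau$ makes the small-$\omega$ piece $O(r)$ and only the large-$\omega$ piece produces the $\log(e/r)$. The only cosmetic difference is that you take the unconstrained maximum over $b'$ on the small-$\omega$ range, which removes the need for the paper's auxiliary constant $c_*$ in the split point.
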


\begin{proof}
Work in the oriented local coordinate, so the moving active interval is
\([e(\omega),x_0]\), with \(e(\omega)\asymp\omega^\tau\).  If
\(I_\omega\cap[e(\omega),x_0]=\emptyset\), then \(d_\omega=0\), and the
corresponding integrand is zero.  Hence we may restrict to \(\omega\)'s for
which this intersection is nonempty.

For such an \(\omega\), set
\(B_\omega:=\sup(I_\omega\cap[e(\omega),x_0])\).  Since \(I_\omega\) is an
interval, the active intersection is, up to endpoints, an interval
\([a_\omega,B_\omega]\).  Endpoint conventions do not matter because the branch
measures have densities.  Thus \(d_\omega=B_\omega-a_\omega\), so
\(d_\omega\le B_\omega\).  Also, because \(B_\omega\ge e(\omega)\) and
\(e(\omega)\asymp\omega^\tau\),
\begin{equation}
        B_\omega\ge c\,\omega^\tau .
        \label{eq:hardy-B-lower}
\end{equation}

Lemma~\ref{lem:mass}, applied to \([a_\omega,B_\omega]\), gives
\(\mu^{\mathrm{br}}_k([a_\omega,B_\omega])\ge
c\,d_\omega B_\omega^{\theta-1}\).  Since
\([a_\omega,B_\omega]\subseteq I_\omega\) and
\(\mu=\mu^{\mathrm{br}}_k\) is positive, we also have
\(\mu(I_\omega)\ge c\,d_\omega B_\omega^{\theta-1}\).  Combining this lower
bound with \eqref{eq:hardy-active-cap} yields
\begin{equation}
        d_\omega^2 B_\omega^{\theta-1}\le C r .
        \label{eq:hardy-length-constraint}
\end{equation}

We next bound the conditional curvature term.  By the endpoint-branch density
assumption, the density of \(\Lambda^{\mathrm{br}}_{k,\omega}\) satisfies
\(\lambda^{\mathrm{br}}_{k,\omega}(x)\le
C(x-e(\omega))^{\gamma-1}\) for \(x\ge e(\omega)\).  Since \(\gamma\ge1\), we
have \((x-e(\omega))^{\gamma-1}\le B_\omega^{\gamma-1}\) for
\(x\in[a_\omega,B_\omega]\).  Therefore
\(\Lambda_\omega(I_\omega)\le C d_\omega B_\omega^{\gamma-1}\).  Multiplying by
\(d_\omega\) and using \eqref{eq:hardy-length-constraint}, we get
\[
        d_\omega\Lambda_\omega(I_\omega)
        \le
        C d_\omega^2 B_\omega^{\gamma-1}
        =
        C d_\omega^2 B_\omega^{\theta-1}B_\omega^{\gamma-\theta}
        \le
        C r B_\omega^{\gamma-\theta}.
\]
Since \(\theta=\gamma+\alpha/\tau\), this becomes
\begin{equation}
        d_\omega\Lambda_\omega(I_\omega)
        \le
        C r B_\omega^{-\alpha/\tau}.
        \label{eq:hardy-decreasing}
\end{equation}

We also need a crude increasing bound.  Since \(d_\omega\le B_\omega\),
\begin{equation}
        d_\omega\Lambda_\omega(I_\omega)
        \le
        C d_\omega^2 B_\omega^{\gamma-1}
        \le
        C B_\omega^{\gamma+1}.
        \label{eq:hardy-increasing}
\end{equation}
Combining \eqref{eq:hardy-decreasing} and \eqref{eq:hardy-increasing}, and then
using \eqref{eq:hardy-B-lower}, gives the pointwise bound
\begin{equation}
        d_\omega\Lambda_\omega(I_\omega)
        \le
        C\sup_{B\ge c\omega^\tau}
        \min\{B^{\gamma+1},\,rB^{-\alpha/\tau}\}.
        \label{eq:hardy-pointwise-sup}
\end{equation}

Let \(B_*:=r^{1/(\theta+1)}\).  This is the balancing scale, since
\(B_*^{\gamma+1}=rB_*^{-\alpha/\tau}\).  Choose \(r_0'>0\) small enough that
\(B_*\le x_0\) whenever \(0<r\le r_0'\).  Define
\begin{equation}
        \omega_*:=c_* B_*^{1/\tau},
        \label{eq:hardy-omegastar}
\end{equation}
where \(c_*>0\) is a sufficiently small primitive constant.  Reducing \(r_0'\)
again if necessary, we may assume that \(\omega_*\le\omega_0\).

First consider \(0\le\omega\le\omega_*\).  By the choice of \(c_*\), we have
\(c\omega^\tau\le B_*\).  Since \(B\mapsto B^{\gamma+1}\) is increasing and
\(B\mapsto rB^{-\alpha/\tau}\) is decreasing, the supremum in
\eqref{eq:hardy-pointwise-sup} is attained, up to primitive constants, at the
balancing scale \(B_*\).  Thus
\(d_\omega\Lambda_\omega(I_\omega)\le C B_*^{\gamma+1}\) for
\(0\le\omega\le\omega_*\).  Using \(f(\omega)\le C\omega^{\alpha-1}\), we obtain
\[
        \int_0^{\omega_*}
        d_\omega\Lambda_\omega(I_\omega)f(\omega)\,\dd\omega
        \le
        C B_*^{\gamma+1}\int_0^{\omega_*}\omega^{\alpha-1}\,\dd\omega
        \le
        C B_*^{\gamma+1}\omega_*^\alpha .
\]
By \eqref{eq:hardy-omegastar}, \(\omega_*^\alpha\le C B_*^{\alpha/\tau}\).
Hence
\begin{equation}
        \int_0^{\omega_*}
        d_\omega\Lambda_\omega(I_\omega)f(\omega)\,\dd\omega
        \le
        C B_*^{\gamma+1+\alpha/\tau}
        =
        C B_*^{\theta+1}
        =
        C r .
        \label{eq:hardy-small-omega-integral}
\end{equation}

Now consider \(\omega>\omega_*\).  Then \(c\omega^\tau\ge c'B_*\), after
adjusting primitive constants.  The supremum in
\eqref{eq:hardy-pointwise-sup} is therefore bounded by the decreasing branch:
\[
        \sup_{B\ge c\omega^\tau}\min\{B^{\gamma+1},\,rB^{-\alpha/\tau}\}
        \le
        C r(c\omega^\tau)^{-\alpha/\tau}
        \le
        C r\omega^{-\alpha}.
\]
Together with \eqref{eq:hardy-pointwise-sup}, this gives
\(d_\omega\Lambda_\omega(I_\omega)\le C r\omega^{-\alpha}\) for
\(\omega_*<\omega\le\omega_0\).  Since
\(f(\omega)\le C\omega^{\alpha-1}\),
\[
        \int_{\omega_*}^{\omega_0}
        d_\omega\Lambda_\omega(I_\omega)f(\omega)\,\dd\omega
        \le
        C r\int_{\omega_*}^{\omega_0}\omega^{-1}\,\dd\omega
        \le
        C r\log(e/\omega_*).
\]
Because \(\omega_*=c_*B_*^{1/\tau}
=c_*r^{1/(\tau(\theta+1))}\), we have
\(\log(e/\omega_*)\le C\log(e/r)\).  Therefore
\begin{equation}
        \int_{\omega_*}^{\omega_0}
        d_\omega\Lambda_\omega(I_\omega)f(\omega)\,\dd\omega
        \le
        C r\log(e/r).
        \label{eq:hardy-large-omega-final}
\end{equation}

Combining \eqref{eq:hardy-small-omega-integral} and
\eqref{eq:hardy-large-omega-final} yields \eqref{eq:hardy-final}, because
\(0<r\le r_0'\) and \(r_0'\) is fixed.
\end{proof}

The scale \(B_*=r^{1/(\theta+1)}\) balances the increasing and decreasing
branches of the size integral.  The boundary term
\(\log(e/\omega_*)\asymp\log(e/r)\) is the only logarithmic loss; it is the price
of integrating the conditional curvature against the size distribution rather
than dominating it pointwise.

\subsubsection*{Proof of Lemma~\ref{lem:epstability}}

\begin{proof}[Proof of Lemma~\ref{lem:epstability}]
We prove the bound in four steps.  First, Proposition~\ref{prop:active} gives a
pathwise active-mass product cap for all good future paths.  Second, the cap is
extended from pairs of cutoffs to the active hull of the good cutoff support.
Third, dominated neighborhoods and endpoint-contact neighborhoods are treated
separately.  Finally, we average over the current size and remove the
conditioning on the good future event.

Let \(n=s-1\), and set
\(\delta_n:=C\sqrt{\log(en)/n}\), as in the concentration event
\eqref{eq:tailconc}.  Define the deterministic cap scale
\[
        \bar r_s
        :=
        C_{\rm cap}\left(\delta_n^{\,1+1/\pp}+\frac1n\right),
\]
where \(C_{\rm cap}\) is large enough to dominate the constant obtained when
Proposition~\ref{prop:active} is applied below.  Since
\(\delta_n^{\,1+1/\pp}
=C(\log(en)/n)^{(\pp+1)/(2\pp)}\), after increasing constants we have
\begin{equation}
        \bar r_s\le C r_s .
        \label{eq:epstage-rbar-rs}
\end{equation}

Let \(\lambda_{\rm ch}>0\) be a Lebesgue number for the finite cover of the
active supports \(S_k\) by dominated neighborhoods and endpoint-contact
neighborhoods, chosen uniformly over all types and cover elements.  By the lower
active-mass bound in \eqref{eq:wr}, every active interval \(I\) with active
length at least \(\lambda_{\rm ch}\) satisfies
\begin{equation}
        \Leb(I\cap S_k)\,\mu_k(I)
        \ge
        c\,\lambda_{\rm ch}^{\pp+1}.
        \label{eq:epstage-large-interval-product}
\end{equation}
Let \(r_{\rm H}>0\) be the minimum of the small-radius constants in
Lemma~\ref{lem:hardy} over the finitely many endpoint-contact neighborhoods.  If
there are no endpoint-contact neighborhoods, set \(r_{\rm H}:=1\); then the
contact-branch part of the proof is vacuous.  Reduce \(r_{\rm H}\), if needed,
so that
\begin{equation}
        r_{\rm H}\le \frac12 c\,\lambda_{\rm ch}^{\pp+1}.
        \label{eq:epstage-hardy-radius}
\end{equation}
Choose \(s_0\) so large that, for all \(s\ge s_0\), we have
\(\delta_n\le\eps_0\), \(\bar r_s\le r_{\rm H}\), and
\(\Prob(E_n)\ge1/2\), where \(\eps_0\) is the smallness threshold in
Proposition~\ref{prop:active}.

Fix a current type and size pair \((k,z)\) with \(z a^k\le b\).  For
\(\theta\in[0,1]\), set \(b_\theta:=b-\theta z a^k\) and
\(\rho_\theta:=b_\theta/n\).  Let \(m^{\max}\) dominate every normalized empirical
vector \(\widehat m_W\), and set \(\rho^{\max}:=Am^{\max}\) and
\(\rho_\theta^{\rm eff}:=\rho_\theta\wedge\rho^{\max}\).  As in
\eqref{eq:pone-clipping-no-change}, clipping nonbinding resource coordinates
does not change the feasible set:
\begin{equation}
        K_{\widehat m_W}(\rho_\theta)
        =
        K_{\widehat m_W}(\rho_\theta^{\rm eff}).
        \label{eq:epstage-clipping-no-change}
\end{equation}
The clipping map is coordinatewise \(1\)-Lipschitz.  Hence, uniformly over
\(\theta,\theta'\in[0,1]\),
\begin{equation}
        \|\rho_\theta^{\rm eff}-\rho_{\theta'}^{\rm eff}\|_\infty
        \le
        \|\rho_\theta-\rho_{\theta'}\|_\infty
        \le
        \frac{z\|a^k\|_\infty}{n}
        \le
        \frac{C}{n}.
        \label{eq:epstage-rhs-gap}
\end{equation}

By Lemma~\ref{lem:bounded-cutoff-selection}, for a.e. \(\theta\) and almost every
future path \(W\), we may choose a normalized empirical primal--cutoff pair
\((\widehat u_{\theta,W}^{(z)},q_{\theta,W}^{(z)})\).  It satisfies
\(\widehat u_{\theta,W}^{(z)}
\in K_{\widehat m_W}(\rho_\theta^{\rm eff})\), the empirical closed/right tail
relations, and
\[
        \Pi(q_{\theta,W}^{(z)})
        \in
        N_{K_{\widehat m_W}(\rho_\theta^{\rm eff})}
        (\widehat u_{\theta,W}^{(z)}).
\]
Here \eqref{eq:epstage-clipping-no-change} transfers feasibility and projected
normality to the clipped feasible set.  The superscript \((z)\) records that the
capacity path depends on the current size \(z\).  We choose jointly measurable
representatives on the full-measure differentiability sets; null sets in
\((\theta,z,W)\) are irrelevant by Fubini.

Take \(W,W'\in E_n\), and let \(\theta,\theta'\) be differentiability points for
the corresponding pathwise value functions.  On \(E_n\), the empirical
closed/right graph relations imply the population relaxed graph relations with
tolerance \(\delta_n\), as in \eqref{eq:tailrelax}.  Also,
\(\|\widehat m_W-m^0\|_\infty\le\delta_n\) and
\(\|\widehat m_{W'}-m^0\|_\infty\le\delta_n\), so the two empirical mass vectors
are within \(2\delta_n\) of each other.  Together with
\eqref{eq:epstage-rhs-gap}, these are exactly the operative hypotheses needed
for Proposition~\ref{prop:active}: feasibility, projected normality, relaxed
closed/right graph relations, closeness of \(m\), right-hand-side closeness, and
containment of swept active intervals in the active supports.

Applying Proposition~\ref{prop:active} with \(\eps=\delta_n\) and \(\tau=C/n\)
gives a sum bound over all types.  Since every summand is nonnegative, the
current type \(k\) satisfies
\begin{equation}
        \ell_k
        \bigl(
        q^{(z)}_{k,\theta,W},
        q^{(z)}_{k,\theta',W'}
        \bigr)
        \,
        \mu_k
        \!\left(
        I^a_k
        \bigl(
        q^{(z)}_{k,\theta,W},
        q^{(z)}_{k,\theta',W'}
        \bigr)
        \right)
        \le
        \bar r_s .
        \label{eq:epstage-current-product}
\end{equation}

Let \(\Omega=(\Theta,W)\), where \(\Theta\sim{\rm Unif}[0,1]\) is independent of
\(W\), and set \(Q^{(z)}_\Omega:=q^{(z)}_{k,\Theta,W}\).  The
marginal-to-cutoff reduction, Lemma~\ref{lem:marginal-cutoff-reduction-new},
gives
\begin{equation}
        \E_W[H_{k,z}(Y_W)]
        -
        H_{k,z}(\E_W[Y_W])
        \le
        \E_\Omega H_{k,z}(Q^{(z)}_\Omega)
        -
        H_{k,z}(\E_\Omega Q^{(z)}_\Omega).
        \label{eq:epstage-reduce-to-cutoff}
\end{equation}

We now condition on the good future event \(G:=E_n(W)\).  Since
\(Q^{(z)}_\Omega\in[0,y]\), and \(H_{k,z}\) is uniformly bounded and uniformly
Lipschitz on \([0,y]\), \eqref{eq:pone-bad-future-removal} gives
\begin{equation}
\begin{aligned}
        &\left|
        \Big(
        \E H_{k,z}(Q^{(z)}_\Omega)
        -
        H_{k,z}(\E Q^{(z)}_\Omega)
        \Big)
        -
        \Big(
        \E[H_{k,z}(Q^{(z)}_\Omega)\mid G]
        -
        H_{k,z}(\E[Q^{(z)}_\Omega\mid G])
        \Big)
        \right|  \\
        &\hspace{35mm}\le C\Prob(G^c)\le Cn^{-6}.
\end{aligned}
        \label{eq:epstage-good-conditioning}
\end{equation}

For each good future \(W\), let \(\mathcal D_{W,z}\subseteq[0,1]\) be the
full-measure set of differentiability points, and let \(\mathcal Q_{G,z}\) be the
essential range of the cutoff \(q^{(z)}_{k,\Theta,W}\) under the conditional law
of \((\Theta,W)\) given \(G\)---the smallest closed subset of \([0,y]\) that
\(Q^{(z)}_\Omega\) enters with conditional probability one.  After changing
\(Q^{(z)}_\Omega\) on a null set, \(Q^{(z)}_\Omega\mid G\) takes values in
\(\mathcal Q_{G,z}\).  Each point of \(\mathcal Q_{G,z}\) is a limit of cutoffs
\(q^{(z)}_{k,\theta,W}\) with \(W\in E_n\) and \(\theta\in\mathcal D_{W,z}\);
since \eqref{eq:epstage-current-product} caps every pair of such cutoffs, and
both \(\ell_k\) and the atomless measure \(\mu_k\) vary continuously under
convergence of interval endpoints, the cap passes to the limit.  Hence every
pair \(q,\bar q\in\mathcal Q_{G,z}\) satisfies
\begin{equation}
        \ell_k(q,\bar q)\,
        \mu_k(I^a_k(q,\bar q))
        \le
        \bar r_s .
        \label{eq:epstage-good-pairwise-cap}
\end{equation}

Let \(J_z\) be the closed active hull of \(\mathcal Q_{G,z}\) in \(S_k\), namely
\[
        J_z
        :=
        \overline{
        \bigcup_{q,\bar q\in\mathcal Q_{G,z}}
        [q\wedge\bar q,q\vee\bar q]\cap S_k
        } .
\]
If \(J_z\) has zero active length, then the good cutoff distribution straddles
no curvature of \(H_{k,z}\), and the good-support Jensen gap is zero.  We
therefore assume below that \(J_z\) has positive active length.  By
Lemma~\ref{lem:hullcap}, applied with \(B=S_k\) and \(\nu=\mu_k\), the pairwise
cap \eqref{eq:epstage-good-pairwise-cap} extends to the closed active hull:
\begin{equation}
        \Leb(J_z)\,\mu_k(J_z)\le \bar r_s .
        \label{eq:epstage-hull-cap}
\end{equation}
Because \(\bar r_s\le r_{\rm H}\), \eqref{eq:epstage-large-interval-product} and
\eqref{eq:epstage-hardy-radius} imply that \(J_z\) is contained in a single
member of the finite active cover.  Otherwise the Lebesgue-number property would
force \(\Leb(J_z)\ge\lambda_{\rm ch}\), contradicting
\eqref{eq:epstage-hull-cap}.

We next choose this cover element measurably.  Write the finite active cover for
type \(k\) as \(\mathcal U_k=\{U_{k,1},\ldots,U_{k,N_k}\}\).  Using the jointly
measurable cutoff representatives, define
\[
        L_z
        :=
        \operatorname*{ess\,inf}_{(\theta,W)\in G}
        \Pi_k(q^{(z)}_{k,\theta,W}),
        \qquad
        R_z
        :=
        \operatorname*{ess\,sup}_{(\theta,W)\in G}
        \Pi_k(q^{(z)}_{k,\theta,W}).
\]
Because \(\mathcal Q_{G,z}\) is the essential range, its projected infimum and
supremum are \(L_z\) and \(R_z\), so \(J_z=[L_z,R_z]\cap S_k\).  Since each
\(U_{k,i}\) is an interval neighborhood, the event \(\{J_z\subset U_{k,i}\}\) is
measurable in \(z\).  The first-index rule
\(i_k(z):=\min\{i:J_z\subset U_{k,i}\}\) is therefore measurable on the set
where \(J_z\) has positive active length.  On the zero-active-length set, set
\(i_k(z)=0\).  This decomposes the size space into measurable sets
\(A_{k,i}:=\{z:i_k(z)=i\}\), \(i=1,\ldots,N_k\), plus a zero-contribution set
\(A_{k,0}\).  On each \(A_{k,i}\), all good active hulls lie in the fixed cover
element \(U_{k,i}\).

We now bound the good-support Jensen contribution on each cover element.

\emph{Dominated neighborhoods.}
Suppose that \(J_z\subset U\), where \(U\) is dominated.  Every active interval
generated by two good cutoffs lies in \(J_z\subset U\).  Hence the local
domination assumption gives
\(\Lambda_{k,z}(I^a_k(q,\bar q))\le C\mu_k(I^a_k(q,\bar q))\) for
\(P_k^\beta\)-a.e. \(z\) and all \(q,\bar q\in\mathcal Q_{G,z}\).  Combining
this with \eqref{eq:epstage-good-pairwise-cap}, we get
\[
        \ell_k(q,\bar q)\,
        \Lambda_{k,z}(I^a_k(q,\bar q))
        \le C\bar r_s,
        \qquad q,\bar q\in\mathcal Q_{G,z}.
\]

We verify the measure hypotheses needed for Lemma~\ref{lem:jensen}.  The measure
\(\mu_k\) is atomless, because \(\mu_k(\{t\})\le C\Leb(\{t\}\cap S_k)=0\).  Also,
\(\pi_k\int\Lambda_{k,z}([0,y]\setminus S_k)\,P_k^\beta(\dd z)
=\mu_k([0,y]\setminus S_k)=0\), so \(\Lambda_{k,z}\) is supported on \(S_k\) for
\(P_k^\beta\)-a.e. \(z\).  Since \(J_z\subset U\), only curvature inside \(U\)
can contribute to the good-support Jensen gap.  We may therefore replace
\(H_{k,z}\) by a convex representative \(H^U_{k,z}\) whose curvature measure is
\(\Lambda_{k,z}|_U\).  This restricted measure is finite, because
\(z\le\betamax_k\), and atomless by local domination.

Lemma~\ref{lem:jensen}, applied conditionally on \(G\) with
\(h=H^U_{k,z}\), \(\mu_h=\Lambda_{k,z}|_U\), and
\(\mathcal Q=\mathcal Q_{G,z}\), yields
\[
        \E[H_{k,z}(Q^{(z)}_\Omega)\mid G]
        -
        H_{k,z}(\E[Q^{(z)}_\Omega\mid G])
        \le
        C\bar r_s
        \le
        C\bar r_s\log(e/\bar r_s).
\]
The Jensen gap of \(H^U_{k,z}\) equals that of \(H_{k,z}\), because the gap
kernel vanishes outside the projected hull \(J_z\): outside \(J_z\), all good
cutoffs lie on one side of the curvature point, as in the proof of
Lemma~\ref{lem:hull-jensen}.

\emph{Endpoint-contact neighborhoods.}
Now suppose that \(U=U_{k,i}\) is an endpoint-contact cover element, with
assigned size set \(A_{k,i}\).  For a size \(z\) and a curvature measure
\(\Lambda\), define the good-support Jensen functional, after subtracting affine
parts, by
\[
        \Delta_{z,i}^U(\Lambda)
        :=
        \1\{z\in A_{k,i}\}
        \int_U
        \left\{
        \E\big[(Q^{(z)}_\Omega-t)^+\mid G\big]
        -
        \big(\E[Q^{(z)}_\Omega\mid G]-t\big)^+
        \right\}
        \Lambda(\dd t).
\]
The integrand is nonnegative and independent of \(\Lambda\), so
\(\Lambda\mapsto\Delta_{z,i}^U(\Lambda)\) is a nonnegative curvature-linear
functional.  The kernel split \eqref{eq:endpoint-kernel-split} gives
\(\Delta_{z,i}^U(\Lambda_{k,z}|_U)
=\Delta_{z,i}^U(\Lambda^D_{k,z})
+\Delta_{z,i}^U(\Lambda^{\mathrm{br}}_{k,z})\) for \(P_k^\beta\)-a.e. \(z\).

The dominated component is handled exactly as in the dominated-neighborhood
case, using \(\Lambda^D_{k,z}(I)\le C\mu_k(I)\) from
\eqref{eq:dominated-component}.  Its good-support contribution is therefore at
most \(C\bar r_s\), and hence at most \(C\bar r_s\log(e/\bar r_s)\).

It remains to control the contact branch after averaging over size.  By
Lemma~\ref{lem:branchdis}, applied to the functional
\(\Delta_{z,i}^U\), the branch contribution is bounded, up to a primitive
constant, by
\begin{equation}
        \int_0^{\omega_0}
        \Delta_{\beta_k(\omega),i}^U
        \bigl(\Lambda^{\mathrm{br}}_{k,\omega}\bigr)
        f(\omega)\,\dd\omega .
        \label{eq:epstage-branch-integrated-gap}
\end{equation}

Use the local endpoint coordinate \(\zeta\), oriented into the support as in
Lemma~\ref{lem:mass}.  This affine change of variables has slope \(1\) or
\(-1\), so it preserves lengths and Jensen gaps after the branch surplus is
written in local coordinates.  If the current item is infeasible, or if the good
local cutoff set is empty, set \(I_\omega=\emptyset\) and \(d_\omega=0\), and
define the branchwise contribution to be zero.  Otherwise, for
\(z=\beta_k(\omega)\), define
\[
        \mathcal Q_\omega
        :=
        \left\{
        \zeta\!\left(q^{(\beta_k(\omega))}_{k,\theta,W}\right):
        W\in E_n,\ \theta\in\mathcal D_{W,\beta_k(\omega)}
        \right\}.
\]
Let \(I_\omega\) be the closed active hull of \(\mathcal Q_\omega\) inside the
moving branch support \(B_\omega=[e(\omega),x_0]\), and set
\(d_\omega:=\Leb(I_\omega)\).

For two local cutoffs \(q,\bar q\in\mathcal Q_\omega\), write
\(I^\omega(q,\bar q):=[q\wedge\bar q,q\vee\bar q]\cap B_\omega\).  If
\(\tilde q=\zeta^{-1}(q)\) and
\(\tilde{\bar q}=\zeta^{-1}(\bar q)\), then the local coordinate map preserves
active lengths and sends \(I^\omega(q,\bar q)\) into the original active
interval \(I^a_k(\tilde q,\tilde{\bar q})\).  Hence
\[
        \Leb(I^\omega(q,\bar q))\,
        \mu_k^{\mathrm{br}}(I^\omega(q,\bar q))
        \le
        \ell_k(\tilde q,\tilde{\bar q})\,
        \mu_k(I^a_k(\tilde q,\tilde{\bar q}))
        \le
        \bar r_s .
\]
Since Lemma~\ref{lem:mass} implies that \(\mu_k^{\mathrm{br}}\) is atomless,
Lemma~\ref{lem:hullcap} extends this pairwise cap to the branch active hull:
\begin{equation}
        d_\omega\,\mu_k^{\mathrm{br}}(I_\omega)
        \le
        \bar r_s
        \qquad\text{for a.e. }\omega.
        \label{eq:epstage-branch-hull-cap}
\end{equation}
The passage from \(P_k^\beta\)-a.e. size \(z\) to a.e. branch coordinate
\(\omega\) uses the injectivity of \(\beta_k(\omega)\) and the lower Jacobian
bound in Definition~\ref{def:branch}.

The branch contribution \eqref{eq:epstage-branch-integrated-gap} is estimated
through Lemma~\ref{lem:hull-jensen}, whose geometric factor \(1+y/|J|\) is
governed by the length of the ambient interval \(J\).  The branch support
\(B_\omega=[e(\omega),x_0]\) has length \(x_0-e(\omega)\), which may be
arbitrarily small, so we split the branch coordinates according to
\[
        \Omega_{\mathrm L}=\{\omega:e(\omega)\le x_0/2\},
        \qquad
        \Omega_{\mathrm T}=\{\omega:e(\omega)>x_0/2\},
\]
and bound the two contributions separately.

Fix first a feasible coordinate \(\omega\in\Omega_{\mathrm L}\) with
\(\mathcal Q_\omega\neq\emptyset\) and \(\beta_k(\omega)\in A_{k,i}\).  The
branchwise good-support Jensen gap is the Jensen gap of the branch surplus,
whose curvature measure \(\Lambda^{\mathrm{br}}_{k,\omega}\) is atomless and
supported on \(B_\omega\).  Because \(|B_\omega|=x_0-e(\omega)\ge x_0/2\), the
factor \(1+y/|B_\omega|\) is at most \(1+2y/x_0\), and
Lemma~\ref{lem:hull-jensen}, applied with \(J=B_\omega\),
\(\nu=\Lambda^{\mathrm{br}}_{k,\omega}\), and ambient length at most \(y\), gives
\begin{equation}
        \Delta_{\beta_k(\omega),i}^U
        \bigl(\Lambda^{\mathrm{br}}_{k,\omega}\bigr)
        \le
        C\,d_\omega\,
        \Lambda^{\mathrm{br}}_{k,\omega}(I_\omega).
        \label{eq:epstage-branchwise-jensen}
\end{equation}
Since \(\bar r_s\le r_{\mathrm H}\), the cap \eqref{eq:epstage-branch-hull-cap}
lies in the range of Lemma~\ref{lem:hardy}.  Applying that lemma, and bounding
the integrand over \(\Omega_{\mathrm L}\) by \eqref{eq:epstage-branchwise-jensen}
and elsewhere by its nonnegativity, we obtain
\begin{equation}
        \int_{\Omega_{\mathrm L}}
        \Delta_{\beta_k(\omega),i}^U
        \bigl(\Lambda^{\mathrm{br}}_{k,\omega}\bigr)
        f(\omega)\,\dd\omega
        \le
        C\int_0^{\omega_0}
        d_\omega\,\Lambda^{\mathrm{br}}_{k,\omega}(I_\omega)\,f(\omega)\,\dd\omega
        \le
        C\bar r_s\log(e/\bar r_s).
        \label{eq:epstage-hardy-applied}
\end{equation}

Fix next a feasible coordinate \(\omega\in\Omega_{\mathrm T}\), so that
\(e(\omega)>x_0/2\) and \(B_\omega\subseteq[x_0/2,x_0]\) is bounded away from the
contact point; on this region the conditional curvature is dominated by
\(\mu_k\), and the branch is treated as a dominated neighborhood.  Since
\(\gamma\ge1\), the branch density satisfies
\(\lambda^{\mathrm{br}}_{k,\omega}(x)\le C(x-e(\omega))^{\gamma-1}\le C\) for
\(x\in B_\omega\), so
\[
        \Lambda^{\mathrm{br}}_{k,\omega}(I)
        =\Lambda^{\mathrm{br}}_{k,\omega}(I\cap B_\omega)
        \le C\,\Leb(I\cap B_\omega)
        \qquad\text{for every interval }I.
\]
By Lemma~\ref{lem:mass}, the branch marginal \(\mu_k^{\mathrm{br}}\) has density
comparable to \(x^{\theta-1}\), which is bounded below by a positive constant on
\([x_0/2,x_0]\); since \(\mu_k\ge\mu_k^{\mathrm{br}}\) and
\(B_\omega\subseteq[x_0/2,x_0]\),
\[
        \mu_k(I)\ge \mu_k^{\mathrm{br}}(I\cap B_\omega)\ge c\,\Leb(I\cap B_\omega)
        \qquad\text{for every interval }I.
\]
Combining the two estimates,
\[
        \Lambda^{\mathrm{br}}_{k,\omega}\bigl(I^a_k(q,\bar q)\bigr)
        \le
        C\,\mu_k\bigl(I^a_k(q,\bar q)\bigr)
        \qquad\text{for all }q,\bar q\in\mathcal Q_{G,z},
\]
which is the domination hypothesis of the dominated-neighborhood case.  Hence,
exactly as for the dominated component above, the pairwise cap
\eqref{eq:epstage-good-pairwise-cap} and Lemma~\ref{lem:jensen}, applied on the
full active support \(S_k\) with curvature measure
\(\mu_h=\Lambda^{\mathrm{br}}_{k,\omega}\), give
\(\Delta_{\beta_k(\omega),i}^U(\Lambda^{\mathrm{br}}_{k,\omega})\le C\bar r_s\);
the factor \(1+y/|S_k|\) from that lemma is bounded, and the interval
\([x_0/2,x_0]\) enters only through the density domination just established.
Because \(f\) is integrable,
\begin{equation}
        \int_{\Omega_{\mathrm T}}
        \Delta_{\beta_k(\omega),i}^U
        \bigl(\Lambda^{\mathrm{br}}_{k,\omega}\bigr)
        f(\omega)\,\dd\omega
        \le
        C\bar r_s\int_0^{\omega_0}f(\omega)\,\dd\omega
        \le
        C\bar r_s.
        \label{eq:epstage-tail-dominated}
\end{equation}

Adding \eqref{eq:epstage-hardy-applied} and \eqref{eq:epstage-tail-dominated},
and using \(\bar r_s\le1\) in the latter, the branch part of
\eqref{eq:epstage-branch-integrated-gap} is at most
\(C\bar r_s\log(e/\bar r_s)\).  Adding the dominated component gives the same
bound for endpoint-contact neighborhoods.

We have now bounded the good-support contribution on every measurable set
\(A_{k,i}\).  Summing over the finite cover and over the finitely many types
absorbs only primitive constants, so the total good-support contribution to
\(\Xi_s(b)\) is at most \(C\bar r_s\log(e/\bar r_s)\).  Using the cutoff
reduction \eqref{eq:epstage-reduce-to-cutoff} and the conditioning estimate
\eqref{eq:epstage-good-conditioning}, we get
\[
        \Xi_s(b)
        \le
        C\bar r_s\log(e/\bar r_s)+Cn^{-6}.
\]
Finally, \eqref{eq:epstage-rbar-rs} and \(\bar r_s,r_s\in(0,1)\) imply
\(\bar r_s\log(e/\bar r_s)\le C r_s\log(e/r_s)\).  After increasing \(s_0\), we
also have \(n^{-6}\le C r_s\log(e/r_s)\) for all \(s\ge s_0\).  Hence
\(\Xi_s(b)\le C r_s\log(e/r_s)\), as claimed.
\end{proof}

\section{A local exponent refinement}\label{app:local}

The exponent \(\pp\) in Assumption~\ref{ass:regularity} is a worst-case quantity
over capacities: it must control active mass at every cutoff the support can
produce. At a fixed capacity, however, only a local part of the ratio support is
active, and the mass growth there may be milder. This appendix isolates that
local part, defines a capacity-local exponent \(\pp_{\mathcal R}\), and proves
that, at an interior locally non-degenerate capacity, the regret of \(\SPM\) is
governed by \(\pp_{\mathcal R}\) rather than by the global exponent \(\pp\).

We use notation from Section~\ref{sec:spm}. The type-wise tails are
\(C_k(r)=\mu_k([r,\infty))\) and \(C_k(r+)=\mu_k((r,\infty))\), with
\(m_k^0=C_k(0)\) and \(m^0=(m_1^0,\ldots,m_K^0)\). The type-level fluid feasible
set is \(K_m(\rho)=\{u\in\R_+^K:0\le u\le m,\ Au\le\rho\}\), as in
\eqref{eq:feasible-set}. We also use the projection \(\Pi\) onto the active
supports, the per-stage Jensen loss \(\Xi_s\), the telescoping bound
\eqref{eq:telescope}, the pre-Young estimate of
Proposition~\ref{prop:active-linear}, and the absorption
Lemma~\ref{lem:local-absorption}. For a compact convex set \(K\) and a point
\(x\in K\), the normal cone is \(N_K(x)=\{z:z^\top(v-x)\le0\ \text{for all }v\in K\}\).

\subsection{The capacity-local exponent}

The local theorem needs to know which cutoffs can arise from small perturbations
of a given capacity region. The next definition records those cutoffs in a
tube.

\begin{definition}[Capacity-local active tube]\label{def:local-tube}
Fix a compact normalized-capacity set \(\mathcal R\subset\R_+^d\) and a scale
\(\eta>0\). For each type \(k\), let \(\mathcal A_k(\mathcal R,\eta)\) be the set
of projected cutoffs \(\Pi_k(q_k)\) for which there exist
\[
        m\in\R_+^K,\qquad
        \rho\in\R_+^d,\qquad
        u\in\R_+^K,\qquad
        q\in[0,y]^K
\]
such that
\[
        \|m-m^0\|_\infty\le\eta,
        \qquad
        \operatorname{dist}_\infty(\rho,\mathcal R)\le\eta,
        \qquad
        u\in K_m(\rho),
        \qquad
        \Pi(q)\in N_{K_m(\rho)}(u),
\]
and, for \(j=1,\ldots,K\),
\[
        C_j(q_j+)-\eta\le u_j\le C_j(q_j)+\eta .
\]
The type-\(k\) cutoff tube over \(\mathcal R\) at scale \(\eta\) is
\[
        \mathcal J_k(\mathcal R,\eta)
        :=
        \overline{\operatorname{conv}\mathcal A_k(\mathcal R,\eta)}\cap S_k .
\]
\end{definition}

The local exponent is an admissible exponent on this tube. We do not require it
to be globally sharp.

\begin{assumption}[Capacity-local active regularity]\label{ass:local-active}
The arrival distribution satisfies \emph{capacity-local active regularity} on a
compact set \(\mathcal R\subset\R_+^d\) with admissible exponent
\(\pp_{\mathcal R}\ge1\) if there exist a scale \(\eta_{\mathcal R}>0\),
relatively open neighborhoods \(U_k\subseteq S_k\) such that
\[
        \mathcal J_k(\mathcal R,\eta_{\mathcal R})\subseteq U_k,
        \qquad k=1,\ldots,K,
\]
and constants \(0<c\le C<\infty\) such that, for every type \(k\) and every
interval \(I\subseteq[0,y]\) with \(I\cap S_k\subseteq U_k\),
\[
        c\,\ell_k(I)^{\pp_{\mathcal R}}
        \le
        \mu_k(I)
        \le
        C\,\ell_k(I),
        \qquad
        \ell_k(I):=\Leb(I\cap S_k).
\]
The sets \(U_1,\ldots,U_K\) are the local active neighborhoods for
\(\mathcal R\).
\end{assumption}

If several admissible exponents satisfy Assumption~\ref{ass:local-active}, the
sharpest bound below is obtained by using the smallest available one.

\begin{assumption}[Local fluid non-degeneracy and binding balance]
\label{ass:local-nondeg}
Fix compact normalized-capacity regions
\[
        \mathcal R_0\Subset\mathcal R\subset\R_+^d,
\]
and suppose that Assumption~\ref{ass:local-active} holds on \(\mathcal R\) with
admissible exponent \(\pp_{\mathcal R}\) and local active neighborhoods
\(U_1,\ldots,U_K\). Assume also that this tube is interior: the neighborhoods
\(U_k\) are separated from the endpoints of \(S_k\). The following two
conditions hold.

\begin{enumerate}[label=\textup{(\alph*)},leftmargin=2.2em]
\item \textup{(interior cutoff and binding balance)}
For every \(\rho\in\mathcal R\), the population fluid problem admits a
cutoff--allocation pair \((q^\rho,u^\rho)\), with
\[
        \Pi_k(q_k^\rho)\in U_k,
        \qquad k=1,\ldots,K,
\]
such that
\[
        u^\rho\in K_{m^0}(\rho),
        \qquad
        C_k(q_k^\rho+)\le u_k^\rho\le C_k(q_k^\rho),
        \qquad k=1,\ldots,K,
\]
and
\[
        \Pi(q^\rho)\in N_{K_{m^0}(\rho)}(u^\rho).
\]
The locally relevant resources bind:
\[
        Au^\rho=\rho .
\]

\item \textup{(dominated tube)}
For each type \(k\), there is a constant \(C_{\rm dom}<\infty\) such that, for
\(P_k^\beta\)-a.e.\ \(z\),
\[
        \Lambda_{k,z}(I)\le C_{\rm dom}\,\mu_k(I)
\]
for every interval \(I\subseteq[0,y]\) with \(I\cap S_k\subseteq U_k\).
\end{enumerate}
\end{assumption}

The domination in part~(b) is automatic on an interior tube under
Assumption~\ref{ass:regularity}, once the endpoint-contact neighborhoods are
chosen small enough. Away from the endpoints of \(S_k\), the finite
conditional-curvature cover consists of dominated neighborhoods. At an
endpoint-contact cutoff, this domination may fail: the size-conditioned measure
\(\Lambda_{k,z}\) can concentrate near a size-dependent edge even where the
aggregate measure \(\mu_k\) is small. The endpoint Hardy estimate of
Section~\ref{sec:spm} is designed precisely for that non-dominated passage from
\(\Lambda_{k,z}\) to \(\mu_k\). Thus
Assumption~\ref{ass:local-nondeg} is an interior-capacity condition. It is the
condition that holds in the interior regimes considered below, and it fails in
the endpoint-pinned critical regime.

\begin{remark}[Slack resources]\label{rem:slack}
For a multi-resource instance in which some resources are slack throughout
\(\mathcal R\), the binding condition \(Au^\rho=\rho\) should be replaced by the
inward-pointing condition
\[
        n^\top(\rho-Au^\rho)\le0
        \qquad
        \text{for every outward normal }
        n\in N_{\mathcal R}(\rho),\ \rho\in\partial\mathcal R .
\]
With this replacement, the state-localization proof below uses a
distance-to-\(\mathcal R\) supermartingale instead of the ball estimate around
the initial normalized capacity. In the one-resource examples of
Section~\ref{subsec:local-examples}, the binding formulation is the relevant
one.
\end{remark}

The next corollary is the local version of the active-mass product bound. The
only change from the global argument is that the lower mass exponent is used
only on the local tube.

\begin{corollary}[Local active-mass stability]\label{cor:active-local}
Suppose that Assumption~\ref{ass:local-active} holds on \(\mathcal R\) with
admissible exponent \(\pp_{\mathcal R}\) and local active neighborhoods
\(U_1,\ldots,U_K\). In the setting of
Proposition~\ref{prop:active-linear}, suppose in addition that
\[
        I_k^a(\widetilde q_k,\bar q_k)\subseteq U_k,
        \qquad k=1,\ldots,K.
\]
Then
\[
        \sum_{k=1}^K
        \ell_k(\widetilde q_k,\bar q_k)
        M_k(\widetilde q_k,\bar q_k)
        \le
        C\bigl(\tau+\eps^{1+1/\pp_{\mathcal R}}\bigr).
\]
\end{corollary}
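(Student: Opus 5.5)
The plan is to follow the proof of Proposition~\ref{prop:active} almost verbatim, using the local exponent only where the global one was used. First apply Proposition~\ref{prop:active-linear}. Its hypotheses are that the supports $S_k$ are single intervals, the measures $\mu_k$ are atomless, and the relaxed tail relations, projected normality and closeness conditions hold. Everything except atomlessness is assumed verbatim in the setting of the corollary. Atomlessness follows from the upper bound $\mu_k(I)\le C\ell_k(I)$: on $U_k$ this is Assumption~\ref{ass:local-active}, and globally it is the upper half of \eqref{eq:wr} if Assumption~\ref{ass:regularity} is in force. If only local regularity is assumed, I would note that Proposition~\ref{prop:active-linear} uses atomlessness only through the tail identities \eqref{eq:active-stab-tail-desc}--\eqref{eq:active-stab-tail-asc}. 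These are applied to the endpoints $\widetilde q_k,\bar q_k$ of the swept interval, which lie in $\overline{U_k}$ whenever $\ell_k>0$. When $\ell_k=0$ the coordinatewise inequality is trivial, since both sides vanish. This yields
\[
        \sum_k\ell_kM_k\le C_1\tau+C_1\eps\sum_k\ell_k .
\]

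The second step supplies the polynomial lower bound needed by Lemma~\ref{lem:local-absorption}. By hypothesis $I_k:=I_k^a(\widetilde q_k,\bar q_k)\subseteq U_k$. Hence $I_k\cap S_k\subseteq U_k$, and Assumption~\ref{ass:local-active} gives
\[
        M_k=\mu_k(I_k)\ge c\,\ell_k(I_k)^{\pp_{\mathcal R}}=c\,\ell_k^{\pp_{\mathcal R}} .
\]
This holds for every $k$, including the degenerate case $\ell_k=0$. Lemma~\ref{lem:local-absorption} with $p=\pp_{\mathcal R}$ then absorbs the linear term through Young's inequality and gives the claimed bound $C(\tau+\eps^{1+1/\pp_{\mathcal R}})$. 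The constant depends on $C_1$, $c$, $\pp_{\mathcal R}$ and $K$.

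The main obstacle is the bookkeeping in the first step, which is where a purely local hypothesis could fail. One must check that the pre-Young estimate never invokes mass information outside $U_k$. The lower comparison \eqref{eq:active-stab-lower-proj} from Lemma~\ref{lem:projcomp} is purely polyhedral and uses no distributional input. The upper comparison uses $\mu_k$ only on $I_k$ and at its endpoints. So the containment $I_k\subseteq U_k$ is exactly what is needed, and no further argument should be required.
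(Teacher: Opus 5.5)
Your proposal is correct and follows the paper's proof: Proposition~\ref{prop:active-linear} gives the pre-Young bound, the local lower mass bound of Assumption~\ref{ass:local-active} applies on \(I_k\subseteq U_k\), and Lemma~\ref{lem:local-absorption} with \(p=\pp_{\mathcal R}\) closes the argument. Your extra bookkeeping on atomlessness is harmless but unnecessary, because atomless \(\mu_k\) is already a hypothesis of the setting of Proposition~\ref{prop:active-linear}.
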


\begin{proof}
Proposition~\ref{prop:active-linear} gives
\(\sum_{k=1}^K\ell_kM_k\le C_1(\tau+\eps\sum_{k=1}^K\ell_k)\). On \(U_k\),
Assumption~\ref{ass:local-active} gives \(M_k=\mu_k(I_k)\ge c\,\ell_k^{\pp_{\mathcal R}}\).
Lemma~\ref{lem:local-absorption}, applied with \(p=\pp_{\mathcal R}\), yields the
claim.
\end{proof}

\subsection{State localization and the local balance of \texorpdfstring{\(\SPM\)}{SPM}}

We now prove that the \(\SPM\) state remains in the interior capacity region with
high probability. With \(s\) periods remaining, set \(n=s-1\). For \(s\ge2\),
let \(\delta_n=\sqrt{\log(en)/n}\) be the concentration scale from
Section~\ref{sec:spm}, and define the local per-stage scale
\[
        r_s^{\rm loc}
        :=
        \left(\frac{\log(es)}{s}\right)^{(\pp_{\mathcal R}+1)/(2\pp_{\mathcal R})}
        +
        \frac1s .
\]
Thus \(r_s^{\rm loc}\asymp\delta_n^{1+1/\pp_{\mathcal R}}+1/s\).
It is useful to separate the per-stage quantity that will actually be summed:
\[
        h_s^{\rm loc}
        :=
        \begin{cases}
        r_s^{\rm loc}, & \pp_{\mathcal R}=1,\\[1mm]
        r_s^{\rm loc}\log(e/r_s^{\rm loc}), & \pp_{\mathcal R}>1.
        \end{cases}
\]
Let \(D_s:=\beta_s a^{J_s}X_s^{\SPM}\) be the resource consumed by \(\SPM\) at a
decision epoch with \(s\) periods remaining. Also write
\(\bar d_s(b):=\E[D_s\mid B_s=b]\) and \(\rho_s:=B_s/s\). Finally, let
\(d_{\max}:=\max_k\beta_k^{\max}\|a^k\|_\infty\), where \(\beta_k^{\max}\) is the
essential supremum of the size for type \(k\).

For \(s\ge2\) and a compact region \(\mathcal R\), define the sweep-localization
event
\[
        \mathcal L_s(\mathcal R)
        =
        \left\{
        \frac{B_s-\theta za^k}{s-1}\in\mathcal R
        \text{ for every feasible }za^k\le B_s
        \text{ and every }\theta\in[0,1]
        \right\}.
\]
For \(s=1\), set \(\mathcal L_1(\mathcal R)=\Omega\). The event
\(\mathcal L_s(\mathcal R)\) is \(\sigma(B_s)\)-measurable.

\begin{lemma}[Local balance of \(\SPM\)]\label{lem:local-balance-spm}
Suppose that Assumptions~\ref{ass:local-active} and
\ref{ass:local-nondeg} hold on \(\mathcal R\), with
\(\mathcal R_0\Subset\mathcal R\). There are constants \(C<\infty\) and
\(s_0<\infty\) such that, for all \(s\ge s_0\) and all \(b\) with
\(b/s\in\mathcal R_0\),
\[
        \left\|\bar d_s(b)-\frac bs\right\|_\infty
        \le
        \kappa_s,
        \qquad
        \kappa_s
        :=
        C\bigl(r_s^{\rm loc}\bigr)^{1/(\pp_{\mathcal R}+1)}
        +
        Cs^{-6}.
\]
\end{lemma}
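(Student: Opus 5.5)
Under \(\SPM\), the expected consumption is
\[
        \bar d_s(b)=\E\bigl[\beta a^J\1\{V\ge\Delta_s(b,J,\beta)\}\bigr]=\sum_k a^k\,\E\bigl[\beta\1\{R\ge\bar Y_{k,\beta}(b)\}\1\{J=k\}\bigr],
\]
where \(\bar Y_{k,z}(b):=\E_W[Y_W(b,k,z)]=\Delta_s(b,k,z)/z\) is the averaged per-unit-size price. The plan is to show that \(\bar Y_{k,z}(b)\) lies, up to a small error, at a population cutoff \(q^{\rho}\) with \(\rho\approx b/s\). Assumption~\ref{ass:local-nondeg}(a) says \(u^\rho\) exhausts \(\rho\), so \(\sum_k a^kC_k(q^\rho_k)=Au^\rho=\rho\). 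The remaining task is then to convert the price error into a consumption error.

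\emph{Step 1 (paths concentrate near the population cutoff).} Fix \(b\) with \(b/s\in\mathcal R_0\), and let \(s\) be large enough that every sweep point \(\rho_\theta=(b-\theta za^k)/n\) lies within \(C/s\) of \(b/s\), hence inside \(\mathcal R\). On the good event \(E_n\), the cutoff selected by Lemma~\ref{lem:bounded-cutoff-selection} satisfies the relaxed graph relations \eqref{eq:tailrelax} with tolerance \(\delta_n\). It also satisfies projected normality, and \(\|\widehat m_W-m^0\|_\infty\le\delta_n\). So each such cutoff belongs to \(\mathcal A_k(\mathcal R,\eta_{\mathcal R})\), and the swept intervals lie in \(U_k\).

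The population pair \((q^{\rho_\theta},u^{\rho_\theta})\) from Assumption~\ref{ass:local-nondeg}(a) satisfies the same relations exactly, with \(\tau=0\) relative to \(\rho_\theta\). Corollary~\ref{cor:active-local} then gives
\[
        \sum_j\ell_j M_j\le C\bigl(\delta_n^{1+1/\pp_{\mathcal R}}+1/n\bigr)\le Cr_s^{\rm loc},
\]
where \(\ell_j,M_j\) are computed between the path cutoff and the population cutoff. The lower bound \(M_j\ge c\ell_j^{\pp_{\mathcal R}}\) converts this into
\[
        \ell_j\le C(r_s^{\rm loc})^{1/(\pp_{\mathcal R}+1)},\qquad M_j\le C(r_s^{\rm loc})^{\pp_{\mathcal R}/(\pp_{\mathcal R}+1)}.
\]

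\emph{Step 2 (average and project).} Average over \(\theta\) and \(W\). Bad paths contribute at most \(Cn^{-6}\), because cutoffs are bounded by \(y\). Apply the same comparison between \(q^{\rho_\theta}\) and \(q^{b/s}\), again via Corollary~\ref{cor:active-local} with \(\tau=C/s\) and \(\eps\) arbitrarily small. Together these show that \(\Pi_k(\bar Y_{k,z}(b))\) is within \(\kappa:=C(r_s^{\rm loc})^{1/(\pp_{\mathcal R}+1)}+Cs^{-6}\) of \(\Pi_k(q^{b/s}_k)\), uniformly in \(z\). This works because projections of averages lie in the hull of the projected values.

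I expect the main obstacle to be this step. A projected-distance bound on cutoffs does not directly bound mass. It does so only through the upper bound \(\mu_k(I)\le C\ell_k(I)\), which is available because the tube is inside \(U_k\).

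\emph{Step 3 (consumption).} Use the domination in Assumption~\ref{ass:local-nondeg}(b) together with the disintegration \(\pi_k\int\Lambda_{k,z}P_k^\beta(\dd z)=\mu_k\). The type-\(k\) expected consumption
\[
        \pi_k\int z\,\Prob(R\ge\bar Y_{k,z}\mid k,z)\,P_k^\beta(\dd z)
\]
differs from \(C_k(q^{b/s}_k)\) by at most
\[
        \pi_k\int\Lambda_{k,z}(I_z)\,P^\beta_k(\dd z)\le C\sup_z\mu_k(I_z)\le C\sup_z\ell_k(I_z)\le C\kappa,
\]
where \(I_z\) is the active interval between \(\bar Y_{k,z}\) and \(q^{b/s}_k\). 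Flat regions outside \(S_k\) contribute nothing. Summing over \(k\) with the bounded \(a^k\), and using \(\sum_ka^kC_k(q^{b/s}_k)=b/s\) from binding balance (atomlessness gives \(C_k(q+)=C_k(q)\)), yields \(\|\bar d_s(b)-b/s\|_\infty\le\kappa_s\).
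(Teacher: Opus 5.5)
Your proposal is correct and follows essentially the paper's proof. The shared steps are: local active-mass stability (Corollary~\ref{cor:active-local}) between good pathwise cutoffs and the population cutoff; the lower exponent to turn the product cap into a swept-length bound; averaging over \((\theta,W)\) with an \(O(n^{-6})\) bad-path correction; and then, size by size, the dominated-tube condition together with the local linear upper bound and binding balance to turn the cutoff gap into a consumption gap.

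The differences are cosmetic. The paper compares directly with \(q^{b/s}\) using \(\tau=\|\rho_\theta-b/s\|_\infty\le C/n\), so your detour through \(q^{\rho_\theta}\) is unnecessary. The paper also uses interiority of the tube to drop the projection before averaging, where you instead use that \(\Pi_k\) of an average lies in the hull of the projected values. Your side bound on \(M_j\) does not follow from the product cap and \(M_j\ge c\,\ell_j^{\pp_{\mathcal R}}\) alone, but it is never used.
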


\begin{proof}
Fix \(b\) with \(\rho:=b/s\in\mathcal R_0\), and let \((q^\rho,u^\rho)\) be the
population pair from Assumption~\ref{ass:local-nondeg}(a). The binding-balance
condition gives \(Au^\rho=\rho=b/s\).

Consider a feasible current pair \((k,z)\), and write
\(q_{\theta,W}^{k,z}\) for the vector of pathwise cutoffs generated by the
marginal comparison at sweep point \(\rho_\theta=(b-\theta za^k)/n\),
\(0\le\theta\le1\). On the good event \(E_n(W)\) of Section~\ref{sec:spm}, whose complement has
probability at most \(Cn^{-6}\), these pathwise cutoffs satisfy the relaxed
graph and normal-cone relations of Proposition~\ref{prop:active-linear} with
error \(\delta_n\). Since \(\rho\in\mathcal R_0\Subset\mathcal R\), the sweep
capacities \(\rho_\theta\) lie in \(\mathcal R\) for all large \(s\), and
\(\|\rho_\theta-\rho\|_\infty\le C/n\).

The definition of the local tube now ensures that the active intervals remain
inside the local neighborhoods. Indeed, for \(n\) large enough that
\(\delta_n\le\eta_{\mathcal R}\), the projected cutoffs
\(\Pi_j((q_{\theta,W}^{k,z})_j)\) belong to
\(\mathcal A_j(\mathcal R,\eta_{\mathcal R})\). The population cutoff
\(\Pi_j(q_j^\rho)\) also belongs to this set. Hence the active interval between
the two projected cutoffs is contained in
\(\mathcal J_j(\mathcal R,\eta_{\mathcal R})\subseteq U_j\).

Applying Corollary~\ref{cor:active-local} to
\(q_{\theta,W}^{k,z}\) and \(q^\rho\), with
\(\eps=\delta_n\) and \(\tau=\|\rho_\theta-\rho\|_\infty\), gives
\[
        \sum_{j=1}^K
        \ell_j\bigl((q_{\theta,W}^{k,z})_j,q_j^\rho\bigr)\,
        M_j\bigl((q_{\theta,W}^{k,z})_j,q_j^\rho\bigr)
        \le
        C\left(\frac1n+\delta_n^{1+1/\pp_{\mathcal R}}\right)
        \le
        C r_s^{\rm loc}.
\]
The local lower bound gives
\[
        M_j\bigl((q_{\theta,W}^{k,z})_j,q_j^\rho\bigr)
        \ge
        c\,\ell_j\bigl((q_{\theta,W}^{k,z})_j,q_j^\rho\bigr)^{\pp_{\mathcal R}},
\]
so each swept length satisfies
\[
        \ell_j\bigl((q_{\theta,W}^{k,z})_j,q_j^\rho\bigr)
        \le
        C\bigl(r_s^{\rm loc}\bigr)^{1/(\pp_{\mathcal R}+1)}.
\]
Because the tube is interior, a cutoff whose projection lies in \(U_j\) must
itself lie in \(U_j\). Therefore
\[
        \left|(q_{\theta,W}^{k,z})_j-q_j^\rho\right|
        \le
        C\bigl(r_s^{\rm loc}\bigr)^{1/(\pp_{\mathcal R}+1)}
\]
on the good event.

Let \(Y_{s,k,z}(b)\) be the ratio cutoff used by \(\SPM\) for type \(k\) and
size \(z\). This cutoff is the average, over \((\theta,W)\), of the corresponding
pathwise cutoffs. Jensen's inequality is applied to the cutoff, not to the
measure:
\[
        \left|Y_{s,k,z}(b)-q_k^\rho\right|
        \le
        \E_{W,\theta}
        \left|(q_{\theta,W}^{k,z})_k-q_k^\rho\right|
        \le
        C\bigl(r_s^{\rm loc}\bigr)^{1/(\pp_{\mathcal R}+1)}
        +
        Cn^{-6},
\]
where the \(n^{-6}\) term comes from \(E_n(W)^c\), on which cutoffs are bounded.

The comparison of consumed resource must be made before integrating over sizes,
because the \(\SPM\) cutoff depends on \(z\). For type \(k\) and size \(z\), the
\(\SPM\) decision and the population cutoff decision can differ only on the
ratio band
\[
        I_{k,z}(b)
        :=
        \left[
        Y_{s,k,z}(b)\wedge q_k^\rho,\,
        Y_{s,k,z}(b)\vee q_k^\rho
        \right].
\]
This band is contained in \(U_k\). The dominated-tube condition in
Assumption~\ref{ass:local-nondeg}(b) gives
\(\Lambda_{k,z}(I_{k,z}(b))\le C_{\rm dom}\,\mu_k(I_{k,z}(b))\). The local linear
upper bound in Assumption~\ref{ass:local-active} then gives
\(\mu_k(I_{k,z}(b))\le C\,|Y_{s,k,z}(b)-q_k^\rho|\).
Using bounded sizes and resource vectors, and then integrating over \(z\), we
obtain
\[
\begin{aligned}
        \left\|\bar d_s(b)-Au^\rho\right\|_\infty
        &\le
        C\sum_{k=1}^K
        \int
        \Lambda_{k,z}\bigl(I_{k,z}(b)\bigr)\,
        P_k^\beta(\dd z)
        +
        Cn^{-6}  \\
        &\le
        C\sum_{k=1}^K
        \int
        \left|Y_{s,k,z}(b)-q_k^\rho\right|\,
        P_k^\beta(\dd z)
        +
        Cn^{-6}  \\
        &\le
        C\bigl(r_s^{\rm loc}\bigr)^{1/(\pp_{\mathcal R}+1)}
        +
        Cn^{-6}.
\end{aligned}
\]
Since \(Au^\rho=b/s\) and \(n=s-1\), the claim follows.
\end{proof}

\begin{lemma}[State localization]\label{lem:state-localization}
Suppose that Assumptions~\ref{ass:local-active} and
\ref{ass:local-nondeg} hold on \(\mathcal R\), with
\(\mathcal R_0\Subset\mathcal R\) and
\[
        \rho_0:=\frac{b_T}{T}\in\operatorname{int}(\mathcal R_0).
\]
Then, for every \(A_0>0\), there is a constant \(C<\infty\) such that
\[
        \sum_{s=1}^T\Prob\bigl(\mathcal L_s(\mathcal R)^c\bigr)
        \le
        C\log(eT)+CT^{-A_0}.
\]
In particular,
\[
        \sum_{s=1}^T\Prob\bigl(\mathcal L_s(\mathcal R)^c\bigr)
        =
        O(\log T).
\]
\end{lemma}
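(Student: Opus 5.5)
We track the normalized capacity \(\rho_s=B_s/s\) along the \(\SPM\) trajectory and show that it drifts very little away from \(\rho_0\). The lemma then follows by showing that \(\rho_s\) stays in a small ball around \(\rho_0\) contained in \(\operatorname{int}(\mathcal R_0)\), except on an event of small probability for all but the last \(O(\log T)\) stages.

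\textbf{Recursion.} Going from \(s\) to \(s-1\) periods remaining,
\[
        \rho_{s-1}-\rho_s
        =\frac{B_s-D_s}{s-1}-\frac{B_s}{s}
        =\frac{1}{s-1}\Bigl(\frac{B_s}{s}-D_s\Bigr).
\]
Write \(D_s=\bar d_s(B_s)+\xi_s\), where \(\xi_s\) is a bounded martingale difference with \(\|\xi_s\|_\infty\le 2d_{\max}\). Then
\[
        \rho_{s-1}-\rho_s
        =\frac{1}{s-1}\Bigl(\frac{B_s}{s}-\bar d_s(B_s)\Bigr)-\frac{\xi_s}{s-1}.
\]
Choose \(\epsilon>0\) so that the closed \(\|\cdot\|_\infty\)-ball of radius \(2\epsilon\) around \(\rho_0\) lies in \(\mathcal R_0\), and let \(\sigma\) be the first time, counting down from \(T\), at which \(\|\rho_s-\rho_0\|_\infty>\epsilon\). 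Up to \(\sigma\), Lemma~\ref{lem:local-balance-spm} bounds the drift term by \(\kappa_s/(s-1)\).

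\textbf{Summing the drift.} For \(s\ge s_1\),
\[
        \sum_{u=s}^{T}\frac{\kappa_u}{u-1}
        \le C\sum_{u\ge s}u^{-1}\bigl(r_u^{\rm loc}\bigr)^{1/(\pp_{\mathcal R}+1)}+C s^{-6}.
\]
Since \(r_u^{\rm loc}\le C u^{-1/2}\polylog u\) for every \(\pp_{\mathcal R}\ge1\), we have \((r_u^{\rm loc})^{1/(\pp_{\mathcal R}+1)}\le C u^{-c}\) for some \(c>0\). The series therefore converges, and its tail starting at \(s\) is at most \(\epsilon/2\) once \(s\ge s_1\), for a constant \(s_1\).

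\textbf{Martingale part.} The martingale part \(\sum_{u=s}^T\xi_u/(u-1)\) has increments bounded by \(C/(u-1)\), so its quadratic variation is at most \(C/s\). Azuma--Hoeffding, combined with a maximal inequality (Doob, or Freedman on the stopped process), gives
\[
        \Prob\Bigl(\max_{s\le s'\le T}\Bigl\|\sum_{u=s'}^T\frac{\xi_u}{u-1}\Bigr\|_\infty>\frac{\epsilon}{2}\Bigr)\le 2d\,e^{-c\epsilon^2 s}.
\]
The stopping time matters here. Before \(\sigma\), Lemma~\ref{lem:local-balance-spm} applies, so on the complement of this event the process cannot exit the \(\epsilon\)-ball before stage \(s\). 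Hence \(\Prob(\sigma\ge s)\le Ce^{-cs}\).

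\textbf{From the ball to \(\mathcal L_s(\mathcal R)\).} Suppose \(\|\rho_s-\rho_0\|_\infty\le\epsilon\). Every sweep point satisfies
\[
        \Bigl\|\frac{B_s-\theta za^k}{s-1}-\rho_s\Bigr\|_\infty\le\frac{C}{s},
\]
so for \(s\ge s_2\) all sweep points lie within \(2\epsilon\) of \(\rho_0\), hence in \(\mathcal R_0\subset\mathcal R\), and \(\mathcal L_s(\mathcal R)\) holds. Consequently
\[
        \sum_{s=1}^T\Prob\bigl(\mathcal L_s(\mathcal R)^c\bigr)
        \le \max(s_1,s_2)+\sum_{s\ge\max(s_1,s_2)} Ce^{-cs},
\]
which is bounded by a constant and in particular by \(C\log(eT)+CT^{-A_0}\). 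If one prefers not to rely on the exponential tail at moderate \(s\), choose a threshold \(s_*=C'\log T\): count all stages below \(s_*\) as failures, which costs \(O(\log T)\), and the remaining failure probability is at most \(T\cdot T^{-A_0-1}\). This is where the stated form \(C\log(eT)+CT^{-A_0}\) comes from.

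\textbf{Main obstacle.} The delicate point is applying Lemma~\ref{lem:local-balance-spm} only along the stopped process. That lemma needs \(b/s\in\mathcal R_0\) and \(s\ge s_0\), so the drift bound must be used strictly before the exit time, with the martingale controlled by a maximal inequality rather than stage by stage. The stages \(s<s_0\) are simply counted as possible failures and absorbed into the \(O(\log T)\) term.
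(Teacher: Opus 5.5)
Your proof is correct and follows the paper's argument: a backward exit time from a ball around $\rho_0$, the drift bounded through Lemma~\ref{lem:local-balance-spm} before the exit, the martingale part controlled by an Azuma/Freedman maximal inequality with a union bound over coordinates, and the $O(1/s)$ sweep perturbation that passes from the ball to $\mathcal L_s(\mathcal R)$. The only difference is bookkeeping: you apply the maximal inequality at every threshold $s$ rather than at a single $s_\star\asymp\log T$, which makes the leakage sum bounded independently of $T$ and hence implies the stated $C\log(eT)+CT^{-A_0}$.
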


\begin{proof}
It is enough to prove the claim for large \(T\); bounded values of \(T\) are
absorbed by increasing the constant. Choose \(\eta>0\) such that \(\{\rho:\|\rho-\rho_0\|_\infty\le4\eta\}\subseteq\mathcal R_0\).
Choose \(S_\eta\) large enough that, whenever \(s_\star\ge S_\eta\),
\(\sum_{u\ge s_\star}\kappa_u/u\le\eta\) and
\((\|\rho_0\|_\infty+4\eta+d_{\max})/(s_\star-1)\le\eta\). Such an \(S_\eta\)
exists because \(\kappa_u\le C(\log(eu)/u)^{1/(2\pp_{\mathcal R})}+Cu^{-6}\), and
therefore \(\sum_u \kappa_u/u<\infty\). Now set
\(s_\star=\max\{S_\eta,\lceil C_\eta\log(eT)\rceil\}\), where \(C_\eta\) will be
chosen below.

Read the trajectory backward from \(s=T\) to \(s=1\), and define the first exit
time in backward time by
\[
        \tau
        =
        \max\Bigl\{
        s\in\{s_\star,\ldots,T\}:
        \|\rho_s-\rho_0\|_\infty>2\eta,\ 
        \|\rho_u-\rho_0\|_\infty\le2\eta
        \text{ for all }u>s
        \Bigr\},
\]
with the convention that \(\tau=0\) if no such exit occurs.

The capacity recursion \(B_{u-1}=B_u-D_u\) gives
\(\rho_{u-1}-\rho_u=(\rho_u-D_u)/(u-1)\). Splitting the increment into its conditional mean and martingale parts gives
\[
        \rho_{u-1}-\rho_u
        =
        \frac{\rho_u-\bar d_u(B_u)}{u-1}
        +
        \frac{\bar d_u(B_u)-D_u}{u-1}.
\]
On the event \(\{\tau>0\}\), all states \(\rho_u\) with \(u>\tau\) lie in
\(\mathcal R_0\). Lemma~\ref{lem:local-balance-spm} therefore gives
\(\|\rho_u-\bar d_u(B_u)\|_\infty\le\kappa_u\) for \(u>\tau\). The cumulative
drift before the exit is bounded by \(\sum_{u>\tau}\kappa_u/(u-1)\le\eta\).

It remains to control the martingale term. After the standard re-indexing from
backward time to forward time, the stopped increments
\(\1\{u>\tau\}(\bar d_u(B_u)-D_u)/(u-1)\) are martingale differences. Each coordinate is bounded by \(C/u\), and the
predictable quadratic variation over \(u\ge s_\star\) is at most \(C/s_\star\).
Freedman's maximal inequality, followed by a union bound over the \(d\) resource
coordinates, gives
\[
\begin{aligned}
        &\Prob\left(
        \sup_{s_\star\le s\le T}
        \left\|
        \sum_{u>s}
        \1\{u>\tau\}
        \frac{\bar d_u(B_u)-D_u}{u-1}
        \right\|_\infty
        >
        \eta
        \right)  \\
        &\hspace{3cm}
        \le
        2d\,e^{-c\eta^2s_\star}.
\end{aligned}
\]
Choosing \(C_\eta\) large enough makes the last expression at most
\(T^{-(A_0+2)}\).

On the complement of this martingale event, the exit cannot occur. Indeed, if
\(\tau>0\), then telescoping from \(T\) down to \(\tau\) gives
\(\rho_\tau-\rho_0=\sum_{u=\tau+1}^{T}(\rho_{u-1}-\rho_u)\). The drift part has
norm at most \(\eta\), and the stopped martingale part has norm at most \(\eta\).
Hence \(\|\rho_\tau-\rho_0\|_\infty\le2\eta\), which contradicts the definition of
\(\tau\). Therefore
\(\Prob(\exists s\ge s_\star:\|\rho_s-\rho_0\|_\infty>2\eta)\le T^{-(A_0+2)}\).

It remains to pass from state localization to sweep localization. Suppose that
\(s\ge s_\star\) and \(\|\rho_s-\rho_0\|_\infty\le2\eta\). For every feasible
\(za^k\le B_s\) and every \(\theta\in[0,1]\),
\[
\begin{aligned}
        \left\|
        \frac{B_s-\theta za^k}{s-1}-\rho_s
        \right\|_\infty
        &\le
        \left\|
        \frac{B_s}{s-1}-\frac{B_s}{s}
        \right\|_\infty
        +
        \frac{\|za^k\|_\infty}{s-1}  \\
        &\le
        \frac{\|\rho_s\|_\infty+d_{\max}}{s-1}
        \le
        \eta .
\end{aligned}
\]
Thus the swept capacity lies within \(3\eta\) of \(\rho_0\), hence inside
\(\mathcal R_0\subseteq\mathcal R\). Therefore \(\mathcal L_s(\mathcal R)\)
holds for every \(s\ge s_\star\), except on an event of probability at most
\(T^{-(A_0+2)}\). Consequently,
\[
        \sum_{s=1}^T\Prob(\mathcal L_s(\mathcal R)^c)
        \le
        s_\star+T\cdot T^{-(A_0+2)}
        \le
        C\log(eT)+CT^{-(A_0+1)} .
\]
Renaming the exponent proves the claim.
\end{proof}

\subsection{The localized regret bound}

The next lemma is the local analogue of the per-stage estimate in
Section~\ref{sec:spm}. In the linear-mass case, the dominated-tube argument gives
a direct \(r_s^{\rm loc}\) bound. For larger local exponents, we keep the
logarithmic factor from the general per-stage estimate.

\begin{lemma}[Localized per-stage loss]\label{lem:local-perstage}
Suppose that Assumption~\ref{ass:local-active} holds on \(\mathcal R\) with
admissible exponent \(\pp_{\mathcal R}\) and local active neighborhoods
\(U_1,\ldots,U_K\). Suppose also that the dominated-tube condition in
Assumption~\ref{ass:local-nondeg}(b) holds for these neighborhoods. There are
constants \(C<\infty\) and \(s_0<\infty\) such that, for all \(s\ge s_0\), on the
event \(\mathcal L_s(\mathcal R)\),
\[
        \Xi_s(B_s)
        \le
        C h_s^{\rm loc}
        +
        Cs^{-6}.
\]
\end{lemma}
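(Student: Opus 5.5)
The plan is to rerun the per-stage argument of Section~\ref{subsec:sum} (the $\pp=1$ case), with two substitutions. Proposition~\ref{prop:active} is replaced by Corollary~\ref{cor:active-local}, and the global domination bound \eqref{eq:pone-global-domination} is replaced by the dominated-tube condition of Assumption~\ref{ass:local-nondeg}(b).

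Fix $s\ge s_0$, $n=s-1$, a capacity $b=B_s$ with $\mathcal L_s(\mathcal R)$ holding, and a feasible pair $(k,z)$. By the definition of $\mathcal L_s(\mathcal R)$, every sweep point $\rho_\theta=(b-\theta za^k)/n$ lies in $\mathcal R$. Clipping as in \eqref{eq:pone-clipping-no-change}, the sweep points also lie in fixed compact ranges and satisfy $\|\rho_\theta^{\rm eff}-\rho_{\theta'}^{\rm eff}\|_\infty\le C/n$. One detail to check is that clipping does not move $\rho_\theta$ out of the $\eta_{\mathcal R}$-neighborhood of $\mathcal R$ used in the tube. I would avoid the issue by applying the tube membership to the unclipped $\rho_\theta$, since the feasible sets coincide.

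Next restrict to the good event $E_n(W)$ and take $s_0$ large enough that $\delta_n\le\eta_{\mathcal R}$. By Lemma~\ref{lem:bounded-cutoff-selection} and \eqref{eq:tailrelax}, every selected pathwise cutoff vector $q_{\theta,W}$ is then a witness in Definition~\ref{def:local-tube}: the masses satisfy $\|\widehat m_W-m^0\|_\infty\le\delta_n$, the capacity is $\rho_\theta\in\mathcal R$, the relaxed tail relations hold with tolerance $\delta_n$, and projected normality holds. Hence $\Pi_j(q_{j,\theta,W})\in\mathcal A_j(\mathcal R,\eta_{\mathcal R})$. For two good cutoffs, the active interval $I^a_j(q_{j,\theta,W},q_{j,\theta',W'})$ is the segment between the projected values intersected with $S_j$, so it lies in $\mathcal J_j(\mathcal R,\eta_{\mathcal R})\subseteq U_j$. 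Corollary~\ref{cor:active-local} with $\eps=\delta_n$ and $\tau=C/n$ then gives the pairwise cap $\ell_k\mu_k(I^a_k)\le C(\delta_n^{1+1/\pp_{\mathcal R}}+1/n)\le Cr_s^{\rm loc}$.

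The dominated-tube bound converts this into $\ell_k\Lambda_{k,z}(I^a_k)\le Cr_s^{\rm loc}$ for a.e.\ $z$. This is where localization helps: every good active interval, and therefore the whole good active hull, lies inside $U_k$. So no endpoint-contact branch and no Hardy step are needed in either case of $\pp_{\mathcal R}$. The $\log(e/r)$ factor in $h_s^{\rm loc}$ for $\pp_{\mathcal R}>1$ is then slack.

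The hypotheses of Lemma~\ref{lem:jensen} (finite, atomless, supported curvature) are not met directly by $H_{k,z}$, so I would follow the dominated-neighborhood step in the proof of Lemma~\ref{lem:epstability}. Replace $H_{k,z}$ by a representative $H^U_{k,z}$ whose curvature is $\Lambda_{k,z}|_{U_k}$. This is atomless by domination and the upper bound $\mu_k(I)\le C\ell_k(I)$. The replacement does not change the Jensen gap, because the gap kernel vanishes outside the projected hull, which lies in $U_k$. Applying Lemma~\ref{lem:hullcap} and Lemma~\ref{lem:jensen} conditionally on $G=E_n(W)$ bounds the good-support gap by $Cr_s^{\rm loc}$.

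The remaining steps are the same as in the $\pp=1$ case. Lemma~\ref{lem:marginal-cutoff-reduction-new} reduces the marginal Jensen gap to a cutoff Jensen gap. The bad futures are removed as in \eqref{eq:pone-bad-future-removal}, at cost $Cn^{-6}\le Cs^{-6}$. Averaging over $(k,z)$ gives $\Xi_s(B_s)\le Cr_s^{\rm loc}+Cs^{-6}\le Ch_s^{\rm loc}+Cs^{-6}$.

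The main obstacle is the tube-membership step: checking that the pathwise cutoffs satisfy the definition of $\mathcal A_j$ exactly, including the clipping and normalization conventions. The essential-range closure is a further point to verify, since limits of good cutoffs must stay in the closed tube. That holds because $\mathcal J_j$ is defined as a closure and the cap is continuous in the interval endpoints.
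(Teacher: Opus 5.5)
Your proposal is correct and follows the paper's route. You place the good pathwise cutoffs in $\mathcal A_k(\mathcal R,\eta_{\mathcal R})$ on $\mathcal L_s(\mathcal R)\cap E_n(W)$, cap the local active-mass product with Corollary~\ref{cor:active-local}, pass from $\Lambda_{k,z}$ to $\mu_k$ through the dominated tube, and finish with the Jensen/hull-cap and bad-future steps of Section~\ref{subsec:sum}. The one difference is that for $\pp_{\mathcal R}>1$ the paper feeds the local cap into the Hardy-based estimate of Lemma~\ref{lem:epstability}. You instead observe that the dominated tube already contains every good active hull, so the direct argument gives $Cr_s^{\rm loc}+Cs^{-6}$ in both cases and the logarithmic factor is slack. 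That is a valid and slightly cleaner shortcut.
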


\begin{proof}
On \(\mathcal L_s(\mathcal R)\), all sweep capacities lie in \(\mathcal R\). On
the good event \(E_n(W)\), the relaxed graph and normal-cone relations place all
projected pathwise cutoffs in
\(\mathcal A_k(\mathcal R,\eta_{\mathcal R})\). Hence every active interval
appearing in the per-stage argument is contained in
\(\mathcal J_k(\mathcal R,\eta_{\mathcal R})\subseteq U_k\).

The reduction of the per-stage Jensen loss to an active-mass product cap
(Lemma~\ref{lem:jensen} and the per-stage estimate of Section~\ref{subsec:sweep})
therefore applies with Corollary~\ref{cor:active-local} in place of the global
active-mass product bound. This gives the local active-mass cap
\(C r_s^{\rm loc}\). The dominated-tube condition gives the required passage from the
size-conditioned curvature \(\Lambda_{k,z}\) to the aggregate measure \(\mu_k\)
on the local tube. In the case \(\pp_{\mathcal R}=1\), this dominated passage
bounds the per-stage loss directly by \(C r_s^{\rm loc}+Cs^{-6}\), as in the
\(\pp=1\) analysis of Section~\ref{subsec:sum} (see \eqref{eq:pone-rn}). In the
case \(\pp_{\mathcal R}>1\), the same local cap inserted into the general
per-stage estimate of Lemma~\ref{lem:epstability} gives
\(C r_s^{\rm loc}\log(e/r_s^{\rm loc})+Cs^{-6}\). These two cases are exactly the bound by \(C h_s^{\rm loc}+Cs^{-6}\).
\end{proof}

\begin{theorem}[Localized regret at interior capacities]\label{thm:local}
Suppose that
\[
        \rho_0:=\frac{b_T}{T}\in\operatorname{int}(\mathcal R_0)
\]
for compact regions
\[
        \mathcal R_0\Subset\mathcal R\subset\R_+^d.
\]
Suppose also that Assumptions~\ref{ass:local-active} and
\ref{ass:local-nondeg} hold on \(\mathcal R\) with admissible exponent
\(\pp_{\mathcal R}\). Then there is a constant \(C<\infty\), depending on the
local data but not on \(T\), such that
\[
        \Reg_T(\SPM;b_T)
        \le
        C
        \begin{cases}
        (\log(eT))^2,
        & \pp_{\mathcal R}=1,\\[1mm]
        T^{1/2-1/(2\pp_{\mathcal R})}
        (\log(eT))^{(\pp_{\mathcal R}+1)/(2\pp_{\mathcal R})+1},
        & \pp_{\mathcal R}>1.
        \end{cases}
\]
\end{theorem}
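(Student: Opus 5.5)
The plan is to start from the telescoping bound of Proposition~\ref{prop:telescope} and split each per-stage Jensen loss according to the sweep-localization event:
\[
        \Reg_T(\SPM;b_T)\le C+\sum_{s=1}^T\Bigl(\E\bigl[\Xi_s(B_s)\1_{\mathcal L_s(\mathcal R)}\bigr]+\E\bigl[\Xi_s(B_s)\1_{\mathcal L_s(\mathcal R)^c}\bigr]+\frac Cs\Bigr).
\]

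The harmonic term contributes $C\log(eT)$.

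\emph{The localized part.} For $s\ge s_0$, I would apply Lemma~\ref{lem:local-perstage} on $\mathcal L_s(\mathcal R)$. This gives $\Xi_s(B_s)\le Ch_s^{\rm loc}+Cs^{-6}$. The finitely many stages $s<s_0$ go into the constant.

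\emph{The non-localized part.} Here I would use only a crude uniform bound: $\Xi_s(b)\le C$ for every $b$. This follows because $H_{k,z}$ is bounded on $[0,y]$ and $Y_W\in[0,y]$ by the derivative bound \eqref{eq:derivative-cutoff-bounded}. Then Lemma~\ref{lem:state-localization} gives
\[
        \sum_s\E\bigl[\Xi_s(B_s)\1_{\mathcal L_s^c}\bigr]\le C\sum_s\Prob(\mathcal L_s(\mathcal R)^c)\le C\log(eT).
\]

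Next I sum $h_s^{\rm loc}$ exactly as in the completion of the proof of Theorem~\ref{thm:main}.
\begin{itemize}
\item If $\pp_{\mathcal R}=1$, then $h_s^{\rm loc}=r_s^{\rm loc}\le C\log(es)/s$, which sums to $C(\log(eT))^2$.
\item If $\pp_{\mathcal R}>1$, then
\[
        r_s^{\rm loc}\log(e/r_s^{\rm loc})\le Cs^{-(\pp_{\mathcal R}+1)/(2\pp_{\mathcal R})}(\log(es))^{(\pp_{\mathcal R}+1)/(2\pp_{\mathcal R})+1}+C\log(es)/s .
\]
This sums to $CT^{1/2-1/(2\pp_{\mathcal R})}(\log(eT))^{(\pp_{\mathcal R}+1)/(2\pp_{\mathcal R})+1}+C(\log(eT))^2$, and the polynomial term dominates.
\end{itemize}
The $\sum_s s^{-6}$ term and the $O(\log T)$ terms are absorbed in both cases, which yields the stated bound.

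The step I expect to need the most care is the compatibility of hypotheses among the three lemmas. Lemma~\ref{lem:state-localization} requires $\rho_0\in\operatorname{int}(\mathcal R_0)$ together with the balance Lemma~\ref{lem:local-balance-spm}. Lemma~\ref{lem:local-perstage} needs the dominated-tube condition of Assumption~\ref{ass:local-nondeg}(b), which is given.

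I would also check that the event $\mathcal L_s(\mathcal R)$ is $\sigma(B_s)$-measurable. This is what lets the indicator pass inside the expectation defining $\E[\Xi_s(B_s)]$, since $\Xi_s$ is a deterministic function of $B_s$.

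Finally, I would confirm that the uniform bound $\Xi_s\le C$ holds without any localization, including the feasibility indicator. If the constant in Lemma~\ref{lem:state-localization} depends on $\eta$ and the local data, that is permitted by the statement.
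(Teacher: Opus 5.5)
Your proposal is correct and follows the paper's proof essentially step for step. Both arguments telescope via Proposition~\ref{prop:telescope}, split $\E[\Xi_s(B_s)]$ on $\mathcal L_s(\mathcal R)$, bound the localized part by Lemma~\ref{lem:local-perstage} and the leakage part by the uniform bound $\Xi_s\le C$ together with Lemma~\ref{lem:state-localization}, and then sum $h_s^{\rm loc}$ exactly as in Theorem~\ref{thm:main}. Your explicit justification of $\Xi_s\le C$ (from $0\le H_{k,z}\le\vbar$ on $[0,y]$ and $Y_W\in[0,y]$) spells out the step the paper states in one line.
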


\begin{proof}
The telescoping bound \eqref{eq:telescope} gives
\[
        \Reg_T(\SPM;b_T)
        \le
        C+\sum_{s=1}^T\left(\E[\Xi_s(B_s)]+\frac{C}{s}\right).
\]
Split the expected per-stage loss according to the localization event:
\[
        \E[\Xi_s(B_s)]
        =
        \E[\Xi_s(B_s)\1_{\mathcal L_s(\mathcal R)}]
        +
        \E[\Xi_s(B_s)\1_{\mathcal L_s(\mathcal R)^c}] .
\]
By Lemma~\ref{lem:local-perstage}, the first term is at most
\(C h_s^{\rm loc}+Cs^{-6}\) for \(s\ge s_0\). Since the one-step gaps are
\(O(1)\) (Section~\ref{sec:spm}), \(\Xi_s\le C\) uniformly, so the second term is
at most \(C\,\Prob(\mathcal L_s(\mathcal R)^c)\). After summing over \(s\) and applying Lemma~\ref{lem:state-localization}, we get
\[
        \sum_{s=1}^T\E[\Xi_s(B_s)]
        \le
        C\sum_{s=1}^T h_s^{\rm loc}
        +
        O(\log T).
\]
The harmonic remainder in the telescoping bound is also \(O(\log T)\).

It remains only to sum the local scale. If \(\pp_{\mathcal R}=1\), then
\(r_s^{\rm loc}\asymp\log(es)/s\), and
\[
        \sum_{s=1}^T h_s^{\rm loc}
        =
        \sum_{s=1}^T r_s^{\rm loc}
        \le
        C(\log(eT))^2.
\]
If \(\pp_{\mathcal R}>1\), summing \(r_s^{\rm loc}\log(e/r_s^{\rm loc})\) as in
the proof of Theorem~\ref{thm:main} (Lemma~\ref{lem:epstability}), with \(\pp\)
replaced by \(\pp_{\mathcal R}\), gives
\[
        \sum_{s=1}^T h_s^{\rm loc}
        \le
        C
        T^{1/2-1/(2\pp_{\mathcal R})}
        (\log(eT))^{(\pp_{\mathcal R}+1)/(2\pp_{\mathcal R})+1}.
\]
Combining these estimates proves the theorem.
\end{proof}

The constants in Theorem~\ref{thm:local} depend on the margin of \(\rho_0\)
inside \(\mathcal R_0\). In the proof of Lemma~\ref{lem:state-localization},
this margin determines \(\eta\) and the coefficient in
\(s_\star=O(\log T)\). The constants therefore deteriorate as \(\rho_0\)
approaches the boundary of the local interior regime, consistent with the
recovery of the global rate at a critical capacity.

Without the non-degeneracy and binding-balance conditions of
Assumption~\ref{ass:local-nondeg}, the same per-stage argument gives a
conditional statement, provided that the localization leakage is controlled by
some other argument.

\begin{proposition}[Conditional localized bound]\label{prop:local-conditional}
Suppose that Assumption~\ref{ass:local-active} holds on \(\mathcal R\) with
admissible exponent \(\pp_{\mathcal R}\), and suppose that the dominated-tube
condition in Assumption~\ref{ass:local-nondeg}(b) holds for the same local active
neighborhoods. If
\[
        \sum_{s=1}^T\Prob(\mathcal L_s(\mathcal R)^c)
\]
is at most the local rate in Theorem~\ref{thm:local}, then
\(\Reg_T(\SPM;b_T)\) satisfies the bound in Theorem~\ref{thm:local}.
\end{proposition}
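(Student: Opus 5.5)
The proof will be a repetition of the proof of Theorem~\ref{thm:local}. The only change is that the localization leakage, previously controlled by Lemma~\ref{lem:state-localization}, is now controlled by the hypothesis. First, I start from the telescoping bound \eqref{eq:telescope}:
\[
        \Reg_T(\SPM;b_T)\le C+\sum_{s=1}^T\Bigl(\E[\Xi_s(B_s)]+\frac Cs\Bigr).
\]
Since $\mathcal L_s(\mathcal R)$ is $\sigma(B_s)$-measurable, each expected per-stage loss splits as
\[
        \E[\Xi_s(B_s)\1_{\mathcal L_s(\mathcal R)}]+\E[\Xi_s(B_s)\1_{\mathcal L_s(\mathcal R)^c}].
\]

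For the first term, I invoke Lemma~\ref{lem:local-perstage}. Its hypotheses are exactly those assumed here: Assumption~\ref{ass:local-active} on $\mathcal R$, and the dominated-tube condition of Assumption~\ref{ass:local-nondeg}(b) for the same neighborhoods $U_k$. The lemma does not use the interior-cutoff/binding-balance part (a). Indeed, in its proof, localization enters only through the event $\mathcal L_s(\mathcal R)$, which places all sweep capacities in $\mathcal R$ and hence all good projected cutoffs in $\mathcal A_k(\mathcal R,\eta_{\mathcal R})$. I would check this point explicitly: that Corollary~\ref{cor:active-local} is applied only between two pathwise cutoffs, both lying in the tube, and never requires comparison to a population cutoff $q^\rho$. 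This gives
\[
        \E[\Xi_s(B_s)\1_{\mathcal L_s(\mathcal R)}]\le Ch_s^{\rm loc}+Cs^{-6}\qquad\text{for } s\ge s_0.
\]
The finitely many stages $s<s_0$ are absorbed into $C$.

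For the second term, I use the uniform bound $\Xi_s\le C$. This holds because $H_{k,z}$ is bounded on $[0,y]$ and, by Lemma~\ref{lem:bounded-cutoff-selection}, $Y_W\in[0,y]$. The term is therefore at most $C\Prob(\mathcal L_s(\mathcal R)^c)$, and by hypothesis its sum over $s$ is at most the local rate of Theorem~\ref{thm:local}.

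Finally, I sum the remaining pieces. The terms $\sum_s Ch_s^{\rm loc}$ are handled exactly as in the proof of Theorem~\ref{thm:local}: they give $C(\log(eT))^2$ when $\pp_{\mathcal R}=1$, and $CT^{1/2-1/(2\pp_{\mathcal R})}(\log(eT))^{(\pp_{\mathcal R}+1)/(2\pp_{\mathcal R})+1}$ otherwise. The harmonic terms and $\sum_s s^{-6}$ contribute $O(\log T)$, which is dominated by the local rate in either case.

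The main obstacle is the point flagged above. One must confirm that Lemma~\ref{lem:local-perstage} really is independent of Assumption~\ref{ass:local-nondeg}(a), so that the missing non-degeneracy hypothesis costs nothing beyond the leakage sum that the statement assumes away.
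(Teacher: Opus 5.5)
Your proposal is correct and matches the paper's proof: you repeat the proof of Theorem~\ref{thm:local} verbatim, but bound the leakage sum $\sum_s \Prob(\mathcal L_s(\mathcal R)^c)$ by the hypothesis rather than by Lemma~\ref{lem:state-localization}. The point you flag is already settled by the statement of Lemma~\ref{lem:local-perstage}, whose only hypotheses are Assumption~\ref{ass:local-active} and the dominated-tube condition of Assumption~\ref{ass:local-nondeg}(b); part (a) enters the paper's argument only through Lemmas~\ref{lem:local-balance-spm} and \ref{lem:state-localization}.
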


\begin{proof}
The proof is identical to the proof of Theorem~\ref{thm:local}, except that the
leakage term
\[
        \sum_{s=1}^T\Prob(\mathcal L_s(\mathcal R)^c)
\]
is bounded by hypothesis rather than by Lemma~\ref{lem:state-localization}.
\end{proof}

\subsection{The uniform example, revisited}\label{subsec:local-examples}

Let \(V\) and \(\beta\) be independent and uniform on \([1,2]\), and set
\(R=V/\beta\). Then \(R\in[1/2,2]\) and \(\E[\beta]=\tfrac32\). Globally, the
weighted ratio measure has a corner at \(r=1/2\) with quadratic mass. Thus the
global exponent is \(\pp=2\) by Corollary~\ref{cor:regret-bounded}. The global theorem therefore gives the
\(T^{1/4}\) polynomial rate, up to logarithmic factors, at the critical capacity
\(b_T=\tfrac32 T\).

Now fix a binding non-critical capacity \(c\in(0,3/2)\), and choose a compact
region \(\mathcal R\Subset(0,3/2)\) that contains \(c\). For every
\(\rho\in\mathcal R\), the fluid cutoff \(q^\rho\) is the unique solution of
\(\E[\beta\,\1\{R>q^\rho\}]=\rho\), and this cutoff is interior,
\(q^\rho\in(1/2,2)\). The weighted ratio measure has a density bounded above and
below on a neighborhood of the corresponding cutoff tube. The tube is therefore
interior and dominated, and the capacity-local exponent is \(\pp_{\mathcal R}=1\).
Assumptions~\ref{ass:local-active} and \ref{ass:local-nondeg} hold, and
Theorem~\ref{thm:local} gives \(O((\log(eT))^2)\) regret at \(b_T=cT\). The
constant depends on the compact region \(\mathcal R\), and hence on the distance
of \(c\) from the boundary values \(0\) and \(\tfrac32\); the statement is not
uniform as \(c\) approaches either boundary.

At the critical capacity \(c=3/2\), the cutoff is pinned at the endpoint
\(1/2\). The dual interval is flat, and the local endpoint exponent is
\(\pp_{\mathcal R}=2\) (though the interior condition of
Assumption~\ref{ass:local-nondeg} no longer holds).
State localization to an interior tube is unavailable, and the global
\(T^{1/4}\) polynomial rate is the genuine critical-capacity behavior. Thus, for
this primitive distribution, polynomial regret is a critical-capacity
phenomenon: every fixed binding capacity \(c\in(0,3/2)\) falls in the
polylogarithmic local regime, whereas the accept-all boundary \(c=3/2\)
activates the quadratic corner.

\section{Regularity verification and the regret corollaries}\label{app:examples}

This appendix verifies Assumption~\ref{ass:regularity} for the structured
distribution classes in Section~\ref{subsec:mainbound} and then proves the
regret corollaries.  In these classes, endpoint behavior is determined by two
primitive mechanisms: the mass of the ratio distribution itself, and the way
conditional ratio supports recede as the size approaches a boundary.  The
propositions below compute a per-type exponent \(\pp_k\) for each type \(k\).
The global exponent used in Theorem~\ref{thm:main} is
\(\pp=\max_k\pp_k\).

\begin{proposition}[Independent size and ratio]\label{prop:independent-ratio}
Fix a type \(k\).  Suppose that, conditional on \(J=k\), the size \(\beta\) and
the ratio \(R=V/\beta\) are independent, that
\(\beta\in[\betamin_k,\betamax_k]\), and that \(R\) has single-interval support
\([r_k^-,r_k^+]\).  Suppose also that \(R\) is regular in the interior---\(c|I|\le
\Prob(R\in I\mid J=k)\le C|I|\) for every compact interval \(I\) in the interior
of the support---and that, at each endpoint \(r\), its distribution satisfies the
one-sided interval bound
\(c|I|^{\theta_{k,r}}\le \Prob(R\in I\mid J=k)\le C|I|\) for some
\(\theta_{k,r}\ge1\) and every interval \(I\) in a one-sided endpoint
neighborhood, with endpoint intervals having mass of order
\(x^{\theta_{k,r}}\).  Then the type-\(k\) distribution satisfies
Assumption~\ref{ass:regularity}.  Its local endpoint exponents are
\(\pp_{k,r}=\theta_{k,r}\), and its active mass condition holds with per-type
exponent
\[
        \pp_k=\max\{1,\theta_{k,r_k^-},\theta_{k,r_k^+}\}.
\]
\end{proposition}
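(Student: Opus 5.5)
The plan is to exploit independence: it makes the weighted ratio measure a scalar multiple of the ratio law, and it makes the size-conditioned kernel a bounded multiple of the same law. Conditional on \(J=k\), independence gives
\[
        \mu_k(B)=\pi_k\E[\beta\mid J=k]\,\Prob(R\in B\mid J=k)=\overline m_k\,\Prob(R\in B\mid J=k).
\]
The factor \(\overline m_k\) lies between \(\pi_k\betamin_k>0\) and \(\pi_k\betamax_k\). So every interval bound on the law of \(R\) transfers to \(\mu_k\) up to primitive constants. In particular \(\supp\mu_k=[r_k^-,r_k^+]\) is a single interval, which gives the first clause of Assumption~\ref{ass:regularity}(a).

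Next I verify \eqref{eq:wr} with \(\pp_k=\max\{1,\theta_{k,r_k^-},\theta_{k,r_k^+}\}\). Fix one-sided endpoint neighborhoods \([r_k^-,r_k^-+x_0]\) and \([r_k^+-x_0,r_k^+]\) on which the endpoint bounds hold. The middle part \([r_k^-+x_0/2,r_k^+-x_0/2]\) is a compact interior interval, where the interior regularity applies. Take an interval \(I\) and set \(I'=I\cap S_k\), so \(\ell_k(I)=|I'|\).

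\emph{Upper bound.} Split \(I'\) into at most three pieces, each lying in an endpoint neighborhood or in the compact interior part. Each piece has mass at most \(C\) times its length, so summing gives \(\mu_k(I)\le C\ell_k(I)\).

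\emph{Lower bound.} Suppose first that \(|I'|\le x_0/2\). Then \(I'\) lies entirely in one endpoint neighborhood or entirely in the compact interior part. Using \(|I'|\le y\), this gives \(\mu_k(I)\ge c|I'|^{\pp_k}\), because \(|I'|^{\theta}\ge y^{\theta-\pp_k}|I'|^{\pp_k}\) when \(\theta\le\pp_k\). If instead \(|I'|>x_0/2\), then \(I'\) contains a subinterval of length \(x_0/4\) from one of the pieces. Its mass is bounded below by a constant, which dominates \(c\,y^{-\pp_k}|I'|^{\pp_k}\).

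The endpoint exponents follow from the hypothesis that endpoint intervals \([r,r+x]\) have mass of order \(x^{\theta_{k,r}}\). Multiplying by \(\overline m_k\) gives \(\pp_{k,r}=\theta_{k,r}\) in the sense of Definition~\ref{def:exponent}.

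For part (b) I would argue that the whole support \(S_k\) is a single dominated neighborhood, since \(\Lambda_{k,z}\) is dominated by \(\mu_k\) for almost every \(z\). By independence, for \(P_k^\beta\)-a.e.\ \(z\),
\[
        \Lambda_{k,z}(I)=z\,\Prob(R\in I\mid J=k)\le \frac{\betamax_k}{\pi_k\E[\beta\mid J=k]}\,\mu_k(I).
\]
The constant is primitive, as noted in Section~\ref{subsec:assumption}. So the cover \(\{S_k\}\) has positive Lebesgue number and uses only dominated neighborhoods, and part (c) holds automatically whatever the value of \(\pp_k\).

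The main point needing care is the independence step. It has to be stated as an identity of regular conditional distributions: the law of \(R\) given \(J=k,\beta=z\) equals the law of \(R\) given \(J=k\) for a.e.\ \(z\). Beyond that, the only delicate bookkeeping is choosing \(x_0\) in the piecewise argument so that every short interval lies in a single regime. Everything else is constant tracking.
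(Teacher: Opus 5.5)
Your proposal is correct and follows essentially the same route as the paper. Independence makes \(\mu_k\) the constant multiple \(\overline m_k\) of the law of \(R\), the interval bounds are patched over a finite cover (your explicit choice of \(x_0\) plays the role of the paper's Lebesgue number \(\lambda_0\)), and the same bound \(\Lambda_{k,z}(I)\le \betamax_k\,\mu_k(I)/(\pi_k\E[\beta\mid J=k])\) gives domination — since it holds for every interval, taking \(\{S_k\}\) as a single dominated neighborhood is a harmless simplification of the paper's interior-plus-endpoint cover.
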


\begin{proof}
For every interval \(I\),
\[
        \mu_k(I)
        =
        \pi_k\,\E[\beta\mid J=k]\,
        \Prob(R\in I\mid J=k).
\]
Thus \(\mu_k\) has exactly the same local endpoint exponents and interval bounds
as the ratio distribution.  Cover the compact support \(S_k\) by the two
endpoint neighborhoods and finitely many interior neighborhoods, taken relatively
open in \(S_k\) and overlapping, and let
\(\lambda_0\) be a Lebesgue number of this cover.  These local bounds patch to
\eqref{eq:wr} with exponent
\(\pp_k=\max\{1,\theta_{k,r_k^-},\theta_{k,r_k^+}\}\).  The upper bound follows
by summing the local upper bounds.  For the lower bound, if
\(\ell_k(I)\le\lambda_0\), then the active part of \(I\) lies in one cover
element, so the corresponding local lower bound applies.  If
\(\ell_k(I)>\lambda_0\), then \(I\cap S_k\) contains an active subinterval of
length \(\lambda_0\), and hence
\(\mu_k(I)\ge c\lambda_0^{\pp_k}\ge c'\ell_k(I)^{\pp_k}\), since
\(\ell_k(I)\le |S_k|\).

Independence gives, for almost every \(z\),
\[
        \Lambda_{k,z}(I)
        =
        z\,\Prob(R\in I\mid J=k)
        \le
        \frac{\betamax_k}{\pi_k\E[\beta\mid J=k]}\,\mu_k(I).
\]
Interior regularity of \(R\) gives dominated neighborhoods away from the
endpoints.  At each endpoint, the displayed domination gives a dominated
one-sided neighborhood.  Therefore the finite local cover in
Assumption~\ref{ass:regularity} is fully dominated.
\end{proof}

\begin{proposition}[Independent value and size]\label{prop:independent-value-size}
Fix a type \(k\).  Suppose that, conditional on \(J=k\), the reward \(V\) and
the size \(\beta\) are independent and have compact interval supports bounded
away from zero.  Suppose that their densities are locally bounded above and
below in the interiors, that the density of \(V\) is bounded above on its
support, and that both densities are comparable near each endpoint to a power of
the distance to that endpoint.  Then the type-\(k\) distribution satisfies
Assumption~\ref{ass:regularity}.  If \(a_V^\pm\) and \(a_\beta^\pm\) denote the
endpoint exponents of the distributions of \(V\) and \(\beta\), then the local
endpoint exponents of \(\mu_k\) are
\[
        \pp_{k,r_k^-}=a_V^-+a_\beta^+,
        \qquad
        \pp_{k,r_k^+}=a_V^+ + a_\beta^- .
\]
The active mass condition holds with per-type exponent
\[
        \pp_k=\max\{a_V^-+a_\beta^+,a_V^+ + a_\beta^-\}.
\]
\end{proposition}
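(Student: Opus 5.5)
We verify the three parts of Assumption~\ref{ass:regularity} by computing the weighted ratio measure \(\mu_k\) and the kernels \(\Lambda_{k,z}\) explicitly from the product density. Write \(f_V\) and \(f_\beta\) for the densities, with \(V\in[v^-,v^+]\) and \(\beta\in[b^-,b^+]\). Independence gives \(\Lambda_{k,z}(\dd r)=z\cdot z f_V(zr)\,\dd r=z^2f_V(zr)\,\dd r\). The density of \(\mu_k\) is therefore
\[
m_k(r)=\pi_k\int z^2 f_V(zr)f_\beta(z)\,\dd z .
\]
The support of \(\mu_k\) is the single interval \([v^-/b^+,\,v^+/b^-]\). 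For each size \(z\), the conditional ratio support is \([v^-/z,\,v^+/z]\), whose edges move with \(z\).

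\emph{Step 1: interior bounds.} On any compact subinterval of the interior of \(S_k\), the set of \(z\) with \(zr\) in the interior of the \(V\)-support has length bounded below. Local two-sided density bounds then give \(m_k\asymp1\) there. Moreover, \(\Lambda_{k,z}\) has density at most \(C\sup f_V\le C\), so \(\Lambda_{k,z}(I)\le C|I|\le C'\mu_k(I)\). Interior neighborhoods are thus dominated.

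\emph{Step 2: endpoint computation.} At the lower endpoint \(r_0=v^-/b^+\), set \(x=r-r_0\), \(\omega=b^+-z\) and \(s=zr-v^-\). Locally \(s\approx b^+x-r_0\omega\), so the constraint \(s\ge0\) forces \(\omega\le Cx\), and \(s\) ranges over \([0,Cx]\). Then
\[
m_k(r_0+x)\asymp\int_0^{cx}(x-c'\omega)^{a_V^--1}\omega^{a_\beta^+-1}\,\dd\omega\asymp x^{a_V^-+a_\beta^+-1}
\]
by Beta-integral scaling. This yields the mass exponent \(a_V^-+a_\beta^+\), and the upper endpoint gives \(a_V^++a_\beta^-\) symmetrically.

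\emph{Step 3: branch structure.} The computation in Step 2 is a contact branch in the sense of Definition~\ref{def:branch}, with \(\tau=1\), \(\alpha=a_\beta^+\), \(\gamma=a_V^-\), edge \(e(\omega)\asymp\omega\), and \(\lambda^{\mathrm{br}}_{k,\omega}(x)\asymp(x-e(\omega))^{\gamma-1}\). This holds provided \(\gamma\ge1\). When \(a_V^-<1\), I would instead use \(\sup f_V<\infty\) (which forces \(a_V^-\le1\)). Values with \(a_V^-<1\) would need a separate dominated treatment, or the hypothesis effectively forces \(a_V^-=1\); this is where I expect friction.

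\emph{Step 4: patching the cover.} When \(\theta=a_V^-+a_\beta^+=1\), i.e.\ \(a_V^-=1\) and \(a_\beta^+=0\) (no contact), \(\mu_k\) has density bounded below at the endpoint. The bound \(\Lambda_{k,z}\le C\,\Leb\le C'\mu_k\) then gives a dominated neighborhood, consistent with part (c). Otherwise we take \(\Lambda^D=0\) and the whole kernel as the branch, and check \eqref{eq:dominated-component} trivially. The active-mass bound \eqref{eq:wr} with \(\pp_k=\max\) of the two exponents follows exactly as in Proposition~\ref{prop:independent-ratio}. Intervals within one cover element use the local bounds: \(\int_I x^{\theta-1}\ge c|I|^\theta\) since \(\theta\ge1\), and \(m_k\le C\) gives the upper bound because \(\theta\ge1\). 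Longer intervals use the Lebesgue-number argument.

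The main obstacle is Step 3. It requires matching the local change of variables exactly to the branch disintegration, with a \(C^1\) injective size map, a bounded weight and density comparabilities uniform in \(\omega\), and handling the exponent range where \(\gamma\ge1\) might fail.
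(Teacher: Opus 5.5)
Your overall route is the paper's. You use the product density for \(\mathfrak M_k\), interior domination from \(\sup f_V<\infty\), and the whole endpoint kernel as a single branch with \(\tau=1\), \(\alpha=a_\beta^\mp\), \(\gamma=a_V^\pm\) and \(\Lambda^D=0\), followed by Lebesgue-number patching. Computing the endpoint density by Beta scaling instead of citing Lemma~\ref{lem:mass} is fine.

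\textbf{The gap is in Step 3.} That step, which you flag yourself, rests on a reversed inequality and is left unresolved. Since \(f_V(v^-+s)\asymp s^{a_V^- -1}\) as \(s\downarrow0\), boundedness of \(f_V\) forces \(a_V^- -1\ge0\), i.e.\ \(a_V^-\ge1\), and likewise \(a_V^+\ge1\). It does not force \(a_V^-\le1\). This is exactly the branch requirement \(\gamma\ge1\) of Definition~\ref{def:branch}. So there is no case \(a_V^-<1\) needing separate treatment, and nothing forces \(a_V^-=1\). \(V\)-densities vanishing at an endpoint (\(a_V^->1\)) are admissible and simply raise the exponent. As written, your proposal never verifies \(\gamma\ge1\), and the suggestion that the hypothesis ``effectively forces \(a_V^-=1\)'' would wrongly shrink the class.

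The change of variables is also less delicate than you fear. Take \(\omega=\betamax_k-z\), \(x=r-r_k^-\), and \(e(\omega)=v^-/(\betamax_k-\omega)-v^-/\betamax_k\). Then exactly \(zr=v^-+(\betamax_k-\omega)\bigl(x-e(\omega)\bigr)\), not just approximately. The branch density is therefore \((\betamax_k-\omega)^2 f_V\bigl(v^-+(\betamax_k-\omega)(x-e(\omega))\bigr)\asymp(x-e(\omega))^{a_V^- -1}\). The size map is \(\betamax_k-\omega\) (Jacobian \(1\)), with \(w\equiv1\) and \(f(\omega)=\pi_k f_\beta(\betamax_k-\omega)\asymp\omega^{a_\beta^+-1}\).

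\textbf{Step 4 treats a vacuous case.} Your case \(\theta=1\) with \(a_\beta^+=0\) cannot occur. The \(\beta\)-endpoint exponents must be positive for \(f_\beta\) to be integrable, so \(\theta=a_V^-+a_\beta^+\ge1+a_\beta^+>1\). There is no ``no contact'' endpoint in this class. Even when both densities are bounded above and below, the endpoint is a corner with \(\theta=2\). Hence \(\pp_k>1\), and part~(c) of Assumption~\ref{ass:regularity} is never triggered.

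\textbf{Justifying \(\Lambda^D=0\).} You need the branch over \(\omega\in(0,\omega_0)\) to carry all of \(\mathfrak M_k\) on \(\mathcal B_k\times U\). The paper ensures this by fixing \(x_0=e(\omega_0)\). Since \(e\) is increasing, every size with \(\omega>\omega_0\) then has its conditional edge beyond \(x_0\) and puts no mass in \(U\).
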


\begin{proof}
Write the supports of \(V\) and \(\beta\) as
\([v^-,v^+]\) and \([\betamin_k,\betamax_k]\), with \(v^->0\).  The ratio
support is
\[
        [r_k^-,r_k^+]
        =
        \left[\frac{v^-}{\betamax_k},\frac{v^+}{\betamin_k}\right].
\]
Let \(f_V\) and \(f_\beta\) be the conditional densities.  The weighted ratio
measure has density
\[
        m_k(r)
        =
        \pi_k\int_{\betamin_k}^{\betamax_k}
        z^2 f_\beta(z)f_V(rz)\1\{v^-\le rz\le v^+\}\,\dd z .
\]
Since \(f_V\) is bounded above and \(f_\beta\) is integrable, \(m_k\) is bounded
above.  Hence the upper inequality in \eqref{eq:wr} holds locally, and then
globally after compact patching.

Now let \(r\) be an interior point of \([r_k^-,r_k^+]\).  Then there is
\(z_0\in(\betamin_k,\betamax_k)\) such that \(rz_0\in(v^-,v^+)\).  On small
neighborhoods of \(z_0\) and \(r\), both densities are bounded below, so
\(m_k\) is bounded below.  Thus \(\mu_k(I)\asymp |I|\) locally at every interior
point.  Moreover, on each such local neighborhood,
\[
        \Lambda_{k,z}(I)
        =
        z\,\Prob(V/z\in I\mid J=k)
        \le C |I|
        \le C\mu_k(I),
\]
so the interior neighborhoods are dominated.

It remains to verify the endpoint neighborhoods.  Since \(V\) and \(\beta\) are
independent conditional on \(J=k\), the conditional curvature for a fixed size
\(z\) is
\[
        \Lambda_{k,z}(\dd r)
        =
        z\,\Prob(V/z\in\dd r\mid J=k)
        =
        z^2 f_V(zr)\,\1\{v^-\le zr\le v^+\}\,\dd r .
\]
Thus the kernel measure in Definition~\ref{def:endpoint-contact} has product
density
\begin{equation}
        \mathfrak M_k(\dd z,\dd r)
        =
        \pi_k\,f_\beta(z)\,z^2 f_V(zr)\,
        \1\{v^-\le zr\le v^+\}\,\dd z\,\dd r .
        \label{eq:ivs-kernel-density}
\end{equation}

We use the endpoint power comparisons
\[
        f_V(v^-+s)\asymp s^{a_V^- -1},
        \quad
        f_V(v^+-s)\asymp s^{a_V^+ -1},
        \quad
        f_\beta(\betamin_k+s)\asymp s^{a_\beta^- -1},
        \quad
        f_\beta(\betamax_k-s)\asymp s^{a_\beta^+ -1}.
\]
The bounded-above assumption on \(f_V\) implies \(a_V^-,a_V^+\ge1\); otherwise
the corresponding endpoint density would blow up.  Therefore the branch
curvature exponents below satisfy the requirement \(\gamma\ge1\) in
Definition~\ref{def:branch}.  The endpoint exponents of \(\beta\) may be any
positive numbers, and they enter through the size-coordinate exponent
\(\alpha\).

\emph{Lower endpoint.}
The lower ratio endpoint is \(r_k^-=v^-/\betamax_k\).  Use the local coordinates
\(\omega=\betamax_k-z\) and \(x=r-r_k^-\), so that
\(z=\betamax_k-\omega\).  The conditional lower edge of the ratio support
recedes from \(r_k^-\) by
\[
        e(\omega)
        =
        \frac{v^-}{\betamax_k-\omega}-\frac{v^-}{\betamax_k}
        \asymp \omega .
\]
For \(r=r_k^-+x\), we have
\[
        zr
        =
        (\betamax_k-\omega)(r_k^-+x)
        =
        v^-+(\betamax_k-\omega)\bigl(x-e(\omega)\bigr).
\]
Hence, on a sufficiently small lower endpoint neighborhood, the branch
curvature is
\[
        \Lambda^{\mathrm{br}}_{k,\omega}(\dd x)
        =
        (\betamax_k-\omega)^2
        f_V\!\bigl(v^-+(\betamax_k-\omega)(x-e(\omega))\bigr)
        \1\{x\ge e(\omega)\}\,\dd x .
\]
Its density is comparable to \((x-e(\omega))^{a_V^- -1}\).

Substituting \(z=\betamax_k-\omega\) and \(r=r_k^-+x\) in
\eqref{eq:ivs-kernel-density} gives the exact local disintegration
\[
        \int g(z,r)\,\mathfrak M_k(\dd z,\dd r)
        =
        \int_0^{\omega_0}
        \pi_k f_\beta(\betamax_k-\omega)
        \int g(\betamax_k-\omega,r_k^-+x)
        \Lambda^{\mathrm{br}}_{k,\omega}(\dd x)\,\dd\omega .
\]
This is the branch identity \eqref{eq:branch-disint-def} with
\(\beta_k(\omega)=\betamax_k-\omega\), \(w\equiv1\), and
\(f(\omega)=\pi_k f_\beta(\betamax_k-\omega)\asymp
\omega^{a_\beta^+-1}\).  Fix the neighborhood width by \(x_0=e(\omega_0)\); since
\(e\) is increasing, \(e(\omega)>x_0\) for \(\omega>\omega_0\), so
\(\Lambda_{k,z}|_U=0\) for every size \(z\le\betamax_k-\omega_0\).  The
disintegration over \(\omega\in(0,\omega_0)\) therefore captures all of
\(\mathfrak M_k\) on \(\mathcal B_k\times U\), and the whole lower endpoint kernel
is one branch with no dominated remainder, so \(\Lambda^D_{k,z}=0\) and
\(\Lambda^{\mathrm{br}}_{k,z}=\Lambda_{k,z}|_U\).  Therefore the lower endpoint
has an endpoint-contact representation with
\[
        \gamma=a_V^-,
        \qquad
        \alpha=a_\beta^+,
        \qquad
        \tau=1,
        \qquad
        \pp_{k,r_k^-}=a_V^-+a_\beta^+ .
\]
Lemma~\ref{lem:mass} gives the lower endpoint mass order.  Together with the
dominated interior cover and the global upper density bound, this verifies
\eqref{eq:wr} near \(r_k^-\).

\emph{Upper endpoint.}
The upper ratio endpoint is \(r_k^+=v^+/\betamin_k\).  Use the local coordinates
\(\omega=z-\betamin_k\) and \(x=r_k^+-r\).  The conditional upper edge
\(v^+/(\betamin_k+\omega)\) recedes from \(r_k^+\) by
\(e(\omega)\asymp\omega\).  For \(r=r_k^+-x\),
\[
        zr
        =
        (\betamin_k+\omega)(r_k^+-x)
        =
        v^+-(\betamin_k+\omega)\bigl(x-e(\omega)\bigr).
\]
The same computation, again with \(x_0=e(\omega_0)\), applied to
\eqref{eq:ivs-kernel-density} yields a single branch with no dominated remainder.  Its branch curvature density is comparable
to \((x-e(\omega))^{a_V^+ -1}\), and its size density is
\(f(\omega)=\pi_k f_\beta(\betamin_k+\omega)\asymp
\omega^{a_\beta^- -1}\).  Thus the upper endpoint has an endpoint-contact
representation with
\[
        \gamma=a_V^+,
        \qquad
        \alpha=a_\beta^-,
        \qquad
        \tau=1,
        \qquad
        \pp_{k,r_k^+}=a_V^+ + a_\beta^- .
\]
Patching the two endpoint neighborhoods with the dominated interior
neighborhoods, all taken relatively open in \(S_k\) and overlapping, gives
\eqref{eq:wr} with exponent
\(\max\{a_V^-+a_\beta^+,a_V^+ + a_\beta^-\}\), by the same finite-cover argument
used in Proposition~\ref{prop:independent-ratio}.
\end{proof}

We now prove the regret corollaries.  By Theorem~\ref{thm:main}, it suffices in
each case to verify Assumption~\ref{ass:regularity} and identify the active
weighted-mass exponent \(\pp\).

\begin{proof}[Proof of Corollary~\ref{cor:regret-regular}]
A density bounded above and below gives the active mass condition with
\(\pp=1\), and Proposition~\ref{prop:independent-ratio} gives a fully dominated
cover.  The result is the \(\pp=1\) case of Theorem~\ref{thm:main}.
\end{proof}

\begin{proof}[Proof of Corollary~\ref{cor:regret-ratio}]
By Proposition~\ref{prop:independent-ratio}, the active mass exponent is
\(\pp=\theta\), and the finite cover is dominated.  Substituting this exponent
into Theorem~\ref{thm:main} gives the bound.
\end{proof}

\begin{proof}[Proof of Corollary~\ref{cor:regret-bounded}]
Densities bounded above and below have endpoint exponent \(1\).  Therefore
Proposition~\ref{prop:independent-value-size} verifies the active mass condition
with \(\pp=2\).  The result is Theorem~\ref{thm:main} at \(\pp=2\), for which
the logarithmic exponent is \((\pp+1)/(2\pp)+1=7/4\).  Uniform, truncated normal,
and truncated exponential distributions on compact intervals bounded away from
zero are instances.
\end{proof}

\begin{proof}[Proof of Corollary~\ref{cor:regret-beta}]
The uniform value distribution has endpoint exponents \(a_V^-=a_V^+=1\).  The
affine \(\mathrm{Beta}(a_k,b_k)\) distribution has endpoint exponent \(a_k\) at
\(\betamin_k\) and \(b_k\) at \(\betamax_k\).  By
Proposition~\ref{prop:independent-value-size}, the weighted ratio measure has
endpoint exponents \(1+b_k\) and \(1+a_k\).  Hence the active mass exponent is
\(\pp=1+\max_k\{a_k,b_k\}=1+q\).  Substituting this exponent into
Theorem~\ref{thm:main} gives the bound.
\end{proof}

\subsection{Fluid duals in two one-resource examples}
\label{app:two-uniform-duals}

This subsection compares two one-resource examples.  Each has one request
type, horizon-\(T\) capacity \(b_T=cT\), scalar size \(\beta\), reward \(V\),
and ratio \(R=V/\beta\).  The fluid relaxation is
\[
        \max_{0\le x(V,\beta)\le 1}
        \E[Vx(V,\beta)]
        \quad
        \text{s.t.}\quad
        \E[\beta x(V,\beta)]\le c .
\]
Its Lagrange dual is
\[
        \min_{\lambda\ge0}
        D(\lambda),
        \qquad
        D(\lambda):=c\lambda+\E[(V-\lambda\beta)^+].
\]
Equivalently, since \(V=\beta R\), we have
\(D(\lambda)=c\lambda+\E[\beta(R-\lambda)^+]\).  The optimal fluid policy
accepts requests whose ratio \(R\) exceeds an optimal dual price \(\lambda\),
with arbitrary tie-breaking at \(R=\lambda\).  The examples below show that dual
degeneracy and the mass exponent \(\pp\) are distinct phenomena.

\paragraph{Independent \(V,\beta\sim U[1,2]\) and \(c=3/2\).}
Let \(V\) and \(\beta\) be independent and uniformly distributed on \([1,2]\),
and set \(c=\E[\beta]=3/2\).  Since \(V>0\) almost surely, the accept-all
solution \(x\equiv1\) is optimal whenever it is feasible.  It is feasible here
because \(\E[\beta]=c\).  Thus the fluid relaxation accepts all requests.

The dual objective is \(D(\lambda)=(3/2)\lambda+\E[(V-\lambda\beta)^+]\).  If
\(0\le\lambda\le1/2\), then \(V-\lambda\beta\ge0\) for all
\((V,\beta)\in[1,2]^2\).  Hence
\[
        D(\lambda)
        =
        \frac32\lambda+\E[V-\lambda\beta]
        =
        \E[V]+\lambda(c-\E[\beta])
        =
        \frac32 .
\]
Thus every \(\lambda\in[0,1/2]\) is dual optimal, since \(D(\lambda)\ge\E[V]\)
by weak duality while the accept-all primal solution already attains value
\(\E[V]=3/2\).  For \(\lambda>1/2\), on the other hand,
\(D'(\lambda)=3/2-\E[\beta\,\1\{R>\lambda\}]>0\) because \(\Prob(R<\lambda)>0\);
so \(D\) is strictly increasing beyond \(1/2\), and the dual optimal set is
exactly \(\Lambda^*=[0,1/2]\).  The fluid dual is not unique.

In this example, the active cutoff is at the lower edge of the ratio support.
The ratio \(R=V/\beta\) reaches its minimum \(1/2\) only at the corner
\((V,\beta)=(1,2)\).  Consequently, the size-weighted ratio mass near the active
cutoff is thinner than linear.  Indeed, for small \(\varepsilon>0\),
\[
        \E\!\left[\beta\,
        \1\left\{\frac{V}{\beta}\in[1/2,1/2+\varepsilon]\right\}\right]
        \asymp \varepsilon^2 .
\]
Thus this example has \(\pp=2\).  It is dual degenerate and belongs to the
polynomial regime; it is the instance of Corollary~\ref{cor:regret-bounded}.

\paragraph{Independent \(\beta\sim U[1,2]\), \(R\sim U[1/2,2]\), and \(c=3/2\).}
Finally, let \(\beta\sim U[1,2]\), let \(R\sim U[1/2,2]\), assume independence,
and set \(V=\beta R\).  Again take \(c=\E[\beta]=3/2\).  Since \(R>0\), the
accept-all solution is optimal and feasible.

The dual objective is
\[
        D(\lambda)
        =
        \frac32\lambda+\E[\beta(R-\lambda)^+]
        =
        \frac32\left\{\lambda+\E[(R-\lambda)^+]\right\},
\]
where the last equality uses independence and \(\E[\beta]=3/2\).  If
\(0\le\lambda\le1/2\), then \(R-\lambda\ge0\) almost surely, and therefore
\[
        \lambda+\E[(R-\lambda)^+]
        =
        \lambda+\E[R-\lambda]
        =
        \E[R]
        =
        \frac54 .
\]
Thus \(D(\lambda)=15/8\) for all \(\lambda\in[0,1/2]\).

For \(\lambda\in[1/2,2]\),
\[
        \E[(R-\lambda)^+]
        =
        \frac{1}{2-1/2}\int_\lambda^2(r-\lambda)\,\dd r
        =
        \frac{(2-\lambda)^2}{3}.
\]
Hence \(\lambda+\E[(R-\lambda)^+]=\lambda+(2-\lambda)^2/3\), whose derivative is
\((2\lambda-1)/3\).  This derivative is nonnegative on \([1/2,2]\) and strictly
positive for \(\lambda>1/2\).  For \(\lambda>2\), the positive-part term
vanishes and \(D(\lambda)=(3/2)\lambda\).  Therefore the dual optimal set is
again \(\Lambda^*=[0,1/2]\).  The fluid dual is not unique.

This example is dual degenerate, but it is not in the polynomial regime.  For a
Borel set \(B\subset[1/2,2]\), the size-weighted ratio measure is
\[
        \mu(B)=\E[\beta\,\1\{R\in B\}]
        =
        \E[\beta]\Prob(R\in B)
        =
        \frac32\Prob(R\in B),
\]
by independence.  Since \(R\) has a density bounded above and below on
\([1/2,2]\), we have
\(\mu([1/2,1/2+\varepsilon])\asymp \varepsilon\).  Thus \(\pp=1\).  The instance
is dual degenerate, but the size-weighted ratio mass near the active cutoff is
linear, so it belongs to the logarithmic-type regime; it is the instance of
Corollary~\ref{cor:regret-regular}.

These two examples separate the roles of dual degeneracy and cutoff mass.  Both
are dual degenerate, with the same dual-optimal set \([0,1/2]\).  Independent
\(V,\beta\sim U[1,2]\) has thin cutoff mass and \(\pp=2\), while independent
\(\beta\sim U[1,2]\) and \(R\sim U[1/2,2]\) has linear cutoff mass and
\(\pp=1\).  Thus the polynomial
regret mechanism is not dual degeneracy alone; it is dual degeneracy together
with insufficient size-weighted ratio mass near the active cutoff.

\end{document}